\documentclass[nohyperref]{article}

% Recommended, but optional, packages for figures and better typesetting:
\usepackage{microtype}
\usepackage{graphicx}
\usepackage{booktabs} % for professional tables

\usepackage{nicefrac}       % compact symbols for 1/2, etc.
\usepackage{adjustbox}
\usepackage{subfig}
\usepackage{amsmath, amsthm, amssymb}
\usepackage{enumitem}

\usepackage{graphbox,graphicx}

\theoremstyle{plain}
\newtheorem{thm}{\protect\theoremname}
\theoremstyle{definition}

\theoremstyle{plain}
\newtheorem{prop}[thm]{\protect\propositionname}
\theoremstyle{plain}
\newtheorem{conjecture}[thm]{\protect\conjecturename}

\newtheorem{lem}[thm]{\protect\lemmaname}
\theoremstyle{plain}
\newtheorem{assumption}[thm]{\protect\assumptionname}
\newtheorem{cor}[thm]{\protect\corollaryname}
\theoremstyle{plain}
\theoremstyle{remark}
\newtheorem{rem}[thm]{\protect\remarkname}

\providecommand{\lemmaname}{Lemma}
\providecommand{\assumptionname}{Assumption}
\providecommand{\conjecturename}{Conjecture}
\providecommand{\definitionname}{Definition}
\providecommand{\propositionname}{Proposition}
\providecommand{\theoremname}{Theorem}
\providecommand{\corollaryname}{Corollary}
\providecommand{\remarkname}{Remark}

% hyperref makes hyperlinks in the resulting PDF.
% If your build breaks (sometimes temporarily if a hyperlink spans a page)
% please comment out the following usepackage line and replace
% \usepackage{icml2020} with \usepackage[nohyperref]{icml2020} above.
\usepackage{hyperref}

%\usepackage[noend]{algcompatible}

% Attempt to make hyperref and algorithmic work together better:
%\newcommand{\theHalgorithm}{\arabic{algorithm}}
% Use the following line for the initial blind version submitted for review:
\usepackage[accepted]{icml2022}

\usepackage[textsize=tiny]{todonotes}

% The \icmltitle you define below is probably too long as a header.
% Therefore, a short form for the running title is supplied here:
\icmltitlerunning{Deep Linear Networks Dynamics}

\begin{document}

\twocolumn[
\icmltitle{Saddle-to-Saddle Dynamics in Deep Linear Networks: \\ Small Initialization Training, Symmetry, and Sparsity}

%\author{%
%  Arthur Jacot \\
%  EPFL, Lausanne\\
%  \texttt{arthur.jacot@epfl.ch} \\
%  \And
%  Fran\c{c}ois Ged \\
%  EPFL, Lausanne\\
%  \texttt{francois.ged@epfl.ch} \\
%  \And
%  Berfin \c{S}im\c{s}ek \\
%  EPFL, Lausanne\\
%  \texttt{berfin.simsek@epfl.ch} \\
%  \And
%  Cl\'ement Hongler \\
%  EPFL, Lausanne\\
%  \texttt{clement.hongler@epfl.ch} \\
%  \And
%  Franck Gabriel \\
%  EPFL, Lausanne\\
%  \texttt{franck.gabriel@epfl.ch}
%}

\icmlsetsymbol{equal}{*}

\begin{icmlauthorlist}
\icmlauthor{Arthur Jacot}{epfl}
\icmlauthor{Fran\c{c}ois Ged}{epfl}
\icmlauthor{Berfin \c{S}im\c{s}ek}{epfl}
\icmlauthor{Cl\'ement Hongler}{epfl}
\icmlauthor{Franck Gabriel}{epfl}
\end{icmlauthorlist}

\icmlaffiliation{epfl}{EPFL,Switzerland}

\icmlcorrespondingauthor{Arthur Jacot}{arthur.jacot@netopera.net}

% You may provide any keywords that you
% find helpful for describing your paper; these are used to populate
% the "keywords" metadata in the PDF but will not be shown in the document
\icmlkeywords{Machine Learning, ICML}

\vskip 0.3in
]

% this must go after the closing bracket ] following \twocolumn[ ...

% This command actually creates the footnote in the first column
% listing the affiliations and the copyright notice.
% The command takes one argument, which is text to display at the start of the footnote.
% The \icmlEqualContribution command is standard text for equal contribution.
% Remove it (just {}) if you do not need this facility.

%\printAffiliationsAndNotice{}  % leave blank if no need to mention equal contribution
\printAffiliationsAndNotice{\icmlEqualContribution} % otherwise use the standard text.

\begin{abstract}
The dynamics of Deep Linear Networks (DLNs) is dramatically affected by the variance $\sigma^2$ of the parameters at initialization $\theta_0$. For DLNs of width $w$, we show a phase transition w.r.t. the scaling $\gamma$ of the variance $\sigma^2=w^{-\gamma}$ as $w\to\infty$: for large variance ($\gamma<1$), $\theta_0$ is very close to a global minimum but far from any saddle point, and for small variance ($\gamma>1$), $\theta_0$ is close to a saddle point and far from any global minimum. While the first case corresponds to the well-studied NTK regime, the second case is less understood. This motivates the study of  the case $\gamma \to +\infty$, where we conjecture a Saddle-to-Saddle dynamics: throughout training, gradient descent visits the neighborhoods of a sequence of saddles, each corresponding to linear maps of increasing rank, until reaching a sparse global minimum. We support this conjecture with a theorem for the dynamics between the first two saddles, as well as some numerical experiments.
\end{abstract}

\section{Introduction}

In spite of their widespread usage, the theoretical understanding
of Deep Neural Networks (DNNs) remains limited.
In contrast to more common statistical methods which are built (and proven) to recover the specific structure of the data, the development of DNNs techniques has been mostly driven by empirical results.
This has led to a great variety of models which perform consistently
well, but without a theory explaining why. In this paper, we provide a theoretical analysis of Deep Linear (Neural) Networks (DLNs), whose simplicity makes them particularly attractive as a first step towards the development of such a theory.

DLNs have a non-convex loss landscape and the behavior of training dynamics can
be subtle. For shallow networks, the convergence of gradient descent is guaranteed
by the fact that the saddles are strict and that all minima are global \cite{baldi1989neural, Kawaguchi_2016_no_local_min,lee_2016_conv_to_min,lee_2019_strict_saddle}.
In contrast, the deep case features non-strict saddles \cite{Kawaguchi_2016_no_local_min} and no general proof of convergence exists at the moment, though convergence to a global minimum can be guaranteed in some cases \cite{arora_2018_DLN_convergence, Eftekhari2020}.

A recent line of work focuses on the implicit bias of DLNs,
and consistently reveals some form of incremental learning and implicit sparsity as in \cite{Gissin2020}. Diagonal
networks are known to learn minimal $L_{1}$ solutions \cite{Moroshko_2020_implicit_bias_diag,woodworth_2020_diagnet_bias}.
With a specific initialization and the MSE loss, DLNs learn the singular
components of the signal one by one \cite{saxe_2014_exact,advani_2017_independent_diag,Saxe_2019_independent_diag, gidel_2019_independent_diag, arora_2019_matrix_factorization}.
Recently, it has been shown that with losses such as the cross-entropy and
the exponential loss, the parameters diverge towards infinity, but end up following the
direction of the max-margin classifier w.r.t. the $L_{p}$-Schatten (quasi)norm
\cite{gunasekar_2018_implicit_bias,gunasekar_2018_implicit_bias2,soudry2018implicit,Ji_2018_directional,Ji_2020_directional2,chizat_2020_implicit_bias,Lyu_2020_directional1,Moroshko_2020_implicit_bias_diag,yun2021_implicit_bias}.

In parallel, recent works have shown the existence of two regimes in large-width
DNNs: a kernel regime (also called NTK or lazy regime) where learning is described by the so-called Neural Tangent Kernel (NTK)  guaranteeing linear convergence \cite{jacot2018neural,Du2019,chizat2018note,exact_arora2019,lee2019wide,huang2019NTH},
and an active regime where the dynamics is nonlinear \cite{Chizat2018,Rotskoff2018,Mei2018,mei2019mean, chizat_2020_implicit_bias}. For DLNs, both regimes can be observed as well, with evidence that while the linear regime exhibits no sparsity, the active regime favors solutions with some kind of sparsity \cite{woodworth_2020_diagnet_bias,Moroshko_2020_implicit_bias_diag}.

% Short before the publication of this paper, we came aware of recent work \cite{li2020towards} which describes a similar Saddle-to-Saddle regime, while our results are almost equivalent for shallow networks we discuss the differences for deeper networks in Section \ref{subsec:first-path}.

\subsection{Contributions}
We study deep linear networks $x\mapsto A_\theta x$ of depth $L\geq 1$ and widths $n_0,\cdots,n_L$, that is $A_\theta= W_L \cdots W_1$ where $W_1,
\ldots, W_L$ are matrices such that $W_i\in\mathbb{R}^{n_i\times n_{i-1}}$ and $\theta$ is a vector that consists of all the (learnable) parameters of the DLN, i.e. the components of the matrices $W_1,\cdots,W_L$. For any general convex cost $C:\mathbb{R}^{n_L\times n_0}\to\mathbb{R}$ on matrices such that the zero matrix is not a global minimum, we investigate the gradient flow minimizing the loss $\mathcal L(\theta) = C(A_\theta)$.
To ease the notation, suppose that the hidden layers have the same size, that is $w=n_1=\cdots=n_{L-1}$ for some $w\in\mathbb{N}$.

The variance of the parameters at initialization has a profound effect on the training dynamics. If the parameters are initialized with variance $\sigma^2 = w^{-\gamma}$, where $w$ is the size of the hidden layers, we observe a phase transition in the infinite width limit as $w\to\infty$ and show in Theorem \ref{th:distances} that: \begin{itemize}
\item when $\gamma<1$, the random initialization $\theta_0$ is (with high probability) very close to a global minimum and very far from any saddle,
\item when $\gamma>1$, the initialization is very close to a saddle and far from any global minimum.
\end{itemize}

The case $\gamma<1$ corresponds to the NTK regime (or kernel/lazy regime, described in Section \ref{sec:NTK}) and the case $\gamma=1$ corresponds to the Mean-Field limit (or the Maximal Update parametrization of \cite{Greg_Yang_2019}). It appears that the case $\gamma>1$ has been much less studied in previous works.

To understand this regime, we investigate in Section \ref{sec:saddle} the case $\gamma\to+\infty$. More precisely, we fix the width of the network and let the variance at initialization go to zero. We show in Theorem \ref{thm:first-path} that the gradient flow trajectory asymptotically goes from the saddle at the origin $\vartheta^{0}=0$  to a rank-one saddle $\vartheta^1$, i.e. a saddle where the matrices $W_1,\ldots,W_L$ are of rank $1$.  The proof is based on a new description (Theorem \ref{thm:bijection-fast-escape-paths}), in the spirit of the  Hartman-Grobman theorem, of the so-called fast escape paths at the origin. This theorem may be of independent interest.

We propose the Conjecture \ref{conj:Conj-Saddle-Saddle}, backed by numerical experiments, describing the full gradient flow when the variance at initialization is very small, suggesting that it goes from saddle to saddle, visiting the neighborhoods of a sequence of critical points  $\vartheta^0, \ldots, \vartheta^{K}$  (the first $K$ ones being saddle points, the last one being either a global minimum or a point at infinity) corresponding to matrices of increasing ranks. This is consistent with \cite{Gissin2020} which shows that incremental learning occurs in a toy model of DLNs and that gradient-based optimization hence has an implicit bias towards simple (sparse) solutions.

In Section \ref{sec:greedyalgo}, we show how this Saddle-to-Saddle dynamics can be described using a greedy low-rank algorithm which bears similarities with that of \cite{li2020towards} and leads to a low-rank bias of the final learned function. This is in stark contrast to the NTK regime which features no low-rank bias.

\subsection{Related Works}
The existence of distinct regimes in the training dynamics of DNNs has been explored in previous works, both theoretically \cite{chizat2018note,Greg_Yang_2019} and empirically \cite{geiger2019disentangling}. The theoretical works \cite{chizat2018note,Greg_Yang_2019} have mostly focused on the transition from the NTK regime ($\gamma<1$) to the Mean-Field regime ($\gamma=1$). This paper is focused on the regime beyond the critical one ($\gamma>1$).

Our study of the Saddle-to-Saddle dynamics can also be understood as a generalization of the works  \cite{saxe_2014_exact,advani_2017_independent_diag,Saxe_2019_independent_diag, gidel_2019_independent_diag, arora_2019_matrix_factorization} which describe a similar plateau effect in a very specific setting and with a very carefully chosen initialization.

Shortly after the initial publication of this article, we came aware of the paper \cite{li2020towards} which provides a similar description to our Saddle-to-Saddle dynamics. For shallow networks, the results are almost equivalent, although the techniques are very different, especially when dealing with the fact that the escape directions (and escape paths) are unique only up to rotations. The paper \cite{li2020towards} uses a clever trick that allows them to both study the dynamics of the output matrix $A_{\theta(t)}$, without the need to keep track of the parameters,  and obtain a unicity property for the asymptotic dynamics. Instead, we focus on the dynamics of the parameters, give an identification of all optimal escape paths, and show that the path followed by the parameters' dynamics is unique up to symmetries of the network. Note also that, as in our paper, \cite{li2020towards} only proves the first step of the Saddle-to-Saddle regime: for the subsequent steps, it is assumed that the next saddle is not approached along a \`bad' direction (as we discuss in Section \ref{sec:sketch-of-proof}). For deep networks, our results are more general as they hold for more general initializations than in \cite{li2020towards}. Indeed, in order to avoid the non-uniqueness problem of the escape paths in the space of parameters, their analysis relies heavily on the assumption that the weights of the network are balanced at initialization, and thus during training. Because we do not rely on this trick, our analysis does not require a balanced initialization.

\section{Deep Linear Networks}
\subsection{Setup}
A DLN of depth $L$ and widths $n_{0},\dots,n_{L}$
is the composition of $L$ matrices
\[
A_{\theta}=W_{L}\cdots W_{1}
\]
where $W_{\ell}\in\mathbb{R}^{n_{\ell}\times n_{\ell-1}}$. The number of parameters is $P=\sum_{\ell=1}^{L}n_{\ell-1}n_{\ell}$ and we denote by $\theta=(W_{L},\dots,W_{1})\in\mathbb{R}^{P}$ the vector of parameters. The input dimension, resp. the output dimension is $n_{0}$, resp. $n_L$. All parameters are initialized
as i.i.d. $\mathcal{N}(0,\sigma^{2})$ Gaussian random variables.

We will focus on the so-called rectangular networks, in which the number
of neurons in all hidden layers is the same, i.e. $n_{1}=\dots=n_{L-1}=w$. Such rectangular network is called a $(L,w)$-DLN, and its number of parameters is denoted by $P=P_{(L,w)}=n_{0}w+(L-2)w^{2}+wn_{L}$. The proofs given in this article can be extended to the non-rectangular case, but this leads to more complex notations.

We study the dynamics of gradient descent on the loss $\mathcal L(\theta)=C(A_{\theta})$ for
a general differentiable and convex cost $C$ on $n_{L}\times n_{0}$ matrices. To ensure a non-trivial minimisation problem, we assume that the null matrix is not a global minimum of $C$ : in this case, the origin in the parameter space is a saddle of $\mathcal L$. Given a starting point $\theta_{0}\in\mathbb{\mathbb{R}}^{P}$,
we denote by $t\mapsto\Gamma(t,\theta_{0})$ the gradient flow path
on the cost $\mathcal L(\theta)$ starting from $\theta_{0}$, i.e. $\Gamma(0,\theta_0)=\theta_0$ and $\partial_t \Gamma(t,\theta_0)=-\nabla \mathcal L(\Gamma(t,\theta_0))$.

While our analysis applies to general  twice differentiable costs $C$, the typical costs used in practice are:

The \emph{Mean-Squared Error} (MSE) loss $C(A)=\frac{1}{N}\left\Vert AX-Y\right\Vert _{F}^{2}$
for some inputs $X\in\mathbb{R}^{n_{0}\times N}$ and labels $Y\in\mathbb{R}^{n_{L}\times N}$,
where $||\cdot||_{F}$ is the Frobenius norm.

The \emph{Matrix Completion} (MC) loss $C(A)=\frac{1}{N}\sum_{i=1}^{N}(A_{k_{i},m_{i}}-A_{k_{i},m_{i}}^{*})^{2}$
for some true matrix $A^{*}$ of which we observe only the $N$ entries
$A_{k_{1},m_{1}}^{*},\dots,A_{k_{N},m_{N}}^{*}$.

\begin{figure*}[!t]
  \centering
  \subfloat{
      \includegraphics[height=0.27\textwidth,align=t]{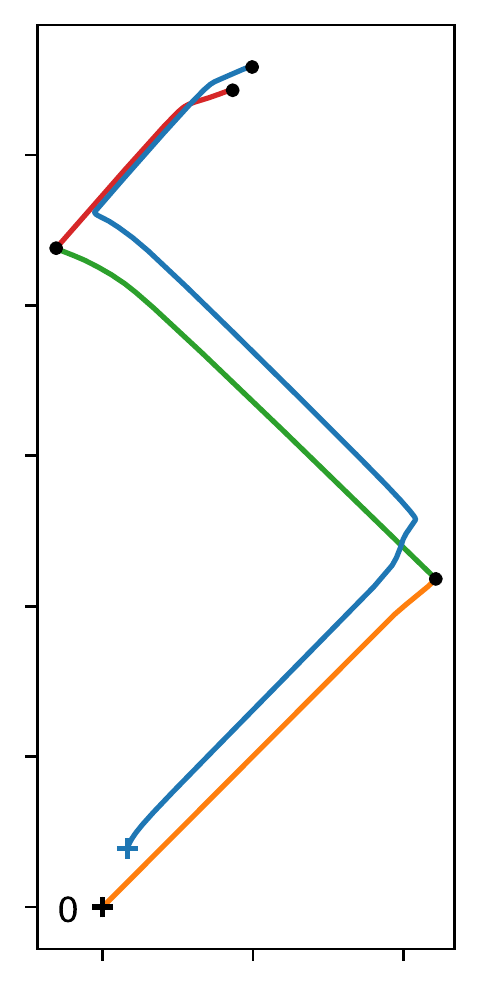}
      }
  \subfloat{
      \includegraphics[height=0.29\textwidth,align=t]{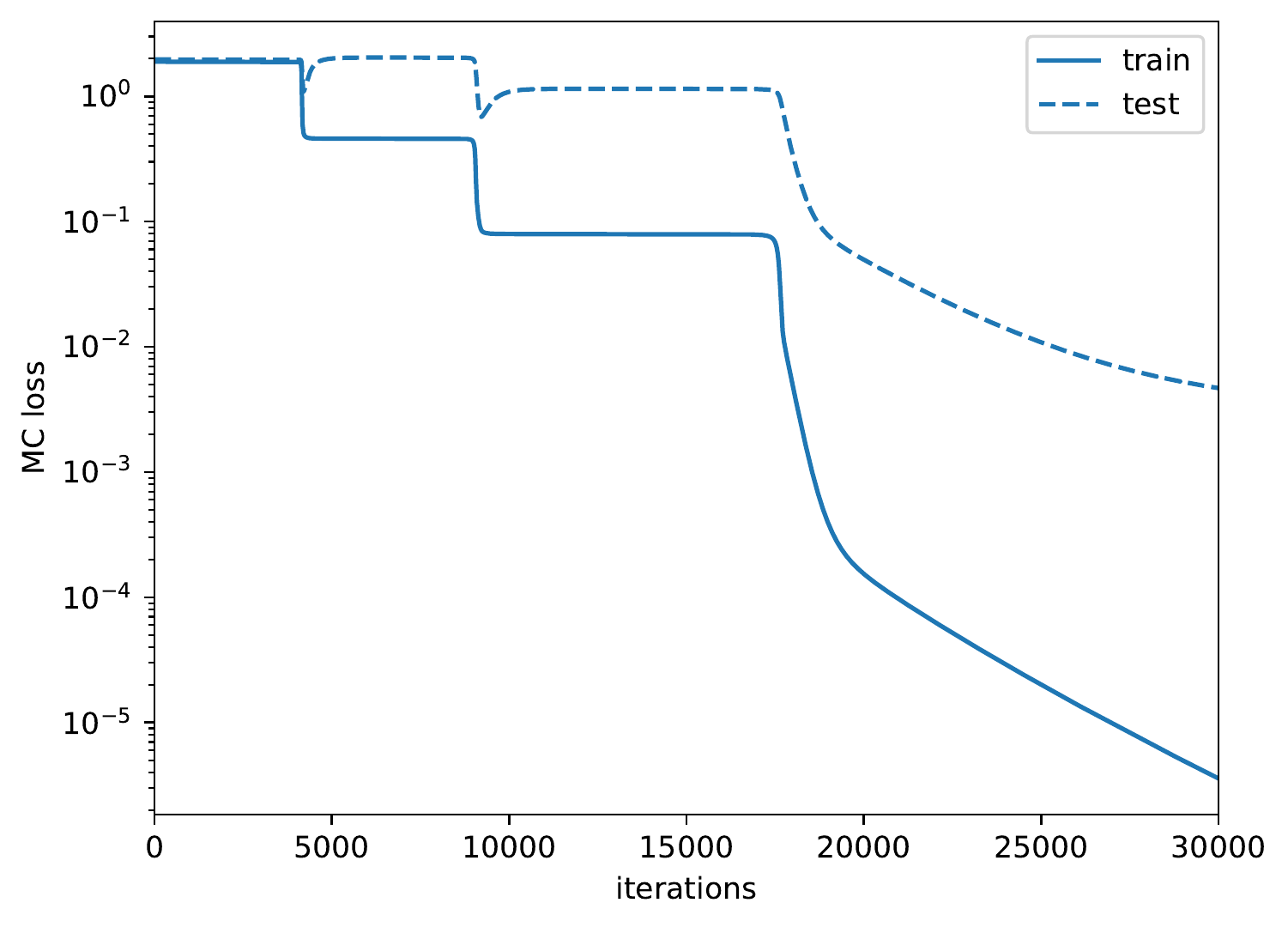}
      }
  \subfloat{
      \includegraphics[height=0.275\textwidth,align=t]{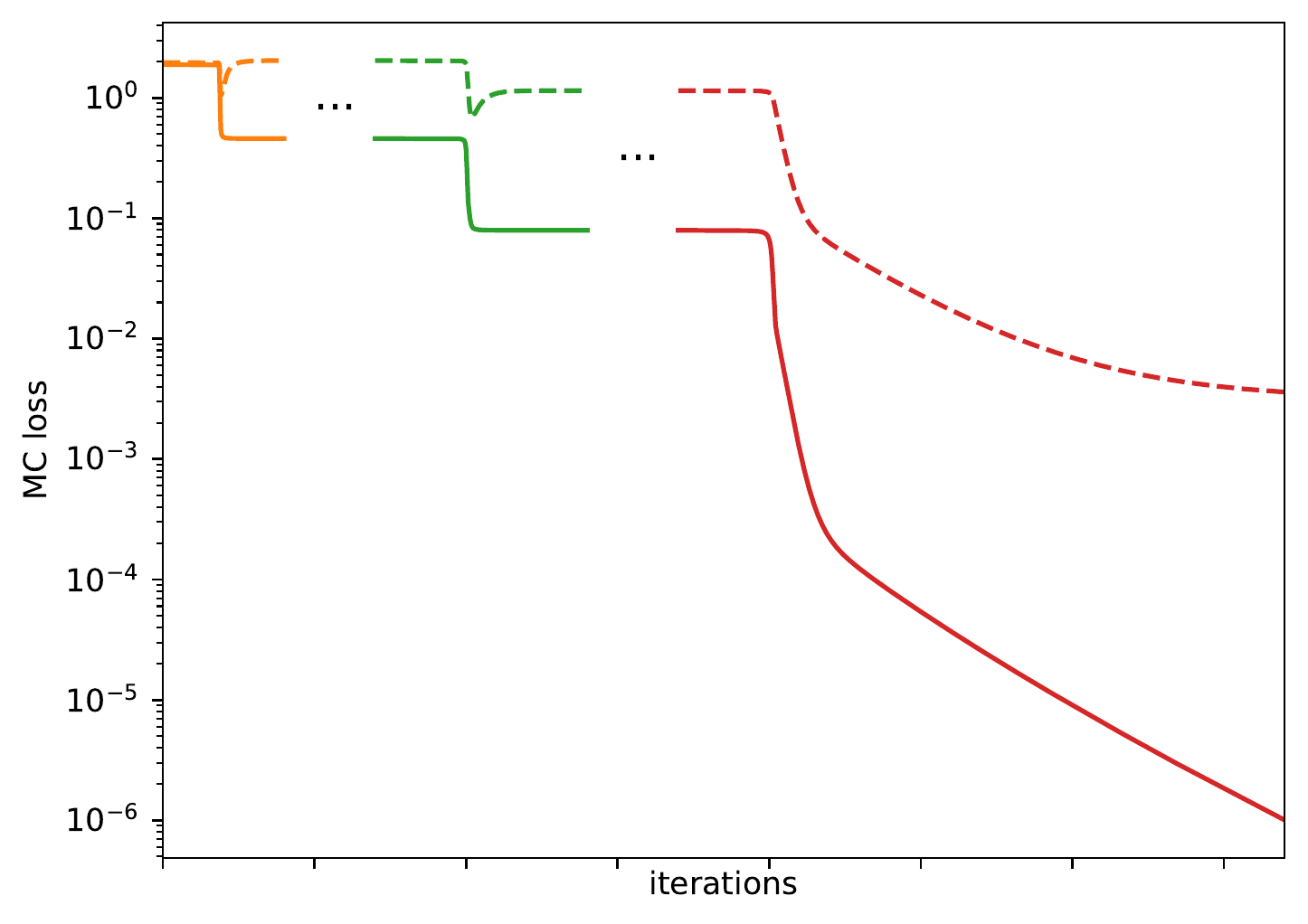}
      }
  \caption{\textit{Saddle-to-Saddle dynamics:}
  A DLN ($L=4,w=100$)  with a small initialization ($\gamma=2$) trained on a MC loss fitting a $10\times 10$ matrix of rank $3$. \textbf{Left:} Projection onto a plane of the gradient flow path $\theta_\alpha$ in parameter space (in blue) and of the sequence of 3 paths $\theta^1,\theta^2,\theta^3$ (in orange, green and red), described by Algorithm $\mathcal{A}_{\epsilon,T,\eta}$, starting from the origin (+) and passing through 2 saddles ($\cdot$) before converging. \textbf{Middle:} Train (solid) and test (dashed) MC costs through training. We observe three plateaus, corresponding to the three saddles visited. \textbf{Right:} The train (solid) and test (dashed) losses of the three paths  plotted sequentially, in the saddle-to-saddle limit; the dots represent an infinite amount of steps separating these paths.}
  \label{fig:linear_saddle_to_saddle_MC}
\end{figure*}

\subsection{Symmetries and Invariance}

A key tool in this paper is the use of two important symmetries of the parametrization map $\theta\mapsto A_{\theta}$ in
DLNs: rotations of hidden layers and inclusions in wider DLNs.

\begin{description}
\item [{Rotations:}] A $L-1$ tuple $R=(O_{1},\dots,O_{L-1})$ of orthogonal
$w\times w$ matrices is called a $w$-width network rotation, or
in short a rotation. A rotation $R$ acts on a parameter vector $\theta=(W_{L},\ldots,W_{1})$
as $R\theta=(W_{L}O_{L-1}^{T},O_{L-1}W_{L-1}O_{L-2}^{T},\dots,O_{1}W_{1})$.
The space of rotations is an important symmetry of DLN: indeed, for
any parameter $\theta,$ and any cost $C$, the two following important
properties hold:
\[
A_{R\theta}=A_{\theta},\quad\nabla_{\theta}C(A_{R\theta})=R\nabla_{\theta}C(A_{\theta}),
\]
where we considered $\nabla_{\theta}C(A_{\theta})\in \mathbb{R}^{P_{L,w}}$
as another vector of parameters.
These properties imply that if $\theta(t)=\Gamma(t, \theta_0)$ is a gradient flow path, then
so is $R\theta(t)=\Gamma(t,R\theta_0)$.
\item [{Inclusion:}] The inclusion $I^{(w\to w')}$ of a network of width
$w$ into a network of width $w'>w$ (by adding zero weights on the
new neurons) is defined as $I^{(w\to w')}(\theta)=(V_L,\dots,V_1)$ with
\[\arraycolsep=1.5pt
V_1 = \left(\begin{array}{c}
W_1 \\ 0
 \end{array}\right),
V_\ell = \left(\begin{array}{cc}
W_{\ell} & 0\\
0 & 0
\end{array}\right),
V_L = \left(\begin{array}{cc}
W_{L} & 0\end{array}\right).
\]
For any parameters $\theta$ and any cost $C$, we have $A_{I^{(w\to w')}\theta}=A_{\theta}$
and $\nabla C(A_{I^{(w\to w')}\theta})=I^{(w\to w')}\nabla C(A_{\theta})$:
the image of the inclusion map $I^{(w\to w')}$ (as well as any rotation
$R\mathrm{Im}I^{(w\to w')}$ thereof) is invariant under gradient
flow.
\end{description}

\section{Proximity of Critical Points at Initialization}

It has already been observed that in the infinite width limit, when
the width $w$ of the network grows to infinity, the scale at which
the variance $\sigma^{2}$ of the parameters at initialization scales
with the width can lead to very different behaviors \cite{chizat2018note,geiger2019disentangling,Greg_Yang_2019}.
Let us consider scaling of the variance $\sigma^{2}=w^{-\gamma}$
for $\gamma\geq1-\frac{1}{L}$. The reason we lower bound $\gamma$
is that any smaller $\gamma$ would lead to an explosion of the variance
of the matrix $A_{\theta}$ at initialization as the width $w$ grows.

Let $d_{\mathrm{m}}$ and $d_{\mathrm{s}}$ be the Euclidean distances between the initialization
$\theta$ and, respectively, the set of global minima and  the set of all saddles. For random variables $f(w),g(w)$ which depend on $w$, we write $f\asymp g$ if both $\nicefrac{f(w)}{g(w)}$ and $\nicefrac{g(w)}{f(w)}$ are stochastically bounded as $w\to\infty$.
The following theorem studies how $d_{\mathrm{m}}$ and $d_{\mathrm{s}}$  scale as $w\to\infty$:
\begin{thm}\label{th:distances}
Suppose that the set of matrices that minimize $C$ is non-empty, has Lebesgue measure zero, and does not contain the zero matrix. Let $\theta$ be i.i.d. centered Gaussian r.v. of variance
$\sigma^{2}=w^{-\gamma}$ where $1-\frac{1}{L}\leq\gamma<\infty$. Then:
\begin{enumerate}
\item if $1-\frac{1}{L}\leq\gamma<1$, we have $d_{\mathrm{m}}\asymp w^{-\frac{(1-\gamma)(L-1)}{2}}$
and $d_{\mathrm{s}}\asymp w^{\frac{1-\gamma}{2}}$,
\item if $\gamma=1$, we have $d_{\mathrm{m}},d_{\mathrm{s}} \asymp 1$,
\item if $\gamma>1$ we have $d_{\mathrm{m}}\asymp 1$ and $d_{\mathrm{s}}\asymp w^{-\frac{\gamma-1}{2}}$.
\end{enumerate}
\end{thm}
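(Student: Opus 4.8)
The plan is to reduce the statement to a short list of random‑matrix facts about the individual layers $W_\ell$ and their partial products, each holding with probability $\to1$; throughout write $M:=\sigma\sqrt{w}=w^{(1-\gamma)/2}$. I would first establish: \textbf{(a)} $\|W_\ell\|_{\mathrm{op}}\asymp M$ for every $\ell$, and $s_{\min}(W_1),s_{\min}(W_L)\asymp M$ (smallest singular values of the tall/fat Gaussian end layers, using that $n_0,n_L$ are fixed); \textbf{(b)} a telescoping estimate: if $\|V_\ell-W_\ell\|_F\le\delta\le M$ for all $\ell$, then, expanding the products multilinearly, $\|V_{L-1}\cdots V_1-W_{L-1}\cdots W_1\|_{\mathrm{op}}\lesssim_L\delta M^{L-2}$ and $\|A_\vartheta-A_\theta\|_{\mathrm{op}}\lesssim_L\delta M^{L-1}$; \textbf{(c)} $\|A_\theta\|_F\asymp w^{(L(1-\gamma)-1)/2}$, which is $o(1)$ when $\gamma>1-\tfrac1L$; and the crucial \textbf{(d)} $s_{n_0}(W_{L-1}\cdots W_1)\asymp M^{L-1}$.

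\emph{The saddle distance.} For the lower bound, let $\vartheta=(V_L,\dots,V_1)$ be a saddle; then it is critical and not a global minimum, so $C(A_\vartheta)>\min C$, whence $G:=\nabla C(A_\vartheta)\ne0$ by convexity. The equation $\nabla_{W_L}\mathcal L(\vartheta)=G\,(V_{L-1}\cdots V_1)^{\top}=0$ forces every row of $V_{L-1}\cdots V_1$ into $\ker G$, so $\mathrm{rank}(V_{L-1}\cdots V_1)<n_0$, i.e. $s_{n_0}(V_{L-1}\cdots V_1)=0$. Put $\delta:=\|\theta-\vartheta\|$: if $\delta>M$ we are done; otherwise Weyl's inequality with (b) and (d) gives $M^{L-1}\lesssim s_{n_0}(W_{L-1}\cdots W_1)\le\|V_{L-1}\cdots V_1-W_{L-1}\cdots W_1\|_{\mathrm{op}}\lesssim_L\delta M^{L-2}$, so $\delta\gtrsim_L M=w^{(1-\gamma)/2}$. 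For the upper bound, take $\vartheta$ equal to $\theta$ with its first and last layers set to $0$ (for $L=2$ this is the origin). Each $\nabla_{W_\ell}\mathcal L(\vartheta)$ contains a factor $V_L\cdots V_{\ell+1}$ or $V_{\ell-1}\cdots V_1$ that vanishes because it includes $V_L=0$ or $V_1=0$, so $\vartheta$ is critical; it is not a minimum since $A_\vartheta=0\notin\mathcal M$; and it is a genuine saddle since along $s\mapsto(sU,W_{L-1},\dots,W_2,sV)$ one has $\mathcal L=C(0)+s^2\langle G_0,U(W_{L-1}\cdots W_2)V\rangle+O(s^4)$ with $G_0=\nabla C(0)\ne0$, and the pairing $\langle W_{L-1}\cdots W_2,\,U^{\top}G_0V^{\top}\rangle$ can be made negative because the matrices $U^{\top}G_0V^{\top}$ span all of $\mathbb R^{w\times w}$. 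Finally $\|\theta-\vartheta\|^2=\|W_1\|_F^2+\|W_L\|_F^2\asymp w\sigma^2=w^{1-\gamma}$. Hence $d_{\mathrm s}\asymp w^{(1-\gamma)/2}$ in all three regimes.

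\emph{The minimum distance.} \textbf{Upper bound.} If $\gamma<1$, fix $M_0\in\mathcal M$ and modify only $W_L$: by (d) the matrix $K:=W_{L-1}\cdots W_1$ has full column rank $n_0$, so the affine set $\{V_L:V_LK=M_0\}$ lies within $\|A_\theta-M_0\|_F/s_{n_0}(K)\lesssim M^{-(L-1)}=w^{-(1-\gamma)(L-1)/2}$ of $W_L$ (here $\|A_\theta-M_0\|_F\le\|A_\theta\|_F+\|M_0\|_F\lesssim1$ by (c)), giving a point of $\mathcal N$ at that distance from $\theta$. If $\gamma\ge1$ this bound is useless, so instead route $M_0=\bar U\bar\Sigma\bar V^{\top}$ (SVD, rank $r_0\le\min(n_0,n_L)$) through $F=\mathrm{span}(e_1,\dots,e_{r_0})$: in each hidden $W_\ell$ zero the $F\times F^c$ and $F^c\times F$ blocks and set the $F\times F$ block to $I_{r_0}$; replace the top $r_0$ rows of $W_1$ by $\bar V^{\top}$; replace the $F$‑columns of $W_L$ by $\bar U\bar\Sigma$ and its $F^c$‑columns by $0$. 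The resulting $\vartheta$ has $A_\vartheta=M_0\in\mathcal M$, while each $\|W_\ell-V_\ell\|_F^2$ is $O(1)$: it is the sum of $O(1)$‑sized deterministic insertions and zeroed Gaussian blocks of squared norm $\lesssim r_0w\sigma^2$, which is $o(1)$ for $\gamma>1$ and $O(1)$ for $\gamma=1$. Hence $d_{\mathrm m}\lesssim1$. \textbf{Lower bound.} If $\gamma>1$, any $\vartheta$ with $A_\vartheta\in\mathcal M$ has $0<\mathrm{dist}(0,\mathcal M)\lesssim\|A_\vartheta\|_{\mathrm{op}}\le\prod_\ell(\|W_\ell\|_{\mathrm{op}}+\delta)\le(CM+\delta)^L$, and $M\to0$ forces $\delta\gtrsim1$. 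If $\gamma\le1$, either $\delta>M$ (and $M\gtrsim w^{-(1-\gamma)(L-1)/2}$, done) or (b),(c) give $0<\mathrm{dist}(0,\mathcal M)\lesssim\|A_\vartheta\|_{\mathrm{op}}\le\|A_\theta\|_{\mathrm{op}}+\|A_\vartheta-A_\theta\|_{\mathrm{op}}=o(1)+O_L(\delta M^{L-1})$, forcing $\delta\gtrsim_L M^{-(L-1)}=w^{-(1-\gamma)(L-1)/2}$. (The last step uses $\|A_\theta\|=o(1)$, which fails only at the endpoint $\gamma=1-\tfrac1L$; that case needs a separate argument using $\|A_\theta\|\asymp1$ together with tightness of the law of $A_\theta$.) Together: $d_{\mathrm m}\asymp w^{-(1-\gamma)(L-1)/2}$ for $\gamma<1$ and $d_{\mathrm m}\asymp1$ for $\gamma\ge1$.

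\emph{Main obstacle.} Everything above is operator‑norm bookkeeping once (d) is available, and (d) is the one subtle estimate: the naive bound $s_{n_0}(W_{L-1}\cdots W_1)\ge\prod_{\ell=1}^{L-1}s_{\min}(W_\ell)$ is of order $M\cdot(\sigma/\sqrt w)^{L-2}$, too small by a factor $w^{L-2}$, and would destroy the lower bound on $d_{\mathrm s}$. The fix is to exploit that only $n_0$ columns are involved: with $B:=W_{L-1}\cdots W_2$ and $B^{\top}B=\sum_i\lambda_iu_iu_i^{\top}$, rotational invariance of the Gaussian $W_1$ gives $W_1^{\top}B^{\top}BW_1=\sum_i\lambda_iz_iz_i^{\top}$ with the $z_i$ i.i.d.\ $\mathcal N(0,\sigma^2I_{n_0})$; its mean is $\sigma^2\|B\|_F^2I_{n_0}$, and because $\|B\|_F^2/\|B\|_{\mathrm{op}}^2\asymp w\to\infty$ (the effective rank of $B^{\top}B$ diverges) a matrix‑Chernoff bound yields $\lambda_{\min}(W_1^{\top}B^{\top}BW_1)\asymp\sigma^2\|B\|_F^2$, i.e. $s_{n_0}(W_{L-1}\cdots W_1)\asymp\sigma\|B\|_F\asymp(\sigma\sqrt w)^{L-1}$. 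Thus the technical heart is the estimates $\|B\|_F\asymp\sigma^{L-2}w^{(L-1)/2}$ and $\|B\|_{\mathrm{op}}\asymp\sigma^{L-2}w^{(L-2)/2}$ for products of Gaussian matrices, together with this concentration; the rest is routine.
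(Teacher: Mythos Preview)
Your proposal is correct and follows essentially the same route as the paper's proof: the same saddle construction (zeroing the first and last layers), the same rank--deficiency argument from $\partial_{W_L}\mathcal L(\vartheta)=0$ for the saddle lower bound, the same ``modify only $W_L$'' construction for $d_{\mathrm m}$ when $\gamma<1$, and an SVD--insertion construction for $\gamma\ge1$. The crucial estimate you isolate as (d), $s_{n_0}(W_{L-1}\cdots W_1)\asymp M^{L-1}$, is exactly what the paper relies on as well; the paper obtains it by asserting that the $n_0\times n_0$ matrix $W_1^{\top}\cdots W_{L-1}^{\top}W_{L-1}\cdots W_1$ concentrates around its mean $w^{(1-\gamma)(L-1)}I_{n_0}$, whereas you give a more explicit conditional matrix--Chernoff sketch---both arguments are equivalent in spirit. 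Two minor differences: your telescoping bound (b) is a cleaner special case (valid for $\delta\le M$) of the paper's full multinomial expansion $\|A_{\theta'}-A_\theta\|_F^2\le C\sum_{k=1}^L\|\theta-\theta'\|^{2k}w^{(1-\gamma)(L-k)}$, and you are more careful than the paper in checking that the constructed critical point is a genuine saddle. Your acknowledged gap at the endpoint $\gamma=1-\tfrac1L$ is precisely where the paper invokes a separate lemma: there $A_\theta$ converges in law to an i.i.d.\ Gaussian matrix, and the measure--zero assumption on $\mathcal M$ gives $d(A_\theta,\mathcal M)\asymp1$.
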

This theorem shows an important change of behavior between the
case $\gamma<1$ and $\gamma>1$. When $\gamma<1$, the network is
initialized very close to a global minimum and far from any saddle. When $\gamma>1$,
the parameters are initialized very close to a saddle but far
away from any global minimum. The critical case
$\gamma =1$ is the unique limit where both types of critical points are
at the same distance from the initialization.

Hence, the landscape of the loss near the initialization displays distinct features  in the three regimes highlighted in the previous theorem. In fact, the dynamics of the gradient descent also exhibits very distinctive characteristics in the different regimes. In Appendix \ref{subsec:equivalence-parametrization-NTK}, we show that the largest initialization, corresponding to the choice $\gamma=1-\frac{1}{L}$, is equivalent to the so-called NTK parametrization of \cite{jacot2018neural},
up to a rescaling of the learning rate. In the range $1-\frac{1}{L} < \gamma < 1$,  \cite{yang2020feature} obtain a similar, yet slightly different, kernel regime. The initialization $\gamma=1$
corresponds to the Mean-Field limit for shallow networks \cite{Chizat2018,Rotskoff2018}
or, more generally, to the Maximal Update parametrization \cite{yang2020feature} (see Appendix \ref{subsec:equivalence-parametrization-MUP}). The case $\gamma>1$ is however much less studied and is difficult to study since the initialization approaches
a saddle as $w\to\infty$. Thus, in this regime, the wider the network, the longer it takes to escape this nearby saddle and, in the limit as $w\to \infty$, nothing happens over a finite number of gradient steps. With the right time parametrization, we will observe interesting Saddle-to-Saddle dynamics in this regime, leading to some low-rank bias.

\section{NTK regime: $\gamma <1$} \label{sec:NTK}
The NTK for linear networks can be expressed easily using the tensor
\[
\Theta^{(L)}= \sum_{\theta} \partial_\theta A \otimes   \partial_\theta A,
\]
which entries are given by $[\Theta^{(L)}]_{i,k}^{j,l} = (\nabla_{\theta} (A_\theta)_{i,j})^T (\nabla_{\theta} (A_\theta)_{k,l})$, for $i,k=1,\ldots,n_L$ and $j,l=1,\ldots,n_0$.
For any $x$, $y$ in $\mathbb{R}^{n_0}$, the value of the NTK at $x$ and $y$ is $\Theta^{(L)} (x\otimes y)=\sum_{j,l}[\Theta^{(L)}]_{\cdot,\cdot}^{j,l} x_j y_l$.

When the parameters evolve
according to the gradient flow on $\mathcal{L}(\theta)=C(A_{\theta})$, the dynamics of $A_{\theta(t)}$ is:
\begin{align*}
\partial_{t}A_{\theta(t)}&=-\Theta^{(L)}\cdot\nabla_{A}C(A_{\theta(t)}) \\&=  -\sum_{k,l}[\Theta^{(L)}]_{\cdot,k}^{\cdot ,l} \frac{d}{dA_{k,l}}C(A_{\theta(t)}),
\end{align*}
where $\cdot$ denotes a contraction of the $k,l$  indices of $\Theta^{(L)} $ with the two indices of  $\nabla_{A}C(A_{\theta(t)})$.

At initialization, $\Theta^{(L)}$ concentrates around its expectation $\mathbb{E}\left[\Theta^{(L)}\right]=Lw^{(1-\gamma)(L-1)} \delta_{i,k}\delta_{j,l}$ as the width grows. It was first proven in \cite{jacot2018neural}
that for an initialization equivalent to the case $\gamma=1-\frac{1}{L}$
(see Appendix \ref{subsec:equivalence-parametrization-NTK} for more details), as $w \to \infty$ the NTK remains constant during training. Recent
results \cite{yang2020feature} have shown that the NTK is asymptotically
fixed for all $\gamma\in(1-\frac{1}{L},1)$. In this case, given the asymptotic behavior of the NTK, the evolution of $A_{\theta(t)}$
is the same (up to a change of learning rate) as the one obtained by performing directly a gradient flow on the cost $C$.

As a result, in the regime $\gamma<1$, if the cost $C$ is strictly convex (or satisfies the Polyak-Lojasievicz inequality \cite{liu2020_NTK-PL-inequ}), the loss decays exponentially fast. Besides, the depth of the network has no effect in the infinite width limit (except for a change of learning rate) and the DLN structure adds no specific bias to the global minimum learned with gradient descent. In particular, this regime leads to no low-rank bias.

\begin{figure*}[!t]
  \centering
  \subfloat{
      \includegraphics[width=0.33\textwidth]{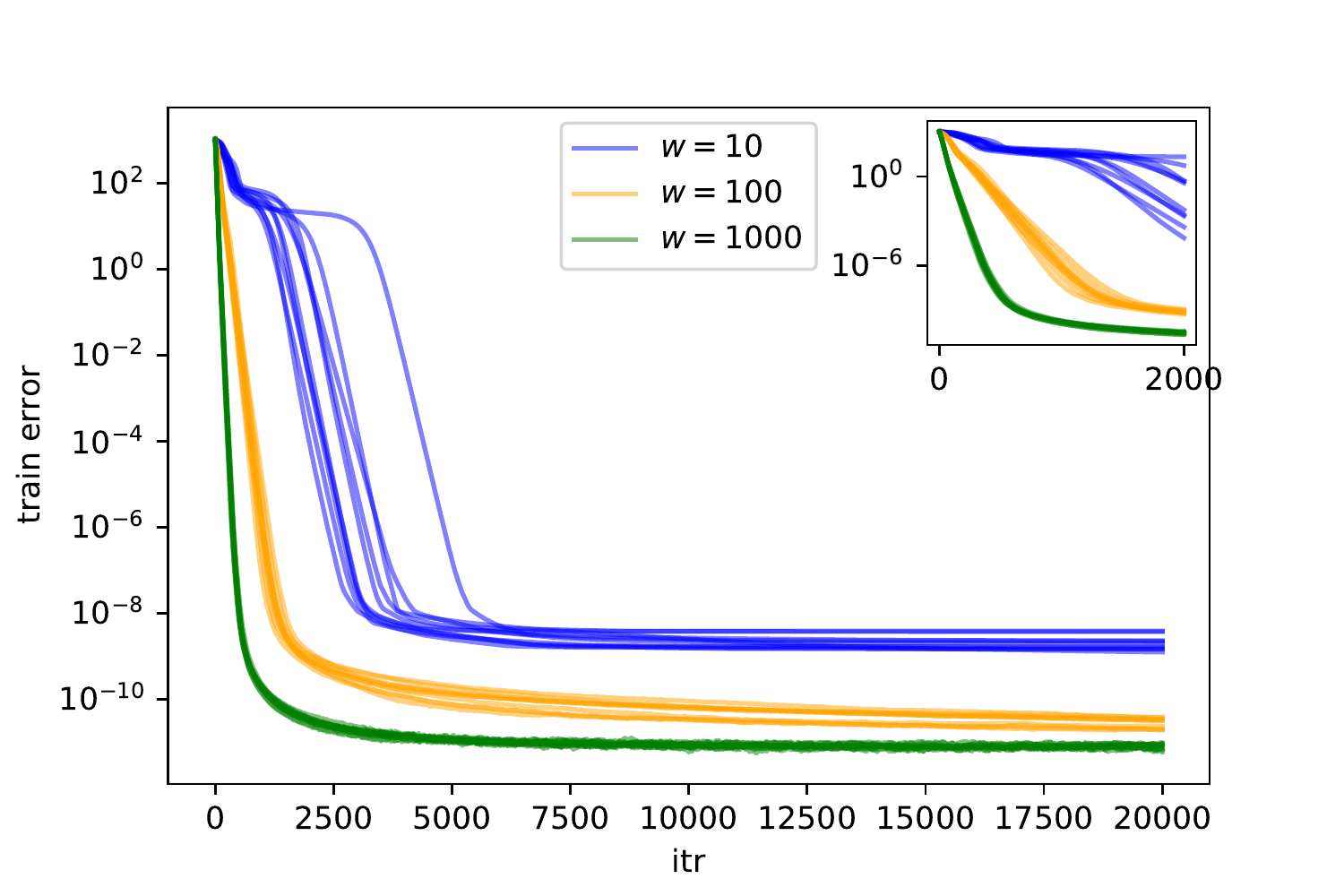}
  	\label{fig:regimes-of-training-NTK}
      } \hspace{-0.6cm}
  \subfloat{
      \includegraphics[width=0.33\textwidth]{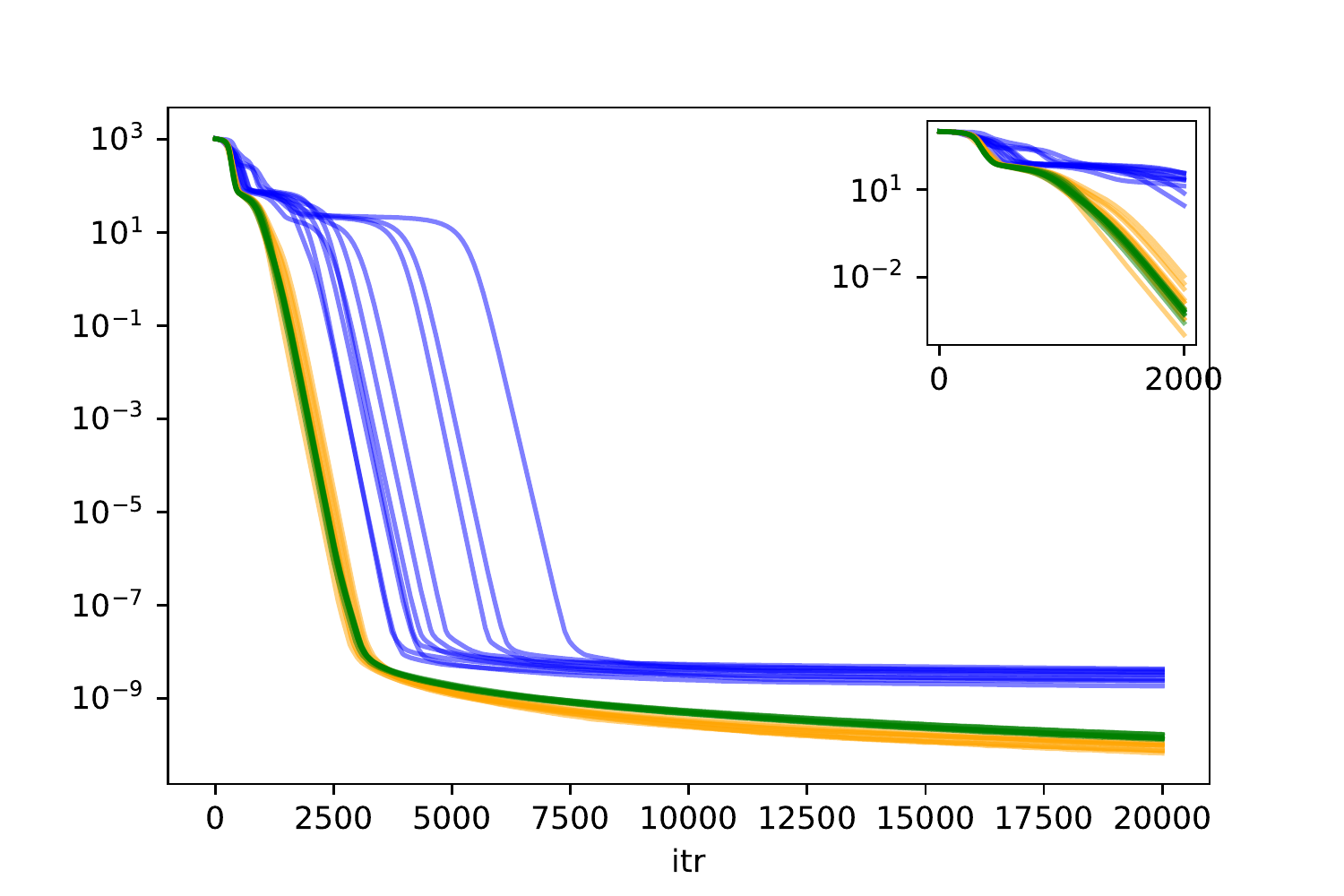}
  	\label{fig:regimes-of-training-MF}
      } \hspace{-0.6cm}
  \subfloat{
      \includegraphics[width=0.33\textwidth]{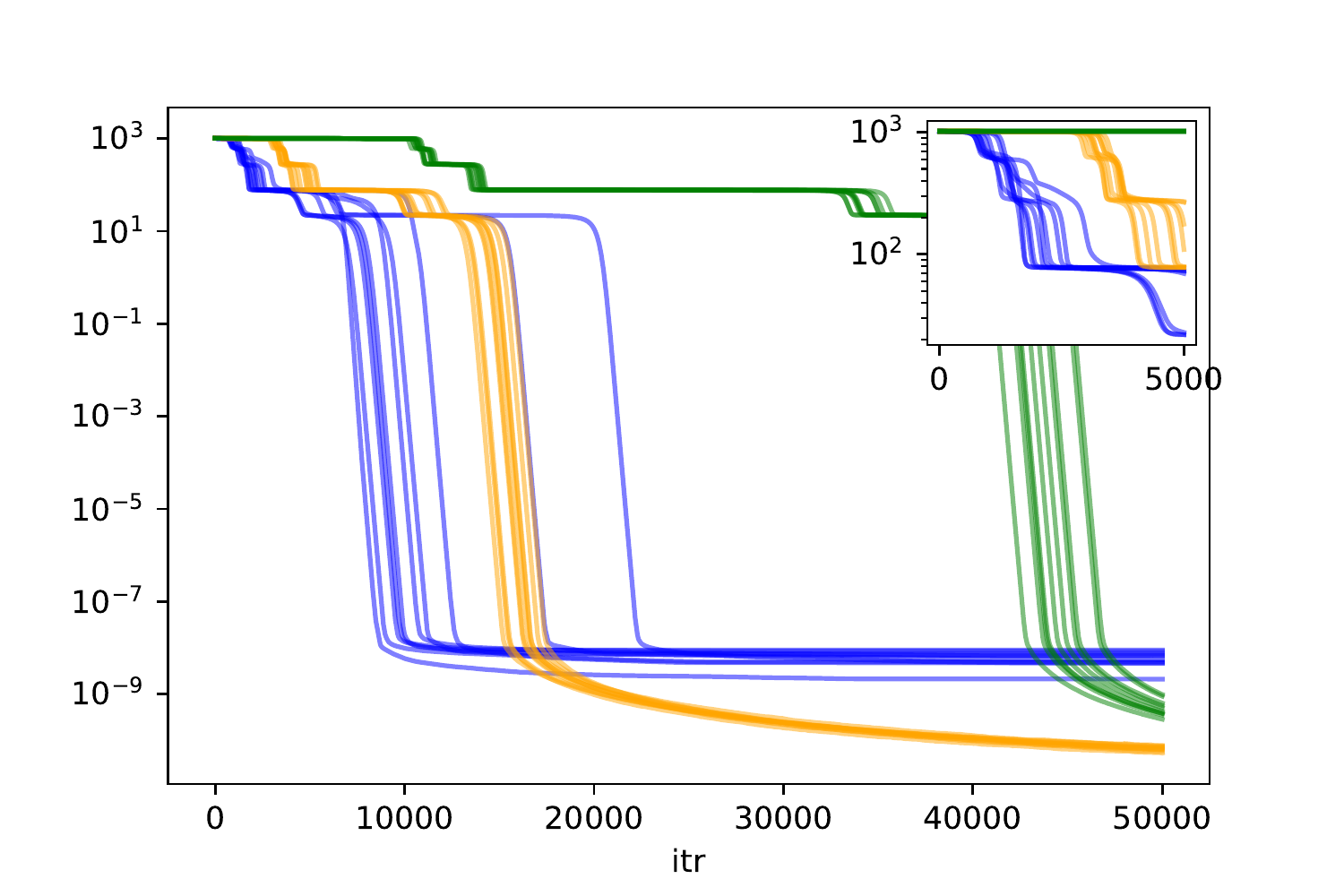}
  	\label{fig:regimes-of-training-StoS}
      } \\ %\vspace{0.25cm} \\
  % \subfloat{
  %     \includegraphics[width=0.3\textwidth]{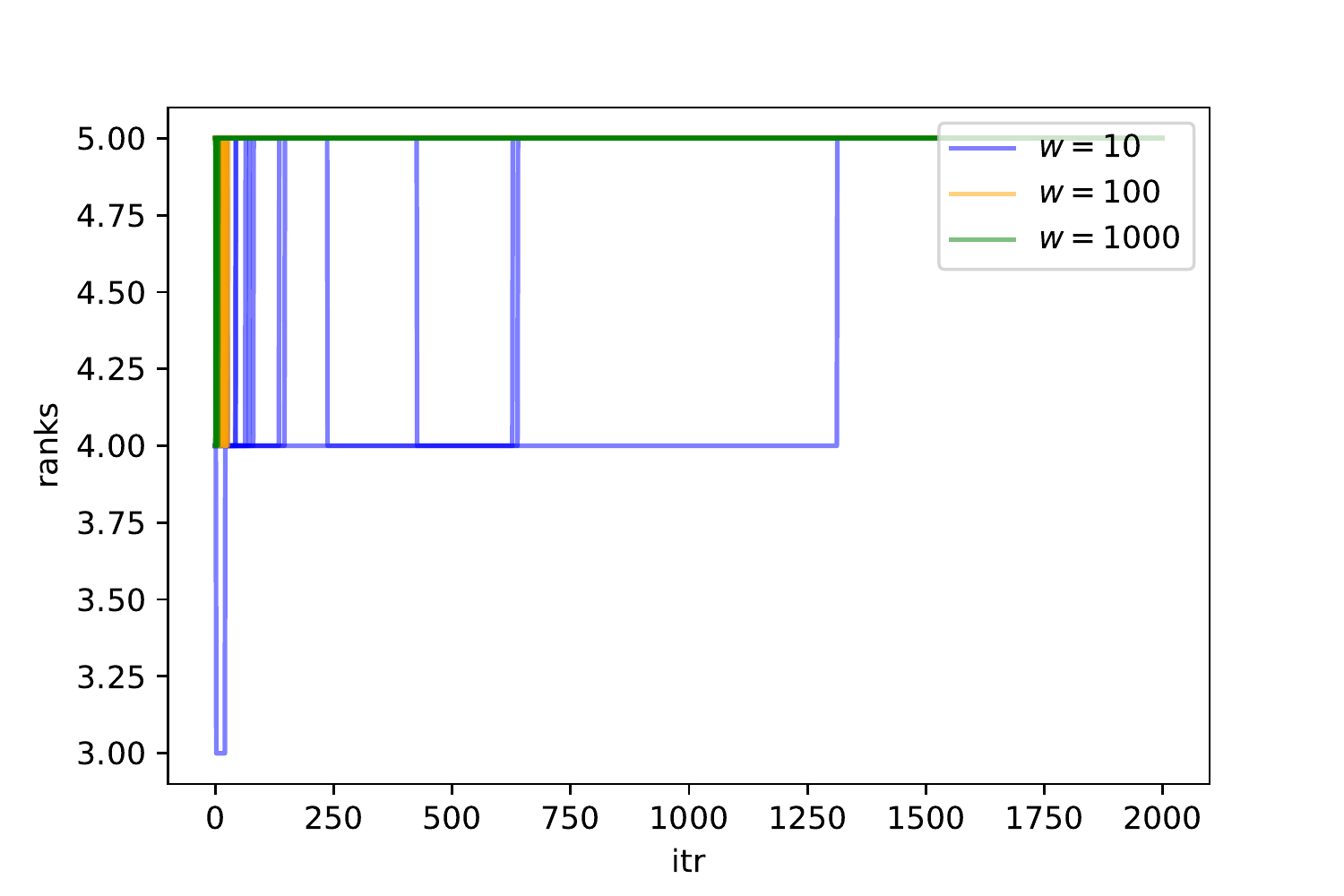}
  %     }
  % \subfloat{
  %     \includegraphics[width=0.3\textwidth]{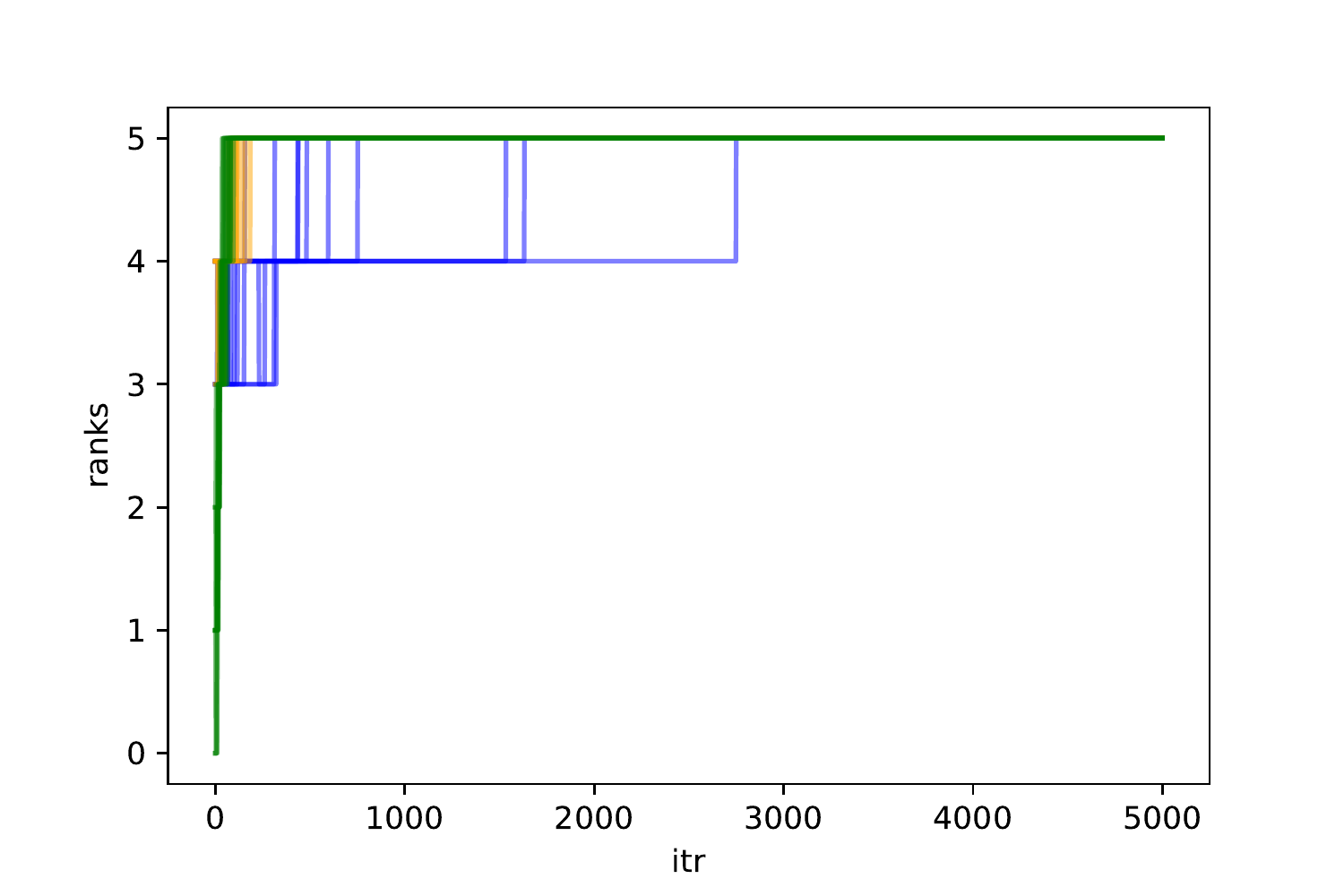}
  %     }
  % \subfloat{
  %     \includegraphics[width=0.3\textwidth]{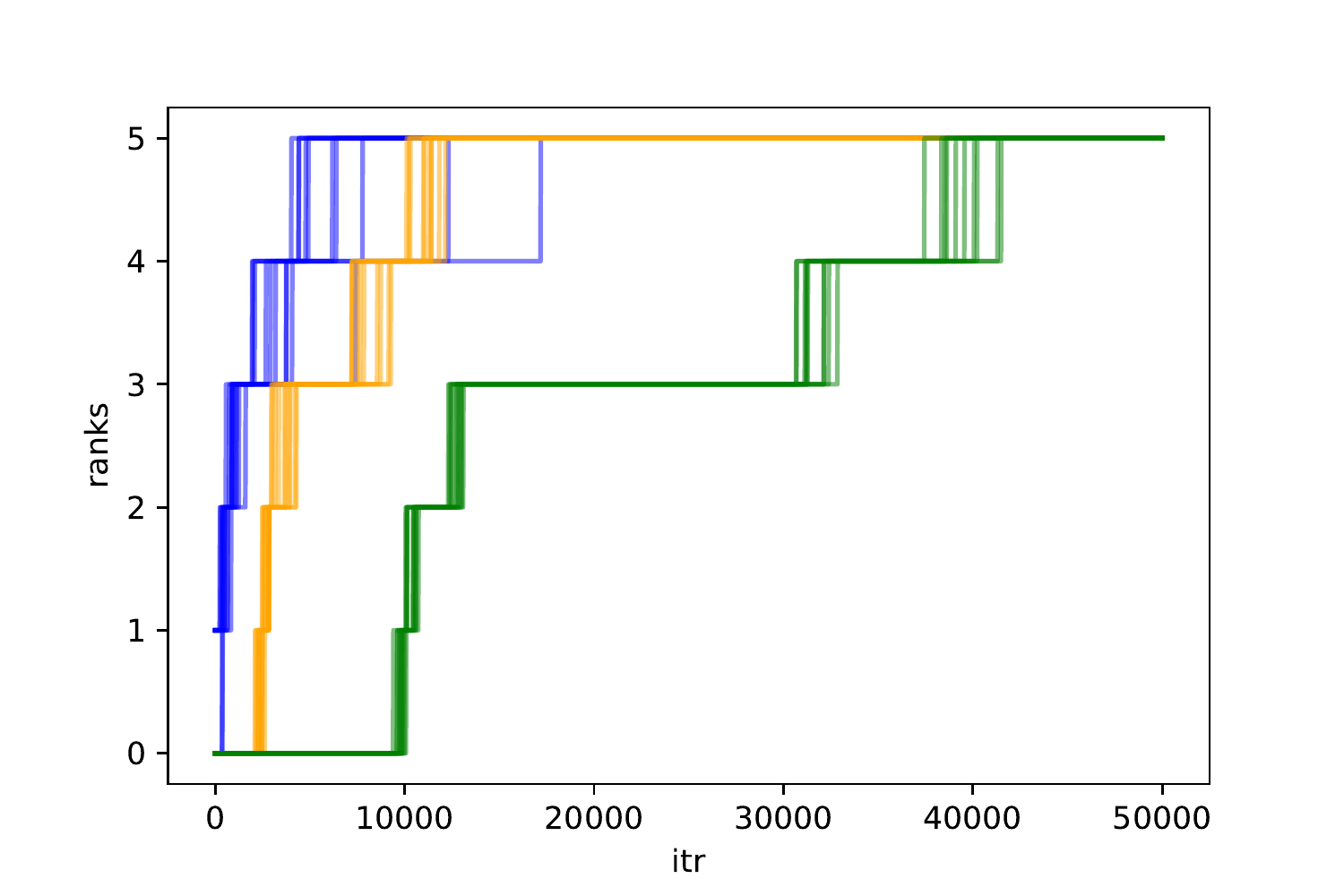}
  %     } \\
  {\scriptsize \textbf{(a)} $\gamma =0.75$ (NTK) \hspace{3.5cm} \textbf{(b)} $\gamma =1$ (MF) \hspace{3.5cm} \textbf{(c)} $\gamma =1.5$ (S-S)}
  \caption{\textit{Training in \textbf{(a)} the NTK regime, \textbf{(b)} mean-field, \textbf{(c)} saddle-to-saddle regimes in deep linear networks for three widths $w=10,100,1000$, $L=4$, and $10$ seeds.}  Parameters are initialized with variance $\sigma^2 = w^{-\gamma}$.
  We observe that \textbf{(a)} in the NTK regime, the training loss shows typical linear convergence behavior for $w=1000$ and $w=100$; \textbf{(b)} in the mean-field regime, we observe that even the large width networks approach to a saddle at the beginning of the training and that the length of the plateaus remains constant between widths $w=1000$ and $w=100$; \textbf{(c)} in the saddle-to-saddle regime, the plateaus become longer as the width grows. In all cases, we see a reduction in the variation between the different seeds as $w \to \infty$.
  %Ranks of the networks during training are presented below (tolerance set to $0.1$).
  }
  \label{fig:regimes-of-training}
\end{figure*}

\section{Saddle-to-Saddle Dynamics: $\gamma \gg 1$}\label{sec:saddle}

We now study the dynamics of DLN during training as the variance at initialization goes to zero. Specifically, we sample some random parameters $\theta_{0}$
with i.i.d. $\mathcal{N}(0,1)$ entries, consider the gradient flow
$\theta_{\alpha}(t) = \Gamma (t,\alpha \theta_0) $, and let $\alpha\searrow0$.
Since the origin is a saddle, for all fixed times $t$, $\lim_{\alpha\searrow0}\theta_{\alpha}(t)=0$. We will show however that there is an escape
time $t_{\alpha}$, which grows to infinity as $\alpha\searrow0$, such
that the limit $\lim_{\alpha\searrow0}\theta_{\alpha}(t_{\alpha}+t)$
is non-trivial for all $t\in\mathbb{R}$.

The study of shallow networks ($L=2$) is facilitated by the fact that the saddle at the origin is strict: its Hessian has negative
eigenvalues. For deeper networks ($L>2$), the saddle is highly degenerate: the $L-1$ first order derivatives vanish. In Section \ref{sec:escapepath}, we develop new theoretical tools to analyze the two types of saddles and their escape paths.

\subsection{First Path}
%Our proofs work under the event $E$ that gradient flow is able to
%escape the saddle at the origin, i.e. that for all $\alpha$ small
%enough there is a time $t$ such that $\mathcal{L}\left(\theta_{\alpha}(t)\right)<\mathcal{L}(0)$.
%\begin{prop}
%For depths $L=2,3$, we have $\mathbb{P}[E]=1$ and for $L>3$ we
%have $\mathbb{P}[E]\geq\frac{1}{2}$.
%\end{prop}
%
%In all our experiments we have never seen an initialization which
%could not escape the saddle, even for deep networks, so we conjecture
%that $\mathbb{P}[E]=1$ for all depths.

It turns out that gradient flow paths naturally escape the saddle
at the origin along so-called \emph{optimal escape paths}. We
say that a gradient flow path $\theta(\cdot):\mathbb{R}\to\mathbb{R}^P$ is an \emph{escape path}
of a critical point $\theta^{*}$ if $\lim_{t\to-\infty}\theta(t)=\theta^{*}$.
Informally, the optimal escape paths, whose precise definition is given in Section \ref{sec:escapepath}, are the escape paths that allow the fastest exit from a saddle. In DLNs, these optimal escape paths are of the form $RI^{(1\to w)}\underline{\theta}^{1}(t)$
where $\underline{\theta}^{1}(t)$ is a path of a width 1 DLN which escapes from the origin:
\begin{thm}
\label{thm:first-path}Assume that the largest singular value $s_{1}$
of the gradient of $C$ at the origin $\nabla C(0)\in\mathbb{R}^{n_{L}\times n_{0}}$
has multiplicity 1. There is a deterministic gradient flow path $\underline{\theta}^{1}$ in the space
of width-$1$ DLNs such that, with probability $1$ if $L\leq 3$, and probability at least $\nicefrac{1}{2}$ if $L>3$, there exists an escape time $t_{\alpha}^{1}$ and a rotation $R$ such that
\[
\lim_{\alpha\to0}\theta_{\alpha}(t_{\alpha}^{1}+t)=RI^{(1\to w)}\underline{\theta}^{1}(t).
\]
\end{thm}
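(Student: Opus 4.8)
The plan is to combine the general theory of fast escape paths from Section~\ref{sec:escapepath} with two facts specific to the DLN parametrization: (a) the \emph{optimal} escape paths of the saddle at the origin are exactly the rank-one paths $RI^{(1\to w)}\underline\theta^{1}$, and (b) a small random initialization is attracted, with the stated probability, to one of them. For Step~1 I would observe that $\nabla\mathcal L$ vanishes to order $L-1$ at the origin (since $A_\theta$ is an $L$-fold product, $\|\nabla\mathcal L(\theta)\|=O(\|\theta\|^{L-1})$), so the flow started at $\|\alpha\theta_0\|=O(\alpha)$ needs a time that diverges as $\alpha\to0$ (like $\log\tfrac1\alpha$ for $L=2$, like $\alpha^{-(L-2)}$ for $L>2$) to reach a fixed small sphere $\|A_\theta\|=\varepsilon$; let $t_\alpha^{1}$ be the first such time. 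The sphere is reached at all because the origin is not a local minimum of $\mathcal L$ and the flow is monotone: for generic $\theta_0$ the trajectory cannot converge to the origin, so $\mathcal L$ eventually drops below $C(0)$, which forces $\|A_\theta\|$ away from $0$. On compact time intervals the shifted family $\theta_\alpha(t_\alpha^{1}+\cdot)$ is uniformly Lipschitz and bounded, so by Arzel\`a--Ascoli it has locally uniform subsequential limits, each an entire gradient flow trajectory $\theta^{\star}$ with $\lim_{t\to-\infty}\theta^{\star}(t)=0$; because $t_\alpha^{1}$ is the \emph{first} hitting time, $\theta^{\star}$ is a \emph{fast} escape path and Theorem~\ref{thm:bijection-fast-escape-paths} applies to it.

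Next I would identify the fast escape paths. Near the origin $\nabla C(A_\theta)=\nabla C(0)+O(\|A_\theta\|)$, so by the NTK formula the leading dynamics of the end-to-end map is $\partial_t A=-\Theta^{(L)}\!\cdot\!\nabla C(0)+\cdots$; since $\Theta^{(L)}$ is a layer-wise sum of positive operators built from the current weights, starting from isotropic weights $A$ first moves in the direction $-\nabla C(0)$ and thereafter the component along the top singular pair $(p_1,q_1)$ of $-\nabla C(0)$ is amplified with a strictly larger rate than any other mode. Correspondingly, the escape rate used in Theorem~\ref{thm:bijection-fast-escape-paths} to organize escape paths is controlled by the growth, per unit parameter norm, of $\langle-\nabla C(0),W_L\cdots W_1\rangle$; maximizing this via the SVD of $\nabla C(0)$ and the multiplicativity of $\theta\mapsto A_\theta$ forces every $W_\ell$ to be rank one, with left/right singular directions matched from each layer to the next and $A_\theta\propto -p_1q_1^{T}$. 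This is exactly the ansatz $RI^{(1\to w)}\underline\theta^{1}$: $I^{(1\to w)}$ encodes ``rank one in every layer'', $R$ is the free choice of the shared hidden direction in $\mathbb R^{w}$ (a symmetry, by invariance of $\mathrm{Im}\,RI^{(1\to w)}$ under the flow), and $\underline\theta^{1}$ solves the resulting balanced scalar width-one ODE. The multiplicity-one hypothesis on $s_1$ is precisely what makes the maximizer unique modulo $R$ and a time shift, so every optimal fast escape path has this form.

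Finally I would show the limit $\theta^{\star}$ from Step~1 is an \emph{optimal} escape path, not a strictly slower one, and pin down $R$. Expanding $A_{\alpha\theta_0}$ in the singular basis of $\nabla C(0)$, all components start at scale $\Theta(\alpha^{L})$ but the $(p_1,q_1)$-component grows with a strictly larger exponent than every other singular mode and every rank-$\ge2$ mode, so when $\|A_{\theta_\alpha}\|$ first equals $\varepsilon$ the whole configuration is within $o_\alpha(1)$ of a rank-one aligned one, with an active hidden direction converging, as $\alpha\to0$, to one determined by $\theta_0$; passing to the limit and using continuous dependence on initial data on the compact interval where $\theta^{\star}$ lives in the chart of Theorem~\ref{thm:bijection-fast-escape-paths}, then propagating forward by uniqueness of the gradient flow, gives $\theta^{\star}(t)=RI^{(1\to w)}\underline\theta^{1}(t)$ for a fixed $R$ and all $t\in\mathbb R$, and since every subsequential limit is this same path the full limit exists. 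The main obstacle is the probability statement: beyond the end-to-end direction, the individual layers must organize into a \emph{consistent} rank-one/balanced configuration, which requires the conserved balancedness defects $W_{\ell+1}^{T}W_{\ell+1}-W_\ell W_\ell^{T}$ — of size $O(\alpha^{2})$ but with \emph{signs} frozen at initialization — to be compatible with the optimal path; for $L\le3$ these are automatically right (or trivial), giving probability $1$, whereas for $L>3$ the relevant scalar defects are genuinely random and favorable only on an event of probability $\ge\tfrac12$. Making this sign condition precise, and checking that on its complement the parameter-space limit genuinely fails (while $A_\theta$ still escapes along $-p_1q_1^{T}$), is the delicate part; the remainder is soft analysis (Arzel\`a--Ascoli, ODE uniqueness) together with the linear-algebraic optimization of the second step.
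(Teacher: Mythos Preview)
Your overall architecture---extract an escape path by compactness, classify optimal escape paths of the leading homogeneous part, then match via Theorem~\ref{thm:bijection-fast-escape-paths}---is the same as the paper's. But two steps are genuinely wrong or missing.

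\textbf{The probability $\nicefrac12$ is not about balancedness.} You attribute the failure for $L>3$ to the conserved defects $W_{\ell+1}^{T}W_{\ell+1}-W_\ell W_\ell^{T}$ having the ``wrong sign''. This is a misdiagnosis: those defects are $O(\alpha^2)$ and vanish in the limit, so every subsequential limit $\theta^\star$ is automatically balanced regardless of the sign of the initial imbalance. The actual obstruction is upstream. The paper first reduces the early dynamics to gradient flow on the $L$-homogeneous cost $H(\theta)=\mathrm{Tr}[\nabla C(0)^{T}A_\theta]$, and then needs the direction $\theta/\|\theta\|$ to converge to an \emph{optimal} escape direction of $H$. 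For $L=2$ this follows because all saddles of $H$ on the sphere are strict; for $L=3$ the paper proves directly that every non-optimal escape direction (including those with speed $0$) has a negative Hessian direction on the sphere. For $L>3$ the non-strict saddles of $H$ cannot be ruled out, and the paper falls back on the elementary observation that $H((W_1,\dots,W_L))=-H((-W_1,\dots,W_L))$, so $\mathbb{P}(H(\theta_0)<0)=\nicefrac12$; on this event the homogeneous flow cannot converge to any saddle of $H$ (where $H=0$), which forces convergence in direction. Your balancedness story does not recover this, and in particular gives no mechanism for why $L=3$ succeeds with probability $1$.

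\textbf{``First hitting time'' does not by itself give the speed needed for Theorem~\ref{thm:bijection-fast-escape-paths}.} You assert that because $t_\alpha^1$ is the first time the path hits a fixed sphere, the limit $\theta^\star$ is a \emph{fast} escape path to which the bijection theorem applies. But that theorem requires a quantitative lower bound on the escape speed (specifically $s_0>\frac{L-1}{L+1}s^*$ in the deep case), and an arbitrary escape path of the origin need not satisfy it. The paper obtains this bound via an escape-cone argument: it shows that for small $\alpha$ the path enters the cone $\mathcal C_\epsilon=\{\theta:H(\theta)/\|\theta\|^L<(-s^*+\epsilon)/L\}$ (this is where the convergence-in-direction result of the previous paragraph is used), that the cone is forward-invariant below a radius $r$, and that inside the cone the norm obeys the two-sided differential inequality $(s^*-2\epsilon)\|\theta\|^{L-1}\leq\partial_t\|\theta\|\leq(s^*+\epsilon)\|\theta\|^{L-1}$. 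Integrating this is what yields $\theta^\star\in\mathcal F_{\mathcal L}(s^*-2\epsilon)$ for every $\epsilon>0$. Your spectral argument on $A_\theta$ gestures at the right phenomenon but does not produce a parameter-space norm bound of this form, and without it you cannot invoke the bijection. Once the speed bound is in place, your uniqueness-up-to-$R$ argument is essentially the paper's: the optimal escape directions of $H$ are exactly $RI^{(1\to w)}(\underline\rho^*)$, and uniqueness in the bijection forces $\theta^\star=RI^{(1\to w)}(\underline\theta^1)$.
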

The unicity of the largest singular value of the gradient at the origin guarantees the unicity (up to rotation) of the optimal escape paths. For example, with the MSE loss, the gradient at the origin is $2YX^T$: for generic $Y$ and $X$, the largest singular value of the gradient has a multiplicity of $1$.

The reason why, for DLN with $L>3$, we can only guarantee a probability of $\frac 1 2$ in the previous theorem, is that we need to ensure that gradient descent does not get stuck at the saddle at the origin or at other saddles connected to it. For $L=2$, this follows from the fact that the saddle is strict. When $L>2$, the saddle is not strict and we were only able to prove it in the case where $L=3$. We conjecture that the behavior described in Theorem \ref{thm:first-path} happens with probability 1 for all $L\geq 2$.

As shown in the Appendix \ref{subsec:proof-first-path}, the escape time $t_{\alpha}$ is of order $-\log\alpha$
for shallow networks and of order $\alpha^{-(L-2)}$ for networks of depth $L>2$. Hence, the deeper the network, the
slower the gradient flow escapes the saddle.

Besides, as also discussed in the Appendix \ref{subsec:proof-first-path}, the norm $\left\Vert \theta^{1}(t)\right\Vert $ of the limiting escape
path $\theta^{1}(t)=RI^{(1\to w)}\underline{\theta}^{1}(t)$ grows
at an optimal speed: as $e^{s^{*}(t+T)}$ for some $T$ when $L=2$
and as $(s^{*}(L-2)(T-t))^{-\frac{1}{L-2}}$ for some $T$ when $L>2$,
where $s^{*}$ is the optimal escape speed $s^{*}=L^{-\frac{L-2}{2}}s_{1}$.
These are optimal in the sense that given an other gradient flow path $\theta(t)$ which exits from the origin, there exists a ball $B$ centered at the origin such that, for any small $\epsilon$, if $t_1$ and $t_2$ are the times such that $\left\Vert \theta^{1}(t_1)\right\Vert = \epsilon = \left\Vert \theta (t_2)\right\Vert$, then $\left\Vert \theta^{1}(t+t_1)\right\Vert  \geq \left\Vert \theta(t+t_2)\right\Vert$ for any positive $t$, until one of the paths exits the ball $B$.

\subsection{Subsequent Paths}
What happens after this first path? The width-$1$ gradient flow path $\underline{\theta}^1(t)$ converges to a width-$1$ critical point $\underline{\vartheta}^1$ as $t\to\infty$. While $\underline{\vartheta}^1$ may be a local minimum amongst width-1 DLNs, its inclusion $\vartheta^1 = RI^{(1\to w)}(\underline{\vartheta}^1)$ will be a saddle assuming it is not a global minimum already and that the network is wide enough, since if $w\geq \min \{ n_0, n_L\} $ all critical points are either global minima or saddles \cite{nouiehed2021local_openness}.

Theorem \ref{thm:first-path} guarantees that, as $\alpha \searrow 0$, the gradient flow path $\theta_\alpha(t)$ will approach the saddle $\vartheta^1$. It is then natural to assume that $\theta_\alpha(t)$ will escape this saddle along an optimal escape path (which is the inclusion of a width-2 path). Repeating this process, we expect gradient flow to converge as $\alpha \searrow 0$ to the concatenation of paths going from saddle to saddle of increasing width:

\begin{conjecture}
\label{conj:Conj-Saddle-Saddle} With probability $1$, there exist $K+1$
critical points $\vartheta^{0},\dots,\vartheta^{K}\in\mathbb{R}^{P_{L,w}}$ (with
$\vartheta^{0}=0$) and $K$ gradient flow paths $\theta^{1},\ldots,\theta^{K}:\mathbb{R}\to\mathbb{R}^{P_{L,w}}$
connecting the critical points (i.e. $\lim_{t\to-\infty}\theta^{k}(t)=\vartheta^{k-1}$
and $\lim_{t\to+\infty}\theta^{k}(t)=\vartheta^{k}$) such that the
path $\theta_{\alpha}(t)$ converges
as $\alpha\to0$ to the concatenation of $\theta^{1}(t),\dots,\theta^{K}(t)$
in the following sense: for all $k<K$, there exist times $t_{\alpha}^{k}$ (which depend on $\theta_0$)
such that
\[
\lim_{\alpha\to0}\theta_{\alpha}(t_{\alpha}^{k}+t)=\theta^{k}(t).
\]

Furthermore, for all $k < K$, there is a deterministic path $\underline{\theta}^{k}(t)$ and a local minimum $\underline{\vartheta}^{k}$ of a width-$k$ network such that for some rotation $R$ (which depends on $\theta_0$), $\theta^k(t)=RI^{(k\to w)}(\underline{\theta}^k(t))$ and $\vartheta^k=RI^{(k\to w)}(\underline{\vartheta}^k)$ for all $k$ and $t$.
%Furthermore, for all $k$ the path $\theta^{k}(t)$ is unique up to rotations: there exists a unique and deterministic path $\vartheta^{k}(t)$ \textcolor{red}{{[}Fr: is there a better notation ?{]}} in the space of with $k$ networks such that $\theta^{k}(t)$ is equal, up to a random rotation, to $I^{(k\to w)}(\vartheta^{k}(t))$. Similarily the $k$-th saddle $\tilde{\theta}^{k}$ is, up to rotation of the form $I^{(k\to w)}(\tilde{\vartheta}{k}^{k})$ where $\vartheta_{k}^{k}$ is a local minimum for width $k$ networks. \textcolor{red}{{[}Fr: is it unique ?{]}}
\end{conjecture}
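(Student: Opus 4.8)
The plan is to prove the statement by induction on the saddle index $k$, with Theorem \ref{thm:first-path} providing the base case $k=1$. The inductive hypothesis at step $k$ is that, as $\alpha\searrow 0$, the rescaled flow $\theta_\alpha(t_\alpha^{k-1}+t)$ converges locally uniformly to a path $\theta^{k-1}(t)$, that $\theta^{k-1}(t)\to\vartheta^{k-1}$ as $t\to+\infty$, and that $\vartheta^{k-1}=RI^{(k-1\to w)}(\underline\vartheta^{k-1})$ with $\underline\vartheta^{k-1}$ a local minimum of the width-$(k-1)$ network. The goal of the inductive step is then to analyze how the flow escapes $\vartheta^{k-1}$ and to show that it does so along an optimal escape path which is (up to rotation) the inclusion of a width-$k$ path.

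For the inductive step, I would first use the rotation symmetry to assume $R=\mathrm{Id}$, so $\vartheta^{k-1}=I^{(k-1\to w)}(\underline\vartheta^{k-1})$, and then exploit the inclusion invariance: the submanifold $\mathcal M_{k-1}:=\mathrm{Im}\,I^{(k-1\to w)}$ (and its rotations) is invariant under gradient flow, and within $\mathcal M_{k-1}$ the point $\underline\vartheta^{k-1}$ is attracting since it is a local minimum. Splitting a neighborhood of $\vartheta^{k-1}$ into directions tangent to $\mathcal M_{k-1}$ (convergent/stable) and transverse directions, the escape must occur in the transverse directions. The key claim is that the local picture in the transverse directions is again that of a (possibly non-strict) saddle whose linearization is governed by the singular values of the residual matrix gradient $G:=\nabla C(A_{\vartheta^{k-1}})$ restricted to the subspaces orthogonal to the row/column spaces already spanned by $\underline\vartheta^{k-1}$; assuming the top such singular value is simple — a genericity condition to be justified by a transversality/measure-zero argument using strict convexity of $C$ and genericity of the data — the analogue of the Hartman--Grobman-type classification of fast escape paths (Theorem \ref{thm:bijection-fast-escape-paths}) applies and yields an essentially unique optimal escape path, whose effect is to add a single new rank-one block, i.e. it is $I^{(k\to w)}(\underline\theta^{k})$ for a width-$k$ path $\underline\theta^{k}$.

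The remaining steps are gluing and termination. For gluing, I would define $t_\alpha^{k}$ as the first time after $t_\alpha^{k-1}$ at which $\theta_\alpha$ reaches a fixed small sphere around $\vartheta^{k-1}$, and then, using uniform-in-$\alpha$ a priori estimates on the flow in a neighborhood of $\vartheta^{k-1}$ (of the same kind already obtained at the origin, with the non-strict case $L>2$ controlled by the polynomial escape rate $\|\theta(t)\|\sim (c(T-t))^{-1/(L-2)}$), show that $\theta_\alpha(t_\alpha^{k}+t)$ converges locally uniformly to the optimal escape path $\theta^{k}(t)=I^{(k\to w)}(\underline\theta^{k}(t))$ determined above, which reconnects to the inductive hypothesis at level $k$. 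For termination, the rank of $A_{\vartheta^{k}}$ is strictly increasing in $k$ and bounded by $\min\{n_0,n_L\}$, so the process must stop after finitely many saddles $\vartheta^{0},\dots,\vartheta^{K}$ at a point which is either a global minimum or a path diverging to infinity.

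The main obstacle — and the reason the statement is a conjecture rather than a theorem — is the \emph{bad direction} problem. Local-uniform convergence of $\theta_\alpha(t_\alpha^{k-1}+t)$ to $\theta^{k-1}(t)$ only controls the flow on compact time intervals; to conclude that $\theta_\alpha$ escapes $\vartheta^{k-1}$ along the optimal escape path one must rule out that it approaches $\vartheta^{k-1}$ asymptotically tangent to a slow or stable direction, or along a direction steering it into the basin of a different critical point connected to $\vartheta^{k-1}$. For strict saddles ($L=2$) this is forced by the instability of the saddle, but for $L>2$ the saddles are highly degenerate (the $L-1$ first derivatives vanish), the center/stable manifold is large, and we do not even have a full proof of the first step with probability one. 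The crux is therefore a quantitative center-manifold analysis near each $\vartheta^{k-1}$ with error terms controlled uniformly enough to survive the $\alpha\to 0$ limit, ensuring the direction of approach is generic; this is precisely the gap left open here (and, for subsequent steps, also in \cite{li2020towards}).
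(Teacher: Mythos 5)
The statement you are proving is stated in the paper as a \emph{conjecture}: the paper itself only proves the first step (Theorem \ref{thm:first-path}) and, in the Remark at the end of Section \ref{sec:sketch-of-proof}, explains exactly why the inductive continuation is open. Your plan essentially reproduces that discussion, and you are honest that the bad-direction problem is left unresolved, so to that extent your write-up is consistent with the paper. But it is not a proof, and beyond the gap you acknowledge there is a second one that you gloss over. You assert that ``the analogue of the Hartman--Grobman-type classification of fast escape paths (Theorem \ref{thm:bijection-fast-escape-paths}) applies'' at $\vartheta^{k-1}$. As proven (Theorem \ref{thm:bijection-optimal-escape-paths-appendix}), that result requires the cost to be $(k,m)$-approximately homogeneous at the saddle, i.e.\ \emph{all} derivatives of order $<k$ must vanish there. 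This holds at the origin, where the first $L-1$ derivatives of $\mathcal L$ vanish, but fails at $\vartheta^{k-1}$ for $k\geq 2$: since $\underline{\vartheta}^{k-1}$ is a local minimum of the width-$(k-1)$ network, the Hessian of $\mathcal L$ at $\vartheta^{k-1}$ is nonzero (positive along $\mathrm{Im}\,I^{(k-1\to w)}$), so the saddle has a mixed structure --- a positive-definite block plus a degenerate $L$-homogeneous block on the Hessian null-space --- to which none of the contraction estimates of Lemmas \ref{lem:shallow_contraction_phi} and \ref{lem:deep_contraction_phi} apply as stated. The paper explicitly names this as the second obstacle (``the fact that the Hessian at $\vartheta_1$ is not null complexifies the analysis''); your transverse/tangent splitting is the right first move, but reducing the transverse dynamics to a clean approximately-homogeneous problem (with the coupling to the exponentially contracting tangent directions controlled uniformly in $\alpha$) is precisely the missing technical content, not a routine application of the existing theorem.

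Two smaller points. First, the simplicity of the top singular value of the residual gradient at $\vartheta^{k-1}$ is stated as ``a genericity condition to be justified,'' but no argument is given; the paper only assumes the analogous condition at the origin and does not claim it propagates. Second, your termination argument via strictly increasing rank is heuristic: the conjecture only asserts that $\underline{\vartheta}^k$ is a width-$k$ local minimum, and one would still need to rule out that the new rank-one block degenerates (or that the flow returns near an earlier saddle). None of this is a criticism of your plan as a plan --- it is the natural one --- but as a proof it leaves open exactly the points the paper leaves open, plus it understates the difficulty of extending Theorem \ref{thm:bijection-fast-escape-paths} to non-homogeneous saddles.
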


This Saddle-to-Saddle behavior explains why for small initialization scale, the train error gets stuck at plateaus during training (Figures \ref{fig:linear_saddle_to_saddle_MC} and \ref{fig:regimes-of-training}). Conjecture \ref{conj:Conj-Saddle-Saddle} suggests that these plateaus correspond to the saddle visited.

Note that for losses such as the cross-entropy, the gradient descent may diverge towards infinity, as studied in \cite{soudry2018implicit,gunasekar_2018_implicit_bias}.
From now on, we focus on the case where $\vartheta^K$ is a finite global minimum. By the invariance under gradient flow of $\mathrm{Im}[I^{(k\to w)}]$ (the image of the inclusion map), the inclusion of a width-$k$ local minimum $\underline{\vartheta}^k$ into a larger network is a saddle $\vartheta^k$ (if $A_{\vartheta^k}$ is not a global minimum of $C$). These types of saddles are closely related to the symmetry-induced saddles studied in \cite{simsek-2021-symmetries-loss} in non-linear networks.

\begin{rem}
Note that each of the limiting paths $\theta^k$ and critical points $\vartheta^k$ will be \emph{balanced} (i.e. their weight matrices satisfy $W_\ell W_\ell^T = W_{\ell+1}^T W_{\ell+1}$ for all $\ell=1,\dots,L-1$). The origin is obviously balanced and since balancedness is an invariant of gradient flow and all other paths and saddles are connected to the origin by a sequence of gradient flow paths, they must be balanced too. Note however that for all $\alpha>0$, the path $\theta_\alpha(t)$ is almost surely not balanced.
\end{rem}

\subsection{Greedy Low-Rank Algorithm}\label{sec:greedyalgo}

%Assuming that under the right conditions, we can avoid the above problems,
%then gradient descent would escape $\vartheta^{1}$ along an
%optimal escape path (unique up to rotation) and so on and so forth. Although we
%do not have an explicit formula for these optimal escape paths for
%a general cost $C$, we know along which direction they escape (in
%the shallow case along the eigenvectors of the
%most negative eigenvalue of the Hessian, in the deep case the eigenvectors of the $L$-th derivative tensor). At the origin, the escape directions are of the form $RI^{(1\to w)}(\underline{\rho})$
%for $\underline{\rho}=(v_{1}^{T},1,\dots,1,u_{1})$ where $u_{1},v_{1}$
%are the singular vectors of the largest singular value of $\nabla C(0)$. The escape directions at a saddle of the form $\vartheta^{k}=RI^{(k\to w)}(\underline{\vartheta}^{k})$ with $k<w$, are equal to $\rho=RI^{(k+1\to w)}(\underline{\rho})$ for $\underline{\rho}=(V_1,\dots,V_L)$ such that the weight matrices are zero except for the entry $(V_\ell)_{k+1,k+1}=1$ for each $1<\ell<L$, the line $(V_1)_{k+1,\cdot}=v_1^T$ and the column $(V_L)_{\cdot,k+1}=u_1$, for $u_1,v_1$ the largest singular vectors of $\nabla C(A_{\vartheta^k})$.
%
%Knowing an escape direction $\rho$ at a saddle $\vartheta^k$, the corresponding escape path $\theta^{k+1}$ can be approximated by the path of gradient descent initialized at $\vartheta^k+\epsilon \rho$ for a small enough $\epsilon$.

Conjecture \ref{conj:Conj-Saddle-Saddle} suggests that the gradient flow with vanishing initialization implements a greedy low-rank algorithm which performs a greedy search for a lowest-rank solution: it first tries to fit a width $1$ network, then a width $2$ network and so on until reaching a solution. Thus, we expect that as $\alpha \searrow 0$, the dynamics of gradient flow corresponds, up
to inclusion and rotation, to the limit of the algorithm $\mathcal{A}_{\epsilon,T,\eta}$
as sequentially $T\to\infty$, $\eta\to0$ and $\epsilon\to0$. In particular, we used the Algorithm $\mathcal{A}_{\epsilon,T,\eta}$, with large $T$ and small $\eta$ and $\epsilon$ to approximate the paths $\underline{\theta}^{k}$ and points $\underline{\vartheta}^k$ in Figure \ref{fig:linear_saddle_to_saddle_MC}.
%since for large enough $T$, the parameters after $T$ steps of GD approximate $\underline{\vartheta}^k$ (and the gradient descent path up to time $T$ approximates $\underline{\theta}^{k}$).
Note how this limiting algorithm is deterministic. This implies that even for finite widths the dynamics of gradient flow converge to a deterministic limit (up to random rotations $R$) as the variance at initialization goes to zero.

A similar algorithm has already been described in \cite{li2020towards}, however thanks to our different proof techniques, we are able to give a more precise description of the evolution of the parameters.

\renewcommand{\thealgorithm}{}

\begin{algorithm}[tb]
   \caption{$\mathcal{A}_{\epsilon,T,\eta}$}
   \label{alg:example}
\begin{algorithmic}
   \STATE \# Compute the first singular vectors of $\nabla C(0)$:
   \STATE $u,s,v \leftarrow \mathrm{SVD}_1(\nabla C(0))$
   \STATE $\theta \leftarrow (-\epsilon v^T,\epsilon,\dots,\epsilon u)$
   \STATE $w\leftarrow 1$
   \WHILE{$C(A_\theta) < C_{min} + \epsilon$}
   \STATE \# $T$ steps of GD on the loss of width-$w$ DLN with lr $\eta$
   \STATE $\theta \leftarrow \mathrm{SGD}_{w,T,\eta}(\theta)$
   \STATE $u,s,v \leftarrow \mathrm{SVD}_1(\nabla C(A_\theta))$
   \STATE $\theta \leftarrow \left( \left(\begin{array}{c}
        W_{1}\\
        -\epsilon v^{T}
        \end{array}\right)
        ,
	\left(\begin{array}{cc}
       	W_{2} & 0\\
        0 & \epsilon
        \end{array}\right)
        ,\dots,
        \left(\begin{array}{cc}
        W_{L} & \epsilon u \end{array}\right)
        \right)$
   \STATE $w\leftarrow w+1$
   \ENDWHILE
\end{algorithmic}
\end{algorithm}

\subsection{Description of the paths that escape a saddle}
\label{sec:escapepath}
%
%Our theoretical analysis relies on a Theorem which is analogue to
%the Hartman-Grobman theorem. The Hartman-Grobman theorem shows a correspondence
%between the paths in the vicinity of a critical point of a flow and
%the paths of the linearization of the flow around the critical point.
%While the Hartman-Grobman theorem could be applied to the shallow
%case, where the saddle at the origin is strict, it cannot be applied
%to the deep case $L>2$ since in this case the linearization of the
%flow is trivial. The first non-trivial Taylor approximation is of
%order $L$ but we are not aware of any result relating the paths of
%a flow and its $k$th order Taylor approximation for $k>1$.
%
%We therefore prove an analogue to the Hartman-Grobman theorem that
%generalizes to higher order Taylor approximations. The key difference
%is that instead of showing a correspondence between all paths close
%to the critical points, we only show a correspondence between the
%\emph{fast escaping paths }i.e. paths $\theta(t)$ which converge
%sufficiently fast towards the saddle $\theta^{0}=0$ as $t\to-\infty$.
%This is sufficient for our analysis since we know that the limiting
%path $\theta^{1}(t)$ escapes the saddle at the origin at an almost
%optimal speed.

Our proof relies on a theorem which relates the escape paths of the saddle at the origin of the cost $\mathcal{L}$ and the escape paths of the $L$-th order Taylor approximation $H$ of $\mathcal L$. This correspondence only applies to paths which escape the saddle sufficiently fast.

We define the set of fast escaping paths $\mathcal{F}_{\mathcal{L}}(s)$
of the cost $\mathcal{L}$ with speed at least $s$ as follows:
\begin{itemize}
\item for shallow networks ($L=2$), it is the set of gradient
flow paths that satisfy $\left\Vert \theta(t)\right\Vert =O(e^{st})$
as $t\to-\infty$,
\item for deep networks ($L>2$), it is the set of gradient
flow paths that satisfy $\left\Vert \theta(t)\right\Vert \leq\left(s(L-2)(T-t)\right)^{-\frac{1}{L-2}}$
for some $T$ and any small enough $t$.
\end{itemize}

The optimal escape speed is $s^{*}=L^{-\frac{L-2}{2}}s_{1}$ where $s_1$ is the largest singular value of $\nabla C(0)$. It is the optimal escape speed in the sense that there are no faster escape paths:  $\mathcal{F}_{\mathcal{L}}(s)=\emptyset$ if $s>s^*$. Escape paths which exit the saddle at the optimal escape speed are called optimal escape paths.

There is a bijection between fast escaping paths of the loss $\mathcal{L}$
and those of its $L$th order Taylor approximation $H$:
\begin{thm}\label{thm:bijection-fast-escape-paths}
\textbf{Shallow networks:
}for all $s$ s.t. $s > \frac{1}{3}s^{*}$ there is a unique bijection $\Psi:\mathcal{F}_{\mathcal{L}}(s)\to\mathcal{F}_{H}(s)$
such that for all paths $\theta\in\mathcal{F}_{\mathcal{L}}(s)$,
$\left\Vert \theta(t)-\Psi(\theta)(t)\right\Vert =O(e^{3st})$ as
$t\to-\infty$.

\textbf{Deep networks:} for all $s>\frac{L-1}{L+1}s^{*}$, there is
a unique bijection $\Psi:\mathcal{F}_{\mathcal{L}}(s)\to\mathcal{F}_{H}(s)$
such that for all paths $\theta\in\mathcal{F}_{\mathcal{L}}(s)$,
$\left\Vert \theta(t)-\Psi(\theta)(t)\right\Vert =O((-t)^{-\frac{L+1}{L-2}})$
as $t\to-\infty$.
\end{thm}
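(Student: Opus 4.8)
The plan is to view the gradient flow of $\mathcal L$ as a strictly higher-order perturbation of that of $H$ and to pair each escape path of one flow with a nearby escape path of the other by a contraction mapping, a quantitative, non-hyperbolic analogue of the usual unstable-manifold / Hartman--Grobman constructions. The key structural input is homogeneity: since $A_\theta$ is homogeneous of degree $L$ in $\theta$ and $C$ is $C^2$, the degree-$L$ Taylor polynomial of $\mathcal L$ at the origin is exactly $H(\theta)=C(0)+\langle\nabla C(0),A_\theta\rangle$, a homogeneous degree-$L$ function plus a constant, and the remainder satisfies $\nabla(\mathcal L-H)(\theta)=O(\|\theta\|^{2L-1})$. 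Thus the two flows are $\dot\theta=-\nabla H(\theta)$ and $\dot\theta=-\nabla H(\theta)+R(\theta)$ with $R=-\nabla(\mathcal L-H)=O(\|\theta\|^{2L-1})$ of strictly higher homogeneity than $\nabla H$ (which has homogeneity $L-1$). As a preliminary reduction I would record that \emph{every} escape path of either flow is balanced: the quantities $W_{\ell+1}^\top W_{\ell+1}-W_\ell W_\ell^\top$ are conserved along any gradient flow of the form $C'(A_\theta)$, hence equal their $t\to-\infty$ limit $0$ along an escape path; this confines all escape paths to the balanced submanifold, where singular-value coordinates can be used freely.

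I would fix $\theta_H\in\mathcal F_H(s)$ of escape speed $\mu\ge s$ and look for the nearby $\theta_{\mathcal L}=\theta_H+\xi\in\mathcal F_{\mathcal L}(s)$. Then $\xi$ solves $\dot\xi=-\nabla^2H(\theta_H(t))\,\xi+N(t,\xi)$, where $N$ collects the forcing $R(\theta_H+\xi)$ and the super-linear-in-$\xi$ part of $-\nabla H(\theta_H+\xi)+\nabla H(\theta_H)$. The crux is the propagator of the linear part. For \textbf{shallow networks} ($L=2$) this part is autonomous: $\nabla^2H$ is the constant Hessian $\mathcal H$ whose spectrum is $\{\pm s_i\}$ (the signed singular values of $\nabla C(0)$, so $s^*=s_1$), with escape directions the $-s_i$ eigendirections. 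One then takes $\xi$ to be the fixed point of $\mathcal T(\xi)(t)=\int_{-\infty}^t e^{-\mathcal H(t-\tau)}N(\tau,\xi)\,d\tau$ on a ball $\{\|\xi(t)\|\le Ke^{3st}\}$ of a weighted space. Since $\|\theta_H(t)\|=O(e^{\mu t})$ forces $R(\theta_H)=O(e^{3\mu t})$ and (for $L=2$) $\nabla H$ is affine so $N$ has no $H$-quadratic part, and since the hypothesis $s>\tfrac13 s^*$ makes $3s$ strictly larger than every $s_i\le s^*$, the convolution against the unstable kernel $e^{s_i(t-\tau)}$ converges at $-\infty$ and reproduces the $e^{3\mu t}$ rate (the stable part converges unconditionally); this yields the contraction, the bound $\|\theta-\Psi(\theta)\|=O(e^{3st})$, and preservation of escape speed.

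For \textbf{deep networks} ($L>2$) the linearization $\nabla^2H(\theta_H(t))$ is of size $\|\theta_H(t)\|^{L-2}=O((-t)^{-1})$, hence non-hyperbolic with a propagator that grows only polynomially. I would remove the degeneracy by the time change $d\tau=\|\theta(t)\|^{L-2}\,dt$: writing $r=\|\theta\|$, $\phi=\theta/r$ and using Euler's identity for the homogeneous part $\tilde H=H-C(0)$, the rescaled system decouples into the spherical gradient flow $d\phi/d\tau=-\mathrm{grad}_S\tilde H(\phi)$ and $d\log r/d\tau=-L\tilde H(\phi)$; escape paths of $H$ are exactly the orbits whose backward-$\tau$ limit is a critical point $v$ of $\tilde H|_S$ with $\tilde H(v)<0$, the escape speed being $-L\tilde H(v)$ with maximum $s^*$ attained at the global minimiser $v^*$. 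In these variables $-\nabla^2H(\theta_H)$ becomes the asymptotically autonomous $-\nabla^2\tilde H(\phi(\tau))\to-\nabla^2\tilde H(v)$ — whose radial direction $v$ is an eigenvector with eigenvalue $(L-1)\cdot(\text{escape speed})$, via $\nabla^2\tilde H(v)v=(L-1)\nabla\tilde H(v)$ — the forcing becomes $R(\theta_H)/r^{L-2}=O(e^{(L+1)\mu\tau})$, and the problem is again hyperbolic, so the same fixed-point argument applies. Undoing the time change turns the exponential-in-$\tau$ estimate into $\|\theta-\Psi(\theta)\|=O((-t)^{-(L+1)/(L-2)})$. The threshold $s>\tfrac{L-1}{L+1}s^*$ is exactly the condition $(L+1)s>(L-1)s^*$ under which the whole scheme closes: the forcing rate of the slowest admissible path must strictly exceed the largest rate at which two distinct escape paths can separate (controlled by the spectrum of the rescaled linearization), which is what guarantees both convergence of the iteration and uniqueness of the nearby path, hence of $\Psi$; for $L=2$ it reads $3s>s^*$, recovering the shallow threshold. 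Running the same construction with the roles of $\mathcal L$ and $H$ exchanged — legitimate since $\nabla\mathcal L-\nabla H$ is small in the identical sense — produces for each $\theta_{\mathcal L}\in\mathcal F_{\mathcal L}(s)$ a unique nearby $\theta_H\in\mathcal F_H(s)$; this is $\Psi$, and by uniqueness of the close-by path it is inverse to the previous map, so $\Psi$ is the asserted bijection.

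I expect the main obstacle to be the deep-network linear theory: building rigorously the (non-autonomous, a priori non-hyperbolic) dichotomy for $\dot\xi=-\nabla^2H(\theta_H(t))\xi$, proving that $\theta_H(t)/\|\theta_H(t)\|$ converges to a single critical direction $v$ (a Lojasiewicz-type statement for the homogeneous $\tilde H$, so that this linearization has a well-defined limiting spectrum), controlling all error terms through the singular time change, and tracking every exponent so as to land precisely on $-(L+1)/(L-2)$ and on the threshold $\tfrac{L-1}{L+1}s^*$ — routine in spirit, delicate in the bookkeeping, and the point at which the non-strictness of the saddle for $L>2$ genuinely bites.
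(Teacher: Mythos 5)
Your proposal is a contraction/fixed-point pairing of escape paths, as is the paper's, but the route is genuinely different. The paper (Theorem \ref{thm:bijection-optimal-escape-paths-appendix}) never linearizes along the reference path: it localizes the cost with a cutoff, iterates the full integral operator $\Phi_{C_r}(x)(t)=\int_{-\infty}^{t}-\nabla C_r(x(u))\,du$ on the affine set $x_0+B_c$ of admissible corrections, and proves contraction in the weighted norms $\|b\|_\beta^2=\int_{-\infty}^0 e^{-2\beta t}\|b(t)\|^2dt$ (shallow) and $\|b\|_\alpha^2=\int_{-\infty}^0(-t)^{2\alpha-1}\|b(t)\|^2dt$ (deep), using only the integration-by-parts inequality $\|x\|_\beta\le\beta^{-1}\|\dot x\|_\beta$ and the global bound $\|\nabla C_r(x)-\nabla C_r(y)\|\lesssim\|x-y\|\max(\|x\|,\|y\|)^{k-2}$ from Lemma \ref{lem:dist-gradients-from-kth-derivative}; the thresholds $s>\tfrac13 s^*$ and $s>\tfrac{L-1}{L+1}s^*$ come out as the non-emptiness of the admissible window for $\beta$ (resp.\ $\alpha$). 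You instead set up a Lyapunov--Perron scheme for the correction $\xi=\theta_{\mathcal L}-\theta_H$ via the propagator of $\dot\xi=-\nabla^2H(\theta_H(t))\xi$, and for $L>2$ restore hyperbolicity by the time change $d\tau=\|\theta\|^{L-2}dt$. Your exponent bookkeeping is consistent with the statement (the forcing rate $(L+1)\mu$ versus the worst separation rate $(L-1)s^*$ reproduces both thresholds, and undoing the time change gives $(-t)^{-\frac{L+1}{L-2}}$), and the shallow case as you describe it is essentially complete, since there the linearization is the constant Hessian.

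The genuine gap is exactly where you place it, and it is worth stressing that it is not merely bookkeeping: your deep-network argument needs (i) that $\theta_H(t)/\|\theta_H(t)\|$ converges to a single critical direction $v$ of $\tilde H|_{\mathbb S^{P-1}}$ with a rate strong enough that $\nabla^2\tilde H(\phi(\tau))-\nabla^2\tilde H(v)$ is summable against the unstable part of the propagator, and (ii) an exponential dichotomy with sharp exponents for the resulting non-autonomous linear equation, uniformly over all paths in $\mathcal F_H(s)$ (whose limit directions $v$ vary). A Lojasiewicz argument gives convergence of direction but in general only a polynomial rate, so (i) and (ii) require real work and are not routine. The paper's proof shows these inputs can be avoided entirely: by contracting the nonlinear integral operator with a \emph{global} Hessian bound $\sup\|\mathcal H C_r\|_{\mathrm{op}}\le k(k-1)\|H\|_\infty+O(r^{m-k})$ over the localized region (which is exactly your worst-case separation rate $(L-1)s^*$), no spectral information along the path, no dichotomy, and no time change are needed, and the polynomial weight $(-t)^{2\alpha-1}$ absorbs the non-hyperbolicity directly. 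Also note that your balancedness reduction is never used afterwards and is unnecessary: the appendix theorem holds for general $(k,m)$-approximately homogeneous costs, not only DLNs.
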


We believe that this theorem is of independent interest, and it is stated in a more general setting in the Appendix. Theorem \ref{thm:bijection-fast-escape-paths} is similar to the Hartman-Grobman Theorem, which shows a bijection, in the vicinity of a critical point, between the gradient flow paths of $\mathcal{F}$ and of its linearization. The bijection in Theorem \ref{thm:bijection-fast-escape-paths} holds only between fast escaping paths, but it gives
stronger guarantees regarding how close the paths $\theta(\cdot)$ and $\Psi(\theta)(\cdot)$ are. In particular, Theorem \ref{thm:bijection-fast-escape-paths} guarantees that a fast escaping path $\theta(\cdot)$ and its image $\Psi(\theta)(\cdot)$
have the same `escape speed', whereas the correspondence between paths of in the Hartman-Grobman theorem does not in general conserve speed. This is due to the fact that the homeomorphism which allows to construct the bijection in the Hartman-Grobman theorem
is only H\"older continuous. This suggests that fast escaping
paths can be guaranteed to conserve their speed after the Taylor approximation
while slower paths can change speed. Finally, our result has the significant
advantage that it may be applied to higher order Taylor approximations, whereas the Hartman-Grobman Theorem only applies to the linearization of the flow (i.e. it could only be useful in the shallow case $L=2$).

\subsection{Sketch of Proof} \label{sec:sketch-of-proof}

In this section, we provide a sketch of proof for Theorem \ref{thm:first-path}.

We fix some small $r>0$ independent of $\alpha$. The \emph{escape time} $t_{\alpha}$ is the earliest
time such that $\left\Vert \theta_{\alpha}(t_{\alpha})\right\Vert =r$. We show that the
limiting escape path $(\theta^{1}(t))_{t\in\mathbb{R}}$ as $\theta^{1}(t)=\lim_{\alpha\searrow0}\theta_{\alpha}(t_{\alpha}+t)$
is well defined and non-trivial since $\theta^{1}(0)\neq0$. The next step of the proof is to show that $\theta^{1}$ escapes the
saddle at an almost optimal speed: for any $\epsilon>0$, for some $T$ and any small enough $t$, for shallow network
$\left\Vert \theta^{1}(t)\right\Vert =O(e^{(s^{*}-\epsilon)t})$, and for deeper networks $\left\Vert \theta^{1}(t)\right\Vert \leq\left[(L-2)(s^{*}-\epsilon)(T-t)\right]^{-\frac{1}{L-2}}$. We may therefore apply Theorem
\ref{thm:bijection-fast-escape-paths}: there exists a unique optimal escape path for the $L$-th order Taylor approximation $H$ around the origin which is `close', in the sense given in Theorem
\ref{thm:bijection-fast-escape-paths}, to $\theta^{1}$.

For the Taylor approximation $H$, we have a precise description of
the optimal escaping paths for the saddle at the origin. Assuming that the largest singular value $s_{1}$ of the gradient matrix $\nabla C(0)$ has multiplicity one, all
optimal escape paths of $H$ (i.e. the set of paths that escape with
the largest speed) are of the form $\theta_H(t)=d(t)RI^{(1\to w)}\left(\rho\right)$ where $R$ is some
rotation, the scalar function $d(t)$ is equal to $e^{s^{*}(t+T)}$ for shallow
networks and $(s^{*}(L-2)(T-t))^{-\frac{1}{L-2}}$ for deep networks, and the vector of parameters $\rho$ is given by:
\[
\rho=\left(v_{1}^{T},1,\dots,1,u_{1}\right)
\]
with $u_{1},v_{1}$ the left and right singular vectors of the
largest singular value $s_{1}$ of the gradient matrix $\nabla C(0)$.

Let us consider the unique optimal escape path $\theta_H(t)=d(t)RI^{(1\to w)}\left(\rho\right)$ for $H$ which is `close' to $\theta^1$. The path $\underline{\theta}_H(t)= d(t)\rho$ is also an optimal escape path for $H$: from Theorem \ref{thm:bijection-fast-escape-paths},  there exists a unique optimal escape path $\underline{\theta}(t)$ which is `close' to $\underline{\theta}_H$. The former escape path corresponds to a $1$-width DLN and it is easy to show that $RI^{(1\to w)}(\underline{\theta})$ is an optimal escape path for $\mathcal{L}$ which is  `close' to $RI^{(1\to w)}(\underline{\theta}_H)=\theta_H$.

In particular, we obtain that both $\theta^1$ and $RI^{(1\to w)}(\underline{\theta})$ are optimal escape path for $\mathcal{L}$ which are  `close' to $\theta_H$. By the unicity property in Theorem \ref{thm:bijection-fast-escape-paths}, we obtain that $\theta^1=RI^{(1\to w)}(\underline{\theta})$ which allows us to conclude.

%Theorem \ref{thm:bijection-fast-escape-paths}
%then implies that the optimal escape paths of the saddle $0$ of $\mathcal{L}$
%must be of the form $RI^{(1\to w)}(\vartheta(t))$ where $\vartheta(t)$
%is an optimal escape path of the width $1$ network with the same
%depth (such a $\vartheta(t)$ exists and has optimal speed and its
%rotations and inclusions have the same speed).

\begin{rem}
To prove  Conjecture \ref{conj:Conj-Saddle-Saddle}, one needs to apply a similar argument to understand how gradient flow escapes the subsequent saddles $\vartheta^1,\dots,\vartheta^K$. There are two issues:

First, even though Theorem \ref{thm:first-path} guarantees that gradient
descent will come arbitrarily close to the next saddle $\vartheta^{1}$,
it may not approach it along a generic direction: it could approach along
a ``bad'' direction. For the first path, we relied on the fact that $\theta_0$ is Gaussian to  guarantee that these bad directions are avoided with probability $1$ (or $\nicefrac 1 2$). Note that this problem c

ould be addressed using the so-called perturbed stochastic gradient descent described in \cite{Chi2017, Du2017} since, in this learning algorithm, once in the vicinity of the saddle, a small Gaussian noise is added to the parameters: as a consequence, they end up being in a generic position in the neighborhood of the saddle.

Second, for deep networks ($L>2$), the saddle $\vartheta^{1}$
has a different local structure to $\vartheta^{0}$. Indeed, at the origin,
the $L-1$ first derivatives vanish, leading to an (approximately) $L$-homogeneous saddle at the origin. On the contrary, at the rank $1$ saddle $\vartheta^{1}=RI^{(1\to w)}(\underline{\vartheta}^{1})$, if $\underline{\vartheta}^{1}$ is a local minimum of the
width $1$ network, the
Hessian is positive along the inclusion $\mathrm{Im}\left[RI^{(1\to w)}\right]$. This implies that the dynamics can only escape the saddle through the Hessian null-space, along which the first $L-1$ derivatives vanish. Although
the loss restricted to this null-space around $\vartheta_1$ has a similar structure to the loss around the origin,
the fact that the Hessian at $\vartheta_1$ is not null complexifies the analysis.
\end{rem}

\section{Characterization of the Regimes of Training}

\begin{figure}[!t]
  \centering
  \subfloat{
      \includegraphics[width=0.4\textwidth]{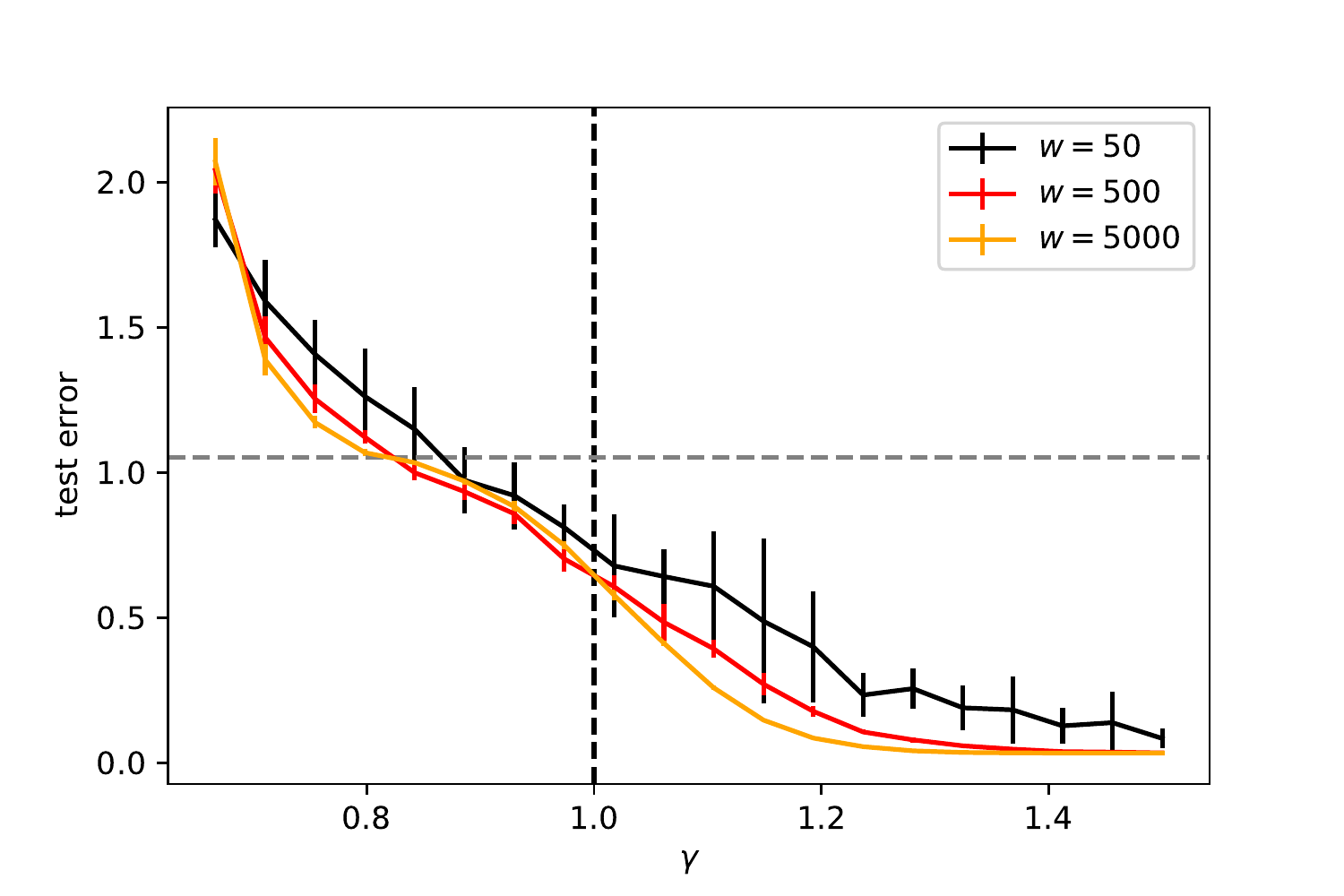}
      } \vspace{-0.8cm} \\
  \subfloat{
      \includegraphics[width=0.4\textwidth]{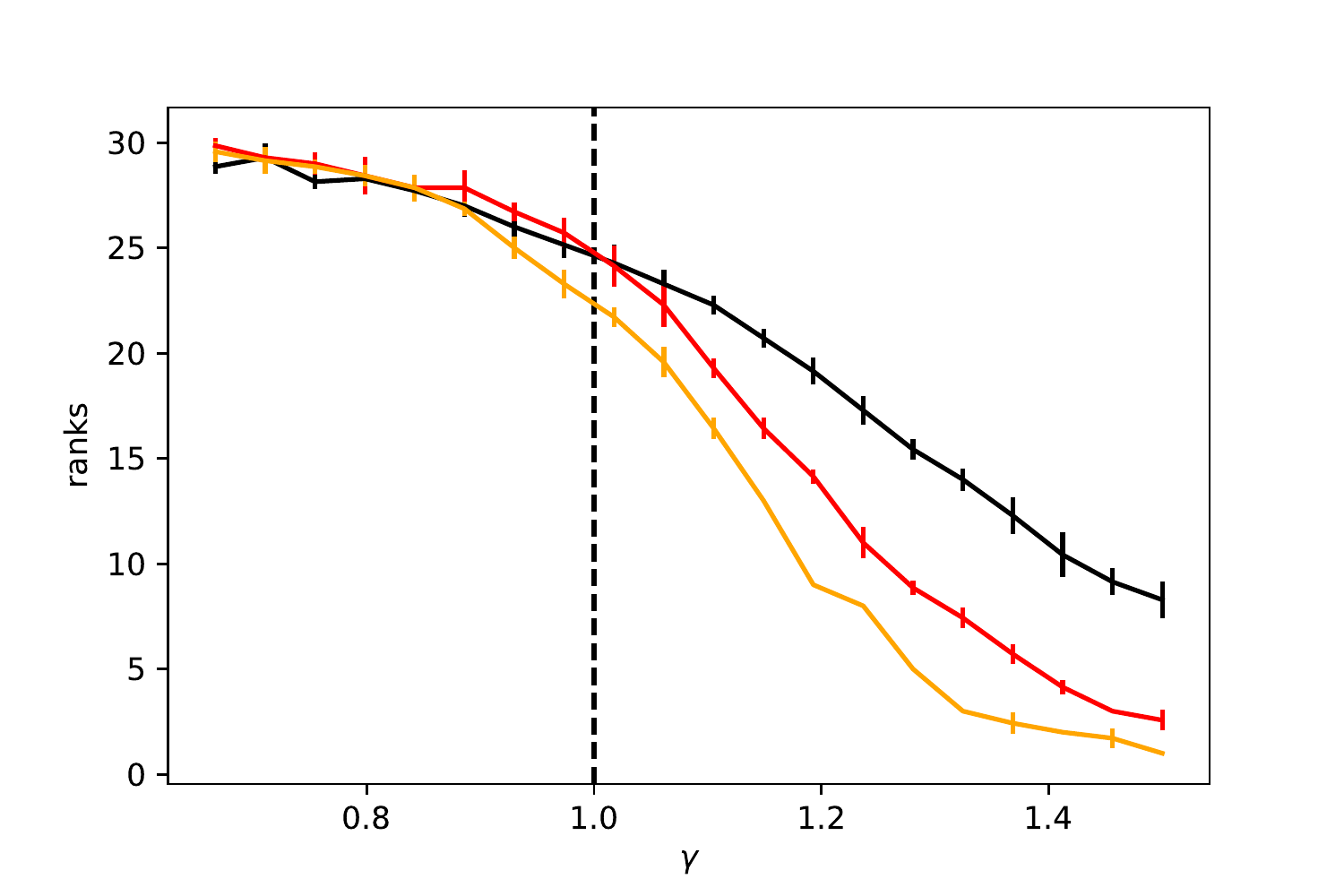}
      } \vspace{-0.2cm}
  \caption{\textit{Test errors and ranks at convergence as a function of initialization scale $\gamma$, matrix completion task.} The task is finding a matrix of size $30 \times 30$ and rank $1$ from $20\%$ of its entries. The test error and ranks are averaged over $7$ seeds ($\pm 1$ standard deviations are reported in the error bar). In the NTK regime, the solutions at convergence are almost full-rank and the test error is roughly the same or worse than that of the zero predictor. On the other hand in the Saddle-to-Saddle regime the test error approaches zero. As the width grows the transition between regimes becomes sharper and the test error becomes more consistent within each regimes.
  %Ranks of the networks during training are presented below (tolerance set to $0.1$).
  }
  \label{fig:matrix-completion}
\end{figure}

In light of the results presented in this paper, we discuss the three regimes that can be obtained by varying the initialization scale $\gamma$: the kernel regime ($\gamma<1$), the Mean-Field regime ($\gamma=1$) and the Saddle-to-Saddle regime ($\gamma>1$).

The NTK limit ($\gamma=1-\frac{1}{L}$) \cite{jacot2018neural,lee2019wide} is representative of the other
scalings $1-\frac{1}{L}\leq\gamma<1$ \cite{yang2020feature}. The
critical regime $\gamma=1$ corresponds to the Mean-Field limit for
shallow networks \cite{Chizat2018,Rotskoff2018} or the Maximal Update
parametrization for deep networks \cite{yang2020feature}. Finally, we conjecture that the last regime where $\gamma > 1$, displays features very akin to the $\gamma=+\infty$ case studied in this article. Under this assumption, we obtain the following list of properties that characterize each of these regimes:

In the \textbf{NTK regime} ($1-\frac{1}{L}\leq\gamma<1$):
\begin{enumerate}
\item During training, the parameters converge to a nearby global minimum,
and do not approach any saddle (Figure \ref{fig:regimes-of-training-NTK} shows how the plateaus disappear as $w$ grows).
\item If the cost on matrices $C$ is strictly convex, one can guarantee
exponential decrease of the loss (i.e. linear convergence).
\item The NTK is asymptotically fixed during training.
\item No low-rank bias in the learned matrix - as a result the test error for matrix completion is the same (or even larger) than the zero predictor in the NTK regime, as shown in Figure \ref{fig:matrix-completion}. %Indeed the dynamics of training are determined by the limiting NTK, which for linear networks is the same (up to a scaling factor) for all depths $L$, even $L=1$ when there are no hidden layer. This implies that DLN in the NTK regime behave (up to a rescaling of the learning rate) as simple linear regression models, which have no low-rank bias.
\end{enumerate}
The \textbf{Saddle-to-Saddle regime} ($\gamma>1$):
\begin{enumerate}
\item The parameters start in the vicinity of a saddle and visit a sequence
of saddles during training. They come closer to each of these saddles
as the width grows.
\item As the width grows, it takes longer to escape each saddle, leading
to long plateaus for the training error. The training time is therefore asymptotically
infinite (see Figure \ref{fig:regimes-of-training-StoS}).
\item The rate of change $\left\Vert \Theta (\theta_{T}) - \Theta (\theta_{0}) \right\Vert$ (where $T\in\mathbb{R}$ is the stopping time) of the NTK is infinitely larger than the NTK at initialization $\left\Vert \Theta (\theta_{0}) \right\Vert$. This follows from the fact that the NTK at initialization goes to zero, while it has finite size at the end of training.
\item The learned matrix is the result of a greedy algorithm that finds the lowest rank solution.
\end{enumerate}
The \textbf{Mean-Field regime} $\gamma=1$ lies at the transition
between the two previous regimes and is more difficult to characterize:
\begin{enumerate}
\item In this critical regime, the constant factor $c$ in the variance at initialization
$\sigma^{2}=cw^{-\gamma}$ can have a strong effect on the dynamics.
\item Plateaus can still be observed (see Figure \ref{fig:regimes-of-training-MF}), however in contrast to the Saddle-to-Saddle regime, the length of the plateaus does not increase as the width grows, but remains roughly constant.
\item The NTK and its rate of change are of same order.
\end{enumerate}
In general, we observe some tradeoff: the NTK regime
leads to fast convergence without low-rank bias, while the Saddle-to-Saddle
regime leads to some low-rank bias, but at the cost of an asymptotically
infinite training time.

\section{Conclusion}
We propose a simple criterion to identify three regimes in the training of large DLNs: the distances from the initialization to the nearest global minimum and to the nearest saddle. The NTK regime ($1-\frac 1 L \leq \gamma <1$) is characterized by an initialization which is close to a global minimum and far from any saddle, the Saddle-to-Saddle regime ($\gamma>1$) is characterized by an initialization which is close to a saddle and (comparatively) far from any global minimum and, finally, in the critical Mean-Field regime ($\gamma=1$), these two distances are of the same order as the width grows.

While the NTK and Mean-Field limits are well-studied, the Saddle-to-Saddle regime is less understood. We therefore investigate the case $\gamma=+\infty$ (i.e. we fix the width and let the variance at initialization go to zero). In this limit, the initialization converges towards the saddle at the origin $\vartheta^0=0$. We show that gradient flow naturally escapes this saddle along an `optimal escape path' along which the network behaves as a width-1 network. This leads the gradient flow to subsequently visit a second saddle $\vartheta^1$ which has the property that the matrix $A_{\vartheta^1}$ has rank $1$. We conjecture that the gradient flow next visits a sequence of critical points $\vartheta^2,\dots,\vartheta^K$ of increasing rank, implementing some form of greedy low-rank algorithm. These saddles explain the plateaus in the loss curve which are characteristic of the Saddle-to-Saddle regime.

Similar plateaus can be observed in non-linear networks: this suggests that the regimes and dynamics described in this paper could be generalized to non-linear networks.

%\section{Acknowledgements}
%The authors thank Prof. Wulfram Gerstner for discussion on the heteroclinic orbits in dynamical systems. This work is partly supported by the ERC SG CONSTAMIS. C. Hongler acknowledges support from the Blavatnik Family Foundation, the Latsis Foundation, and the the NCCR Swissmap.

\bibliographystyle{authordate1}
%\bibliography{main}
\bibliography{../../../main}

\appendix

\onecolumn
% Preview source code from paragraph 0 to 182

%\maketitle

We organize the Appendix as follows:
\begin{itemize}
\item In Section \ref{sec:exps}, we present the details for the numerical
results presented in the main text together with some discussions.
\item In Section \ref{sec:Regimes-of-Training}, we present the proofs for
the result on the proximity of critical points, i.e. Theorem \ref{th:distances}.
\item In Section \ref{sec:proofs-saddle-to-saddle}, we present the proofs
for the Saddle-to-Saddle regime, in particular Theorems \ref{thm:first-path} and  \ref{thm:bijection-fast-escape-paths}.
\item In Section \ref{sec:technical-results}, we state and prove a few technical results.
\end{itemize}

\section{Further Experimental Details}

\label{sec:exps}

\begin{figure*}[!t]
  \centering
  \subfloat{
      \includegraphics[height=0.185\textwidth]{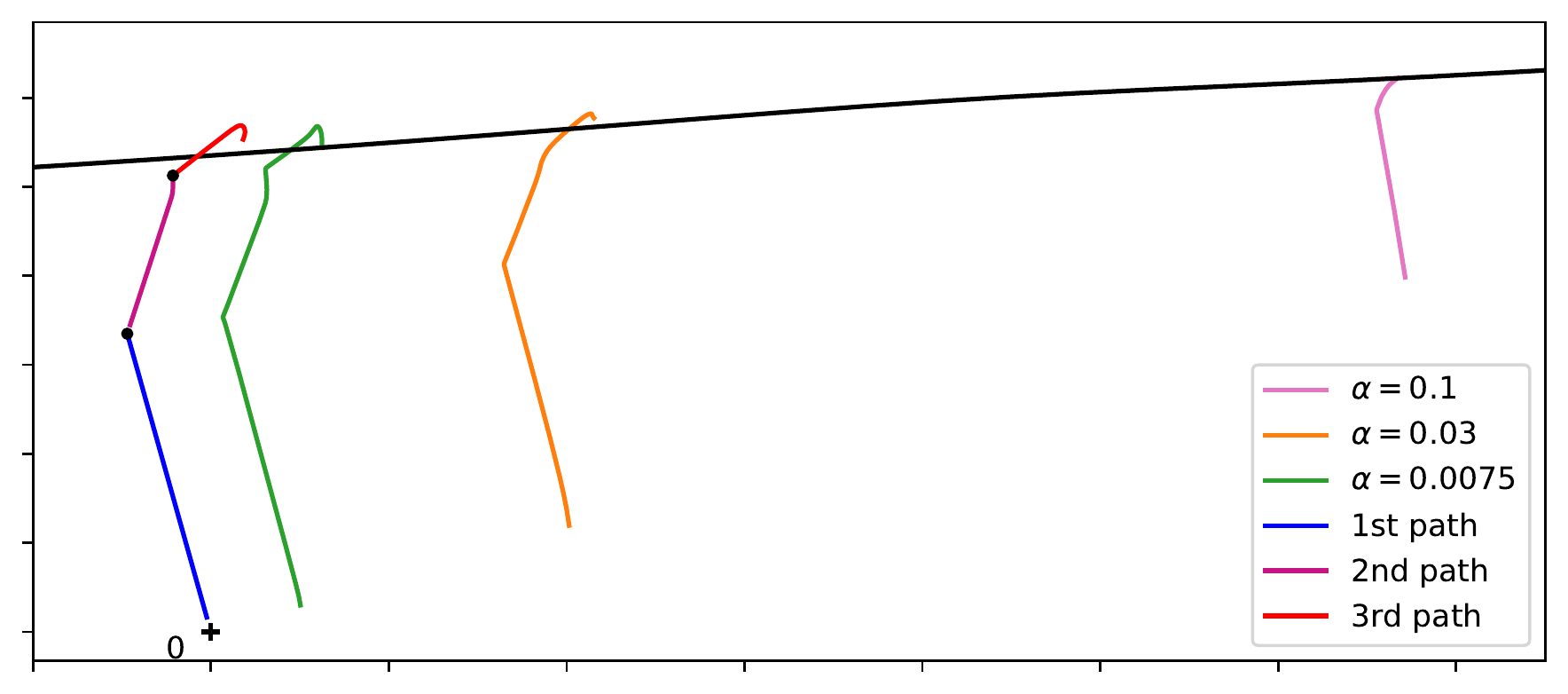}
      }
  \subfloat{
      \includegraphics[height=0.185\textwidth]{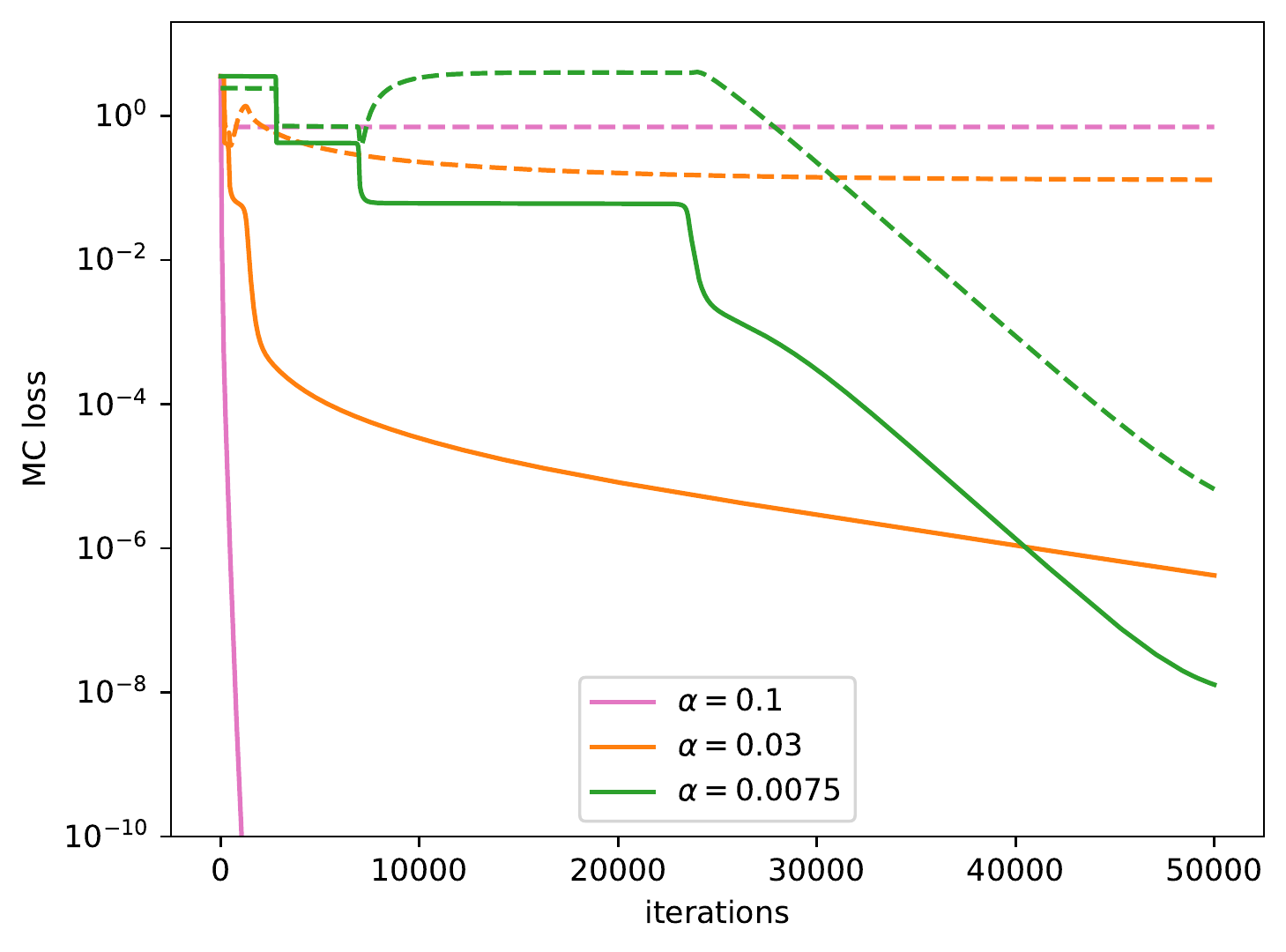}
      }
  \subfloat{
      \includegraphics[height=0.185\textwidth]{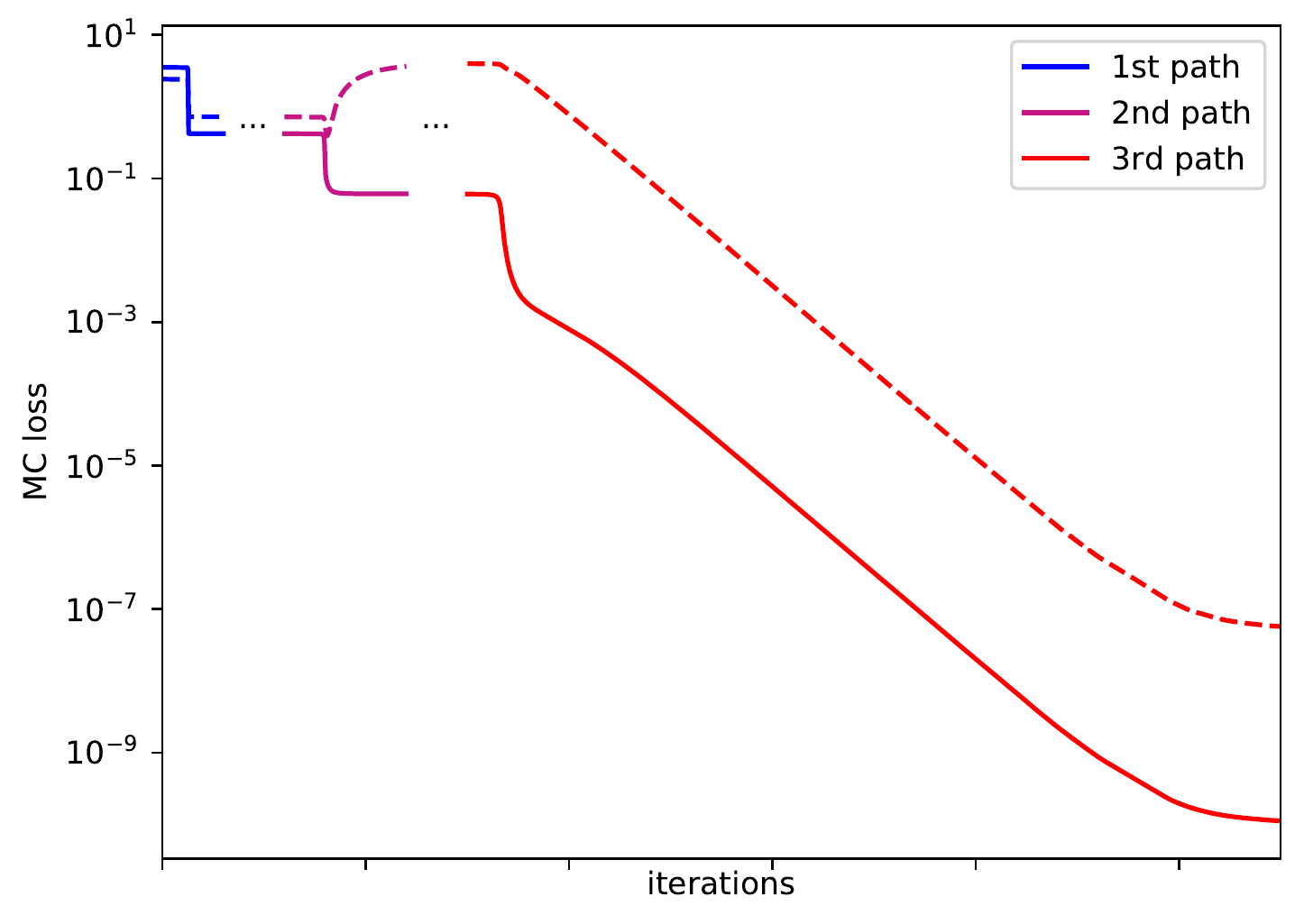}
      }
       \caption{\textit{Matrix Completion in linear/lazy vs. saddle-to-saddle regimes.}
  3 DLNs ($L=4,w=100$) trained on a MC loss fitting a $10\times 10$ matrix of rank $3$ with initialization $\alpha \theta_0$ for a fixed random $\theta_0$ and three values of $\alpha$. \textbf{Left:} Train (solid) and test (dashed) MC cost for the three networks, for large $\alpha$ the network is in the linear/lazy regime and does not learn the low-rank structure. For smaller $\alpha$ plateaus appear and the network generalizes. \textbf{Middle:} Visualization of the gradient paths in parameter space. The black line represents the manifold of solutions to which all example paths converge. As $\alpha \to 0$ the training trajectory converges to a sequence of 3 paths (in blue, purple and red) starting from the origin (+) and passing through 2 saddles ($\cdot$) before converging. \textbf{Right:} The train (solid) and test (dashed) loss of the three paths plotted sequentially, in the saddle-to-saddle limit; $\cdots$ represent an infinite amount of steps separating these paths.}

\end{figure*}

\begin{figure*}[!t]
  \centering
  \subfloat{
      \includegraphics[width=0.33\textwidth]{figures/ranks-gamma075.pdf}
      } \hspace{-0.6cm}
  \subfloat{
      \includegraphics[width=0.33\textwidth]{figures/ranks-gamma100.pdf}
      } \hspace{-0.6cm}
  \subfloat{
      \includegraphics[width=0.33\textwidth]{figures/ranks-gamma150.pdf}
      } \\ \vspace{-0.5cm}
  \subfloat{
      \includegraphics[width=0.33\textwidth]{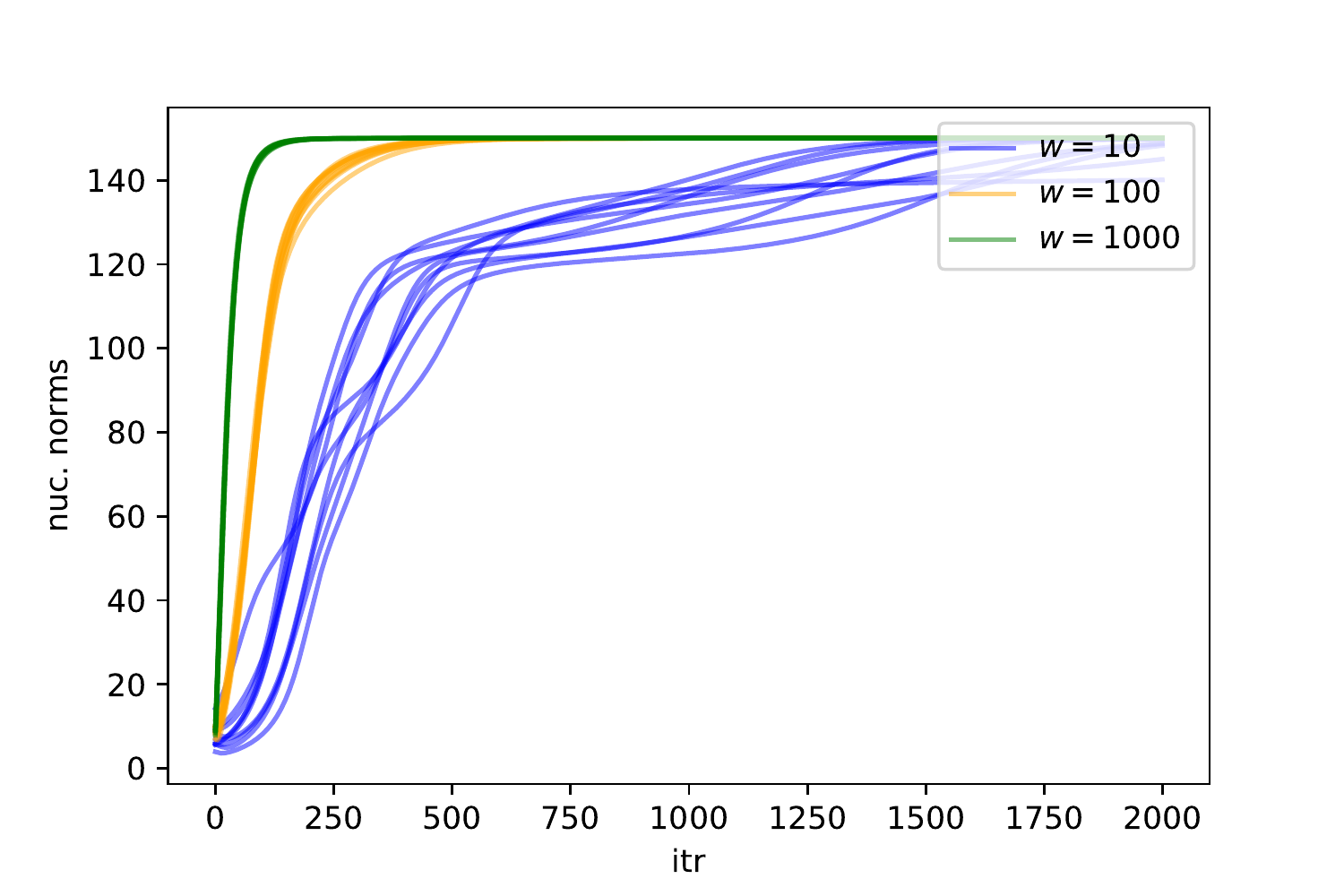}
      } \hspace{-0.6cm}
  \subfloat{
      \includegraphics[width=0.33\textwidth]{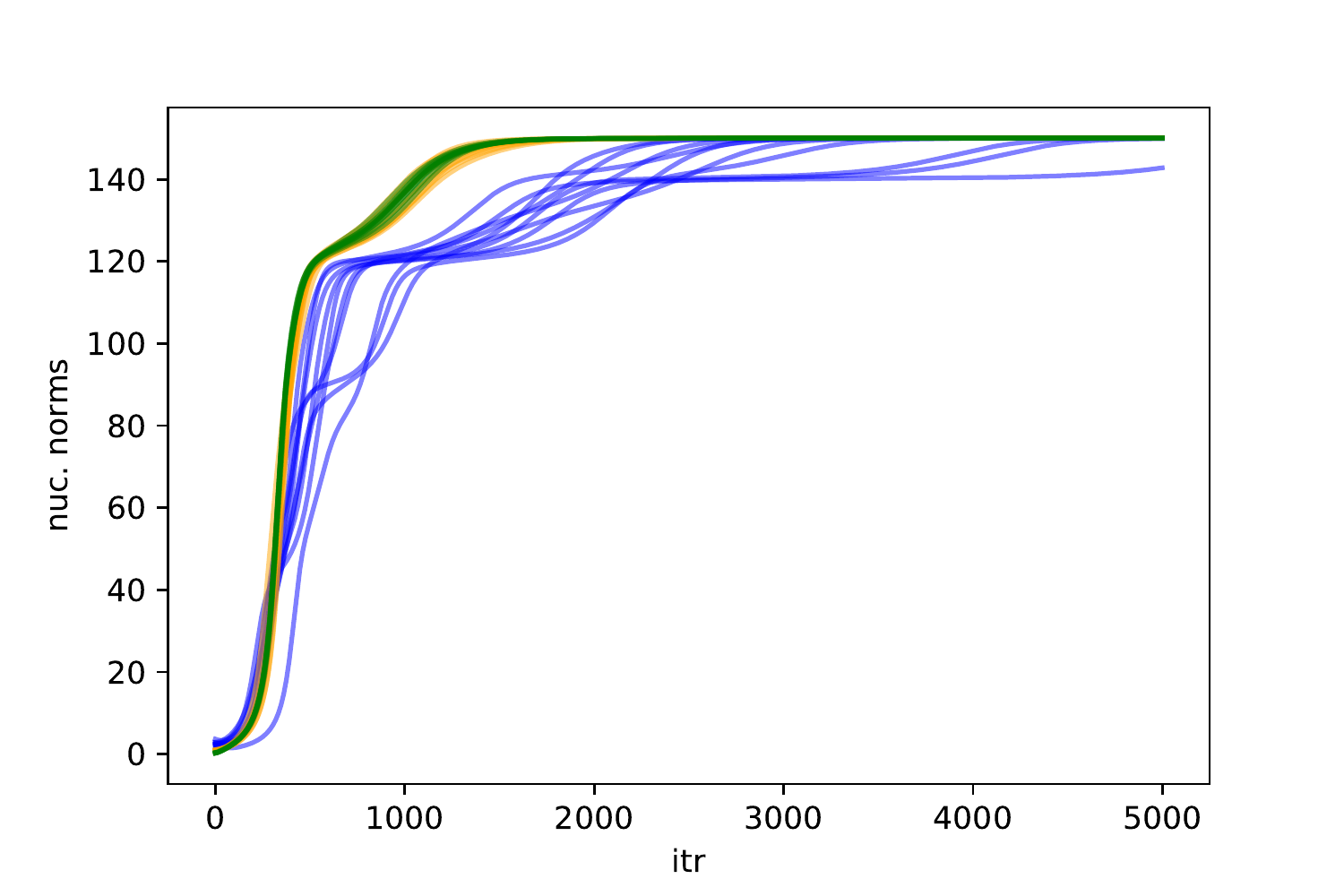}
      } \hspace{-0.6cm}
  \subfloat{
      \includegraphics[width=0.33\textwidth]{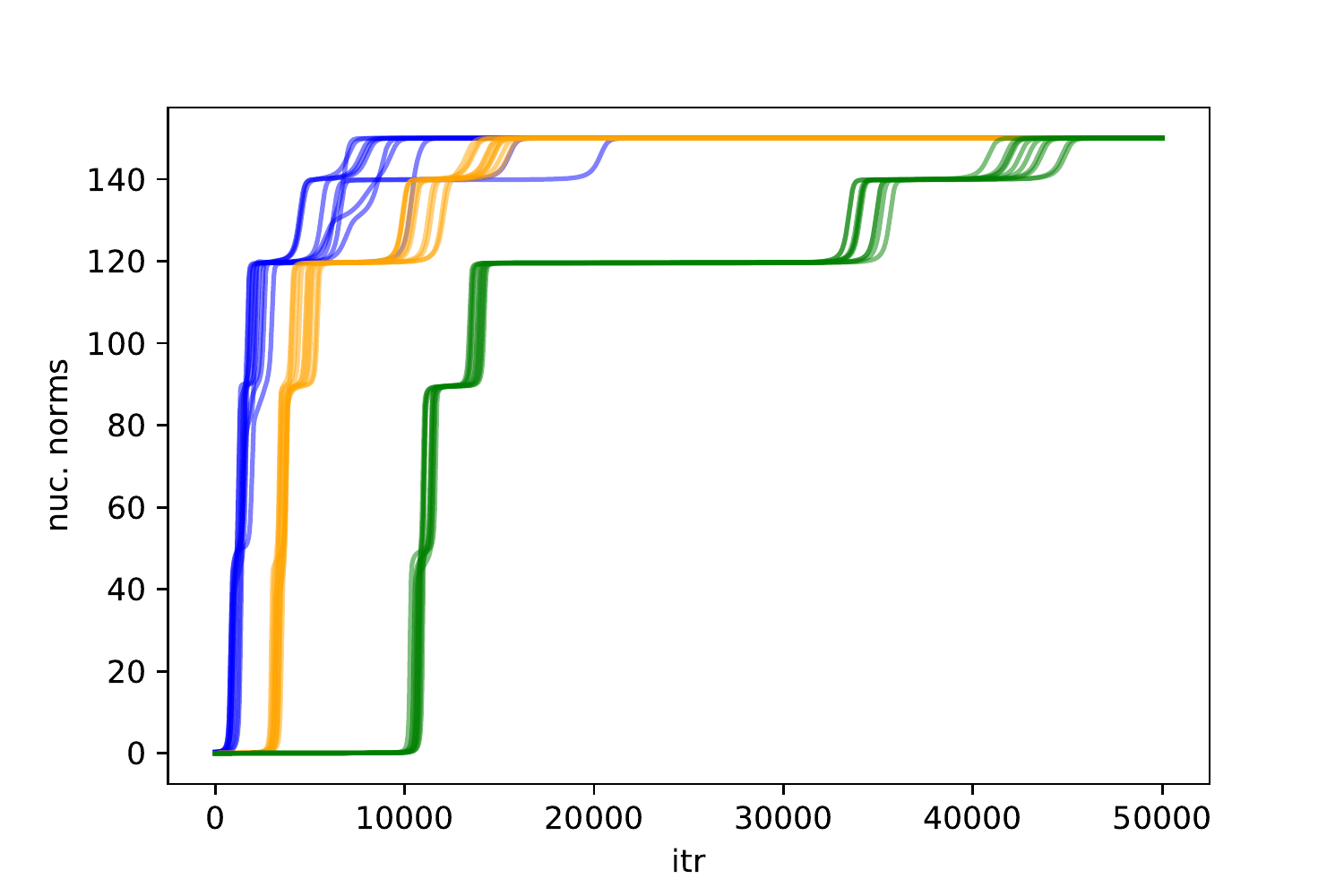}
      } \vspace{-0.5cm} \\
  \subfloat{
      \includegraphics[width=0.33\textwidth]{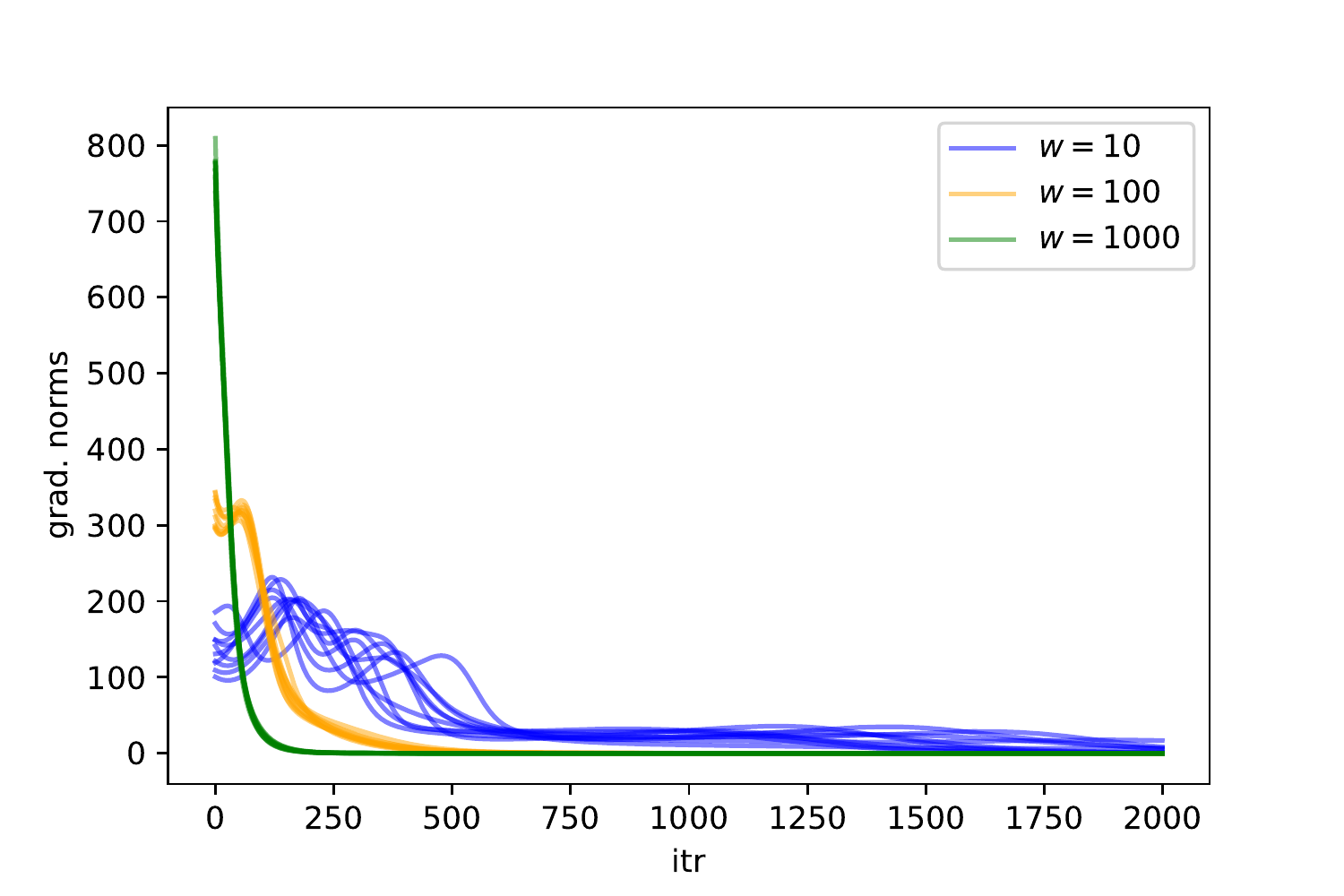}
      } \hspace{-0.6cm}
  \subfloat{
      \includegraphics[width=0.33\textwidth]{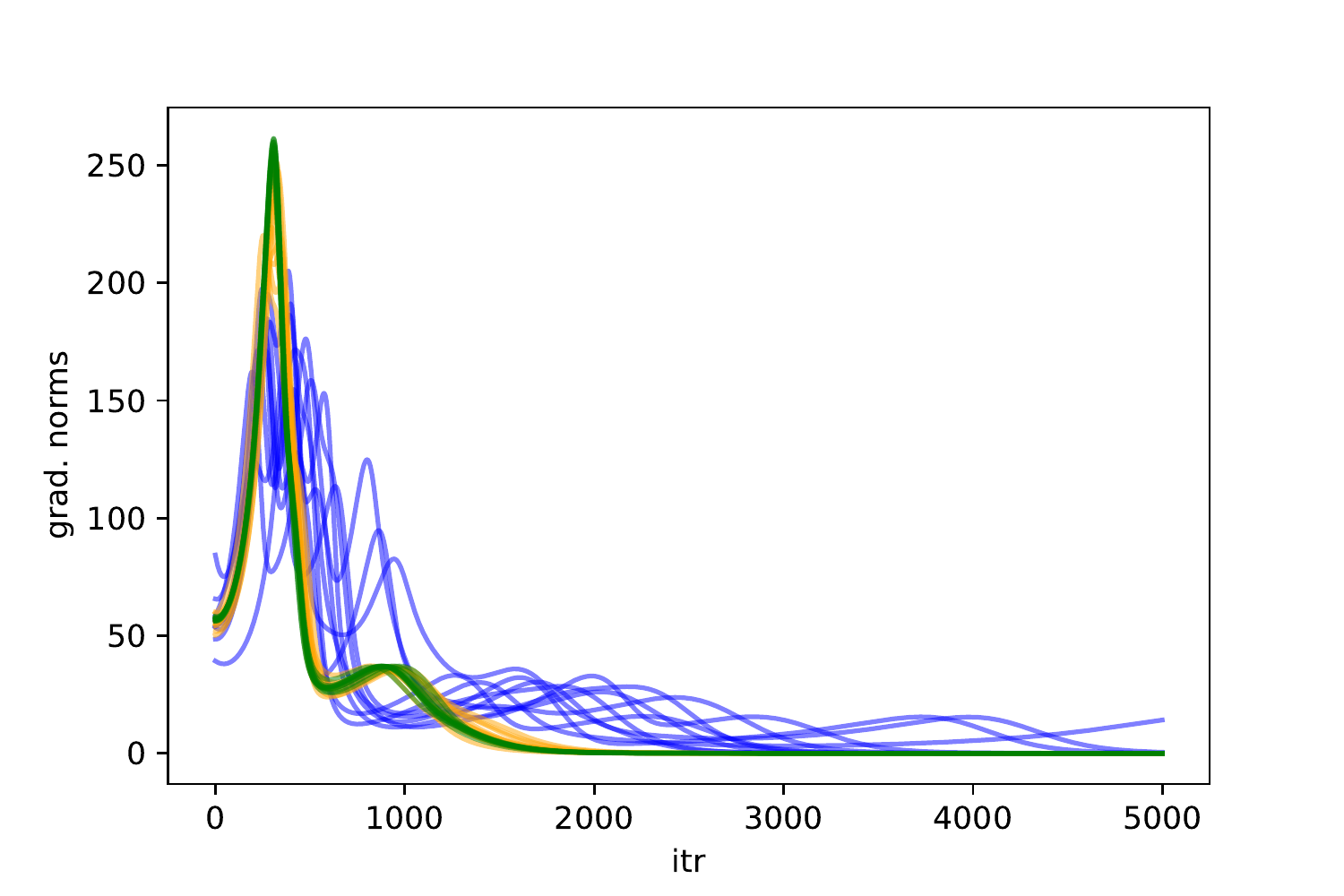}
      } \hspace{-0.6cm}
  \subfloat{
      \includegraphics[width=0.33\textwidth]{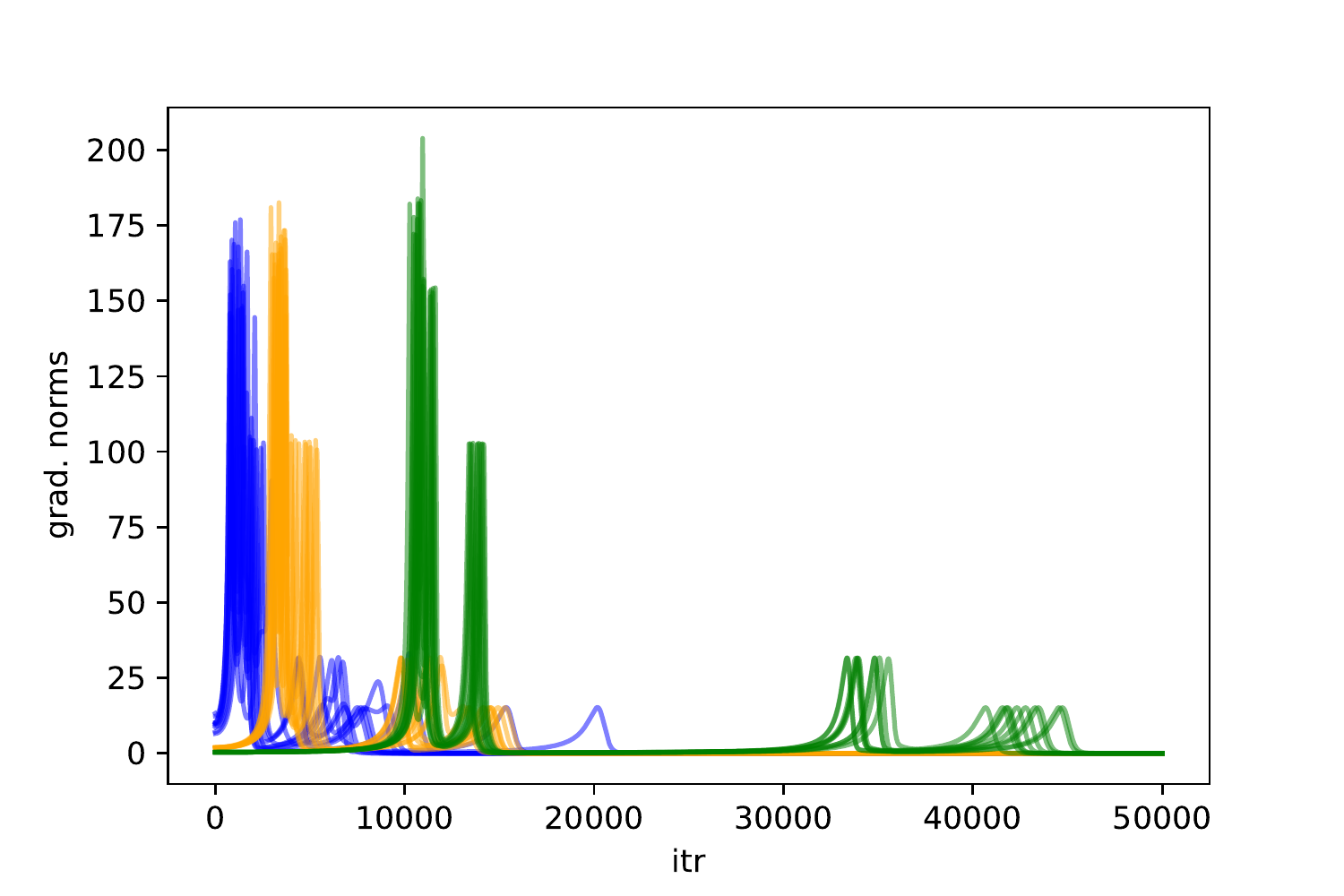}
      } \\
  % \subfloat{
  %     \includegraphics[width=0.3\textwidth]{figures/ranks-gamma075.pdf}
  %     }
  % \subfloat{
  %     \includegraphics[width=0.3\textwidth]{figures/ranks-gamma100.pdf}
  %     }
  % \subfloat{
  %     \includegraphics[width=0.3\textwidth]{figures/ranks-gamma150.pdf}
  %     } \\
  {\scriptsize \textbf{(a)} $\gamma =0.75$ (NTK) \hspace{3.5cm} \textbf{(b)} $\gamma =1$ (MF) \hspace{3.5cm} \textbf{(c)} $\gamma =1.5$ (S-S)}
  \caption{\textit{Training in \textbf{(a)} the NTK regime, \textbf{(b)} mean-field, \textbf{(c)} saddle-to-saddle regimes in deep linear networks for three widths $w=10,100,1000$, $L=4$, and $10$ seeds; extension of Fig.~\ref{fig:regimes-of-training} in the main.} \textbf{Top:} The evolution of the rank of the network matrices during training. Tolerance of the matrix is set at $1e-1$.
  \textbf{Middle:} The evolution of the nuclear norm during training, we can see that the smooth jumps are aligned with the rank transitions.
  \textbf{Bottom:} The evolution of the gradient norm of the parameters. Decrease of the gradient norm down to zero indicates approaching to a saddle, and the following increase indicates escaping it.
  %Ranks of the networks during training are presented below (tolerance set to $0.1$).
  }
  \label{fig:regimes-of-training-app}
\end{figure*}

\begin{figure}[!h]
\centering
\subfloat{\includegraphics[width=0.245\textwidth]{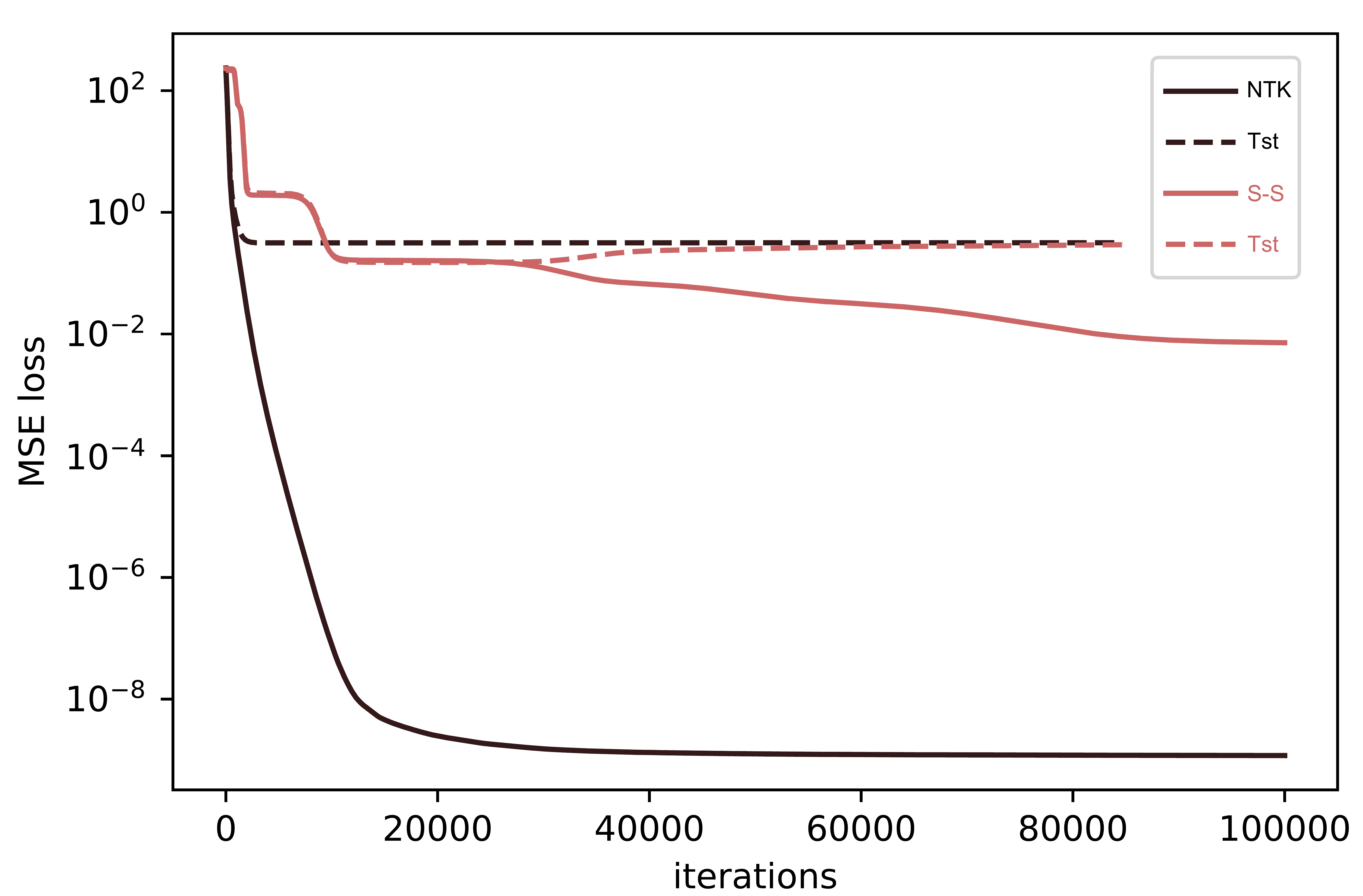}
}\subfloat{\includegraphics[width=0.245\textwidth]{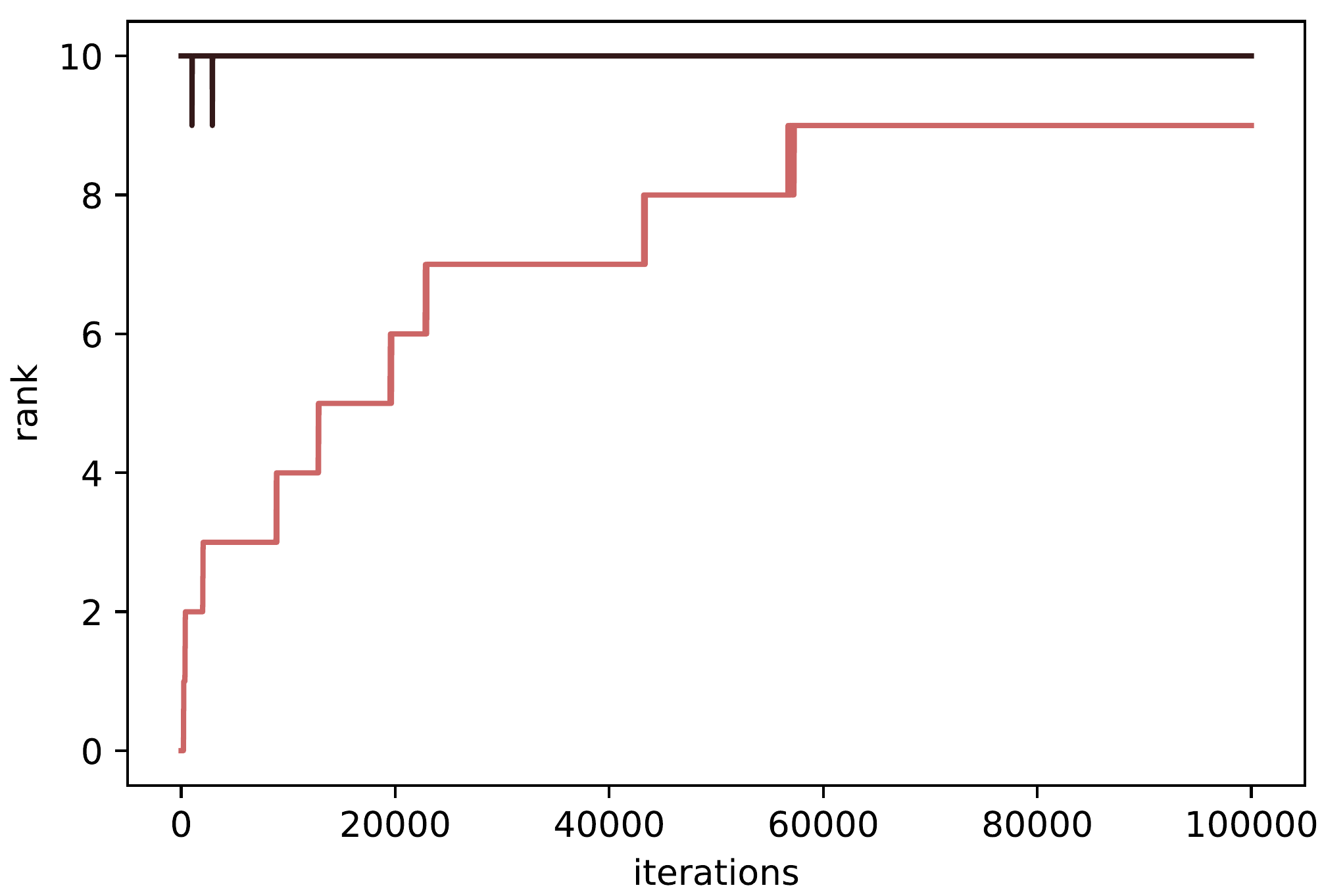} }\subfloat{\includegraphics[width=0.245\textwidth]{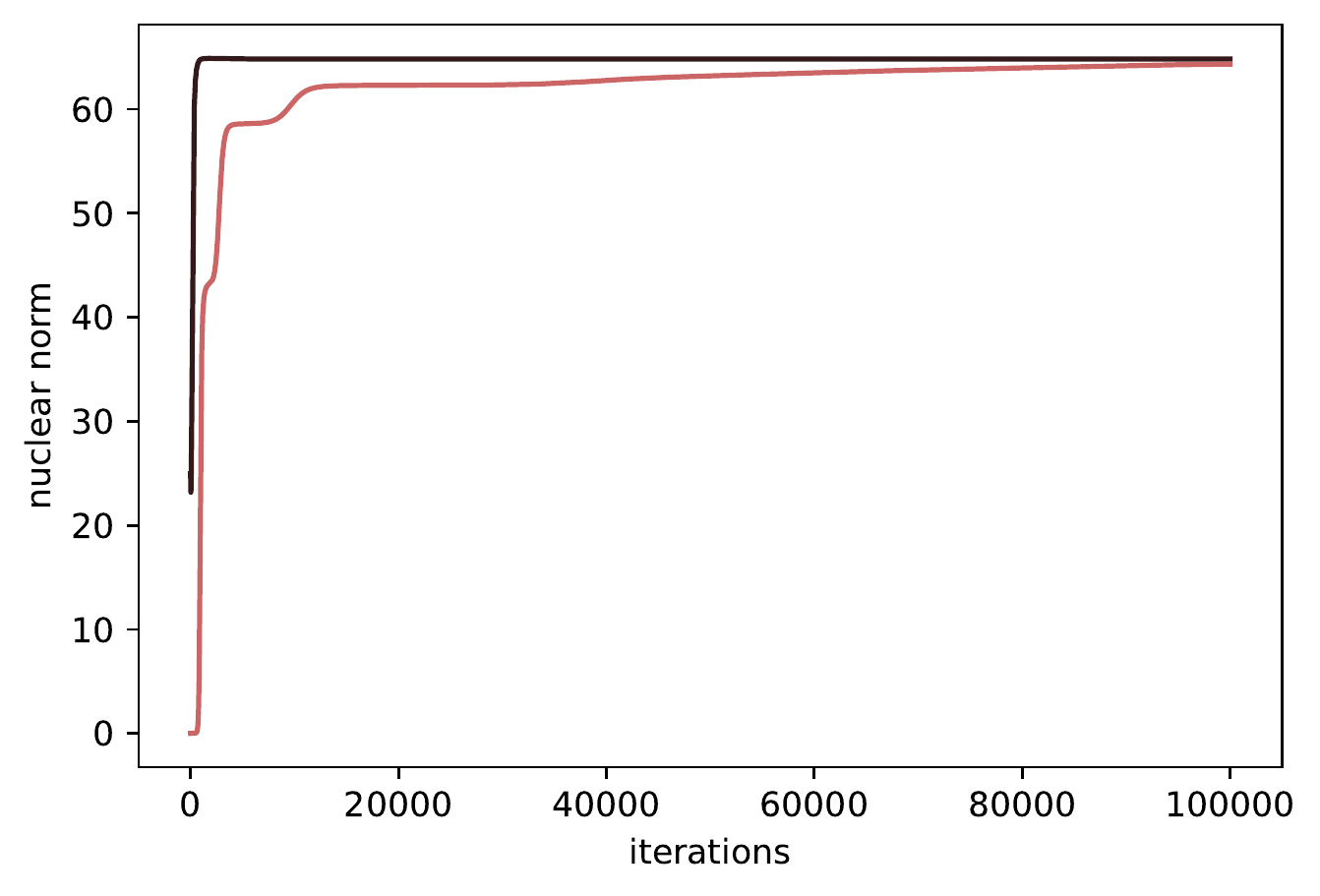} }\subfloat{\includegraphics[width=0.245\textwidth]{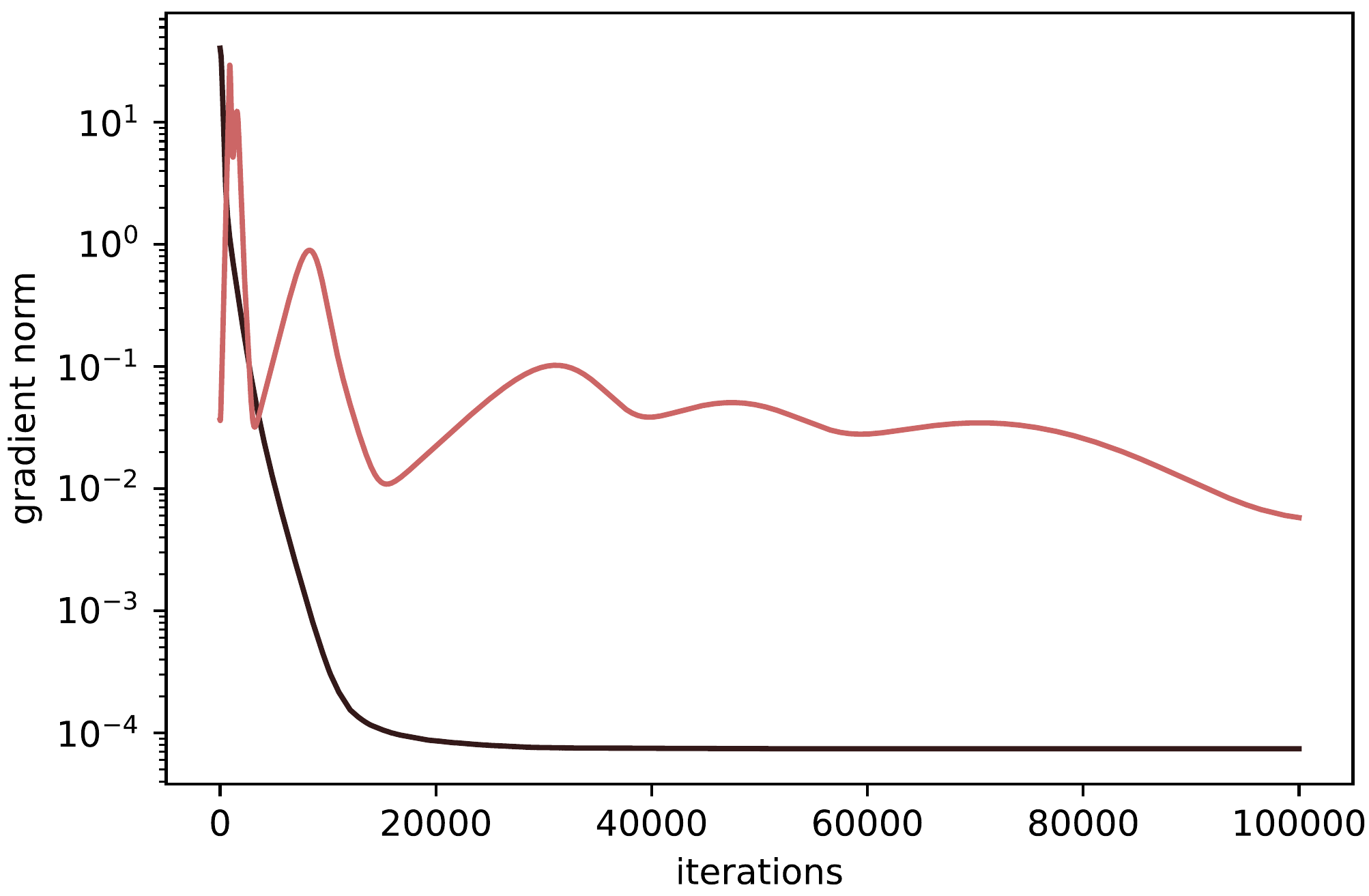} }\vfill{}
\subfloat{\includegraphics[width=0.245\textwidth]{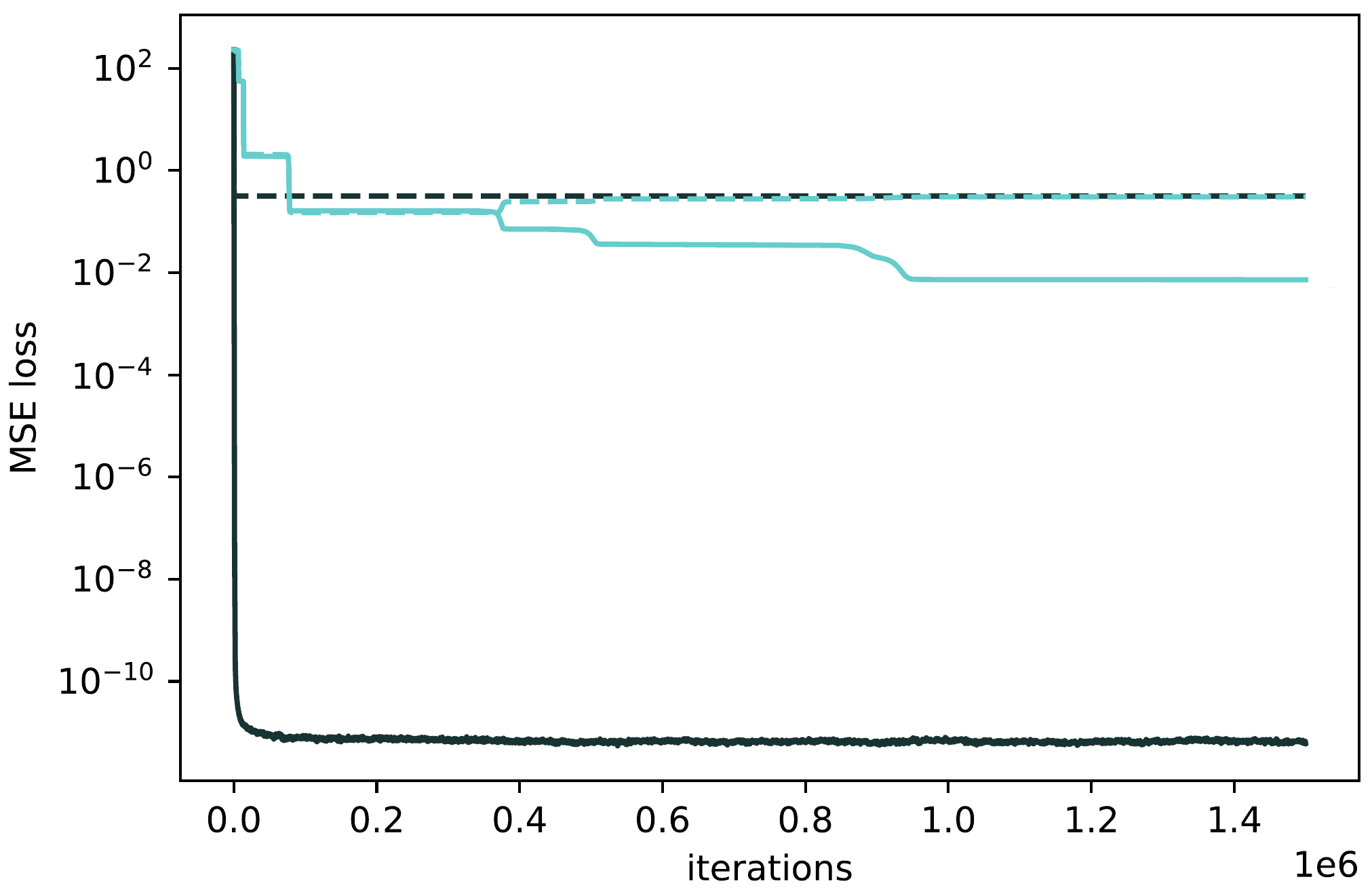}}
\subfloat{ \includegraphics[width=0.245\textwidth]{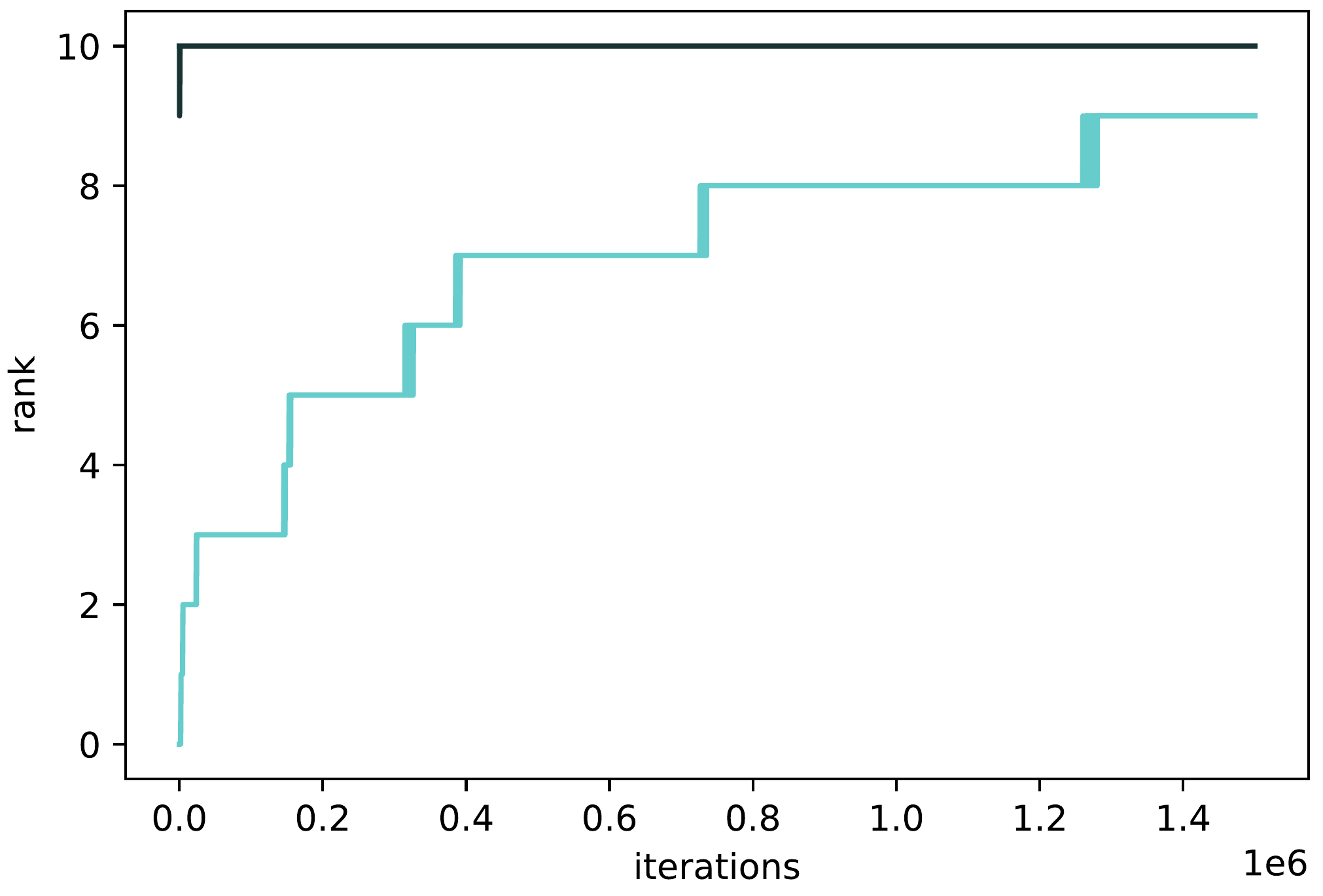} }\subfloat{ \includegraphics[width=0.245\textwidth]{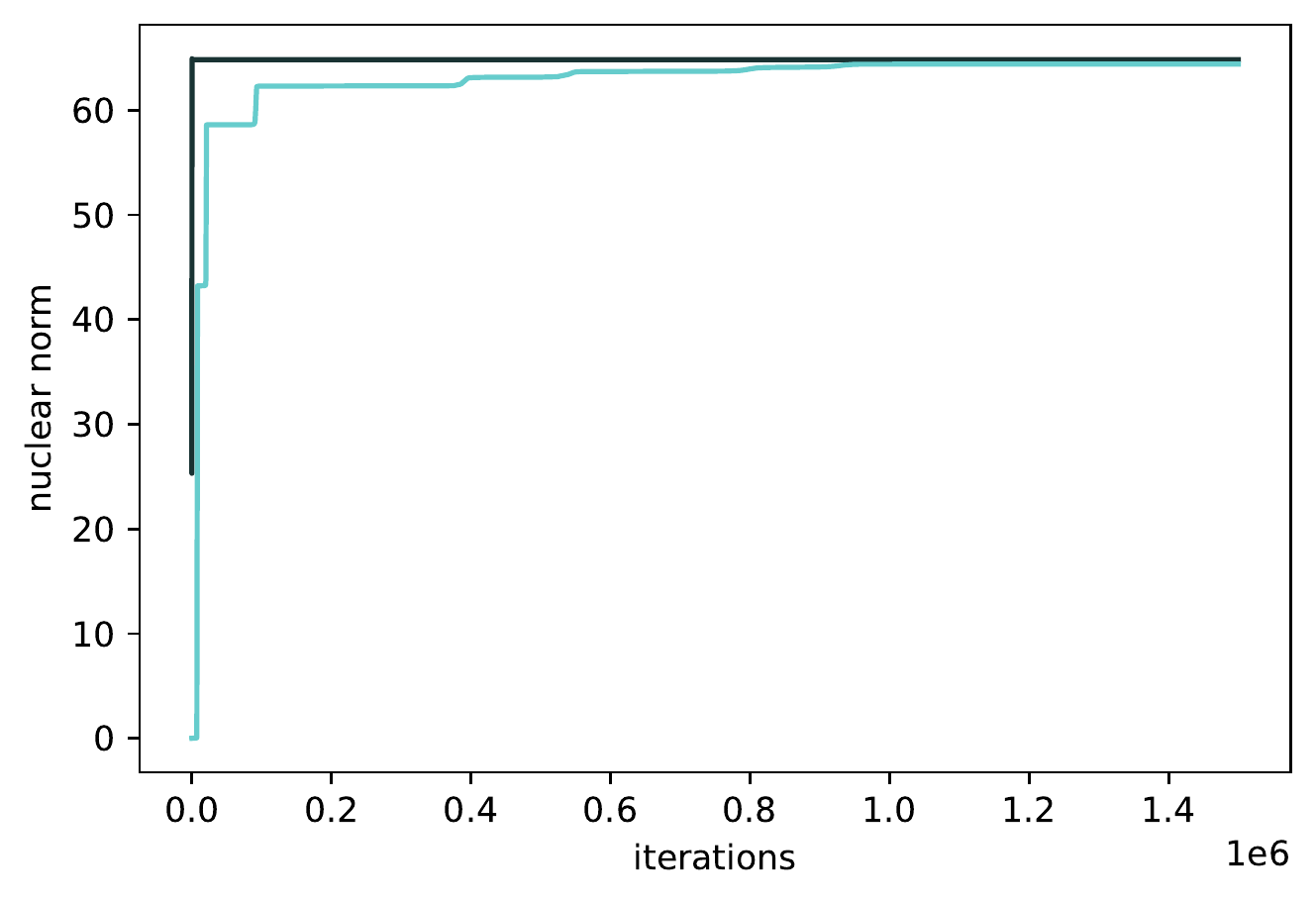} }\subfloat{ \includegraphics[width=0.245\textwidth]{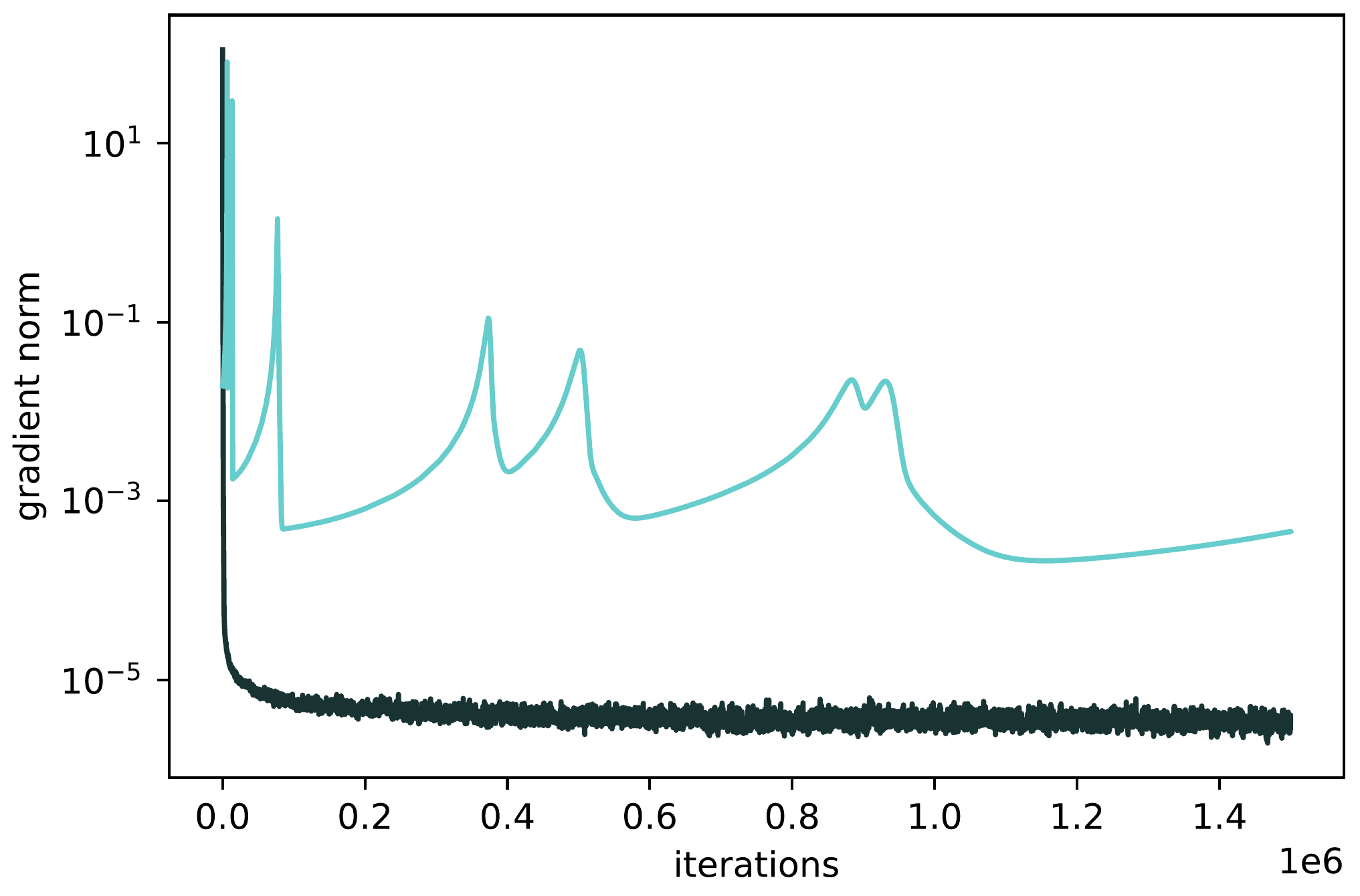} }\caption{\textit{Training in the NTK vs. saddle-to-saddle regimes in shallow (top) and deep (bottom) networks when learning a low rank matrix corrupted with noise.}
Black lines (the NTK regime): the parameters are initialized with the standard deviation $ \tilde{\sigma}=w^{-\nicefrac{L-1}{2L}} $.
The rank of the network matrix increases incrementally as the gradient trajectory follows the paths between the saddles.
\textbf{Top/Shallow case:} $ L = 2 $ and $ w = 50 $; in the saddle-to-saddle regime (shown in red), the initialization scale is $ \tilde{\sigma} = w^{-2} $.
Bigger initialization scales result in shorter plateaus in the loss curve if the same learning rate is used.
\textbf{Bottom/Deep case:} $ L = 4 $ and $ w = 100 $; in the saddle-to-saddle regime (shown in blue), the initialization scale is $ \tilde{\sigma} = w^{-1} $.
We observe that the transitions from saddles to saddles are sharper.
We observe that the
gradient norm of the parameters is highly non-monotonic; a decrease
down to $0$ indicates approaching to a saddle, and a following increase
indicates escaping it. We note that the peaks of the gradient norm
are sharper in the deep case, suggesting a different rate of escape.
In the NTK regime, the gradient norm decreases down to
$0$ monotonically. In the deep case the GD training is implemented
for $1500000$ iterations whereas in the shallow case it is only $100000$
iterations.The input data is standard Gaussian, the outputs are generated by a rank $ 3 $ teacher of size $ 10 \times 10 $ corrupted with noise, and the loss is MSE. \label{fig:shallow-deep}}
\end{figure}

\textbf{Experimental details of Fig.~\ref{fig:regimes-of-training}:} A teacher network matrix of size $5 \times 5$ is generated as $10 \mathrm{diag}([1, 2, 3, 4, 5])$. The input data is i.i.d. $5$-dim. standard Gaussian samples, and the number of training samples is $100$. The labels are generated by the teacher, no noise is added. Training is performed with gradient descent for $50000$ epochs and a learning rate of $1e-4$ is used.

\textbf{Experimental details of Fig.~\ref{fig:matrix-completion}:} A random matrix $A^*$ of size $30 \times 30$ is generated by multiplying two i.i.d. matrices of size $30\times1$ with i.i.d. standard Gaussian entries. $0.2$ of the entries of this matrix was accecible in training, and the training objecive is the squared difference between the (observed) entries of the linear network matrix and those of the matrix $A^*$.
The training is performed for $20000$ gradient descent iterations with a learning rate of $\eta_0 = 0.05$ if $\gamma>1$, and $\eta = \eta_0 w^{(L-1)(\gamma - 1)}$ for $\gamma \leq 1$. The tolerance for computing the rank is set to $0.1$.

\textbf{Experimental details of Fig.~\ref{fig:shallow-deep} in the Appendix:} We created
a rank $3$ teacher weight matrix $W_{T}=W_{0}W_{0}^{T}$ of size
$10\times10$ where $W_{0}$ is a $10\times3$ matrix with all entries
independent Gaussian with zero mean and where all entries in $i$-th
column has variance $i$ for all $i\in\{1,2,3\}$. We corrupted the
teacher weight matrix by an addition of a $10\times10$ matrix where
each entry is i.i.d. centered Gaussian with standard deviation $0.2$.
Input points are isotropic Gaussians. The training outputs are generated
by the noisy teacher, and the test outputs are generated by the noiseless
teacher. We generated $100$ training and $1000$ test data points.
Different runs of the same experiment yielded effectively the same
figure. The learning rate is $0.001$ both for the shallow and the
deep case. Tolerance for the rank is set to $10^{-4}$ (i.e. eigenvalues
smaller than $10^{-4}$ are set to $0$ for the rank calculation).
%For Figure 3, the setup is the same except that $W_{T}=W_{0}W_{0}^{T}$ is created by using a new distribution for $W_{0}$ where all entries are centered independent Gaussian with standard deviation $1.5$.

% \begin{figure}[!h]
% \centering \subfloat{ \includegraphics[width=0.4\textwidth]{} }\subfloat{ \includegraphics[width=0.4\textwidth]{} }\caption{\textit{An extension of Figure 3 in the main.} We present the rank
% and the nuclear norm of the matrix $A_{\theta}$ at convergence.}
% \end{figure}

\section{Regimes of Training\label{sec:Regimes-of-Training}}

In this section we describe the regimes of training depending on the
scaling $\gamma$ of the variance at initialization $\sigma^{2}=w^{-\gamma}$.

\subsection{Equivalence of Parametrization/Initializations}\label{subsec:equivalence-parametrization}

\subsubsection{NTK Parametrization}\label{subsec:equivalence-parametrization-NTK}

Let us show that the NTK parametrization corresponds to a scaling
of $\gamma=1-\frac{1}{L}$.

The NTK parametrization \cite{jacot2018neural} for linear networks
is
\[
A_{\theta}^{NTK}=\frac{W_{L}}{\sqrt{n_{L-1}}}\cdots\frac{W_{1}}{\sqrt{n_{0}}}=\frac{1}{\sqrt{n_{0}\cdots n_{L-1}}}W_{L}\cdots W_{1}
\]
with all parameters initialized with a variance of $1$. One can show
that gradient flow $\theta^{NTK}(t)$ with the NTK parametrization,
initialized at some parameters $\theta_{0}^{NTK}$ is equivalent (up
to a rescaling of the learning rate) to gradient flow $\theta(t)$
with the classical parametrization with an initialization of $\theta_{0}=\left(n_{0}\cdots n_{L-1}\right)^{-\frac{1}{2L}}\theta_{0}^{NTK}$:
\begin{prop}
Let $\theta^{NTK}(t)$ be gradient flow on the loss $\mathcal{L}^{NTK}(\theta)=C(A_{\theta}^{NTK})$
initialized at some parameters $\theta_{0}^{NTK}$ and $\theta(t)$
be gradient flow on the cost $\mathcal{L}(\theta)=C(A_{\theta})$
initialized at $\theta_{0}=\left(n_{0}\cdots n_{L-1}\right)^{-\frac{1}{2L}}\theta_{0}^{NTK}$.
We have
\[
A_{\theta(t)}=A_{\theta^{NTK}(\sqrt{n_{0}\cdots n_{L-1}}t)}^{NTK}.
\]
\end{prop}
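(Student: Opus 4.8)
The plan is to exhibit an explicit time-reparametrized, scale-reparametrized correspondence between the two gradient flows and verify it solves the right ODE. Write $c = (n_0 \cdots n_{L-1})^{-1/2}$, so that $A^{NTK}_\theta = c\, W_L \cdots W_1 = c\, A_\theta$ and the claimed initialization relation is $\theta_0 = c^{1/L}\theta_0^{NTK}$. The key observation is the homogeneity of the parametrization map: $A_{\lambda \theta} = \lambda^L A_\theta$ for any scalar $\lambda$, hence $A_{c^{1/L}\theta^{NTK}} = c\, A_{\theta^{NTK}} = A^{NTK}_{\theta^{NTK}}$. So the claim $A_{\theta(t)} = A^{NTK}_{\theta^{NTK}(\sqrt{n_0\cdots n_{L-1}}\,t)}$ is at least consistent at $t=0$; the work is to propagate it along the flow.

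First I would define the candidate curve $\tilde\theta(t) := c^{1/L}\,\theta^{NTK}(s(t))$ where $s(t) = \kappa t$ for a constant $\kappa$ to be determined, check that $\tilde\theta(0) = \theta_0$, and show $\tilde\theta$ satisfies the gradient-flow ODE $\partial_t \tilde\theta = -\nabla_\theta \mathcal{L}(\tilde\theta)$; uniqueness of solutions to the gradient flow then forces $\tilde\theta = \theta$, and applying $A_{(\cdot)}$ gives the statement. To carry out the ODE check I would use two scaling identities for the chain-rule gradient. By the chain rule, $\nabla_\theta \mathcal{L}(\theta) = (\partial_\theta A_\theta)^T \nabla_A C(A_\theta)$ and $\nabla_\theta \mathcal{L}^{NTK}(\theta) = c\,(\partial_\theta A_\theta)^T \nabla_A C(cA_\theta)$. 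Next, the Jacobian $\partial_\theta A_\theta$ is $(L-1)$-homogeneous in $\theta$ (it is a sum of products of $L-1$ of the weight matrices), so $\partial_\theta A_{\lambda\theta} = \lambda^{L-1}\partial_\theta A_\theta$ as a linear map in the appropriate index sense. Combining: writing $\eta := \theta^{NTK}(s)$,
\begin{align*}
\nabla_\theta \mathcal{L}(c^{1/L}\eta) &= (\partial_\theta A_{c^{1/L}\eta})^T \nabla_A C(A_{c^{1/L}\eta}) \\
&= c^{(L-1)/L}(\partial_\theta A_\eta)^T \nabla_A C(c A_\eta) \\
&= c^{(L-1)/L} \cdot c^{-1}\, \nabla_\eta \mathcal{L}^{NTK}(\eta) \\
&= c^{-1/L}\, \nabla_\eta \mathcal{L}^{NTK}(\eta).
\end{align*}
On the other hand $\partial_t \tilde\theta(t) = c^{1/L}\kappa\, \dot\eta = -c^{1/L}\kappa\, \nabla_\eta\mathcal{L}^{NTK}(\eta)$. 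Matching the two expressions forces $c^{1/L}\kappa = c^{-1/L}$, i.e. $\kappa = c^{-2/L} = (n_0\cdots n_{L-1})^{1/L}$. Hmm — this gives the wrong power; let me recompute the time rescaling carefully, since the claimed rescaling is $\sqrt{n_0\cdots n_{L-1}}$, i.e. $\kappa = c^{-1}$. The discrepancy will be resolved by correctly tracking the homogeneity degree of the Jacobian contribution together with the extra factor $c$ hidden in $A^{NTK} = cA$; the cleanest route is to first prove the purely algebraic identity relating $\nabla_\theta \mathcal{L}$ and $\nabla_\theta \mathcal{L}^{NTK}$ at rescaled arguments, from which $\kappa$ drops out unambiguously.

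The main obstacle, then, is bookkeeping the homogeneity degrees precisely: one must be careful that $\partial_\theta A$ is homogeneous of degree $L-1$ (not $L$) in $\theta$, that the cost contributes an extra factor of $c$ through its argument and another $c$ through the outer derivative of $\mathcal{L}^{NTK}$, and that the initialization rescaling is $c^{1/L}$ (so that $A$ is preserved, using the degree-$L$ homogeneity of $A_\theta$) rather than $c$. Once the scalar identity $\nabla_\theta\mathcal{L}(c^{1/L}\eta) = c^{\mu}\nabla_\eta\mathcal{L}^{NTK}(\eta)$ is established with the correct exponent $\mu$, the time rescaling $\kappa$ is read off as $c^{\mu - 1/L}$ inverted appropriately, and I would double-check it reproduces $\sqrt{n_0\cdots n_{L-1}}$ for $L=2$ as a sanity check against the known NTK-scaling literature. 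The remainder — invoking Picard–Lindelöf uniqueness for the (locally Lipschitz, since polynomial) gradient vector field to conclude $\tilde\theta \equiv \theta$, and then applying $A_{(\cdot)}$ — is routine.
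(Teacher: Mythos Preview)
Your approach is exactly the paper's: define the candidate $\tilde\theta(t)=c^{1/L}\theta^{NTK}(\kappa t)$, check the initial condition and the ODE, then invoke Picard--Lindel\"of. Your gradient identity
\[
\nabla_\theta\mathcal{L}(c^{1/L}\eta)=c^{-1/L}\,\nabla_\eta\mathcal{L}^{NTK}(\eta)
\]
is correct; in fact it follows in one line from the observation $\mathcal{L}^{NTK}(\eta)=C(cA_\eta)=C(A_{c^{1/L}\eta})=\mathcal{L}(c^{1/L}\eta)$ and the chain rule, without splitting into Jacobian and cost factors. Matching against $\partial_t\tilde\theta=-c^{1/L}\kappa\,\nabla_\eta\mathcal{L}^{NTK}(\eta)$ then forces $\kappa=c^{-2/L}=(n_0\cdots n_{L-1})^{1/L}$, exactly as you computed.

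Do not second-guess yourself: your exponent is right and the one printed in the statement, $\sqrt{n_0\cdots n_{L-1}}=(n_0\cdots n_{L-1})^{1/2}$, is wrong except when $L=2$. The paper's own proof consists of a single displayed line that reduces to $\partial_t\theta^{NTK}(\kappa t)=\frac{\kappa}{\kappa}\partial_t\theta(t)$, which verifies nothing and so never detects the slip. The substance of the proposition --- that the two flows agree up to a fixed spatial rescaling $c^{1/L}$ and a fixed time rescaling, hence the NTK parametrization corresponds to $\gamma=1-\tfrac{1}{L}$ --- is unaffected by the value of $\kappa$, so the error is cosmetic. Your ``main obstacle'' of further bookkeeping is a phantom; just report $\kappa=(n_0\cdots n_{L-1})^{1/L}$ and move on.
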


\begin{proof}
We will show that $\theta(t)=\left(n_{0}\cdots n_{L-1}\right)^{-\frac{1}{2L}}\theta^{NTK}(\sqrt{n_{0}\cdots n_{L-1}}t)$
which implies that $A_{\theta(t)}=A_{\theta^{NTK}(t)}^{NTK}$. This
is obviously true at $t=0$. Now assuming it is true at a time $t$,
we show that the time derivatives of $\theta(t)$ and $\left(n_{0}\cdots n_{L-1}\right)^{-\frac{1}{2L}}\theta^{NTK}(\sqrt{n_{0}\cdots n_{L-1}}t)$
match:
\[
\partial_{t}\theta^{NTK}(\sqrt{n_{0}\cdots n_{L-1}}t)=\frac{\sqrt{n_{0}\cdots n_{L-1}}}{\sqrt{n_{0}\cdots n_{L-1}}}\partial_{t}\theta(t)=\partial_{t}\theta(t).
\]
\end{proof}
This implies that the NTK parametrization with $\mathcal{N}(0,1)$
initialization is equivalent to the classical parametrization with
$\mathcal{N}(0,\left(n_{0}\cdots n_{L-1}\right)^{-\frac{1}{L}})$
initialization, which for rectangular networks corresponds to a $\mathcal{N}(0,n_{0}^{-\frac{1}{L}}w^{-\frac{L-1}{L}})$
initialization with scaling $\gamma=\frac{L-1}{L}=1-\frac{1}{L}$.

\subsubsection{Maximal Update Parametrization}\label{subsec:equivalence-parametrization-MUP}

The Maximal Update parametrization (or $\mu$-parametrization) \cite{yang2020feature}
is equivalent to $\gamma=1$. The $\mu$-parametrization for linear
rectangular networks is the same the classical one, since
\[
A_{\theta}^{\mu}=\frac{W_{L}}{\sqrt{w}}W_{L-1}\cdots W_{2}\left(\sqrt{w}W_{1}\right)=W_{L}\cdots W_{1}
\]
 and the parameters are initialized with variance $w^{-1}$, i.e.
$\gamma=1$.

\subsection{Distance to Different Critical Points}

Let $d_{\mathrm{m}}$ and $d_{\mathrm{s}}$ be the Euclidean distances between the initialization
$\theta$ and, respectively, the set of global minima and  the set of all saddles. For random variables $f(w),g(w)$ which depend on $w$, we write $f\asymp g$ if both $\nicefrac{f(w)}{g(w)}$ and $\nicefrac{g(w)}{f(w)}$ are stochastically bounded as $w\to\infty$.
The following theorem studies how $d_{\mathrm{m}}$ and $d_{\mathrm{s}}$  scale as $w\to\infty$:
\begin{thm}[Theorem \ref{th:distances} in the main]\label{prop:proximity-critical-points}
Suppose that the set of matrices that minimize $C$ is non-empty, has Lebesgue measure zero, and does not contain the zero matrix. Let $\theta$ be i.i.d. centered Gaussian r.v. of variance
$\sigma^{2}=w^{-\gamma}$ where $1-\frac{1}{L}\leq\gamma<\infty$. Then:
\begin{enumerate}
\item if $1-\frac{1}{L}\leq\gamma<1$, we have $d_{\mathrm{m}}\asymp w^{-\frac{(1-\gamma)(L-1)}{2}}$
and $d_{\mathrm{s}}\asymp w^{\frac{1-\gamma}{2}}$,
\item if $\gamma=1$, we have $d_{\mathrm{m}},d_{\mathrm{s}} \asymp 1$,
\item if $\gamma>1$ we have $d_{\mathrm{m}}\asymp 1$ and $d_{\mathrm{s}}\asymp w^{-\frac{\gamma-1}{2}}$.
\end{enumerate}
\end{thm}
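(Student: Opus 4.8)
The plan is to analyze $d_{\mathrm{m}}$ and $d_{\mathrm{s}}$ separately by tracking how the norm of $A_\theta$ and the norms of the individual layers scale at initialization, and then matching these scales against the geometric constraints that define the two sets of critical points.

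First, I would establish the orders of magnitude at initialization. With $\theta$ having i.i.d. $\mathcal N(0,w^{-\gamma})$ entries, each hidden layer $W_\ell$ ($2\le \ell\le L-1$) is a $w\times w$ matrix with entries of variance $w^{-\gamma}$, so by standard random matrix theory (e.g. Bai–Yin) its operator norm is $\asymp w^{1-\gamma/2}\cdot w^{-1/2}\cdot\sqrt w = w^{(1-\gamma)/2}$ up to constants; more carefully, $\|W_\ell\|_{op}\asymp \sqrt{w}\cdot w^{-\gamma/2} = w^{(1-\gamma)/2}$, and the smallest singular value has the same order. The boundary layers $W_1\in\mathbb R^{w\times n_0}$ and $W_L\in\mathbb R^{n_L\times w}$ have operator norm $\asymp \sqrt w\cdot w^{-\gamma/2}=w^{(1-\gamma)/2}$ as well (since $n_0,n_L$ are fixed). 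Hence $\|A_\theta\|\asymp \big(w^{(1-\gamma)/2}\big)^{L}\cdot (\text{correction})$; one must be slightly careful because $A_\theta$ is a product of near-independent random matrices, but the typical magnitude is $\asymp w^{(1-\gamma)(L-1)/2}\cdot w^{-\gamma/2}\cdot\sqrt{\text{const}}$. The cleanest bookkeeping: $\|\theta\|^2 = \sum_\ell \|W_\ell\|_F^2 \asymp (L-2)w^2\cdot w^{-\gamma} + O(w\cdot w^{-\gamma})\asymp w^{2-\gamma}$ for $\gamma<2$, so $\|\theta\|\asymp w^{1-\gamma/2}$; and $\|A_\theta\|_F$ concentrates around $w^{(1-\gamma)(L-1)/2}$ (this is where the lower bound $\gamma\ge 1-1/L$ enters: it is exactly the threshold keeping $\|A_\theta\|$ from exploding, since $(1-\gamma)(L-1)/2\le 1/2$ iff $\gamma\ge 1-1/L$... more precisely $\le 0$ iff $\gamma\ge1$, and bounded iff $\gamma\ge 1-1/L$ after accounting for the extra fluctuation).

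Next I would handle $d_{\mathrm{m}}$. Since the set of minimizing matrices does not contain $0$, there is a fixed $\delta>0$ with $\|A^*\|\ge\delta$ for every minimizer $A^*$; also the minimizers form a fixed (width-independent, measure-zero) set in $\mathbb R^{n_L\times n_0}$. \emph{Lower bound:} to reach a global minimum from $\theta$, one must in particular change $A_\theta$ into some $A^*$; I would use a Lipschitz-type bound showing $\|A_\theta - A_{\theta'}\|\lesssim (\max_\ell\|W_\ell\| + \|\theta-\theta'\|)^{L-1}\|\theta-\theta'\|$, so that moving $A_\theta$ by a fixed amount $\asymp\delta$ requires $\|\theta-\theta'\|\gtrsim \delta\cdot(\max_\ell\|W_\ell\|)^{-(L-1)}\asymp w^{-(1-\gamma)(L-1)/2}$ when $\gamma<1$ (and $\asymp 1$ when $\gamma\ge 1$, since then the layers are $O(1)$ or smaller). \emph{Upper bound:} exhibit a nearby minimizer. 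Fix one minimizer $A^*=USV^T$ of rank $r\le\min(n_0,n_L)$; I would construct $\theta'$ by perturbing the first $r$ singular directions of the layer product: rescale/rotate so that the $r$ leading singular values of $W_L\cdots W_1$ become those of $A^*$. Because the layers already have singular values $\asymp w^{(1-\gamma)/2}$ (for $\gamma<1$, these are $\gg 1$; for $\gamma>1$ they are $\ll 1$), when $\gamma<1$ a perturbation of size $\asymp w^{-(1-\gamma)(L-1)/2}$ in $\theta$ suffices to install the right top-$r$ block in $A_\theta$ (the existing large singular values do the amplification for us), matching the lower bound; when $\gamma\ge1$ the layers are $O(1)$ and an $O(1)$ perturbation is both necessary and sufficient. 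This gives $d_{\mathrm{m}}\asymp w^{-(1-\gamma)(L-1)/2}$ for $\gamma<1$ and $d_{\mathrm{m}}\asymp 1$ for $\gamma\ge1$.

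Finally, $d_{\mathrm{s}}$. The origin $0$ is itself a saddle, so $d_{\mathrm{s}}\le\|\theta\|\asymp w^{1-\gamma/2}$; when $\gamma>1$ this is $w^{-(\gamma-1)/2}\to 0$, giving the upper bound in case 3. More generally I would characterize the \emph{nearest} saddle: a point $\theta'$ is a critical point of $\mathcal L$ with $A_{\theta'}$ not a minimizer iff the backpropagated gradient vanishes, which forces (for generic $C$) certain rank-deficiency / orthogonality relations among the layers; the simplest family of saddles is $\{\theta' : W_\ell' = 0 \text{ for some }\ell\}$, and also rotations of inclusions of lower-width critical points. \emph{Upper bound:} zeroing out the smallest-Frobenius-norm layer gives a saddle at distance $\min_\ell\|W_\ell\|_F\asymp \sqrt w\cdot w^{-\gamma/2}=w^{(1-\gamma)/2}$ for an interior layer (or the even-smaller boundary layer $O(w^{(1-\gamma)/2}\cdot n_0^{1/2}/w^{1/2})$... but interior layers dominate the count), but actually the origin gives $\|\theta\|\asymp w^{1-\gamma/2}$, and for $\gamma<1$ the single-layer-zeroing saddle at distance $w^{(1-\gamma)/2}$ is closer — this matches case 1. \emph{Lower bound:} I would show any saddle is at distance $\gtrsim w^{(1-\gamma)/2}$ when $\gamma<1$ (resp. $\gtrsim w^{-(\gamma-1)/2}$ when $\gamma>1$) by arguing that a critical point with non-minimizing output matrix must have at least one layer with a nontrivial null direction aligned against the random initialization, and the generic random $\theta$ is $\Omega(w^{(1-\gamma)/2})$-far from having any such alignment; quantitatively, the event that \emph{some} layer of $\theta'$ is rank-deficient forces $\|\theta-\theta'\|\ge \sigma_{\min}(W_\ell)\asymp w^{(1-\gamma)/2}$ for that $\ell$.

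The main obstacle I expect is the \textbf{lower bound on $d_{\mathrm{s}}$}, i.e. ruling out the existence of an unexpectedly close saddle. Zeroing a layer or including a lower-width critical point are only the obvious saddles; one must show the full critical set of $\mathcal L$ — which is complicated and $C$-dependent — admits no point closer to a generic $\theta$ than the stated order. I would handle this by reducing to a statement about the singular values of the layer matrices: at any non-minimizing critical point the balancedness-type identities plus the backprop equations force $\mathrm{rank}(A_{\theta'})<\min(n_0,n_L)$, hence some layer has a nonzero kernel, hence $\theta'$ lies in a fixed finite union of proper linear subspaces (one per choice of which layer drops rank, times the Grassmannian of kernel directions), and the distance from the random $\theta$ to the nearest such subspace concentrates at the scale of the relevant singular value $\asymp w^{(1-\gamma)/2}$. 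The matching of constants (the $\asymp$, not $\lesssim$/$\gtrsim$) will require the sharp two-sided bounds on extreme singular values of the product $W_L\cdots W_1$, for which the constraint $\gamma\ge1-1/L$ is exactly what prevents the Frobenius norm of $A_\theta$ from diverging and keeps all the estimates tight.
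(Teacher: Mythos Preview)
Your high-level strategy for $d_{\mathrm m}$ is essentially the paper's: a Lipschitz bound $\|A_\theta-A_{\theta'}\|\lesssim \sum_k \|\theta-\theta'\|^k (\max_\ell\|W_\ell\|_{op})^{L-k}$ for the lower bound, and an explicit nearby minimizer for the upper bound. The paper makes the upper bound concrete by, for $\gamma<1$, perturbing only $W_L$ via $dW_L=(A^*-A_\theta)(W_{L-1}\cdots W_1)^+$, and for $\gamma\ge 1$, zeroing the boundary layers and inserting an inclusion of $(S^{1/L}V^T,S^{1/L},\dots,US^{1/L})$; your sketch is compatible with this.

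The real problems are both halves of your treatment of $d_{\mathrm s}$.

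\textbf{Upper bound.} Zeroing a single layer does \emph{not} produce a critical point: if you set $W_1'=0$ and keep the other layers, then $A_{\theta'}=0$ but $\partial_{W_1}\mathcal L(\theta')=(W_L\cdots W_2)^T\nabla C(0)\neq 0$ in general. The paper instead zeroes \emph{both} boundary layers $W_1$ and $W_L$; then every $\partial_{W_\ell}\mathcal L$ contains a factor $W_1$ or $W_L$ and vanishes. Since $\|W_1\|_F^2+\|W_L\|_F^2\asymp (n_0+n_L)w\cdot w^{-\gamma}$, this still gives $d_{\mathrm s}\lesssim w^{(1-\gamma)/2}$.

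\textbf{Lower bound.} Your chain ``rank$(A_{\theta'})<\min(n_0,n_L)\Rightarrow$ some individual layer has a kernel $\Rightarrow$ distance $\ge\sigma_{\min}(W_\ell)$'' fails twice. First, rank deficiency of a product does not force any factor to be rank-deficient. Second, even if it did, for the \emph{interior} square layers $\sigma_{\min}(W_\ell)\asymp w^{-\gamma/2}/\sqrt{w}=w^{-(1+\gamma)/2}$, which is far smaller than the claimed $w^{(1-\gamma)/2}$; your stated order holds only for the tall boundary layers. The paper avoids both issues by reading off the right constraint from a single gradient component: at any saddle $\theta^*$,
\[
0=\partial_{W_L}\mathcal L(\theta^*)=\nabla C(A_{\theta^*})\,(W_{L-1}^*\cdots W_1^*)^T,
\]
and since $\nabla C(A_{\theta^*})\neq 0$, the tall matrix $W_{L-1}^*\cdots W_1^*\in\mathbb R^{w\times n_0}$ must have rank $<n_0$. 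At the random initialization this product has full column rank with $s_{\min}\bigl(W_{L-1}\cdots W_1\bigr)\asymp w^{(1-\gamma)(L-1)/2}$ (via concentration of $W_1^T\cdots W_{L-1}^T W_{L-1}\cdots W_1$ around $w^{(1-\gamma)(L-1)}I_{n_0}$). Then the same Lipschitz bound, now at depth $L-1$, gives
\[
s_{\min}^2\le \|W_{L-1}\cdots W_1-(W_{L-1}^*\cdots W_1^*)\|_F^2
\;\lesssim\;\sum_{k=1}^{L-1}\|\theta-\theta^*\|^{2k}w^{(1-\gamma)(L-1-k)},
\]
which forces $\|\theta-\theta^*\|\gtrsim w^{(1-\gamma)/2}$ uniformly in $\gamma$. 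This is the missing idea: work with the rank of the partial product $W_{L-1}\cdots W_1$ (a tall matrix with well-separated extreme singular values), not with individual layers.
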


To prove this result, we require a few Lemmas:
\begin{lem}
\label{lem:distance-matrix-init-min}Let $\theta$ be the vector of
parameters of a DLN with i.i.d. $\mathcal{N}(0,w^{-\gamma})$ Gaussian
entries, and let $\mathcal{A}_{\mathrm{min}}=\{A\in\mathbb{R}^{n_{L}\times n_{0}}:C(A)=0\}$
be the set of global minimizers of $C$. Under the same assumptions
on the cost $C$ as Proposition \ref{prop:proximity-critical-points},
we have $d(A_{\theta},\mathcal{A}_{\mathrm{min}})\asymp1$ as $w\to\infty$.
\end{lem}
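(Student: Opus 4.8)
The plan is to control $d(A_\theta,\mathcal{A}_{\mathrm{min}})$ from above and from below, the upper bound being elementary and the lower bound requiring a case split at $\gamma=1-\tfrac1L$.

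\textbf{Upper bound and the case $\gamma>1-\tfrac1L$.} Fix $A^{*}\in\mathcal{A}_{\mathrm{min}}$ (the set is non-empty), so $d(A_\theta,\mathcal{A}_{\mathrm{min}})\le\|A_\theta\|_F+\|A^{*}\|_F$. A path-counting computation shows that $(A_\theta)_{ij}$ is a sum of $w^{L-1}$ monomials in independent centered Gaussians, and only the ``diagonal'' pairings survive in the second moment, giving $\mathbb{E}[(A_\theta)_{ij}^2]=w^{L-1}\sigma^{2L}=w^{(L-1)-\gamma L}$ and hence $\mathbb{E}\|A_\theta\|_F^2=n_0n_L\,w^{(L-1)-\gamma L}\le n_0n_L$ for all $w$ whenever $\gamma\ge1-\tfrac1L$. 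By Markov's inequality $\|A_\theta\|_F$ is stochastically bounded, which gives the upper bound $d(A_\theta,\mathcal{A}_{\mathrm{min}})/1$ stochastically bounded. When moreover $\gamma>1-\tfrac1L$ the exponent $(L-1)-\gamma L$ is negative, so $A_\theta\to0$ in $L^2$ and in probability; since $C$ is continuous and attains its minimum, $\mathcal{A}_{\mathrm{min}}$ is closed, and $0\notin\mathcal{A}_{\mathrm{min}}$ forces $\delta_0:=d(0,\mathcal{A}_{\mathrm{min}})>0$, whence $d(A_\theta,\mathcal{A}_{\mathrm{min}})\ge\delta_0-\|A_\theta\|_F\to\delta_0$ in probability. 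This already yields $d(A_\theta,\mathcal{A}_{\mathrm{min}})\asymp1$ for $\gamma>1-\tfrac1L$.

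\textbf{The critical case $\gamma=1-\tfrac1L$.} Here $A_\theta$ does not vanish and I would prove the lower bound by anti-concentration. Condition on $W_1,\dots,W_{L-1}$ and write $A_\theta=W_LN$ with $N:=W_{L-1}\cdots W_1\in\mathbb{R}^{w\times n_0}$; since the rows of $W_L$ are i.i.d.\ $\mathcal{N}(0,\sigma^2 I_w)$ and independent of $N$, conditionally on $N$ the rows of $A_\theta$ are i.i.d.\ $\mathcal{N}(0,\sigma^2 N^{\top}N)$, i.e.\ $A_\theta\mid N$ is a centered Gaussian on $\mathbb{R}^{n_L\times n_0}$ whose covariance has smallest eigenvalue $\sigma^2\lambda_{\min}(N^{\top}N)$, with $N^{\top}N\in\mathbb{R}^{n_0\times n_0}$ a matrix of \emph{fixed} size. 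The key estimate is that $\sigma^2N^{\top}N\to I_{n_0}$ in probability: building the Gram matrices $H^{(\ell)}:=(W_\ell\cdots W_1)^{\top}(W_\ell\cdots W_1)$ layer by layer, conditionally on $W_1,\dots,W_{\ell-1}$ each entry of $H^{(\ell)}$ is a sum of $w$ i.i.d.\ terms with conditional mean $w\sigma^2 H^{(\ell-1)}_{jj'}$ and, by Wick's formula, conditional variance $O(w\,\sigma^4\,\|H^{(\ell-1)}\|^2)$, so the relative fluctuation added at each of the $L-1$ layers is $O(w^{-1/2})$; inductively $(w\sigma^2)^{-(L-1)}N^{\top}N\to I_{n_0}$, and since $w^{L-1}\sigma^{2L}=1$ exactly when $\gamma=1-\tfrac1L$, also $\sigma^2N^{\top}N\to I_{n_0}$ in probability. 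Then on the event $E_w=\{\lambda_{\min}(\sigma^2N^{\top}N)\ge\tfrac12\}$, which has probability tending to $1$, the conditional density of $A_\theta$ given $N$ is bounded by a universal constant $B$. Given $\varepsilon>0$, choose $R$ with $\mathbb{P}(\|A_\theta\|_F>R)<\varepsilon/3$ for all $w$ (from the second-moment bound), then $w$ large with $\mathbb{P}(E_w^c)<\varepsilon/3$, then $\delta>0$ with $B\,\mathrm{Leb}(\mathcal{A}_{\mathrm{min}}^{\delta}\cap\overline B_R)<\varepsilon/3$, which is possible because $\mathcal{A}_{\mathrm{min}}\cap\overline B_{R+1}$ is compact and Lebesgue-null, so the Lebesgue measures of its $\delta$-neighborhoods decrease to $0$ as $\delta\downarrow0$. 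Then
\[
\mathbb{P}\big(d(A_\theta,\mathcal{A}_{\mathrm{min}})<\delta\big)\le\mathbb{P}(E_w^c)+\mathbb{P}(\|A_\theta\|_F>R)+\mathbb{E}\big[\mathbf{1}_{E_w}\,\mathbb{P}\big(A_\theta\in\mathcal{A}_{\mathrm{min}}^{\delta},\ \|A_\theta\|_F\le R\mid N\big)\big]<\varepsilon ,
\]
which shows $1/d(A_\theta,\mathcal{A}_{\mathrm{min}})$ is stochastically bounded. Combined with the upper bound, $d(A_\theta,\mathcal{A}_{\mathrm{min}})\asymp1$.

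\textbf{Main obstacle.} The delicate part is the critical case $\gamma=1-\tfrac1L$: establishing the concentration $\sigma^2N^{\top}N\to I_{n_0}$ (propagating a law-of-large-numbers estimate through the product of $L-1$ Gaussian matrices while keeping the accumulated relative error $o(1)$), and then combining the resulting uniformly bounded conditional Gaussian density with the vanishing Lebesgue measure of the $\delta$-neighborhoods of the null set $\mathcal{A}_{\mathrm{min}}$ — the truncation to a ball being necessary since $\mathcal{A}_{\mathrm{min}}$ may be unbounded. The upper bound and the case $\gamma>1-\tfrac1L$ are routine. I would expect the subsequent Lemmas in this section to handle the analogous distance $d_{\mathrm s}$ to the set of saddles, and the overall proof of Theorem~\ref{prop:proximity-critical-points} to combine $d_{\mathrm m}$ with a separate estimate relating parameter-space distances to matrix-space distances via the (in)homogeneity of $\theta\mapsto A_\theta$.
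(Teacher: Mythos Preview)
Your argument is correct. For $\gamma>1-\tfrac1L$ you and the paper proceed identically: $A_\theta\to0$ in probability, hence $d(A_\theta,\mathcal A_{\min})\to d(0,\mathcal A_{\min})>0$.

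For the critical case $\gamma=1-\tfrac1L$ you take a genuinely different route. The paper simply invokes the known convergence in distribution of $A_\theta$ to a matrix $B$ with i.i.d.\ $\mathcal N(0,1)$ entries (citing the wide-network Gaussian process literature), applies the continuous mapping theorem to the $1$-Lipschitz function $A\mapsto d(A,\mathcal A_{\min})$, and concludes from $\mathbb P[d(B,\mathcal A_{\min})=0]=0$ (measure-zero assumption) and finiteness of $B$. Your approach is more hands-on and self-contained: you condition on $W_1,\dots,W_{L-1}$ so that $A_\theta$ is conditionally Gaussian with covariance governed by $\sigma^2 N^\top N$, establish $\sigma^2 N^\top N\to I_{n_0}$ by a layer-by-layer law-of-large-numbers on the Gram matrices, and use the resulting uniform density bound together with the measure-zero hypothesis on $\mathcal A_{\min}$ for anti-concentration. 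The paper's argument is shorter because it outsources the hard step to a citation; yours avoids the citation and in fact proves slightly more (a uniform-in-$w$ density bound rather than only a limit statement), at the cost of spelling out the Gram-matrix concentration. Both approaches ultimately rely on the Lebesgue-null assumption in the same way.
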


\begin{proof}
If $\gamma>1-\frac{1}{L}$ then $A_{\theta}$ converges in distribution
to the zero matrix as $w\to\infty$, the distance $d(A_{\theta},\mathcal{A}_{\mathrm{min}})$
therefore converges to the distance finite value $d(0,\mathcal{A}_{\mathrm{min}})\neq0$.

If $\gamma=1-\frac{1}{L}$, then $A_{\theta}$ converges in distribution
to random Gaussian matrix with iid $\mathcal{N}(0,1)$ entries (this
can seen as a consequence of the more general results for non-linear
networks \cite{gaussianconv_lee2017,Matthews2018GaussianProcess}).
As a result the distribution of $d(A_{\theta},\mathcal{A}_{\mathrm{min}})$
converges to the distribution of $d(B,\mathcal{A}_{\mathrm{min}})$
for a matrix $B$ with iid Gaussian $\mathcal{N}(0,1)$ entries. Since
$\mathbb{P}\left[d(B,\mathcal{A}_{\mathrm{min}})=0\right]=0$ and
$\mathbb{P}\left[d(B,\mathcal{A}_{\mathrm{min}})>b\right]\to0$ as
$b\to\infty$ we have that $d(A_{\theta},\mathcal{A}_{\mathrm{min}})\asymp1$
as needed.
\end{proof}
\begin{lem}
\label{lem:bound-distance-output-from-distance-parameters}Let $\theta$
be the vector of parameters of a DLN with iid $\mathcal{N}(0,w^{-\gamma})$
Gaussian entries. For all $\epsilon$, there is a constant $C_{\epsilon,L}$
that does not depend on $w$ s.t. with prob. $1-\epsilon$, we have
for all $\theta'\in\mathbb{R}^{P}$ that

\[
\left\Vert A_{\theta'}-A_{\theta}\right\Vert _{F}^{2}\leq C_{\epsilon,L}\sum_{k=1}^{L}\left\Vert \theta-\theta'\right\Vert ^{2k}w^{(1-\gamma)(L-k)}.
\]
\end{lem}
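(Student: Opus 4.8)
The plan is to expand the difference $A_{\theta'} - A_{\theta}$ as a telescoping sum over the layers and then bound each term by a product of operator norms. Writing $\theta = (W_L,\dots,W_1)$ and $\theta' = (W_L',\dots,W_1')$ with $\Delta_\ell = W_\ell' - W_\ell$, the standard identity is
\[
A_{\theta'} - A_{\theta} = \sum_{\ell=1}^{L} W_L'\cdots W_{\ell+1}'\,\Delta_\ell\, W_{\ell-1}\cdots W_1,
\]
and more generally, by iterating, $A_{\theta'}-A_{\theta}$ is a sum, over all nonempty subsets $S\subseteq\{1,\dots,L\}$, of products in which the factor at position $\ell$ is $\Delta_\ell$ if $\ell\in S$ and $W_\ell$ otherwise (with the indices kept in decreasing order). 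Taking Frobenius norms, using submultiplicativity $\|BC\|_F \le \|B\|_{op}\|C\|_F$, and $\|\Delta_\ell\|_{op}\le\|\Delta_\ell\|_F\le\|\theta-\theta'\|$, the term indexed by a subset $S$ with $|S|=k$ is bounded by $\|\theta-\theta'\|^{k}$ times the product of the operator norms $\|W_\ell\|_{op}$ over $\ell\notin S$. Grouping by $|S|=k$ and bounding the number of such subsets by $2^L$ absorbs the combinatorial factor into a constant depending only on $L$.

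The remaining ingredient is a high-probability bound on the operator norms of the random matrices $W_\ell$. Since the $W_\ell$ have i.i.d. $\mathcal{N}(0,w^{-\gamma})$ entries and are (for hidden layers) of size $w\times w$, standard results on the largest singular value of Gaussian matrices (e.g. $\mathbb{E}\|W_\ell\|_{op} \lesssim \sqrt{w}\cdot\sigma = w^{\frac{1-\gamma}{2}}$, with Gaussian concentration around the mean) give that, for any $\epsilon>0$, there is a constant $c_{\epsilon,L}$ such that with probability at least $1-\epsilon$ all $L$ matrices satisfy $\|W_\ell\|_{op}\le c_{\epsilon,L}\, w^{\frac{1-\gamma}{2}}$ simultaneously (the boundary and first/last layers, of size $w\times n_0$ or $n_L\times w$, only help since $n_0,n_L$ are fixed). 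On this event, the product of $L-k$ such operator norms is at most $c_{\epsilon,L}^{L-k}\, w^{\frac{(1-\gamma)(L-k)}{2}}$, so the subset-$S$ term with $|S|=k$ contributes $\le c_{\epsilon,L}^{L-k}\|\theta-\theta'\|^{k} w^{\frac{(1-\gamma)(L-k)}{2}}$; squaring, summing over $k$ from $1$ to $L$, and collecting constants yields exactly the claimed bound with some $C_{\epsilon,L}$ independent of $w$.

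The only real subtlety is the uniformity of the operator-norm bound across all $L$ layers with a width-independent constant, which is handled by a union bound over the $L$ (constantly many) layers together with the $w$-independent concentration estimate $\mathbb{P}[\|W_\ell\|_{op} > (\sqrt{w}+t)\sigma]\le e^{-t^2/2}$; everything else is the deterministic telescoping expansion and submultiplicativity of norms. I expect writing the general telescoping identity over subsets $S$ cleanly (keeping track of which factors are $\Delta$'s and which are $W$'s, in the correct order) to be the main bookkeeping obstacle, but it presents no conceptual difficulty.
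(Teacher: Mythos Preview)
Your proposal is correct and essentially identical to the paper's proof: the paper also writes $A_{\theta'}-A_\theta$ as the full expansion over all nonempty subsets $S\subseteq\{1,\dots,L\}$ (choosing $W_\ell$ or $dW_\ell$ at each position), bounds each term via submultiplicativity with $\|dW_\ell\|_F\le\|d\theta\|$ and the high-probability bound $\|W_\ell\|_{op}^2\le c_\epsilon w^{1-\gamma}$ from Vershynin's Corollary~5.35, and then groups by $|S|=k$ to obtain the stated sum. The only cosmetic difference is that the paper makes the Cauchy--Schwarz step explicit when squaring the sum (yielding the factor $2^L-1$), whereas you absorb this directly into the constant.
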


\begin{proof}
By Corollary 5.35 in \cite{vershynin_2010_random_matrix_spectrum_bound},
reformulated as Theorem \ref{thm:random_matrix_operator_norm_bound}
below, we know that for all $\epsilon$, there is a constant $c_{\epsilon}$that
does not depend on $w$ s.t. with prob. $1-\epsilon$, we have for
all $\ell$
\[
\left\Vert W_{\ell}\right\Vert _{op}^{2}\leq c_{\epsilon}w^{1-\gamma}.
\]
We now write $d\theta=\theta'-\theta$ (and the corresponding matrices
$dW_{\ell}=W_{\ell}'-W_{\ell}$) so that we may write the difference
$A_{\theta+d\theta}-A_{\theta}$ as the following sum
\[
\sum_{\begin{array}{c}
a_{1},\dots,a_{L}\in\{0,1\}\\
\exists\ell,a_{\ell}\neq0
\end{array}}\left(\begin{cases}
W_{L} & \text{if \ensuremath{a_{L}=0}}\\
dW_{L} & \text{if \ensuremath{a_{L}=1}}
\end{cases}\right)\cdots\left(\begin{cases}
W_{1} & \text{if \ensuremath{a_{1}=0}}\\
dW_{1} & \text{if \ensuremath{a_{1}=1}}
\end{cases}\right)
\]
where the indicator $a_{\ell}$ determines whether we take $W_{\ell}$
or $dW_{\ell}$ in the product. We can therefore bound
\begin{align*}
\left\Vert A_{\theta+d\theta}-A_{\theta}\right\Vert _{F}^{2} & \leq\left(\sum_{\begin{array}{c}
a_{1},\dots,a_{L}\in\{0,1\}\\
\exists\ell,a_{\ell}\neq0
\end{array}}\left\Vert \left(\begin{cases}
W_{L} & \text{if \ensuremath{a_{L}=0}}\\
dW_{L} & \text{if \ensuremath{a_{L}=1}}
\end{cases}\right)\cdots\left(\begin{cases}
W_{1} & \text{if \ensuremath{a_{1}=0}}\\
dW_{1} & \text{if \ensuremath{a_{1}=1}}
\end{cases}\right)\right\Vert _{F}\right)^{2}\\
 & \leq(2^{L}-1)\sum_{\begin{array}{c}
a_{1},\dots,a_{L}\in\{0,1\}\\
\exists\ell,a_{\ell}\neq0
\end{array}}\left\Vert \left(\begin{cases}
W_{L} & \text{if \ensuremath{a_{L}=0}}\\
dW_{L} & \text{if \ensuremath{a_{L}=1}}
\end{cases}\right)\cdots\left(\begin{cases}
W_{1} & \text{if \ensuremath{a_{1}=0}}\\
dW_{1} & \text{if \ensuremath{a_{1}=1}}
\end{cases}\right)\right\Vert _{F}^{2}\\
 & \leq(2^{L}-1)\sum_{\begin{array}{c}
a_{1},\dots,a_{L}\in\{0,1\}\\
\exists\ell,a_{\ell}\neq0
\end{array}}\left(\begin{cases}
\left\Vert W_{L}\right\Vert _{op}^{2} & \text{if \ensuremath{a_{L}=0}}\\
\left\Vert dW_{L}\right\Vert _{F}^{2} & \text{if \ensuremath{a_{L}=1}}
\end{cases}\right)\cdots\left(\begin{cases}
\left\Vert W_{1}\right\Vert _{op}^{2} & \text{if \ensuremath{a_{1}=0}}\\
\left\Vert dW_{1}\right\Vert _{F}^{2} & \text{if \ensuremath{a_{1}=1}}
\end{cases}\right)
\end{align*}
We now bound $\left\Vert W_{L}\right\Vert _{op}^{2}$ by $c_{\epsilon}w^{1-\gamma}$
and $\left\Vert dW_{L}\right\Vert _{F}^{2}$ by $\left\Vert d\theta\right\Vert ^{2}$
so that we obtain the bound
\[
\left\Vert A_{\theta+d\theta}-A_{\theta}\right\Vert _{F}^{2}\leq(2^{L}-1)\sum_{k=1}^{L}\left(\begin{array}{c}
L\\
k
\end{array}\right)\left\Vert d\theta\right\Vert ^{2k}c_{\epsilon}^{L-k}w^{(1-\gamma)(L-k)}\leq C_{\epsilon,L}\sum_{k=1}^{L}\left\Vert d\theta\right\Vert ^{2k}w^{(1-\gamma)(L-k)}
\]
for $C_{\epsilon,L}=(2^{L}-1)\max_{k=1,\dots,L}\left(\begin{array}{c}
L\\
k
\end{array}\right)c_{\epsilon}^{L-k}$.
\end{proof}
Let us now prove Theorem \ref{prop:proximity-critical-points}:
\begin{proof}
\textbf{(1) Distance to minimum: }Let us first give an lower bound
on the distance from initialization to a global minimum. Let $\theta$
be the intialization and $\theta+d\theta$ be the closest minimum.
By Lemma \ref{lem:bound-distance-output-from-distance-parameters},
we obtain
\[
\left\Vert A_{\theta+d\theta}-A_{\theta}\right\Vert _{F}^{2}\leq C_{L}'\sum_{k=1}^{L}\left\Vert d\theta\right\Vert ^{2k}w^{(1-\gamma)(L-k)}.
\]
If $\gamma>1$, the term with $k=L$ dominates, in which case $\left\Vert A_{\theta+d\theta}-A_{\theta}\right\Vert _{F}^{2}\leq\left\Vert d\theta\right\Vert ^{2L}$
which implies that $\left\Vert d\theta\right\Vert \geq\left\Vert A_{\theta+d\theta}-A_{\theta}\right\Vert _{F}^{\frac{1}{L}}\geq d(A_{\theta},\mathcal{A}_{\mathrm{min}})^{\frac{1}{L}}\asymp1$
by Lemma \ref{lem:distance-matrix-init-min}.

If $\gamma<1$, the term $k=1$ dominates, which implies $\left\Vert A_{\theta+d\theta}-A_{\theta}\right\Vert _{F}^{2}\leq\left\Vert d\theta\right\Vert ^{2}w^{(1-\gamma)(L-1)}$
which implies that $\left\Vert d\theta\right\Vert \geq\left\Vert A_{\theta+d\theta}-A_{\theta}\right\Vert _{F}w^{-\frac{(1-\gamma)(L-1)}{2}}=O(w^{-\frac{(1-\gamma)(L-1)}{2}})$,
which decreases with width.

Let us now show upper bounds on $\left\Vert d\theta\right\Vert $.
When $\gamma>1$, we will construct a closeby minimum. Let us first
define the parameters $\bar{\theta}=(\bar{W}_{1},\dots,\bar{W}_{L})$
where $\bar{W}_{1}=0$ and $\bar{W}_{L}=0$ and $\bar{W}_{\ell,ij}=\begin{cases}
W_{\ell,ij} & \text{if \ensuremath{i,j>\min\{n_{0},n_{L}\}}}\\
0 & \text{otherwise}
\end{cases}$. Since we have set only $O(w)$ parameters to zero, we have $\left\Vert \theta-\bar{\theta}\right\Vert ^{2}=O(\sigma^{2}w)=O(w^{1-\gamma})$.
Now let the matrix $A$ be a global minimum of the cost $C$ with
SVD $A=USV^{T}$ (with inner dimension equal to the rank $k$ of $A$),
we then set $\theta^{*}=\bar{\theta}+I^{(k\to w)}(S^{\frac{1}{L}}V^{T},S^{\frac{1}{L}},\dots,S^{\frac{1}{L}},US^{\frac{1}{L}})$.
The parameters $\theta^{*}$ are a global minimum since $A_{\theta^{*}}=A$
and $\left\Vert \theta^{*}-\theta\right\Vert \leq\left\Vert \theta^{*}-\bar{\theta}\right\Vert +\left\Vert \bar{\theta}-\theta\right\Vert =O(1)+O(w^{\frac{1-\gamma}{2}})=O(1)$.

When $\gamma<1$, with prob. $1-\epsilon$, we have $s_{min}\left(W_{L-1}\cdots W_{1}\right)>\frac{1}{2}\sigma^{(L-1)}w^{\frac{L-1}{2}}=w^{\frac{(1-\gamma)(L-1)}{2}}$,
we can reach a global minimum by only changing $W_{L}$, we need $dW_{L}W_{L-1}\cdots W_{1}=A^{*}-A_{\theta}$
hence we take $dW_{L}=\left(A^{*}-A_{\theta}\right)\left(W_{L}\cdots W_{1}\right)^{+}$
with norm $\left\Vert d\theta\right\Vert =\left\Vert dW_{L}\right\Vert _{F}\leq\frac{\left\Vert A^{*}-A_{\theta}\right\Vert }{s_{min}\left(W_{L-1}\cdots W_{1}\right)}=O(w^{-\frac{(1-\gamma)(L-1)}{2}}).$

\textbf{(2) Distance to saddles: }Given parameters $\theta=(W_{1},\dots,W_{L})$,
we can obtain a saddle $\theta^{*}$ by setting all entries of $W_{1}$
and $W_{L}$ to zero. We have
\[
\mathbb{E}\left[\left\Vert \theta-\theta^{*}\right\Vert ^{2}\right]=\mathbb{E}\left[\left\Vert W_{1}\right\Vert _{F}^{2}\right]+\mathbb{E}\left[\left\Vert W_{L}\right\Vert _{F}^{2}\right]=\sigma^{2}(n_{0}+n_{L})w=O(w^{1-\gamma}).
\]
This gives an upper bound of order $w^{1-\gamma}$ on the distance
between $\theta$ and the set of saddles $\theta^{*}$ with $A_{\theta^{*}}=0$.

Now let $\theta^{*}=\theta+d\theta$ be the saddle closest to $\theta$,
we know that
\begin{align*}
0 & =\partial_{W_{L}}\mathcal{L}(\theta^{*})=\nabla C(A_{\theta^{*}})\left(W_{1}^{*}\right)^{T}\cdots\left(W_{L-1}^{*}\right)^{T}.
\end{align*}
Since $A_{\theta^{*}}$ is not a global minimum, $\nabla C(A_{\theta^{*}})\neq0$,
for the above to be zero, we therefore need $\left(W_{1}^{*}\right)^{T}\cdots\left(W_{L-1}^{*}\right)^{T}$
to not have full column rank, i.e. $\mathrm{Rank}\left(W_{1}^{*}\right)^{T}\cdots\left(W_{L-1}^{*}\right)^{T}=n_{0}$.

We will show that at initialization $\left(W_{1}\right)^{T}\cdots\left(W_{L-1}\right)^{T}$
has rank $n_{0}$ and its smallest non-zero singular value $s_{min}$
is of order $w^{\frac{(1-\gamma)(L-1)}{2}}$. We will use the fact
that $\left\Vert \left(W_{1}\right)^{T}\cdots\left(W_{L-1}\right)^{T}-\left(W_{1}^{*}\right)^{T}\cdots\left(W_{L-1}^{*}\right)^{T}\right\Vert _{F}\geq s_{min}$
to lower bound the distance $\left\Vert \theta-\theta^{*}\right\Vert $
using Lemma \ref{lem:bound-distance-output-from-distance-parameters}.

The singular values of $W_{1}^{T}\cdots W_{L-1}^{T}$ are the squared
root of the eigenvalues of the $n_{0}\times n_{0}$ matrix $W_{1}^{T}\cdots W_{L-1}^{T}W_{L-1}\cdots W_{1}$.
One can show that as $w\to\infty$ this matrix concentrates in its
expectation
\[
\mathbb{E}\left[W_{1}^{T}\cdots W_{L-1}^{T}W_{L-1}\cdots W_{1}\right]=\sigma^{2(L-1)}w^{L-1}=w^{(1-\gamma)(L-1)}.
\]
which implies that $s_{\min}$ concentrates in $w^{\frac{(1-\gamma)(L-1)}{2}}$
and therefore $s_{\mathrm{min}}\asymp w^{\frac{(1-\gamma)(L-1)}{2}}$.

Now by Lemma \ref{lem:bound-distance-output-from-distance-parameters}
(applied to the depth $L-1$ this time), we have with prob. $1-\epsilon$
\begin{align*}
s_{\mathrm{min}}^{2} & \leq\left\Vert \left(W_{1}\right)^{T}\cdots\left(W_{L-1}\right)^{T}-\left(W_{1}^{*}\right)^{T}\cdots\left(W_{L-1}^{*}\right)^{T}\right\Vert _{F}^{2}\\
 & \leq C_{\epsilon,L-1}\sum_{k=1}^{L-1}\left\Vert \theta-\theta'\right\Vert ^{2k}w^{(1-\gamma)(L-1-k)}
\end{align*}
and $\left\Vert \theta-\theta'\right\Vert $ needs to be at least
of order $w^{\frac{(1-\gamma)}{2}}$ for any of the terms in the sum
to be at least of order $w^{(1-\gamma)(L-1)}$ (actually all these
become of the right order at the same time).
\end{proof}

\subsection{Spectrum bounds}

An important tool in our analysis is the following Theorem (which
is a reformulation of Corollary 5.35 in \cite{vershynin_2010_random_matrix_spectrum_bound})
\begin{thm}
\label{thm:random_matrix_operator_norm_bound} Let $A$ be a $m\times n$
matrix with i.i.d. $\mathcal{N}(0,\sigma^{2})$ entries. For all $t\geq0$,
with probability at least $1-2e^{-\frac{t^{2}}{2}}$, it holds that
\[
\sigma(-\sqrt{m}-\sqrt{n}-t)\leq s_{min}\left(A\right)\leq s_{max}\left(A\right)\leq\sigma\left(\sqrt{m}+\sqrt{n}+t\right).
\]
\end{thm}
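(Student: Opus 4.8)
The plan is to obtain the statement from the classical Gaussian-matrix spectrum estimate (Corollary 5.35 of \cite{vershynin_2010_random_matrix_spectrum_bound}), and to recall the short argument behind it. First I reduce to $\sigma=1$: every singular value of $A$ equals $\sigma$ times the corresponding singular value of $\bar A=\sigma^{-1}A$, whose entries are i.i.d.\ $\mathcal N(0,1)$, so it suffices to prove the bound for $\bar A$. By transposing $A$ — which leaves the singular values unchanged and only swaps the roles of $m$ and $n$ in a bound that is already symmetric in them — I may also assume $m\ge n$. Note that, read literally, the left inequality $-\sigma(\sqrt m+\sqrt n+t)\le s_{\min}(A)$ is automatic since $s_{\min}\ge 0$; the content lies in the upper bound $s_{\max}(A)\le\sigma(\sqrt m+\sqrt n+t)$ — which is the only part invoked later, in Lemma~\ref{lem:bound-distance-output-from-distance-parameters} — with the sharper lower bound $s_{\min}(A)\ge\sigma(\sqrt m-\sqrt n-t)$ following from the same argument. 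Throughout I use the variational formulas $s_{\max}(\bar A)=\sup_{u\in S^{m-1},\,v\in S^{n-1}}\langle u,\bar Av\rangle$ and, since $m\ge n$, $s_{\min}(\bar A)=\inf_{v\in S^{n-1}}\|\bar Av\|=\inf_{v\in S^{n-1}}\sup_{u\in S^{m-1}}\langle u,\bar Av\rangle$.

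The second step is concentration. Regarded as functions on $\mathbb R^{m\times n}$ with the Frobenius norm, both $\bar A\mapsto s_{\max}(\bar A)$ and $\bar A\mapsto s_{\min}(\bar A)$ are $1$-Lipschitz, being a supremum (resp.\ an infimum over $v$ of suprema over $u$) of the linear functionals $\bar A\mapsto\langle u,\bar Av\rangle=\langle uv^{T},\bar A\rangle_{F}$, each with gradient $uv^{T}$ of Frobenius norm $1$. The Gaussian concentration inequality for Lipschitz functions (Borell--TIS) applied to $\mathrm{vec}(\bar A)\sim\mathcal N(0,I_{mn})$ then gives, for every $1$-Lipschitz $f$, $\mathbb P(|f(\bar A)-\mathbb E f(\bar A)|\ge t)\le 2e^{-t^{2}/2}$. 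In particular $s_{\max}(\bar A)\le\mathbb E\,s_{\max}(\bar A)+t$ (resp.\ $s_{\min}(\bar A)\ge\mathbb E\,s_{\min}(\bar A)-t$) on an event of probability at least $1-2e^{-t^{2}/2}$.

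It remains to bound the expectations by $\sqrt m+\sqrt n$ and $\sqrt m-\sqrt n$, a standard Gaussian comparison argument. Introduce on $S^{m-1}\times S^{n-1}$ the centered Gaussian fields $X_{u,v}=\langle u,\bar Av\rangle$ and $Y_{u,v}=\langle g,u\rangle+\langle h,v\rangle$ with independent $g\sim\mathcal N(0,I_m)$, $h\sim\mathcal N(0,I_n)$. A direct computation gives $\mathbb E\,(Y_{u,v}-Y_{u',v'})^{2}-\mathbb E\,(X_{u,v}-X_{u',v'})^{2}=2(1-\langle u,u'\rangle)(1-\langle v,v'\rangle)$, which is nonnegative everywhere and vanishes whenever $v=v'$ — exactly the comparison hypotheses needed by Slepian's inequality (for a plain supremum) and by Gordon's min-max inequality (for an $\inf_v\sup_u$). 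Slepian gives $\mathbb E\,s_{\max}(\bar A)\le\mathbb E\sup_{u,v}Y_{u,v}=\mathbb E\|g\|+\mathbb E\|h\|\le\sqrt m+\sqrt n$ (using $\mathbb E\|g\|\le(\mathbb E\|g\|^{2})^{1/2}=\sqrt m$), while Gordon gives $\mathbb E\,s_{\min}(\bar A)\ge\mathbb E\inf_v\sup_u Y_{u,v}=\mathbb E\|g\|-\mathbb E\|h\|\ge\sqrt{m-1}-\sqrt n$, where $\mathbb E\|g\|\ge\sqrt{m-1}$ comes from $\mathrm{Var}\|g\|\le1$ (Gaussian Poincaré). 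The $\sqrt m-\sqrt{m-1}$ slack is a bounded constant that can be folded into $t$, and is irrelevant for the $s_{\max}$ bound, which is the only one used downstream. Combining the three steps and undoing the scaling by $\sigma$ yields the claim (intersecting the two concentration events when both inequalities are wanted).

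The main obstacle is the comparison step: choosing the auxiliary field $Y_{u,v}$ so that $\sup_{u,v}Y$ and $\inf_v\sup_u Y$ collapse to $\|g\|+\|h\|$ and $\|g\|-\|h\|$, verifying the increment identity above and matching it to the precise (slightly asymmetric) hypotheses of Slepian's and Gordon's lemmas — in particular the requirement that the two fields have identical within-block increments — and dealing with the mild gap between $\mathbb E\|g\|$ and $\sqrt m$. The rest (the $\sigma$- and transpose-reductions, the Lipschitz estimate, and the invocation of Borell--TIS) is routine.
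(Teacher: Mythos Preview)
Your argument is correct, but the paper does not actually prove this statement: it is simply quoted as a reformulation of Corollary~5.35 in \cite{vershynin_2010_random_matrix_spectrum_bound} and used as a black box in the subsequent corollary and in Lemma~\ref{lem:bound-distance-output-from-distance-parameters}. What you have sketched is precisely the standard proof of that corollary from Vershynin's notes --- Gaussian Lipschitz concentration for $s_{\max}$ and $s_{\min}$ to reduce to bounding their expectations, followed by the Slepian/Gordon comparison with the auxiliary field $Y_{u,v}=\langle g,u\rangle+\langle h,v\rangle$; your increment identity $\mathbb E(Y-Y')^{2}-\mathbb E(X-X')^{2}=2(1-\langle u,u'\rangle)(1-\langle v,v'\rangle)$ and the handling of the $\mathbb E\|g\|$ versus $\sqrt m$ gap are accurate. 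Two minor remarks: as you correctly observe, the left inequality in the paper's displayed statement is vacuous (the left-hand side is negative), so only the $s_{\max}$ bound carries content and a single one-sided deviation already suffices; and if one does want both the upper and the sharper lower bound simultaneously with the exact probability $1-2e^{-t^2/2}$, one should invoke the one-sided Gaussian concentration bound (failure probability $e^{-t^2/2}$ for each tail) before taking the union, rather than the two-sided version you quoted.
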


\begin{cor}
If the parameters $\theta$ are independent centered Gaussian with
variance $\sigma^{2}$, for all $t\geq0$, with probability at least
$1-2Le^{-\frac{t^{2}}{2}}$, it holds that
\[
\left\Vert A_{\theta}\right\Vert _{op}\leq(1+t)^{L}\sigma^{L}\left(\sqrt{n_{0}}+\sqrt{w}\right)(4w)^{\frac{L-2}{2}}\left(\sqrt{w}+\sqrt{n_{L}}\right).
\]
\end{cor}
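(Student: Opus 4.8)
The plan is to combine submultiplicativity of the operator norm with Theorem~\ref{thm:random_matrix_operator_norm_bound} applied layer by layer. Writing $A_\theta = W_L \cdots W_1$, note that $W_1 \in \mathbb{R}^{w \times n_0}$, that $W_L \in \mathbb{R}^{n_L \times w}$, and that $W_\ell \in \mathbb{R}^{w\times w}$ for $2 \le \ell \le L-1$. Submultiplicativity gives $\left\Vert A_\theta\right\Vert_{op} \le \left\Vert W_L\right\Vert_{op}\left\Vert W_{L-1}\right\Vert_{op}\cdots\left\Vert W_1\right\Vert_{op}$, so it suffices to control each $\left\Vert W_\ell\right\Vert_{op} = s_{max}(W_\ell)$. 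Since the entries of $W_\ell$ are i.i.d. $\mathcal{N}(0,\sigma^2)$, Theorem~\ref{thm:random_matrix_operator_norm_bound} applies to each layer individually with failure probability at most $2e^{-t^2/2}$; a union bound over the $L$ layers shows that, with probability at least $1 - 2Le^{-t^2/2}$, all $L$ singular-value bounds hold simultaneously.

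On that good event I would plug in the appropriate dimensions: $\left\Vert W_1\right\Vert_{op} \le \sigma(\sqrt{w} + \sqrt{n_0} + t)$, $\left\Vert W_L\right\Vert_{op} \le \sigma(\sqrt{w} + \sqrt{n_L} + t)$, and $\left\Vert W_\ell\right\Vert_{op} \le \sigma(2\sqrt{w} + t)$ for the $L-2$ middle layers. The remaining step is elementary bookkeeping. Since $w\ge 1$ we have $\sqrt{w}\ge 1$, hence $\sqrt{w}+\sqrt{n_0}\ge 1$, $\sqrt{w}+\sqrt{n_L}\ge 1$, and $2\sqrt{w}\ge 1$; using the inequality $a + t \le (1+t)a$ valid whenever $a\ge 1$, each additive $t$ can be absorbed into a multiplicative $(1+t)$ factor. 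Multiplying the $L$ bounds then yields $\left\Vert A_\theta\right\Vert_{op} \le (1+t)^L\sigma^L(\sqrt{n_0}+\sqrt{w})(2\sqrt{w})^{L-2}(\sqrt{w}+\sqrt{n_L})$, and finally $(2\sqrt{w})^{L-2} = 2^{L-2}w^{(L-2)/2} = (4w)^{(L-2)/2}$ gives the stated bound.

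There is no genuine obstacle here: the corollary is a direct consequence of Theorem~\ref{thm:random_matrix_operator_norm_bound}. The only points requiring minor care are (i) distinguishing the two rectangular layers $W_1,W_L$ from the $L-2$ square layers when substituting dimensions, (ii) the union bound, which is what produces the factor $2L$ rather than $2$ in the failure probability, and (iii) checking that the base of every factor being bounded below by $1$ legitimizes replacing $a+t$ by $(1+t)a$. The edge case $L=2$ is automatically included, with the empty product $(4w)^0 = 1$.
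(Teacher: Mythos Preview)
Your proposal is correct and follows essentially the same argument as the paper: apply Theorem~\ref{thm:random_matrix_operator_norm_bound} to each $W_\ell$, take a union bound over the $L$ layers, use submultiplicativity of the operator norm, and absorb the additive $t$ into a multiplicative $(1+t)$ via $a+t\le(1+t)a$ for $a\ge 1$. The only cosmetic difference is that the paper writes the product uniformly as $\prod_{\ell=1}^L(\sqrt{n_{\ell-1}}+\sqrt{n_\ell})$ before specializing to the rectangular case, whereas you separate the three layer types explicitly; the content is the same.
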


\begin{proof}
By Theorem \ref{thm:random_matrix_operator_norm_bound}, with probability
greater than $1-2Le^{-\frac{t^{2}}{2}}$, for all $\ell=1,\ldots,L$,
$\left\Vert W_{\ell}\right\Vert _{op}\leq\sigma\left(\sqrt{n_{\ell-1}}+\sqrt{n_{\ell}}+t\right),$
where $n_{\ell}=w$ for $\ell\in\{1,\cdots,L-1\}$. Hence
\[
\left\Vert A_{\theta}\right\Vert _{op}\leq\left\Vert W_{L}\right\Vert _{op}\cdots\left\Vert W_{1}\right\Vert _{op}\leq\sigma^{L}\prod_{\ell=1}^{L}\left(\sqrt{n_{\ell-1}}+\sqrt{n_{\ell}}+t\right)\leq(1+t)^{L}\sigma^{L}\prod_{\ell=1}^{L}\left(\sqrt{n_{\ell-1}}+\sqrt{n_{\ell}}\right).
\]
\end{proof}

\section{Proofs for the Saddle-to-Saddle regime}

\label{sec:proofs-saddle-to-saddle}

In this section, we prove Theorem 4 of the main. Given a saddle $\vartheta^{*}=RI^{(k\to w)}(\vartheta)$
where $\vartheta$ is a local minimum in a width $k$ network, we
want to describe the dynamics of gradient descent $\theta_{\alpha}(t)=\gamma(t,\vartheta^{*}+\alpha\theta_{0})$,
initialized close to $\vartheta^{*}$. We shall consider $\vartheta^{*}=0$
for convenience, though the same arguments could be applied for $\vartheta^{*}\neq0$.
We will start by studying the case of homogeneous costs, which will
allow us to describe costs that locally look homogeneous around $0$.
Later on, after having defined the notion of \emph{escape paths,}
we will show that as $\alpha\to0$, the path $\left(\theta_{\alpha}(t)\right)_{t\in\mathbb{R}_{+}}$
converges to an escape path with specific direction and speed. We
will then show that the escape paths which escape at this speed are
unique in some aspects.

\subsection{Homogeneous Costs}

As in the main text, we use $\theta$ to denote an element in the
parameter space $\mathbb{R}^{P}$. Let $k\geq2$ be an integer. We
say that a cost $H$ is $k$-homogeneous if $H(\alpha\theta)=\alpha^{k}H(\theta)$
for all $\theta\in\mathbb{R}^{P}$ and all scalar $\alpha>0$. Later
in this paper, we will be particularly interested in the case where
$H(\theta)=\mathrm{Tr}\left[GA_{\theta}\right]$ for a linear network
$A_{\theta}$ of depth $L$ and some $n_{L}\times n_{0}$ matrix $G$.
Thus defined, $H$ is a $L$-homogeneous polynomial.

Throughout, when studying a $k$-homogeneous cost $H$, we will always
assume that it is twice differentiable.

A useful property of gradient descent on a homogeneous cost is that:
\begin{lem}
\label{lem:invariance-GD-homogeneous-cost} Gradient flow on a twice-differentiable
$k$-homogeneous cost $H$ satisfies
\[
\gamma_{H}(t,\lambda\theta_{0})=\lambda\gamma_{H}(\lambda^{k-2}t,\theta_{0})
\]
for all $\theta_{0}\in\mathbb{R}^{P},$all $\lambda>0$ and all $t\geq0$.
\end{lem}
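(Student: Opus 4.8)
The plan is to verify the claimed identity $\gamma_H(t,\lambda\theta_0)=\lambda\gamma_H(\lambda^{k-2}t,\theta_0)$ by a uniqueness-of-ODE-solutions argument: I will exhibit the right-hand side as a solution of the same gradient-flow initial value problem that defines the left-hand side, and then invoke uniqueness. First I would record the scaling property of the gradient of a $k$-homogeneous cost: differentiating $H(\alpha\theta)=\alpha^k H(\theta)$ in $\theta$ gives $\alpha\,\nabla H(\alpha\theta)=\alpha^k\nabla H(\theta)$, i.e. $\nabla H(\alpha\theta)=\alpha^{k-1}\nabla H(\theta)$ for all $\alpha>0$ and all $\theta$. (Twice-differentiability is more than enough to justify this; once-differentiability and homogeneity suffice, but it guarantees the flow is well-defined locally.)

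Next I would define the candidate path $\eta(t):=\lambda\,\gamma_H(\lambda^{k-2}t,\theta_0)$ and check two things. At $t=0$ we have $\eta(0)=\lambda\,\gamma_H(0,\theta_0)=\lambda\theta_0$, which matches the initial condition of $\gamma_H(\cdot,\lambda\theta_0)$. For the dynamics, the chain rule gives
\[
\partial_t\eta(t)=\lambda\cdot\lambda^{k-2}\,(\partial_s\gamma_H)(\lambda^{k-2}t,\theta_0)=-\lambda^{k-1}\,\nabla H\big(\gamma_H(\lambda^{k-2}t,\theta_0)\big).
\]
Now apply the gradient-scaling identity with $\alpha=\lambda$ to the point $\theta=\gamma_H(\lambda^{k-2}t,\theta_0)$: since $\eta(t)=\lambda\theta$, we get $\nabla H(\eta(t))=\lambda^{k-1}\nabla H(\gamma_H(\lambda^{k-2}t,\theta_0))$, hence $\partial_t\eta(t)=-\nabla H(\eta(t))$. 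So $\eta$ solves the gradient flow with initial value $\lambda\theta_0$; by uniqueness of solutions to the ODE $\dot x=-\nabla H(x)$ (which holds since $\nabla H$ is locally Lipschitz, $H$ being twice differentiable), $\eta(t)=\gamma_H(t,\lambda\theta_0)$ for all $t$ in the common interval of existence, giving the claim.

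The only genuinely delicate point — more a matter of care than a real obstacle — is ensuring the time intervals of existence match up under the substitution $t\mapsto\lambda^{k-2}t$, so that the identity is asserted on the correct domain; since the statement is for $t\ge 0$ and $\lambda>0$, and the two flows are related by this explicit time reparametrization, one argues that $\gamma_H(\cdot,\lambda\theta_0)$ is defined on $[0,\tau)$ iff $\gamma_H(\cdot,\theta_0)$ is defined on $[0,\lambda^{k-2}\tau)$, which follows from the uniqueness argument run in both directions. Everything else is a one-line chain-rule computation together with the elementary differentiated-homogeneity identity.
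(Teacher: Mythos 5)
Your proposal is correct and follows essentially the same route as the paper: both verify via the chain rule and the differentiated homogeneity identity $\nabla H(\alpha\theta)=\alpha^{k-1}\nabla H(\theta)$ that one side of the claimed equality solves the gradient-flow initial value problem defining the other side, and then conclude by Picard--Lindel\"of uniqueness (the paper writes the candidate path as $t\mapsto\tfrac{1}{\lambda}\gamma_H(\lambda^{2-k}t,\lambda\theta_0)$ starting at $\theta_0$, while you write it as $t\mapsto\lambda\gamma_H(\lambda^{k-2}t,\theta_0)$ starting at $\lambda\theta_0$, which is the same computation read in the opposite direction). Your extra remark about matching the intervals of existence is a point of care the paper leaves implicit, but it changes nothing substantive.
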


\begin{proof}
We simply need show that for all $\theta_{0}\in\mathbb{R}^{P},\lambda>0,t\geq0$,
we have $\frac{1}{\lambda}\gamma_{H}(\lambda^{2-k}t,\lambda\theta_{0})=\gamma_{H}(t,\theta_{0})$,
i.e. that the path $t\mapsto\frac{1}{\lambda}\gamma_{H}(\lambda^{2-k}t,\lambda\theta_{0})$
is the solution of gradient descent starting at $\theta_{0}$. Clearly,
the path starts at $\theta_{0}$ and satisfies
\[
\partial_{t}\frac{1}{\lambda}\gamma_{H}(\lambda^{2-k}t,\lambda\theta_{0})=-\lambda^{1-k}\nabla H\left(\gamma_{H}(\lambda^{2-k}t,\lambda\theta_{0})\right)=-\nabla H\left(\frac{1}{\lambda}\gamma_{H}(\lambda^{2-k}t,\lambda\theta_{0})\right)
\]
since, using the fact that $H$ is $k$-homogeneous, for all scalar
$\alpha>0$, and any $\theta\in\mathbb{R}^{P}$, $\alpha\nabla H(\alpha\theta)=\alpha^{k}\nabla H(\theta)$.
One concludes using Picard-Lindelöf Theorem, using that $\nabla H$
is locally Lipschitz around $0$ since $H$ is twice differentiable.
\end{proof}
An \textbf{Escape Direction} at $0$ of $H$ is a vector on the sphere
$\rho\in\mathbb{S}^{P-1}$ such that $\nabla H(\rho)=-s\rho$ for
some $s\in\mathbb{R}_{+}$ which we call the \emph{escape speed} associated
with $\rho$. A path $(\theta(t))_{t<0}$ indexed by negative times
and following gradient flow on $H$ such that $\theta(t)$ is on one
escape direction for some $t<0$ will remain along this direction
(these paths are equal for $t<0$ to $\theta(t)=\rho e^{st}$ when
$k=2$ and $\theta(t)=\rho\left(-(k-2)st\right)^{-\frac{1}{k-2}}$
when $k>2$). Note that this entails that $\theta(t)\to0$ as $t\to-\infty$.
When $H(\theta)=\theta^{T}A\theta$ for some symmetric matrix $A$,
the escape directions are simply the eigenvectors of the Hessian $A$
and the escape speeds are twice the eigenvalues of $A$.

An \textbf{Optimal Escape Direction $\rho^{*}\in\mathbb{S}^{P-1}$
}is an escape direction with the largest speed $s^{*}>0$. It is a
minimizer of $H$ restricted to $\mathbb{S}^{P-1}$:
\begin{equation}
\rho^{*}\in\arg\min_{\rho\in\mathbb{S}^{P-1}}H(\rho).\label{eq:escape_min_min_H}
\end{equation}
Indeed, critical points of $H(\rho)$ restricted to the sphere are
the escape directions, and by Euler's condition (i.e. $\nabla H(\rho)^{T}\rho=kH(\rho)$
if $H$ is $k$-homogeneous), if $\rho$ is an escape direction with
speed $s$, then $H(\rho)=-\frac{s}{k}$: optimal escape directions
are thus global minimizers of $H$ restricted on the unit sphere.

Under some conditions on the Hessian along the escape directions,
one can guarantee that gradient descent will escape along an optimal
escape path:
\begin{prop}
\label{prop:convergence_in_direction_condition} Assume that the optimal
escape speed $s^{*}$ is positive and that for all escape directions
$\rho$ which are not optimal, there is a vector $v\perp\rho$ such
that $v^{T}\mathcal{H}H(\rho)v<-s^{*}v^{T}v$. Let $\Omega$ be the
set of $\theta_{0}$ such that the direction $\frac{\gamma(t,\theta_{0})}{\left\Vert \gamma(t,\theta_{0})\right\Vert }$
of the gradient descent flow converges towards an optimal escape direction
as $t\to T$, where $T$ is the explosion time of the path (which
can be infinite). The set $\mathbb{S}^{P-1}\setminus\Omega$ has spherical
measure zero.
\end{prop}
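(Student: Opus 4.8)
The plan is to reduce the statement to a genericity fact about the gradient flow of the restriction $f:=H|_{\mathbb{S}^{P-1}}$ on the compact manifold $\mathbb{S}^{P-1}$, and then to rule out convergence to non‑minimizing critical points of $f$ by a center‑stable manifold argument together with a countable covering.

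\textbf{Reduction to the sphere.} For $\rho_0\in\mathbb{S}^{P-1}$ I would write $\gamma(t,\rho_0)=r(t)\rho(t)$ in polar coordinates, with $r(t)=\|\gamma(t,\rho_0)\|>0$ (the origin is a fixed point since $\nabla H(0)=0$, so $r$ never vanishes). Using that $\nabla H$ is $(k-1)$‑homogeneous and Euler's identity $\langle\nabla H(\rho),\rho\rangle=kH(\rho)$, the flow decouples into $\dot r=-kH(\rho)\,r^{k-1}$ and $\dot\rho=-r^{k-2}\nabla_{\mathbb{S}}f(\rho)$, where $\nabla_{\mathbb{S}}f$ is the Riemannian gradient of $f$. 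After the strictly increasing time change $\tau(t)=\int_0^t r(u)^{k-2}\,du$, the curve $\rho$ solves the gradient flow $\rho'=-\nabla_{\mathbb{S}}f(\rho)$; since $f\in C^2$ and $\mathbb{S}^{P-1}$ is compact, this flow $\phi_\tau$ is complete and each $\phi_\tau$ is a diffeomorphism. A short computation shows $\tau(t)\to\infty$ as $t\to T$ in all cases: for $k=2$ one has $\tau=t$ and $T=\infty$; for $k>2$, whether $r$ blows up in finite time (then $-kH(\rho)$ tends to a positive limit and $r^{k-2}$ decays at most like $(T-t)^{-1}$, not integrable), decays to $0$, or stays bounded, $r^{k-2}$ is not integrable at the endpoint. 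Thus $\rho(\tau)$ is the gradient flow of $f$ run for all reparametrized time, and by Lemma \ref{lem:invariance-GD-homogeneous-cost} membership $\theta_0\in\Omega$ depends only on $\theta_0/\|\theta_0\|$, so $\Omega$ is a cone and it suffices to control the spherical measure of $\{\rho_0\in\mathbb{S}^{P-1}:\rho(\tau)\text{ does not converge to an optimal escape direction}\}$.

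\textbf{Structure of the limit and the Hessian condition.} Since $f$ is a strict Lyapunov function, each $\omega$‑limit set $\omega(\rho_0)$ is non‑empty, connected, contained in $\mathrm{Crit}(f)$, and $f\equiv f_\infty:=\lim_\tau f(\rho(\tau))\le f(\rho_0)$ on it. The critical points of $f$ are the escape directions ($\nabla H(\rho)=-s\rho$, $s\ge 0$) together with the directions with $\nabla H(\rho)=\lambda\rho$, $\lambda>0$ (which satisfy $f(\rho)>0$), and by \eqref{eq:escape_min_min_H} the optimal escape directions are exactly $\mathrm{argmin}_{\mathbb{S}^{P-1}}f$, with $\min f=-s^*/k<0$. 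At any critical point $\rho$ of $f$ with $\nabla H(\rho)=-s\rho$, a computation with the second fundamental form of the sphere (using $\langle\ddot c,\rho\rangle=-\|v\|^2$ for a unit‑speed curve $c$ with $\dot c(0)=v\perp\rho$) gives
\[
\mathrm{Hess}_f(\rho)(v,v)=v^{T}\mathcal{H}H(\rho)\,v+s\,\|v\|^2,\qquad v\in T_\rho\mathbb{S}^{P-1}.
\]
Inserting the hypothesis, at every non‑optimal escape direction $\rho$ (so $0\le s<s^*$) there is $v\perp\rho$ with $\mathrm{Hess}_f(\rho)(v,v)<(s-s^*)\|v\|^2<0$: such $\rho$ is not a local minimum of $f$, and the linearization $-\mathrm{Hess}_f(\rho)$ of the spherical flow at $\rho$ has a positive eigenvalue.

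\textbf{Center-stable manifolds and conclusion.} For each $\rho\in\mathrm{Crit}(f)$ that is not a global minimizer, the set of $\rho_0$ whose orbit converges to $\rho$ is Lebesgue‑null: when $\rho$ is a non‑optimal escape direction this follows from the previous step and the center‑stable manifold theorem (applicable since $-\nabla_{\mathbb{S}}f$ is $C^1$), which yields a neighborhood $V_\rho$ and a $C^1$ submanifold $W^{cs}_\rho\subset V_\rho$ of dimension at most $P-2$ containing every point whose forward orbit stays in $V_\rho$; as $\phi_\tau$ is a diffeomorphism, $\{\rho_0:\rho(\tau)\to\rho\}\subseteq\bigcup_{n\ge1}\phi_{-n}(W^{cs}_\rho)$ is null. (A critical direction with $\lambda>0$ that happened to be a local minimum of $f$, necessarily at positive $f$‑value, is not covered this way and would have to be excluded separately, e.g. using the symmetries of $A_\theta$ in the cases of interest.) Covering $\mathrm{Crit}(f)\setminus\mathrm{argmin}\,f$ by countably many such $V_{\rho_i}$ (possible since $\mathbb{S}^{P-1}$ is second countable), the set $W:=\bigcup_i\bigcup_{n\ge1}\phi_{-n}(W^{cs}_{\rho_i})$ is null. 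For $\rho_0\notin W$ I would then argue that $\rho(\tau)$ converges to an optimal escape direction: if $f_\infty>\min f$, then $\omega(\rho_0)$ is a connected set of non‑minimizing critical points, which by the Łojasiewicz gradient inequality — valid because in the application $H$ is a polynomial, hence $f$ is real‑analytic on $\mathbb{S}^{P-1}$ — is a single point $\rho_*\in\mathrm{Crit}(f)\setminus\mathrm{argmin}\,f$; but then $\rho_*\in V_{\rho_i}$ for some $i$ and, since $\rho(\tau)\to\rho_*$, the orbit from some $\phi_n(\rho_0)$ on stays in $V_{\rho_i}$, giving $\rho_0\in W$, a contradiction. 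Hence $f_\infty=\min f$, $\omega(\rho_0)\subseteq\mathrm{argmin}\,f$, and Łojasiewicz again forces $\rho(\tau)$ to converge to a single point of $\mathrm{argmin}\,f$, i.e. to an optimal escape direction.

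\textbf{Main obstacle.} The crux is the non‑isolatedness and degeneracy of $\mathrm{Crit}(f)$: the rotational symmetry of DLNs makes the escape directions come in continuous, possibly high‑dimensional families with degenerate Hessians, so one cannot invoke countability of the critical points or a naive Morse‑theoretic picture. The Hessian hypothesis in the statement is exactly what guarantees that each non‑optimal escape direction still has an unstable direction, so its center‑stable manifold has codimension $\ge1$ and is null; the second‑countable cover then glues the local pictures. The second subtlety is promoting ``the $\omega$‑limit set avoids non‑minimizing critical points'' to ``the orbit converges to a single optimal escape direction'': this is automatic in the polynomial setting via Łojasiewicz, but for a merely $C^2$ homogeneous $H$ it requires an additional assumption or a more careful argument, since non‑convergent gradient trajectories are not excluded a priori. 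A last bookkeeping point is verifying that the reparametrized spherical flow genuinely runs for infinite time, including in the finite‑time blow‑up regime.
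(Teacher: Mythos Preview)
Your approach is essentially the paper's: both reduce to the gradient flow of $f=H|_{\mathbb{S}^{P-1}}$ via a polar decomposition and time reparametrization, compute $\mathrm{Hess}_f(\rho)(v,v)=v^{T}\mathcal{H}H(\rho)v+s\|v\|^{2}$ at a non-optimal escape direction to extract a strictly negative eigenvalue, and conclude that such critical points are avoided by almost every initial condition on the sphere. The only substantive difference is in this last step: the paper invokes the standard strict-saddle avoidance result in one line, while you spell it out via center-stable manifolds, a countable cover, and the Łojasiewicz inequality. Your version is more self-contained and in fact more careful on two points the paper glosses over --- convergence of the spherical trajectory $\rho(\tau)$ to a \emph{single} limit (which genuinely needs Łojasiewicz or analyticity, as you note), and the status of critical directions with $\nabla H(\rho)=\lambda\rho$, $\lambda>0$, which are not escape directions under the paper's own definition yet are silently identified with ``critical points of $\bar H$'' in its proof.
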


\begin{proof}
Let $\Omega'\subset\mathbb{R}^{P}$ be the set of points $\theta\neq0$
such that gradient flow on the 0-homogeneous cost $\bar{H}(\theta)=H\left(\frac{\theta}{\left\Vert \theta\right\Vert }\right)$
converges to a global minimum. Our proof is divided in two steps:
(1) we show that $\Omega'\subset\Omega$, (2) we show that $\Omega'\cap\mathbb{S}^{P-1}$
has spherical measure $1$.

(1) Note that both sets $\Omega$ and $\Omega'$ are cones: for any
$\alpha>0$, $\Omega=\alpha\Omega$ and $\Omega'=\alpha\Omega'$.
Therefore, we only need to show that $\Omega'\cap\mathbb{S}^{P-1}\subset\Omega\cap\mathbb{S}^{P-1}$.
Besides, note that since $\bar{H}$ is $0$-homogenous, $\nabla\bar{H}(\theta)^{T}\theta=0$
for all $\theta\in\mathbb{R}^{P}$ and thus, the norm is an invariant
of the descent gradient flow for $\bar{H}$: for any $\theta_{0}\in\mathbb{R}^{P}$
, $t\mapsto\left\Vert \gamma_{\bar{H}}(t,\theta_{0})\right\Vert $
is constant.

In particular, if $\theta_{0}\in\Omega'\cap\mathbb{S}^{P-1}$, then
$\bar{\theta}(t)=\gamma_{\bar{H}}(t,\theta_{0})$ converges to an
optimal escape direction $\rho^{*}$ as $t\to\infty$, by Equation
(\ref{eq:escape_min_min_H}). The gradient flow path $\theta(t)$
can be obtained from the gradient flow path $\bar{\theta}(s)$ directly.
First we define the function $\alpha(s)=e^{-k\int_{0}^{s}H(\bar{\theta}(r))dr}$
such that
\begin{align*}
\partial_{s}\left[\bar{\theta}(s)\alpha(s)\right] & =-(I-\bar{\theta}(s)\bar{\theta}(s)^{T})\nabla H(\bar{\theta}(s))\alpha(s)-k\bar{\theta}(s)H(\bar{\theta}(s))\alpha(s)\\
 & =-\nabla H(\bar{\theta}(s))\alpha(s)
\end{align*}
where we used the fact that $\theta^{T}\nabla H(\theta)=kH(\theta)$.
Let us now define $\tau(t)=\int_{0}^{t}\alpha(s)^{k-2}ds$, we have
\[
\bar{\theta}(\tau(t))\alpha(\tau(t))=-\nabla H(\bar{\theta}(\tau(t)))\alpha(\tau(t))^{k-1}=-\nabla H\left(\bar{\theta}(\tau(t))\alpha(\tau(t))\right)
\]
which implies that $\theta(t)=\bar{\theta}(\tau(t))\alpha(\tau(t))$.
As $r\to\infty$, we have $H(\bar{\theta}(r))\to-s^{*}<0$, which
implies that $\alpha(s)\to\infty$ as $s\to\infty$. This in turn
implies that $\tau(t)\to\infty$ as $t\to\infty$. As a result, we
obtain that
\[
\lim_{t\to\infty}\frac{\theta(t)}{\left\Vert \theta(t)\right\Vert }=\lim_{t\to\infty}\frac{\bar{\theta}(\tau(t))\alpha(\tau(t))}{\left\Vert \bar{\theta}(\tau(t))\alpha(\tau(t))\right\Vert }=\lim_{t\to\infty}\bar{\theta}(\tau(t))=\rho^{*}
\]
and hence $\theta_{0}\in\Omega$ as needed.

(2) We now show that $\Omega'\cap\mathbb{S}^{P-1}$ has spherical
measure $1$: this is a consequence of the fact that the critical
points of $\bar{H}$ are global minima or strict saddle points. By
taking the gradient of $\bar{H}$, one sees that the critical points
of $\bar{H}$ on the sphere $\mathbb{S}^{P-1}$ are the points $\theta\in\mathbb{S}^{P-1}$
such that
\[
\nabla H(\theta)=\theta\theta^{T}\nabla H(\theta).
\]
Since $\theta\theta^{T}$ is the orthogonal projection on the line
$\mathbb{R}.\theta$, the critical points of $\bar{H}$ on $\mathbb{S}^{P-1}$
are the escape directions. As explained before, global minima of $\overline{H}$
are optimal escape directions. The other escape directions are strict
saddle points: consider such $\rho$ and let $v\perp\rho$ be such
that $v^{T}\mathcal{H}H(\rho)v<-s^{*}v^{T}v$. Differentiating $\bar{H}$
twice and using that $v\perp\rho$, one can show that
\[
v^{T}\mathcal{H}\bar{H}(\rho)v=v^{T}\mathcal{H}H(\rho)v-\nabla H(\rho)^{T}\rho v^{T}v.
\]
Since $\rho$ is an escape direction, $\nabla H(\rho)=-s\rho$ with
$s<s^{*}$ and $\left\Vert \rho\right\Vert =1$: this implies $v^{T}\mathcal{H}\bar{H}(\rho)v<\left(s-s^{*}\right)v^{T}v<0$.
In particular, the points such that the gradient descent on $\bar{H}$
converge to a saddle have spherical measure $0$ on $\mathbb{S}^{P-1}$.
This shows that $\Omega'\cap\mathbb{S}^{P-1}$ has spherical measure
$1$, and allows us to conclude.
\end{proof}

\subsubsection{Deep Linear Networks}

For a depth $L$ DLN and the homogeneous cost $H(\theta)=\mathrm{Tr}\left[G^{T}A_{\theta}\right]$
with SVD decomposition $G=USV^{T}$, the escape directions $\rho$
are of the form
\[
\frac{1}{\sqrt{L}}(\pm u_{i}w_{L-1}^{T},w_{L-1}w_{L-2}^{T},\dots,w_{1}v_{i}^{T})
\]
with speed $s=\mp\frac{s_{i}}{L^{\frac{L-2}{2}}}$, where $u_{i},v_{i}$
are the $i$-th columns of $U,V$ respectively. The optimal speed
is $\frac{s_{1}}{L^{\frac{L-2}{2}}}$, where $s_{1}$ is the largest
singular value of $G$.

Furthermore this loss satisfies the property required to ensure convergence
along the fastest escape path:
\begin{lem}
For a network of depth $L$ and width $w\geq1$, for any escape direction
of the form $\rho=\frac{1}{\sqrt{L}}(\pm u_{i}w_{L-1}^{T},w_{L-1}w_{L-2}^{T},\dots,w_{1}v_{i}^{T})$
with speed $\mp\frac{s_{i}}{L^{\frac{L-2}{2}}}<\frac{s_{1}}{L^{\frac{L-2}{2}}}$
the vector $v=(-u_{1}w_{L-1}^{T},0,\dots,0,w_{1}v_{1}^{T})$ satisfies
\[
v^{T}\mathcal{H}H(\rho)v<\mp\frac{s_{i}}{L^{\frac{L-2}{2}}}v^{T}v.
\]
\end{lem}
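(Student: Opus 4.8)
The plan is to compute $v^{T}\mathcal{H}H(\rho)v$ explicitly by a direct second-order expansion of $H(\theta)=\mathrm{Tr}[G^{T}A_{\theta}]$ around the escape direction $\rho$, where $\rho=\frac{1}{\sqrt{L}}(\pm u_{i}w_{L-1}^{T},w_{L-1}w_{L-2}^{T},\dots,w_{1}v_{i}^{T})$ for some orthonormal vectors $w_{1},\dots,w_{L-1}$, and $v=(-u_{1}w_{L-1}^{T},0,\dots,0,w_{1}v_{1}^{T})$. Since $A_{\theta}=W_{L}\cdots W_{1}$ is multilinear in the factor matrices, writing $\theta=\rho+tv$ and expanding $A_{\rho+tv}$ as a polynomial in $t$, only the degree-$2$ term in $t$ contributes to $v^{T}\mathcal{H}H(\rho)v$ (up to the factor $2$ from the Hessian). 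Because $v$ perturbs only the first and last factor matrices, the quadratic term in $t$ is simply $W_{L}'\,W_{L-1}\cdots W_{2}\,W_{1}'$ evaluated at the $\rho$-factors, i.e. $\frac{1}{L^{(L-2)/2}}(-u_{1}w_{L-1}^{T})(w_{L-1}w_{L-2}^{T})\cdots(w_{2}w_{1}^{T})(w_{1}v_{1}^{T})$; the telescoping inner product of the unit vectors $w_{L-1}^{T}w_{L-1}=\dots=w_{1}^{T}w_{1}=1$ collapses this to $-L^{-(L-2)/2}\,u_{1}v_{1}^{T}$, so that $v^{T}\mathcal{H}H(\rho)v = 2\,\mathrm{Tr}[G^{T}(-L^{-(L-2)/2}u_{1}v_{1}^{T})] = -\frac{2s_{1}}{L^{(L-2)/2}}$, using $G=USV^{T}$ and $u_{1}^{T}Gv_{1}=s_{1}$.

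Next I would compute $v^{T}v$. Since $v=(-u_{1}w_{L-1}^{T},0,\dots,0,w_{1}v_{1}^{T})$, we have $\|v\|^{2}=\|u_{1}w_{L-1}^{T}\|_{F}^{2}+\|w_{1}v_{1}^{T}\|_{F}^{2}=\|u_{1}\|^{2}\|w_{L-1}\|^{2}+\|w_{1}\|^{2}\|v_{1}\|^{2}=2$. Hence
\[
v^{T}\mathcal{H}H(\rho)v = -\frac{2s_{1}}{L^{(L-2)/2}} = -\frac{s_{1}}{L^{(L-2)/2}}\,v^{T}v < -\frac{s_{i}}{L^{(L-2)/2}}\,v^{T}v
\]
whenever $s_{i}<s_{1}$, which covers all non-optimal escape directions of the "$-$" sign type (speed $+s_{i}/L^{(L-2)/2}$); and the bound holds a fortiori for the "$+$" sign type (speed $-s_{i}/L^{(L-2)/2}<0<s_{1}/L^{(L-2)/2}$) since the right-hand side $\mp s_{i}/L^{(L-2)/2}\cdot v^{T}v$ is then nonnegative while the left-hand side is strictly negative. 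This is exactly the claimed inequality, matching the hypothesis needed in Proposition~\ref{prop:convergence_in_direction_condition}.

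There is one subtlety I would want to check carefully: the vector $v$ as written is only orthogonal to $\rho$ — a prerequisite for the Hessian-of-the-restricted-cost identity used in Proposition~\ref{prop:convergence_in_direction_condition} — when $u_{1}^{T}u_{i}=0$ and $v_{1}^{T}v_{i}=0$, i.e. when $i\neq 1$; but for non-optimal directions coming from a repeated top singular value one must instead pick $u_{1},v_{1}$ to be the specific singular vectors appearing in $\rho$ and use the genuinely distinct direction. In the generic case (multiplicity one of $s_{1}$, which is the standing assumption elsewhere in the paper) this is automatic; for completeness I would note that when $\rho$ itself uses $u_{1},v_{1}$ but has the wrong sign (speed $-s_{1}/L^{(L-2)/2}$) one takes $v$ to be a small rotation of $\rho$ toward the $+$ branch, along which $H$ strictly decreases at second order by the same telescoping computation. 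The main obstacle is therefore not the computation — which is a short multilinear expansion — but bookkeeping the sign cases and the orthogonality of $v$ to $\rho$; once those are pinned down the inequality follows immediately.
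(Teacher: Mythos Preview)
Your approach is correct and is essentially identical to the paper's: both compute $v^{T}\mathcal{H}H(\rho)v=-\frac{2s_{1}}{L^{(L-2)/2}}$ via the multilinear expansion (only the cross term $W_{L}'W_{L-1}\cdots W_{2}W_{1}'$ survives, and the $w_{\ell}^{T}w_{\ell}=1$ factors telescope) and $v^{T}v=2$, then compare. Your orthogonality worry is slightly misplaced: the lemma as stated does not require $v\perp\rho$, and in any case for the ``wrong sign'' direction with $i=1$ one has $\langle\rho,v\rangle\propto -u_{1}^{T}u_{1}+v_{1}^{T}v_{1}=0$, so $v\perp\rho$ holds for every non-optimal escape direction without any modification.
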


\begin{proof}
We have $v^{T}\mathcal{H}H(\rho)v=-\frac{2s_{1}}{L^{\frac{L-2}{2}}}$
and $\pm\frac{s_{i}}{L^{\frac{L-2}{2}}}v^{T}v=\pm\frac{2s_{i}}{L^{\frac{L-2}{2}}}$
as needed.
\end{proof}
This guarantees that gradient flow will not escape along a non-optimal
direction, but it does not rule out the possibility that it converges
to a saddle of the loss $H(\theta)=\mathrm{Tr}\left[G^{T}A_{\theta}\right]$.
Each non-zero saddle $\theta^{*}=(W_{1},\dots,W_{L})$ is technically
proportional to an escape direction $\rho$ with escape speed $0$,
since $\nabla H(\theta^{*})=0$. For shallow networks these saddles
are strict \cite{Kawaguchi_2016_no_local_min} and so they are almost
surely avoided, guaranteeing convergence in direction. For depth $L=3$
we can apply Proposition \ref{prop:convergence_in_direction_condition}
since we have:
\begin{lem}
Consider the cost $H(\theta)=\mathrm{Tr}\left[GA_{\theta}\right]$
for a rank $\min\{n_{0},n_{L}\}$ matrix $G$ and a network of depth
$L=3$ and width $w\geq1$. For any escape direction $\rho$ with
speed $0$ there is a vector $v$ such that $v^{T}\mathcal{H}H(\rho)v<0$.
\end{lem}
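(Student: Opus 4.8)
The plan is to read the Hessian quadratic form directly off the cubic polynomial $H$ and then, for a given speed-$0$ escape direction $\rho=(W_1,W_2,W_3)$, to produce a destabilizing perturbation by moving only two of the three weight matrices at a time. For $v=(V_1,V_2,V_3)$, expanding $H(\rho+\varepsilon v)=\mathrm{Tr}\big[G^T(W_3+\varepsilon V_3)(W_2+\varepsilon V_2)(W_1+\varepsilon V_1)\big]$ and collecting the order-$\varepsilon^2$ term gives
\[
\tfrac12\,v^T\mathcal{H}H(\rho)v=\mathrm{Tr}\!\big[G^T\big(V_3V_2W_1+V_3W_2V_1+W_3V_2V_1\big)\big],
\]
a sum of three bilinear cross terms, one per pair of layers. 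Setting two of $V_1,V_2,V_3$ to zero leaves a single term of the form $\mathrm{Tr}[MV_i]$, and choosing $V_i=-M^T$ then makes $v^T\mathcal{H}H(\rho)v=-2\|M\|_F^2$, which is $<0$ as soon as $M\neq0$; so the whole argument reduces to checking that $M$ can be made nonzero, and this is where the rank of $G$ enters.

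Before the case analysis I would record two preliminary facts. First, by the transpose symmetry $(W_1,W_2,W_3,G)\mapsto(W_3^T,W_2^T,W_1^T,G^T)$ — a linear isometry of parameter space that leaves $H$ invariant and swaps $n_0\leftrightarrow n_L$ — I may assume $n_L\le n_0$, so $G^T$ has full column rank and is injective. Second, from $\partial_{W_1}H=W_2^TW_3^TG$, $\partial_{W_2}H=W_3^TGW_1^T$, $\partial_{W_3}H=GW_1^TW_2^T$, a speed-$0$ escape direction is a critical point of $H$, and since $GG^T$ is invertible one gets $W_3W_2=0$ and $A_\rho=0$; moreover $\mathcal{H}H(\rho)\rho=(L-1)\nabla H(\rho)=0$ by Euler's identity, so any $v$ with $v^T\mathcal{H}H(\rho)v<0$ may be replaced by its projection onto $\rho^\perp$ without changing the sign, which is the form needed in Proposition~\ref{prop:convergence_in_direction_condition}.

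The case analysis then runs over which of $W_1,W_2,W_3$ vanish. If $W_2\neq0$: take $V_2=0$, $V_1=ab^T$ with $a\in\mathbb{R}^{w}$, $b\in\mathbb{R}^{n_0}$ chosen so that $W_2a\neq0$ and $Gb\neq0$, and $V_3=-(W_2V_1G^T)^T$; only the middle cross term survives and $v^T\mathcal{H}H(\rho)v=-2\|(W_2a)(Gb)^T\|_F^2<0$. If $W_2=0$ but $W_3\neq0$: take $V_3=0$ and $V_2=-(V_1G^TW_3)^T$ with $V_1$ chosen so that $V_1G^TW_3\neq0$, which is possible since $G^TW_3\neq0$ ($G^T$ injective, $W_3\neq0$), giving $v^T\mathcal{H}H(\rho)v=-2\|V_1G^TW_3\|_F^2<0$. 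The last case $W_2=W_3=0$, i.e. $\rho=(W_1,0,0)$ with $\|W_1\|_F=1$, leaves only $2\,\mathrm{Tr}\big[(V_2W_1G^T)V_3\big]$, and the same recipe works provided $W_1G^T\neq0$; this holds automatically when $n_0=n_L$, since then $G$ is invertible and $W_1G^T=0$ would force $W_1=0$.

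The main obstacle is exactly this last sub-case when $n_0>n_L$: there $W_1\neq0$ may have its whole row space inside $\ker G^T$, and then $\mathcal{H}H(\rho)$ vanishes identically, so no destabilizing direction exists. I expect to dispatch it by noting that such a $\rho$ is never balanced ($W_1W_1^T=W_2^TW_2$ fails unless $W_1=0$), so it does not lie on any balanced gradient-flow path of the Saddle-to-Saddle analysis, or else that these $\rho$ form a measure-zero, non-attracting set, so that the conclusion of Proposition~\ref{prop:convergence_in_direction_condition} is unaffected. Everything outside this point is routine bookkeeping with the three cross terms.
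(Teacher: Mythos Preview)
Your approach matches the paper's: isolate one bilinear cross-term in the cubic Hessian by perturbing exactly two of the three weight matrices, and case-split on which $W_\ell$ is nonzero. The paper's specific perturbations differ only cosmetically (it uses the top singular vectors of $W_2$ and of $W_1G$ where you use generic rank-one choices together with $V_i=-M^T$), but the mechanism is identical.

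You have also correctly flagged a gap that the paper's own proof shares but does not mention. When $n_0\neq n_L$, the point $\rho=(W_1,0,0)$ with $W_1\neq0$ and $W_1G^T=0$ (rows of $W_1$ in $\ker G$) is a speed-$0$ escape direction at which $\mathcal{H}H(\rho)\equiv 0$, so no $v$ with $v^T\mathcal{H}H(\rho)v<0$ exists and the lemma as stated fails there. The paper's ``Case $W_1\neq0$'' tacitly assumes the product $W_1G$ is nonzero: it takes its top singular vectors and asserts the top singular value satisfies $s_1>0$. Your suggested patches (balancedness, measure-zero) do not rescue the lemma as written; moreover at these $\rho$ the spherical Hessian $\mathcal{H}\bar H(\rho)$ also vanishes, so they are not strict saddles of $\bar H$, and a plain strict-saddle avoidance argument for Proposition~\ref{prop:convergence_in_direction_condition} does not immediately cover them either---one would need a higher-order analysis to rule out convergence to this degenerate set.
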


\begin{proof}
Since $\rho\neq0$ there must a non-zero $W_{1}$,$W_{2}$ or $W_{3}$.
We separate the case $W_{2}\neq0$ from $W_{1}$ or $W_{3}$ is non-zero.

Case $W_{2}\neq0$: let $u_{1},v_{1}$ be the largest singular vectors
of $G$ and $\tilde{u}_{1},\tilde{v}_{1}$ the largest singular vectors
of $W_{2}$, then $v=(-\tilde{u}_{1}v_{1}^{T},0,u_{1}\tilde{v}_{1}^{T})$
satisfies
\[
v^{T}\mathcal{H}H(\rho)v=-\mathrm{Tr}\left[Gu_{1}\tilde{v}_{1}^{T}W_{2}\tilde{u}_{1}v_{1}^{T}\right]=-s_{1}\tilde{s}_{1}<0.
\]

Case $W_{1}\neq0$ (the case $W_{3}\neq0$ is similar): Let $u_{1},v{}_{1}$
be the largest singular vectors of $W_{1}G$ and $b$ be any unitary
$w$-dim vector, then the parameters $v=(0,bv_{1}^{T},u_{1}b^{T})$
satisfy
\[
v^{T}\mathcal{H}H(\rho)v=-\mathrm{Tr}\left[Gu_{1}b^{T}bv_{1}^{T}W_{1}\right]=-s_{1}<0.
\]
\end{proof}
For $L>3$ we were not able to prove that the saddles can be avoided
with prob. 1, we therefore introduce the assumption:
\begin{assumption}
\label{ass:deep_avoid_saddle} Let $I$ be the set of initializations
which converge to a saddle of the cost $H(\theta)=\mathrm{Tr}\left[GA_{\theta}\right]$.
We shall work on the event $E={\theta_{0}\notin I}$.
\end{assumption}

It can easily be proven for a Gaussian initialization that $\mathbb{P}(E)\geq1/2$,
i.e. that saddles can be avoided with probability at least $1/2$,
since $P\left(H(\theta_{0})<0\right)=\frac{1}{2}$ at initialization
(this follows from the fact that $H((W_{1},\dots,W_{L}))=-H((-W_{1},\dots,W_{L}))$).

Another motivation for this assumption is the fact that if the network
is initialized with balanced weights \cite{arora_2018_DLN_convergence,arora_2019_matrix_factorization},
i.e. if $W_{\ell}^{T}W_{\ell}=W_{\ell-1}W_{\ell-1}^{T}$ for $1<\ell<L$,
then necessarily $\theta_{0}\notin I$. This is because the balancedness
is conserved during training: if gradient flow converges to a saddle,
this saddle must be balanced. However the only balanced saddle of
$H$ is the origin, which can only be approached along an escape direction
$\rho$ with positive speed $s>0$, which are avoided with prob. 1
by Proposition \ref{prop:convergence_in_direction_condition}.

\subsection{Approximately Homogeneous Costs\label{subsec:Approximately-Homogeneous-Costs}}

In the previous section, we studied the escape paths for homogeneous
costs $H$. We extend these results to more general cost functions,
which are only locally homogeneous around a saddle $\vartheta^{*}$,
i.e. we consider costs of the form
\begin{equation}
C(\theta)=H(\theta-\vartheta^{*})+e(\theta-\vartheta^{*}),\label{eq: approximately homogeneous cost}
\end{equation}
where $H$ is a $k$-homogeneous cost $H$, and where $e$ is infinitely
differentiable such that its $m-1$ first derivatives vanish at $0$
for a given $m>k$. We call such costs \emph{$(k,m)$-approximately
homogeneous}. In the setting of a cost $C(A_{\theta})$
for a neural network of depth $L$, the saddle at the origin $\theta=0$
is $(L,2L)$-approximately homogeneous, since the only non-vanishing
derivatives are the $kL$-th derivatives for $k=0,1,2,\dots$.

Since we are only interested in the local behaviour around the saddle
$\vartheta^{*}$, we localize the cost: let $h:\mathbb{R_{+}}\to\mathbb{R_{+}}$
be a smooth cut-off function such that $h(x)=1$ if $0\leq x\leq1$,
$0\leq h(x)\leq1$ if $1<x<2$ and $h(x)=0$ when $x\geq2$. For $r>0$,
we define the localization $C_{r}$ of the cost $C$ as
\begin{equation}
C_{r}(\theta)=H(\theta-\vartheta^{*})+e(\theta-\vartheta^{*})h\left(\frac{\left\Vert \theta-\vartheta^{*}\right\Vert }{r}\right).\label{eq: localized cost}
\end{equation}
As usual, we assume for simplicity that $\vartheta^{*}=0$. We note
for later use that by assumption on $e$, for all compact set $K$
containing $0$, there exists a finite constant $c>0$ such that

\begin{equation}
||\nabla e(\theta)||\leq c||\theta||^{k},\forall\theta\in K.\label{eq: bound gradient e}
\end{equation}

\begin{lem}
\label{lem:bound_cut_off}Let $h:\mathbb{R}_{+}\to\mathbb{R}$ and
$e:\mathbb{R}^{P}\to\mathbb{R}$ be as above. The correction $e_{r}(\theta)=e(\theta)h\left(\frac{1}{r}\left\Vert \theta\right\Vert \right)$
satisfies

\[
\left\Vert \partial_{\theta}^{k}\left[e_{r}\right]\right\Vert _{\infty}=O\left(r^{m-k}\right),\quad\text{as }r\to0.
\]
\end{lem}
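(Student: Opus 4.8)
The plan is to reduce the statement to a term-by-term estimate via the Leibniz rule, then control each factor separately. Write $e_r(\theta) = e(\theta)\, \phi_r(\theta)$ where $\phi_r(\theta) = h(\|\theta\|/r)$. Applying the multivariate Leibniz rule, $\partial_\theta^k [e_r]$ is a finite sum of terms of the form $\partial_\theta^{j} e \cdot \partial_\theta^{k-j}\phi_r$ for $0 \le j \le k$ (with combinatorial coefficients independent of $r$). So it suffices to show that each such product has sup-norm $O(r^{m-k})$ as $r \to 0$.

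The key observations are two. First, $\phi_r$ and all its derivatives are supported on the annulus $\{r \le \|\theta\| \le 2r\}$; in particular, any term in the Leibniz expansion vanishes outside this set, so we only need to bound on $\|\theta\| \le 2r$. Second, on that region we estimate the two factors: since $e$ has vanishing derivatives up to order $m-1$ at the origin, Taylor's theorem gives $\|\partial_\theta^{j} e(\theta)\| = O(\|\theta\|^{m-j}) = O(r^{m-j})$ for $\|\theta\| \le 2r$ and $j < m$ (and for $j \ge m$, $\partial^j e$ is bounded on a fixed compact neighborhood, which is an even better bound since $r \to 0$, though for $k < m$ this case does not arise); while by the chain rule $\partial_\theta^{k-j}\phi_r(\theta)$ is a sum of terms involving derivatives of $h$ evaluated at $\|\theta\|/r$ times derivatives of $\theta \mapsto \|\theta\|$, each differentiation in $\theta$ of the argument $\|\theta\|/r$ producing a factor of order $r^{-1}$ — so $\|\partial_\theta^{k-j}\phi_r\|_\infty = O(r^{-(k-j)})$. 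Multiplying, the $j$-th term is $O(r^{m-j}) \cdot O(r^{-(k-j)}) = O(r^{m-k})$, uniformly in $\theta$, which is exactly the claim after summing over the finitely many $j$.

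The one point requiring a little care — and the main technical obstacle — is the chain-rule bound $\|\partial_\theta^{k-j} \phi_r\|_\infty = O(r^{-(k-j)})$. The function $\theta \mapsto \|\theta\|$ is not smooth at the origin, but on the annulus $\|\theta\| \ge r > 0$ it is smooth, and its $i$-th derivatives there are bounded by a constant times $\|\theta\|^{1-i} \le r^{1-i}$ (homogeneity of degree $1$). Combining via the Faà di Bruno formula with the bounds $\|h^{(i)}\|_\infty \le c_i$ (which hold since $h$ is a fixed smooth compactly supported function) and the factor $r^{-i}$ coming from differentiating $h(\|\theta\|/r)$, one gets that each derivative of $\phi_r$ of order $i$ is bounded on the annulus by a constant times $r^{-i}$; a careful bookkeeping of the mixed powers of $r$ from the inner and outer derivatives shows they combine to exactly $r^{-i}$ and no worse. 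Since everything is supported in $\|\theta\| \le 2r$, the constants are uniform, and we conclude $\|\partial_\theta^k[e_r]\|_\infty = O(r^{m-k})$.
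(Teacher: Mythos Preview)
Your proposal is correct and follows essentially the same approach as the paper: Leibniz rule, then bound $\partial_\theta^{j} e$ by $O(r^{m-j})$ via Taylor expansion and $\partial_\theta^{k-j}\phi_r$ by $O(r^{-(k-j)})$ via the chain rule, using that the support lies in $\|\theta\|\le 2r$. One small slip: $\phi_r$ itself is not supported on the annulus (it equals $1$ on $\|\theta\|\le r$), so the $j=k$ term does not vanish there; but your subsequent claim that it suffices to bound on $\|\theta\|\le 2r$ is correct and is all that is actually used, so the argument goes through. Your added care with Fa\`a di Bruno and the smoothness of $\|\theta\|$ away from the origin is more detail than the paper gives, but the underlying structure is identical.
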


\begin{proof}
We have
\[
\partial_{\theta}^{k}\left[e(\theta)h\left(\frac{1}{r}\left\Vert \theta\right\Vert \right)\right]=\sum_{k_{1}+k_{2}=k}\partial_{\theta}^{k_{1}}e(\theta)\partial_{\theta}^{k_{2}}h\left(\frac{1}{r}\left\Vert \theta\right\Vert \right)
\]
Since $\partial_{\theta}^{k_{2}}h\left(\frac{1}{r}\left\Vert \theta\right\Vert \right)=0$
whenever $\left\Vert \theta\right\Vert >2r$ and $\left\Vert \partial_{\theta}^{k_{2}}h\left(\frac{1}{r}\left\Vert \theta\right\Vert \right)\right\Vert _{\infty}=O\left(r^{-k_{2}}\right)$,
while $\left\Vert \partial_{\theta}^{k_{1}}e(\theta)\right\Vert =O(\left\Vert \theta\right\Vert ^{m-k_{1}})$
we see from the above equation that
\begin{align*}
\left\Vert \partial_{\theta}^{k}\left[e(\theta)h\left(\frac{1}{r}\left\Vert \theta\right\Vert \right)\right]\right\Vert _{\infty} & =O(r^{m-k_{1}}r^{-k_{2}})=O(r^{m-k}),
\end{align*}
as claimed.
\end{proof}

\subsection{Escape Cones}

We will approximate approximately homogeneous costs by homogeneous
ones using the following approximation:
\begin{lem}
\label{lem:approx-GD-homogeneous-approx.-homogeneous}Suppose that
$C(\theta)=H(\theta)+e(\theta)$ is \textup{$(k,m)$}-approximately
homogeneous around $0$ as defined in \ref{eq: approximately homogeneous cost}.
Let $\theta_{0}\in\mathbb{R}^{P}$. It holds that for all $t\geq0$
and all $\alpha>0$, there is a finite constant $c_{1}(t)$ that does
not depend on $\alpha$ such that
\[
\left\Vert \gamma_{C}(\alpha^{2-k}t,\alpha\theta_{0})-\gamma_{H}(\alpha^{2-k}t,\alpha\theta_{0})\right\Vert \leq c_{1}(t)\alpha^{2}.
\]
\end{lem}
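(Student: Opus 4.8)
The plan is to compare the two flows by a Grönwall-type argument, exploiting the rescaling symmetry of the homogeneous part to reduce everything to an order-$1$ time interval and the bound \eqref{eq: bound gradient e} to control the perturbation $\nabla e$. Write $\theta_\alpha(s) = \gamma_C(s, \alpha\theta_0)$ and $\eta_\alpha(s) = \gamma_H(s, \alpha\theta_0)$, and set $\delta_\alpha(s) = \theta_\alpha(s) - \eta_\alpha(s)$. By Lemma \ref{lem:invariance-GD-homogeneous-cost}, $\eta_\alpha(\alpha^{2-k}t) = \alpha\,\gamma_H(t, \theta_0)$, so the homogeneous flow, evaluated at the rescaled time $\alpha^{2-k}t$, lives on a ball of radius $O(\alpha)$ for $t$ in any fixed compact set $[0,t]$; in particular it stays inside a fixed compact neighborhood $K$ of $0$ for $\alpha$ small. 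The first step is to localize: since $\eta_\alpha$ stays in a ball of radius $\rho\alpha$ and we only need the estimate up to time $\alpha^{2-k}t$, a continuity/bootstrap argument (to be made rigorous at the end) lets us assume $\theta_\alpha$ also stays in $K$ on this interval, so that all the derivative bounds on $e$ apply.

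The second step is the differential inequality. From $\dot\theta_\alpha = -\nabla H(\theta_\alpha) - \nabla e(\theta_\alpha)$ and $\dot\eta_\alpha = -\nabla H(\eta_\alpha)$ we get
\[
\tfrac{d}{ds}\delta_\alpha(s) = -\bigl(\nabla H(\theta_\alpha(s)) - \nabla H(\eta_\alpha(s))\bigr) - \nabla e(\theta_\alpha(s)).
\]
Since $H$ is twice differentiable, $\nabla H$ is locally Lipschitz on $K$ with some constant $L_H$ depending only on $K$; and by \eqref{eq: bound gradient e}, $\|\nabla e(\theta_\alpha(s))\| \leq c\|\theta_\alpha(s)\|^{k}$. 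On the relevant interval $\|\theta_\alpha(s)\| \leq \|\eta_\alpha(s)\| + \|\delta_\alpha(s)\| = O(\alpha) + \|\delta_\alpha(s)\|$, so the source term is $O(\alpha^k) + O(\|\delta_\alpha\|^k)$. Taking norms,
\[
\tfrac{d}{ds}\|\delta_\alpha(s)\| \leq L_H \|\delta_\alpha(s)\| + c'\bigl(\alpha^k + \|\delta_\alpha(s)\|^k\bigr),
\]
with $\delta_\alpha(0) = 0$. Now change the time variable to $s = \alpha^{2-k}\tau$ with $\tau \in [0,t]$: writing $g_\alpha(\tau) = \|\delta_\alpha(\alpha^{2-k}\tau)\|$, the chain rule multiplies the right-hand side by $\alpha^{2-k}$, giving $\dot g_\alpha(\tau) \leq \alpha^{2-k}L_H g_\alpha(\tau) + c'\alpha^{2-k}(\alpha^k + g_\alpha(\tau)^k)$. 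The term $\alpha^{2-k}\cdot\alpha^k = \alpha^2$ is the target order; the worry is the factor $\alpha^{2-k}$, which blows up when $k > 2$. This is resolved by a bootstrap: posit $g_\alpha(\tau) \leq M\alpha^2$ on $[0,t]$ for a constant $M$ to be fixed; then $g_\alpha(\tau)^k \leq M^k \alpha^{2k}$, so $\alpha^{2-k}g_\alpha^k \leq M^k\alpha^{k+2} = o(\alpha^2)$ and $\alpha^{2-k}L_H g_\alpha \leq L_H M \alpha^{4-k}$, which is also $o(\alpha^2)$ precisely because $k \geq 2$ forces $4-k$... wait — for $k=3,4$ this is $\alpha^{1}, \alpha^0$, which does \emph{not} decay.

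So the genuine obstacle, and the place where the argument must be done carefully rather than crudely, is the linear-in-$\delta$ term when $k > 2$: naively Grönwall gives a factor $e^{\alpha^{2-k}L_H t}$ which explodes. The fix is that one should \emph{not} bound $\nabla H(\theta_\alpha) - \nabla H(\eta_\alpha)$ by a global Lipschitz constant on $K$, but rather use the local Lipschitz constant near the actual trajectory, which — since $\eta_\alpha$ has size $O(\alpha)$ and $\nabla H$ is $(k-1)$-homogeneous so its derivative $\mathcal H H$ is $(k-2)$-homogeneous — is $O(\alpha^{k-2})$ on the relevant ball. Concretely, $\|\nabla H(\theta_\alpha) - \nabla H(\eta_\alpha)\| \leq \bigl(\sup_{B} \|\mathcal H H\|\bigr)\|\delta_\alpha\|$ where $B$ is a ball of radius $O(\alpha)$ around $0$, and $\sup_B \|\mathcal H H\| = O(\alpha^{k-2})$ by homogeneity of $\mathcal H H$. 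Then after the time change the linear term becomes $\alpha^{2-k}\cdot O(\alpha^{k-2})\cdot g_\alpha = O(1)\cdot g_\alpha$, so Grönwall produces a harmless $O(1)$ constant $c_1(t) = e^{O(t)}$, the nonlinear term contributes $o(\alpha^2)$ under the bootstrap hypothesis, and the inhomogeneous term contributes exactly $O(\alpha^2)$. This closes the bootstrap, yielding $g_\alpha(\tau) \leq c_1(t)\alpha^2$ on $[0,t]$, which is the claim; the same continuity argument that justified the bootstrap also retroactively justifies the localization step, since $\theta_\alpha = \eta_\alpha + \delta_\alpha$ is then seen to have size $O(\alpha)$, hence stays in $K$.
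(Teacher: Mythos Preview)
Your proposal is correct and follows essentially the same strategy as the paper: derive a differential inequality for $\|\gamma_C - \gamma_H\|$ with a Lipschitz constant coming from the Hessian of $H$ on a ball of radius $O(\alpha)$, use the $(k-2)$-homogeneity of $\mathcal H H$ to see that this constant scales like $\alpha^{k-2}$, and then apply Gr\"onwall over the rescaled time interval of length $\alpha^{2-k}t$, so that the exponent stays $O(t)$ and the inhomogeneous term $\|\nabla e\| = O(\alpha^k)$ integrates to $O(\alpha^2)$. The paper packages this slightly differently --- it works with $d_t := \sup_{s\le t}\max\{\|\gamma_C(s,\theta_0)\|,\|\gamma_H(s,\theta_0)\|\}$ and gets the bound $\frac{c}{c'}d_t^2 e^{c' d_t^{k-2} t}$ directly, then substitutes $\theta_0\to\alpha\theta_0$, $t\to\alpha^{2-k}t$, and asserts $d = O(\alpha)$ at the end --- but the content is the same, and your explicit bootstrap in fact makes the circularity in that last step (one needs $\gamma_C$ to stay $O(\alpha)$, which is part of what is being proved) more transparent than the paper does. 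The only remark is stylistic: the false start with the global Lipschitz constant could be dropped in a final write-up.
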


\begin{proof}
Fix $\theta_{0}\in\mathbb{R}^{P}$ and $t\geq0$ and let $d_{t}=d(t,\theta_{0}):=\sup_{s\leq t}{\gamma_{C}(s,\theta_{0}),\gamma_{H}(s,\theta_{0})}$.
We can bound how fast the distance between the two paths $\gamma_{C}$
and $\gamma_{H}$ increases as follows:
\begin{align*}
\partial_{t}\left\Vert \gamma_{C}(t,\theta_{0})-\gamma_{H}(t,\theta_{0})\right\Vert  & =-\frac{\left(\gamma_{C}(t,\theta_{0})-\gamma_{H}(t,\theta_{0})\right)^{T}}{\left\Vert \gamma_{C}(t,\theta_{0})-\gamma_{H}(t,\theta_{0})\right\Vert }\left(\nabla H\left(\gamma_{C}(t,\theta_{0})\right)+\nabla e\left(\gamma_{C}(t,\theta_{0})\right)-\nabla H\left(\gamma_{H}(t,\theta_{0})\right)\right)\\
 & \leq\left(\sup_{\left\Vert \theta\right\Vert \leq d_{t}}\left\Vert \mathcal{H}H(\theta)\right\Vert _{op}\right)\left\Vert \gamma_{C}(t,\theta_{0})-\gamma_{H}(t,\theta_{0})\right\Vert +\left\Vert \nabla e\left(\gamma_{C}(t,\theta_{0})\right)\right\Vert \\
 & \leq c'd_{t}^{k-2}\left\Vert \gamma_{C}(t,\theta_{0})-\gamma_{H}(t,\theta_{0})\right\Vert +cd_{t}^{k}
\end{align*}
where $c$ comes from \ref{eq: bound gradient e} and $c'=\sup_{\left\Vert x\right\Vert \leq1}\left\Vert \mathcal{H}H(x)\right\Vert _{op}$.
Applying Grönwall's inequality on $[0,t]$ to $A(s)=\left\Vert \gamma_{C}(s,\theta_{0})-\gamma_{H}(s,\theta_{0})\right\Vert +\frac{c}{c'}d_{t}^{2}$
(such that $\partial_{s}A(s)\leq c'd_{t}^{k-2}A(s)$), we obtain
\[
\left\Vert \gamma_{C}(s,\theta_{0})-\gamma_{H}(s,\theta_{0})\right\Vert +\frac{c}{c'}d_{t}^{2}=A(s)\leq A(0)e^{c'd_{t}^{k-2}s}=\frac{c}{c'}d_{t}^{2}e^{c'd_{t}^{k-2}s}.
\]
Hence $\left\Vert \gamma_{C}(t,\theta_{0})-\gamma_{H}(t,\theta_{0})\right\Vert \leq\frac{c}{c'}d_{t}^{2}e^{c'd_{t}^{k-2}t}$
for all times $t\geq0$. To finish the proof, one uses that for a
fixed $t\geq0$, $d(t,\alpha\theta_{0})=O(\alpha)$ as $\alpha\to0$,
which is true because $0$ is a saddle of $C$ and $H$ so their gradient
tends to $0$ with $\alpha$.
\end{proof}
We define the $\epsilon$-\textbf{Escape Cone }as the set $\mathcal{C}_{\epsilon}=\left\{ \theta\in\mathbb{R}^{P}:\frac{H(\theta)}{\left\Vert \theta\right\Vert ^{k}}<\frac{-s^{*}+\epsilon}{k}\right\} $
where we recall that $s^{*}$ denotes the optimal escape speed.
\begin{prop}
\label{prop:escape-cone}For all $\epsilon>0$ small enough there
is a $r>0$ such that
\begin{enumerate}
\item for any $\theta\in\partial\mathcal{C}_{\epsilon}$ with $\left\Vert \theta\right\Vert <r$,
the negative of the gradient of $C$ at $\theta$ points inside the
cone, i.e. denoting by $n$ the normal of $\mathcal{C}_{\epsilon}$
at $\theta$ pointing inside of the cone, we have $-\nabla C(\theta)^{T}n\geq0$.
\item for any point $\theta$ inside the cone with $\left\Vert \theta\right\Vert <r$,
we have $\left\Vert \theta\right\Vert ^{k-1}(-s^{*}-\epsilon)\leq\nabla C(\theta)^{T}\frac{\theta}{\left\Vert \theta\right\Vert }\leq\left\Vert \theta\right\Vert ^{k-1}(-s^{*}+2\epsilon)$.
\item Let $\theta_{0}\in\mathcal{C}_{\epsilon}$ and $\left\Vert \theta_{0}\right\Vert <r$,
let $T$ be the time when $\left\Vert \gamma_{C}(t,\theta_{0})\right\Vert =r$.
When $k=2$ we have for all time $0\leq t<T$
\[
\left\Vert \theta_{0}\right\Vert e^{-(-s^{*}+2\epsilon)t}\leq\left\Vert \gamma(t,\theta_{0})\right\Vert \leq\left\Vert \theta_{0}\right\Vert e^{-(-s^{*}-\epsilon)t}
\]
and when $k\neq2$ we have for all time $0\leq t<T$
\[
\left[\left\Vert \theta_{0}\right\Vert ^{-(k-2)}+(k-2)(-s^{*}+2\epsilon)t\right]^{-\frac{1}{k-2}}\leq\left\Vert \gamma(t,\theta_{0})\right\Vert \leq\left[\left\Vert \theta_{0}\right\Vert ^{-(k-2)}+(k-2)(-s^{*}-\epsilon)t\right]^{-\frac{1}{k-2}}.
\]
\end{enumerate}
\end{prop}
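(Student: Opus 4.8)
The plan is to work with the $0$-homogeneous function $g(\theta)=H(\theta)/\|\theta\|^{k}$, so that $\mathcal{C}_{\epsilon}=\{\,g<(-s^{*}+\epsilon)/k\,\}$ is a cone whose boundary, restricted to the unit sphere, is the level set $\mathcal{B}_{\epsilon}:=\{\rho\in\mathbb{S}^{P-1}:H(\rho)=(-s^{*}+\epsilon)/k\}$. I would first fix $\epsilon$ small enough that $\epsilon<s^{*}/2$ and that $s^{*}-\epsilon$ is not an escape speed of $H$; the latter is possible because $s^{*}$ is the largest escape speed and, in the situation of interest ($H$ a homogeneous polynomial), the escape speeds $\{-kH(\rho):\nabla_{\mathbb{S}}H(\rho)=0\}$ do not accumulate at $s^{*}$ from below. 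Then $\mathcal{B}_{\epsilon}$ contains no escape direction, i.e.\ $\nabla_{\mathbb{S}}H$ never vanishes on $\mathcal{B}_{\epsilon}$, so by compactness there is $\delta>0$ with $\|\nabla_{\mathbb{S}}H(\rho)\|^{2}\ge\delta$ on $\mathcal{B}_{\epsilon}$; in particular $\nabla g\neq0$ on $\partial\mathcal{C}_{\epsilon}\setminus\{0\}$ and the inward unit normal $n=-\nabla g/\|\nabla g\|$ is well defined there. The computational engine is Euler's identity $\theta^{T}\nabla H(\theta)=kH(\theta)$: writing $r=\|\theta\|$ and $\rho=\theta/r$, one gets $\nabla g(\theta)=r^{-1}\nabla_{\mathbb{S}}H(\rho)$ (the tangential part of $\nabla H$, rescaled) and $\nabla H(\theta)=r^{k-1}\nabla H(\rho)$, hence the exact identity
\[
\nabla g(\theta)^{T}\nabla H(\theta)=r^{k-2}\,\|\nabla_{\mathbb{S}}H(\rho)\|^{2}\;\ge\;0 .
\]

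For (1): since $-\nabla C(\theta)^{T}n\ge0$ is equivalent to $\nabla C(\theta)^{T}\nabla g(\theta)\ge0$, I split $\nabla C=\nabla H+\nabla e$; on $\partial\mathcal{C}_{\epsilon}$ the first term is $\ge r^{k-2}\delta$ by the identity above, while $\|\nabla e(\theta)\|=O(r^{m-1})$ (using that the first $m-1$ derivatives of $e$ vanish at $0$, which sharpens \eqref{eq: bound gradient e}) gives $|\nabla g(\theta)^{T}\nabla e(\theta)|\le\|\nabla g(\theta)\|\,\|\nabla e(\theta)\|=O(r^{m-2})$, so $\nabla C(\theta)^{T}\nabla g(\theta)\ge r^{k-2}(\delta-c\,r^{m-k})$, which is strictly positive once $r<r_{1}(\epsilon)$ since $m>k$ --- this proves (1), in fact with strict inequality, which is what (3) will use. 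For (2): by Euler, $\nabla H(\theta)^{T}\theta/r=kH(\theta)/r$; inside $\mathcal{C}_{\epsilon}$ this is $<r^{k-1}(-s^{*}+\epsilon)$, while $\min_{\mathbb{S}^{P-1}}H=-s^{*}/k$ (optimal escape directions minimise $H$ on the sphere, \eqref{eq:escape_min_min_H} and the discussion after it) gives $H(\theta)/r^{k}\ge-s^{*}/k$, hence $\nabla H(\theta)^{T}\theta/r\ge-r^{k-1}s^{*}$; finally $|\nabla e(\theta)^{T}\theta/r|\le\|\nabla e(\theta)\|=O(r^{m-1})\le\epsilon\,r^{k-1}$ once $r<r_{2}(\epsilon)$, and summing the two contributions of $\nabla C=\nabla H+\nabla e$ yields exactly the two-sided estimate of (2).

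For (3): take $r$ small enough for (1) and (2), let $\theta_{0}\in\mathcal{C}_{\epsilon}$ with $\|\theta_{0}\|<r$, and let $T$ be the first time $\|\gamma(t,\theta_{0})\|=r$ (the flow of $C$ agrees with that of $C_{r}$ while $\|\cdot\|<r$). I first show $\gamma(t,\theta_{0})\in\mathcal{C}_{\epsilon}$ for all $t\in[0,T)$: otherwise, at the first exit time $t_{1}\in(0,T)$ one has $g(\gamma(t_{1},\theta_{0}))=(-s^{*}+\epsilon)/k$ with $g$ at most that value just before, so $\frac{d}{dt}g(\gamma(t,\theta_{0}))\big|_{t_{1}}\ge0$; but this derivative equals $-\nabla g^{T}\nabla C$ at $\gamma(t_{1},\theta_{0})$, which is nonzero and of norm $<r$, hence $<0$ by the strict form of (1) --- a contradiction. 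Therefore, writing $\ell(t)=\|\gamma(t,\theta_{0})\|$, part (2) gives for $t\in[0,T)$
\[
(s^{*}-2\epsilon)\,\ell(t)^{k-1}\;\le\;\dot{\ell}(t)=-\nabla C(\gamma(t,\theta_{0}))^{T}\frac{\gamma(t,\theta_{0})}{\ell(t)}\;\le\;(s^{*}+\epsilon)\,\ell(t)^{k-1}.
\]
For $k=2$ this is a linear differential inequality for $\log\ell$, and integrating gives $\|\theta_{0}\|e^{(s^{*}-2\epsilon)t}\le\ell(t)\le\|\theta_{0}\|e^{(s^{*}+\epsilon)t}$, i.e.\ the claimed bounds. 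For $k>2$, the substitution $m(t)=\ell(t)^{-(k-2)}$ turns the inequalities into $-(k-2)(s^{*}+\epsilon)\le\dot{m}(t)\le-(k-2)(s^{*}-2\epsilon)$; integrating and inverting the decreasing map $x\mapsto x^{-1/(k-2)}$ yields exactly the bracketed bounds of (3). Since $\epsilon<s^{*}/2$ we have $\dot{\ell}>0$, so the flow genuinely escapes and $T<\infty$.

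The analytic content is concentrated in the single Euler-identity computation $\nabla g^{T}\nabla H=\|\theta\|^{k-2}\|\nabla_{\mathbb{S}}H\|^{2}$; everything else (splitting $\nabla C$, Taylor-bounding $\nabla e$, the trapping argument, and the two ODE comparisons) is routine. The real obstacle is the geometric input hidden in ``$\epsilon$ small enough'': one must make sure the cone boundary avoids the escape directions --- the zeros of $\nabla_{\mathbb{S}}H$ --- so that $\nabla g$ is nonzero and bounded away from $0$ there and the higher-order perturbation $e$ cannot steer the flow out of the cone. This is precisely where the discreteness of the set of escape speeds below $s^{*}$ (automatic when $H$ is a polynomial, as in the deep linear network case $H(\theta)=\mathrm{Tr}[G^{T}A_{\theta}]$) is used, and it, rather than any fine property of the remainder $e$, is what makes the statement go through.
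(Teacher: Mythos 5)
Your proof is correct and follows essentially the same route as the paper's: the same splitting $\nabla C=\nabla H+\nabla e$, the same Euler-identity computation of the normal component (your $\nabla g(\theta)^{T}\nabla H(\theta)=\|\theta\|^{k-2}\|\nabla_{\mathbb{S}}H(\rho)\|^{2}$ is exactly the paper's $\|\theta\|^{k-1}\nabla H(\rho)^{T}P_{\rho}\nabla H(\rho)$ term), and the same trapping-plus-ODE-comparison argument for part (3). One point in your favour: you explicitly justify that $\nabla_{\mathbb{S}}H$ is bounded away from zero on the cone boundary by taking $\epsilon$ so that $s^{*}-\epsilon$ is not an escape speed, a fact the paper's definition of $r(\epsilon)$ requires (its numerator is an infimum that must be positive) but leaves implicit.
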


\begin{proof}
For all non-zero $\theta\in\mathbb{R}^{P},$ define $P_{\theta}=\left[I_{d}-\frac{\theta\theta^{T}}{\left\Vert \theta\right\Vert ^{2}}\right]$,
which is the orthogonal projection to the tangent space of $\mathbb{S}^{P-1}$
at $\frac{\theta}{\left\Vert \theta\right\Vert }$. Denote by $\partial\mathcal{C}_{\epsilon}$
the boundary of the cone and note that for any $\theta\in\partial\mathcal{C}_{\epsilon}$,
it holds that $H(\theta)=(-s^{*}+\epsilon)/k$. Choose
\begin{align*}
r=r(\epsilon)=\min\left\{ \frac{\inf_{\rho\in\mathbb{S}^{P-1}\cap\partial\mathcal{C}_{\epsilon}}\left\{ \nabla H\left(\rho\right)^{T}P_{\rho}\nabla H\left(\rho\right)\right\} }{c\sup_{\rho\in\mathbb{S}^{P-1}\cap\partial\mathcal{C}_{\epsilon}}\left\{ \sqrt{\nabla H\left(\rho\right)^{T}P_{\rho}\nabla H\left(\rho\right)}\right\} },\sqrt{\frac{\epsilon}{c}}\right\} ,
\end{align*}

where the constant $c$ comes from \ref{eq: bound gradient e}.

\textbf{(1)} Let $\theta\in\partial\mathcal{C}_{\epsilon}$, so that
$\frac{H(\theta)}{\left\Vert \theta\right\Vert ^{k}}=H\left(\frac{\theta}{\left\Vert \theta\right\Vert }\right)=(-s^{*}+\epsilon)/k$.
The normal pointing inside the cone is equal (up to a positive scaling)
to
\[
-\nabla_{\theta}\Bigg(H\bigg(\frac{\theta}{\left\Vert \theta\right\Vert }\bigg)\Bigg)=-\nabla H\left(\frac{\theta}{\left\Vert \theta\right\Vert }\right)\frac{P_{\theta}}{\left\Vert \theta\right\Vert }.
\]
We then have that
\begin{align*}
\left(-\nabla_{\theta}\bigg(\frac{H(\theta)}{\left\Vert \theta\right\Vert ^{k}}\bigg)\right)^{T}\left(-\nabla C(\theta)\right) & =-\nabla H\left(\frac{\theta}{\left\Vert \theta\right\Vert }\right)^{T}\frac{P_{\theta}}{\left\Vert \theta\right\Vert }\left(-\nabla H(\theta)-\nabla e(\theta)\right)\\
 & =\left\Vert \theta\right\Vert ^{k-1}\nabla H\left(\frac{\theta}{\left\Vert \theta\right\Vert }\right)^{T}P_{\theta}\nabla H\left(\frac{\theta}{\left\Vert \theta\right\Vert }\right)+\nabla H\left(\frac{\theta}{\left\Vert \theta\right\Vert }\right)^{T}\frac{P_{\theta}}{\left\Vert \theta\right\Vert }\nabla e(\theta)\\
 & \geq\left\Vert \theta\right\Vert ^{k-1}\inf_{\rho\in\mathbb{S}^{P-1}\cap\partial\mathcal{C}_{\epsilon}}\left\{ \nabla H\left(\rho\right)^{T}P_{\rho}\nabla H\left(\rho\right)\right\} \\
 & -\frac{1}{\left\Vert \theta\right\Vert }\sup_{\rho\in\mathbb{S}^{P-1}\cap\partial\mathcal{C}_{\epsilon}}\sqrt{\nabla H\left(\rho\right)^{T}P_{\rho}\nabla H\left(\rho\right)}\left\Vert \nabla e(\theta)\right\Vert \\
 & \geq\left\Vert \theta\right\Vert ^{k-1}\inf_{\rho\in\mathbb{S}^{P-1}\cap\partial\mathcal{C}_{\epsilon}}\left\{ \nabla H\left(\rho\right)^{T}P_{\rho}\nabla H\left(\rho\right)\right\} \\
 & -c\left\Vert \theta\right\Vert ^{k}\sup_{\rho\in\mathbb{S}^{P-1}\cap\partial\mathcal{C}_{\epsilon}}\left\{ \sqrt{\nabla H\left(\rho\right)^{T}P_{\rho}\nabla H\left(\rho\right)}\right\} ,
\end{align*}
where we used \ref{eq: bound gradient e} for the last inequality.
The right-hand side above is positive since $\left\Vert \theta\right\Vert <r(\epsilon)\leq\frac{\inf_{H(\rho)=s^{*}+\epsilon}\left\{ \nabla H\left(\rho\right)^{T}P_{\rho}\nabla H\left(\rho\right)\right\} }{c\sup_{H(\rho)=s^{*}+\epsilon}\left\{ \sqrt{\nabla H\left(\rho\right)^{T}P_{\rho}\nabla H\left(\rho\right)}\right\} }$.

\textbf{(2) }Let $\theta\in\mathcal{C}_{\epsilon}$. By \ref{eq: bound gradient e},
we have that
\begin{align*}
\nabla C(\theta)^{T}\theta & =\partial_{\lambda}H(\lambda\theta)_{\Big|\lambda=1}+\left(\nabla e(\theta)\right)^{T}\theta\\
 & \leq kH(\theta)+c\left\Vert \theta\right\Vert ^{k+2}\\
 & =k\left\Vert \theta\right\Vert ^{k}H\left(\frac{\theta}{\left\Vert \theta\right\Vert }\right)+c\left\Vert \theta\right\Vert ^{k+2}\\
 & \leq\left\Vert \theta\right\Vert ^{k}\left(-s^{*}+\epsilon+c\left\Vert \theta\right\Vert ^{2}\right){\color{magenta}}\\
 & \leq\left\Vert \theta\right\Vert ^{k}\left(-s^{*}+2\epsilon\right)
\end{align*}
where we used that $\left\Vert \theta\right\Vert ^{2}<r^{2}\leq\frac{\epsilon}{c}$.\textcolor{magenta}{{}
}In the other direction we obtain
\begin{align*}
\nabla C(\theta)^{T}\theta & =\partial_{\lambda}H(\lambda\theta)_{\Big|\lambda=1}+\left(\nabla e(\theta)\right)^{T}\theta\\
 & \geq kH(\theta)-c\left\Vert \theta\right\Vert ^{k+2}\\
 & =k\left\Vert \theta\right\Vert ^{k}H\left(\frac{\theta}{\left\Vert \theta\right\Vert }\right)-c\left\Vert \theta\right\Vert ^{k+2}\\
 & \geq\left\Vert \theta\right\Vert ^{k}\left(-s^{*}-c\left\Vert \theta\right\Vert ^{2}\right)\\
 & \geq\left\Vert \theta\right\Vert ^{k}\left(-s^{*}-\epsilon\right).
\end{align*}

\textbf{(3)} Applying Grönwall's inequality generalized to polynomial
bounds (Lemma \ref{lem:Gronwall_inequality_polynomial_bounds}), we
have
\[
\partial_{t}\left\Vert \theta\right\Vert ^{2}=-2\nabla C(\theta)^{T}\theta\leq c_{1}\left(\left\Vert \theta\right\Vert ^{2}\right)^{\frac{k}{2}}.
\]
\end{proof}
Putting it all together, this guarantees that with probability 1 over
the initialization, gradient flow escapes the saddle at a specific
speed along a path $\theta^{1}$:
\begin{prop}
\label{prop:lower-bound-rate-escape-path} Let $\theta_{\alpha}(t)=\gamma_{C}(t,\alpha\theta_{0})$
for all $t\geq0$. With prob. 1 over initialization (and under Assumption
\ref{ass:deep_avoid_saddle} when $L>3$) there is a time horizon
$t_{\alpha}^{1}$ that tends to $\infty$ as $\alpha\to0$ and a path
$(\theta^{1}(t))_{t\in\mathbb{R}}$ such that for all $t\in\mathbb{R}$,
$\lim_{\alpha\to0}\theta_{\alpha}(t_{\alpha}^{1}+t)=\theta^{1}(t)$.
Furthermore, for all $\epsilon>0$ s.t. $\epsilon<s^{*}/2$, there
exists $T\in\mathbb{R_{+}}$ such that:

(1) Shallow networks: $e^{(s^{*}-2\epsilon)(T+t)}\leq\left\Vert \theta^{1}(t)\right\Vert \leq e^{(s^{*}+\epsilon)(T+t)}$
for all $t\in\mathbb{R}$.

(2) Deep networks: $\left[(L-2)(s^{*}-2\epsilon)(T-t)\right]^{-\frac{1}{L-2}}\leq\left\Vert \theta^{1}(t)\right\Vert \leq\left[(L-2)(s^{*}+\epsilon)(T-t)\right]^{-\frac{1}{L-2}}$
for all $t<T$ (the path $\theta^{1}$ is defined up to time $T$
in this case).
\end{prop}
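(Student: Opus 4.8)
The plan is to patch together four facts established above: the self-similarity of gradient flow on a homogeneous cost (Lemma~\ref{lem:invariance-GD-homogeneous-cost}), the comparison of the true flow with its homogeneous Taylor model on rescaled time windows (Lemma~\ref{lem:approx-GD-homogeneous-approx.-homogeneous}), the convergence in direction of the homogeneous flow towards an optimal escape direction (Proposition~\ref{prop:convergence_in_direction_condition}, whose hypotheses for the DLN cost were checked above using the almost-sure avoidance of the spurious saddles of $H$ for $L\le 3$ and Assumption~\ref{ass:deep_avoid_saddle} for $L>3$), and the trapping-and-rate estimates inside an escape cone (Proposition~\ref{prop:escape-cone}). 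Write $H(\theta)=\mathrm{Tr}[\nabla C(0)^{T}A_{\theta}]$ for the $L$-homogeneous part of $C$ at the origin, so that $C$ is $(L,2L)$-approximately homogeneous there and $k=L$; note $\nabla C(0)\neq 0$ by convexity, so $s^{*}>0$.

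First I would \emph{enter the escape cone at a microscopic scale}. Fix $\epsilon\in(0,s^{*}/2)$ and let $r=r(\epsilon)$ be the radius of Proposition~\ref{prop:escape-cone}. By Proposition~\ref{prop:convergence_in_direction_condition} the direction of $\gamma_{H}(t,\theta_{0})$ tends to an optimal escape direction $\rho^{*}$; since $H(\rho^{*})=-s^{*}/L<(-s^{*}+\epsilon)/L$, the point $\rho^{*}$ is interior to $\mathcal{C}_{\epsilon}$, so there is a finite $t^{*}=t^{*}(\theta_{0},\epsilon)$ with $\gamma_{H}(t^{*},\theta_{0})\in\mathcal{C}_{\epsilon}$; set $\delta_{0}=\|\gamma_{H}(t^{*},\theta_{0})\|>0$. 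Lemma~\ref{lem:invariance-GD-homogeneous-cost} with $\lambda=\alpha$ gives $\gamma_{H}(\alpha^{2-L}t,\alpha\theta_{0})=\alpha\,\gamma_{H}(t,\theta_{0})$, and Lemma~\ref{lem:approx-GD-homogeneous-approx.-homogeneous} gives $\|\theta_{\alpha}(\alpha^{2-L}t)-\alpha\gamma_{H}(t,\theta_{0})\|\le c_{1}(t)\alpha^{2}$; as $c_{1}$ is bounded on $[0,t^{*}]$, for all small $\alpha$ one has $\sup_{s\le\alpha^{2-L}t^{*}}\|\theta_{\alpha}(s)\|<r$ and, since $\mathcal{C}_{\epsilon}$ is an open cone depending only on direction, $\theta_{\alpha}(\alpha^{2-L}t^{*})\in\mathcal{C}_{\epsilon}$ with $\|\theta_{\alpha}(\alpha^{2-L}t^{*})\|=\alpha\delta_{0}+O(\alpha^{2})$. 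Then, starting from $\theta_{\alpha}(\alpha^{2-L}t^{*})$, part~(1) of Proposition~\ref{prop:escape-cone} keeps the flow inside $\mathcal{C}_{\epsilon}$ and part~(2) makes $\|\theta_{\alpha}\|$ monotone until it first hits $r$; call that time $t_{\alpha}^{1}$. Part~(3) then supplies the exponential ($L=2$) or polynomial ($L>2$) sandwich for $\|\theta_{\alpha}(\alpha^{2-L}t^{*}+s)\|$ with rates $s^{*}-2\epsilon$ and $s^{*}+\epsilon$; in particular the transit time $t_{\alpha}^{1}-\alpha^{2-L}t^{*}$ is of order $-s^{*-1}\log(\alpha\delta_{0})$ when $L=2$ and of order $(\alpha\delta_{0})^{-(L-2)}/((L-2)s^{*})$ when $L>2$, so $t_{\alpha}^{1}\to\infty$ as $\alpha\to0$. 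Recentering time at $t_{\alpha}^{1}$ turns this sandwich into the bounds of items~(1)--(2), with the constant $T=T(\epsilon)$ absorbing $\alpha\delta_{0}$, $\alpha^{2-L}t^{*}$ and the transit time and the $O(\alpha^{2})$ error on the entry point contributing only a lower-order correction.

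Next I would \emph{pass to the limit}. Since $\|\theta_{\alpha}(t_{\alpha}^{1})\|=r$, along a sequence $\alpha_{n}\to0$ we have $\theta_{\alpha_{n}}(t_{\alpha_{n}}^{1})\to\xi$ with $\|\xi\|=r$, and continuous dependence on the initial condition (Picard--Lindel\"of, $\nabla C$ locally Lipschitz) gives $\theta_{\alpha_{n}}(t_{\alpha_{n}}^{1}+t)\to\gamma_{C}(t,\xi)=:\theta^{1}(t)$ uniformly on compacts, with $\theta^{1}(0)=\xi\neq0$. The backward portion $(\theta^{1}(t))_{t\le0}$ lies in $\overline{\mathcal{C}_{\epsilon}}$ and has $\|\theta^{1}(t)\|\to0$ as $t\to-\infty$; applying Proposition~\ref{prop:escape-cone}(3) to $\theta^{1}$ with entry points $\theta^{1}(t_{0})$, $t_{0}\to-\infty$, confirms items~(1)--(2) for the limit path, for every $\epsilon$, so $\theta^{1}$ is an optimal escape path of $C$ escaping along $\rho^{*}$. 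Finally, since for a fixed $\theta_{0}$ the $H$-flow singles out $\rho^{*}$ uniquely, the rigidity of optimal escape paths near the origin (the uniqueness half of Theorem~\ref{thm:bijection-fast-escape-paths}, together with the explicit classification of the escape directions of $H$ for DLNs from Section~\ref{sec:escapepath}) forces every subsequential limit to coincide, so the limit is genuine: $\lim_{\alpha\to0}\theta_{\alpha}(t_{\alpha}^{1}+t)=\theta^{1}(t)$.

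The one genuinely delicate point is this last upgrade from subsequential to honest $\alpha\to0$ convergence: the cone estimates pin down the direction of $\theta_{\alpha}(t_{\alpha}^{1})$ only up to the aperture $\epsilon$, and the aperture cannot be driven to zero while keeping the escape radius $r(\epsilon)$ fixed. Resolving it is exactly where Theorem~\ref{thm:bijection-fast-escape-paths} is needed --- to conclude that two fast escape paths of $C$ with the same escape direction and speed must be equal. Everything else --- the boundedness of $c_{1}$ on $[0,t^{*}]$, absorbing the $O(\alpha^{2})$ perturbation of the cone-entry point into the rate bounds, and the bookkeeping of the constant $T(\epsilon)$ --- is routine.
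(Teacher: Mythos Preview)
Your argument is essentially the paper's: enter the escape cone at scale $\alpha$ by comparing $\gamma_C$ with $\gamma_H$ on the rescaled window (Lemmas~\ref{lem:invariance-GD-homogeneous-cost} and~\ref{lem:approx-GD-homogeneous-approx.-homogeneous}), use Proposition~\ref{prop:convergence_in_direction_condition} to guarantee entry almost surely, trap the flow and read off the rate from Proposition~\ref{prop:escape-cone}, define $t_\alpha^1$ as the hitting time of a fixed radius, and take the limit. The paper does exactly this, defining $t_\alpha^1=\inf\{t:\|\theta_\alpha(t)\|=r/2\}$ and then simply asserting that $\theta^1(t)=\lim_{\alpha\to 0}\theta_\alpha(t_\alpha^1+t)$ is well defined ``by continuity of $\theta\mapsto\gamma_C(t,\theta)$''.

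Where you differ is that you are more scrupulous about the upgrade from subsequential to honest limit, and you bring in Theorem~\ref{thm:bijection-fast-escape-paths} to close the gap. That is in fact the right tool, but note that in the paper's architecture this step is deferred: the proof of Proposition~\ref{prop:lower-bound-rate-escape-path} itself does not invoke Theorem~\ref{thm:bijection-fast-escape-paths} at all, and the identification and uniqueness of the limit path is only carried out later, in the proof of Theorem~\ref{thm:first-path}. So you have effectively folded part of that later argument back into the proposition. One caution on your version of the uniqueness step: saying ``the $H$-flow singles out $\rho^*$ uniquely'' establishes the limiting direction of $\gamma_H(\cdot,\theta_0)$, but it does not by itself force every subsequential limit of the $C$-flow at radius $r$ to map under $\Psi$ to the $H$-escape path along that same $\rho^*$ --- the cone argument only controls direction up to aperture $\epsilon$ after the entry point, and the manifold of optimal escape directions has positive dimension when $w>1$. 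The paper sidesteps this by only ever claiming that the limit is $RI^{(1\to w)}\underline{\theta}^1$ for \emph{some} rotation $R$ (Theorem~\ref{thm:first-path}), so uniqueness up to the rotation symmetry is all that is actually needed or proved.
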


\begin{proof}
We consider the gradient flow path $\tilde{\theta}_{\alpha}(t)=\gamma_{H}(t,\alpha\theta_{0})$
on the $k$-homogeneous cost $H$. With prob. 1 (and under Assumption
\ref{ass:deep_avoid_saddle} when $L>3$), we have $\frac{H(\tilde{\theta}_{\alpha}(t))}{\left\Vert \tilde{\theta}_{\alpha}(t)\right\Vert ^{L}}\to-\frac{s^{*}}{k}$
as $t\to\infty$. In particular, for all $\epsilon>0$, there exists
a finite $t\in\mathbb{R}$ such that $\tilde{\theta}_{\alpha=1}\in\mathcal{C}_{\epsilon}$
and more generally, by Lemma \ref{lem:invariance-GD-homogeneous-cost},
we have $\tilde{\theta}_{\alpha}(\alpha^{-(L-2)}t)=\alpha\tilde{\theta}_{1}(t)\in\mathcal{C}_{\epsilon}$.
Lemma \ref{lem:approx-GD-homogeneous-approx.-homogeneous} then shows
that there exists $\alpha_{0}>0$ such that for all $\alpha<\alpha_{0}$,
it holds that $\theta_{\alpha}(\alpha^{-(L-2)}t)\in\mathcal{C}_{\epsilon}$.
Define $t_{0}:=\inf\left\{ t\in\mathbb{R}:\tilde{\theta}_{\alpha=1}(t_{0})\in\mathcal{C}_{\epsilon}\right\} <+\infty$.

By Proposition \ref{prop:escape-cone}, once the gradient flow path
is inside $\mathcal{C}_{\epsilon}$, it cannot leave the escape cone
until the norm $\left\Vert \theta_{\alpha}(t)\right\Vert $ is larger
than some radius $r$. We define the time horizon $t_{\alpha}^{1}=$$\inf\left\{ t\in\mathbb{R}:\left\Vert \theta_{\alpha}(t)\right\Vert =\frac{r}{2}\right\} $
and the escape path $\theta^{1}$ as the limit $\theta^{1}(t)=\lim_{\alpha\to0}\theta_{\alpha}(t_{\alpha}^{1}+t)$
for $t\in\mathbb{R}$ (the limit is well defined by continuity of
$\theta\mapsto\gamma_{C}(t,\theta)$ and is an escape path by continuity
of $\theta\mapsto\nabla\gamma_{C}(t,\theta)$). One can see that for
any $t<0$, there exists $\alpha>0$ small enough such that $t_{\alpha}+t>\alpha^{-(L-2)}t_{0}$,
thus it holds that $\theta^{1}(t)\in\mathcal{C_{\epsilon}}$ since
for a small enough $\alpha$, we have $\theta(t_{\alpha}^{1}+t)\in\mathcal{C_{\epsilon}}$.
Proposition \ref{prop:escape-cone} then implies the escape rates
for deep and shallow networks.
\end{proof}

\subsection{Optimal Escape Paths}

In this section, we define the notions of escape paths, optimal escape
paths and we give a description of the optimal escape paths at the
origin.

Proposition \ref{prop:lower-bound-rate-escape-path} shows that as
$\alpha\searrow0$ one has convergence to an escape path which escapes
with an almost optimal speed $s^{*}-2\epsilon$ for a small $\epsilon>0$.
We will show that the only such escape paths are the optimal escape
paths, i.e. those that escape exactly at a speed of $s^{*}$, furthermore
these escape paths are unique up to rotations of the network.

We understand well the escape paths of the homogeneous loss $H$,
and want to use this knowledge to describe the escape paths of the
locally homogogeneous loss $C$. We will show a bijection between
the escape paths of $H$ and those of $C$ such that their speed is
preserved, but only between the set of escape paths which escape faster
than a certain speed. It seems that in general there is no speed-preserving
bijection between escape paths, indeed while for shallow networks
(when the saddle is strict) one may apply the Hartman-Grobman Theorem
to obtain a bijection, it does not preserve speed (since the bijection
is in general not differentiable, only Hölder continuous).

This bijection is described by the following theorem (which is a more
general version of Theorem 5 {[}{]} from the main - one simply needs
to set $k=L$ and $m=2L$ and $s^{*}=L\left\Vert H\right\Vert _{\infty}$
to recover theorem 5, i.e. the DLN case):
\begin{thm}[Theorem \ref{thm:bijection-fast-escape-paths} of the main text]\label{thm:bijection-optimal-escape-paths-appendix}
Let $C=H+e$ be a $(k,m)$-approximately homogeneous loss, where $H$
is a polynomial.

\textbf{When $k=2$:} for all $s_{0}$ s.t. $s_{0}>\frac{k\left\Vert H\right\Vert _{\infty}}{m-1}$
there is a unique bijection $\Psi:\mathcal{F}_{H}(s_{0})\to\mathcal{F}_{C}(s_{0})$
such that for all paths $x\in\mathcal{F}_{C}(s_{0})$, we have $\left\Vert x(t)-\Psi(x)(t)\right\Vert =O(e^{(m-1)s_{0}t})$
as $t\to-\infty$.

\textbf{When $k>2$:} for all $s_{0}>\frac{k-1}{m-k+1}k\left\Vert H\right\Vert _{\infty}$
there is a unique bijection $\Psi:\mathcal{F}_{H}(s_{0})\to\mathcal{F}_{C}(s_{0})$
such that for all paths $x\in\mathcal{F}_{C}(s_{0})$, we have $\left\Vert x(t)-\Psi(x)(t)\right\Vert =O((-t)^{-\frac{m-k+1}{k-2}})$
as $t\to-\infty$.
\end{thm}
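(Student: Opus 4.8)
The plan is to construct the bijection $\Psi$ by a Picard-type fixed-point argument in a weighted space of paths indexed by $t\in(-\infty,t_0]$, exploiting that along a fast escaping path the norm $\|x(t)\|$ is controlled by the escape bound (exponential decay when $k=2$, polynomial decay $(-(k-2)s_0 t)^{-1/(k-2)}$ when $k>2$), so the perturbation $e$ and the difference $\nabla H(x)-\nabla H(y)$ are both \emph{higher order} in $\|x\|$ than the leading homogeneous dynamics. Concretely, given a fast escaping path $y\in\mathcal F_H(s_0)$ of $H$, I would look for a solution $x$ of $\dot x=-\nabla H(x)-\nabla e(x)$ of the form $x=y+z$ with $z$ satisfying $\|z(t)\|=O(\text{error rate})$, and write $z$ as the fixed point of the map that integrates the variation-of-constants formula for the linearization of $-\nabla H$ along $y$, with forcing term $-(\nabla H(y+z)-\nabla H(y)-\mathcal H H(y)z)-\nabla e(y+z)$. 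The first step is therefore to set up the correct Banach space: paths $z$ on $(-\infty,t_0]$ with $\|z(t)\|\le \kappa\,\delta(t)$ where $\delta(t)=e^{(m-1)s_0 t}$ in the case $k=2$ and $\delta(t)=(-t)^{-(m-k+1)/(k-2)}$ in the case $k>2$ — these are exactly the error rates claimed — and to check that the forcing term indeed lies in this space when $z$ does, which is where the hypotheses $s_0>\tfrac{k\|H\|_\infty}{m-1}$ (resp. $s_0>\tfrac{(k-1)k\|H\|_\infty}{m-k+1}$) enter: they guarantee that the decay of the forcing beats the growth of the propagator of the linearized flow.

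The second step is to control that propagator. The linearization $\dot z=-\mathcal H H(y(t))z$ has time-dependent coefficients, but since $\|\mathcal H H(y(t))\|_{op}\le \|\mathcal H H\|\cdot\|y(t)\|^{k-2}$ and $\|y(t)\|$ decays, the transition operator $U(t,s)$ from time $s$ to time $t\le s$ satisfies $\|U(t,s)\|\le C\|y(s)\|^{-(k-2)?}$-type bounds; more precisely, using Grönwall (Lemma \ref{lem:Gronwall_inequality_polynomial_bounds}) on $\|U(t,s)z\|$ one gets, for $k=2$, an operator bound $e^{\|H\|_\infty\,\text{const}\,(s-t)}$, and for $k>2$ a polynomial bound in $(-t)/(-s)$. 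Combining this with the decay of the forcing term, the integral operator $\mathcal T z(t)=\int_{-\infty}^{t}U(t,s)\big[\text{forcing}(s)\big]ds$ maps the ball of radius $\kappa$ into itself and is a contraction for $\kappa$ small and $t_0$ sufficiently negative — this yields existence and uniqueness of $x=\Psi^{-1}(y)$, and symmetrically $\Psi(x)$ starting from a fast escaping path of $C$. Then one checks that $\Psi$ so defined is a genuine bijection $\mathcal F_H(s_0)\to\mathcal F_C(s_0)$: surjectivity and injectivity both follow from the uniqueness in the fixed-point problem, and the fact that both $x$ and $\Psi(x)$ have escape speed at least $s_0$ (indeed exactly the same leading asymptotics) because $\|x-\Psi(x)\|=O(\delta(t))$ is of strictly smaller order than $\|x(t)\|$ itself. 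Finally, I would argue that any fast escaping path of $C$ arises this way (so $\mathcal F_C(s_0)$ is not larger than the image), by running the fixed-point argument backwards: given $x\in\mathcal F_C(s_0)$, the same estimates produce a unique $y\in\mathcal F_H(s_0)$ with $x-y=O(\delta)$.

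I expect the main obstacle to be the precise bookkeeping in the $k>2$ (deep) case: there the homogeneous flow decays only polynomially, the linearized propagator can \emph{grow} polynomially backwards in time, and one must show that the exponent $(m-k+1)/(k-2)$ in the error is simultaneously (i) large enough that the forcing term's decay dominates the propagator growth so that $\mathcal T$ is well-defined and contracting, and (ii) small enough — i.e. $\delta(t)$ decays slower than $\|y(t)\|$ — that the corrected path still has escape speed $s_0$ and so lands in $\mathcal F_C(s_0)$ rather than a slower class. Threading this needle is exactly what pins down the threshold $s_0>\tfrac{(k-1)k\|H\|_\infty}{m-k+1}$, and getting the constants in the Grönwall estimate for the time-dependent linearization to line up with this threshold is the delicate part; the case $k=2$ is comparatively routine since everything is governed by honest exponentials and one can diagonalize $\mathcal H H(0)$ from the outset, as in the classical Hartman–Grobman stable-manifold construction. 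A secondary technical point is handling the cutoff localization $C_r$ (Lemma \ref{lem:bound_cut_off}) so that the estimate \eqref{eq: bound gradient e} on $\nabla e$ holds globally on the relevant compact neighbourhood; this is a minor adaptation since all the paths in question stay in a small ball near the origin for $t$ sufficiently negative.
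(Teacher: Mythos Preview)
Your proposal is correct in outline and would yield the theorem, but it follows a different route from the paper's proof. You linearize around the reference path $y$, work with the time-dependent propagator $U(t,s)$ of $\dot z=-\mathcal H H(y(t))z$, and set up a Duhamel/variation-of-constants fixed point for the correction $z=x-y$; the threshold on $s_0$ then emerges from balancing the polynomial growth of $\|U(t,s)\|\lesssim((-s)/(-t))^{\gamma}$ against the decay of the forcing $\nabla e(y+z)$, exactly as you sketch. The paper instead avoids the propagator entirely: it iterates the \emph{unlinearized} integral map $\Phi_{C_r}(x)(t)=\int_{-\infty}^{t}-\nabla C_r(x(u))\,du$ directly on the set $x_0+B_c$, proves the self-map property by pointwise estimates (Lemmas \ref{lem:shallow_contraction_phi} and \ref{lem:deep_contraction_phi}), and obtains contraction not in a weighted sup norm but in a weighted $L^2$ norm $\|b\|_\beta^2=\int_{-\infty}^0 e^{-2\beta t}\|b(t)\|^2\,dt$ (resp.\ $\int_{-\infty}^0(-t)^{2\alpha-1}\|b(t)\|^2\,dt$ for $k>2$), via the elementary integration-by-parts identity $\|x\|_\beta\le\frac{1}{\beta}\|\dot x\|_\beta$ (Lemmas \ref{lem:properties_w_norm} and \ref{lem:properties_beta_norm}). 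This identity replaces your propagator estimate in one stroke: from $\|\partial_t(\Phi(x)-\Phi(y))\|$ one passes directly to $\|\Phi(x)-\Phi(y)\|$ with a factor $1/\beta$ (resp.\ $1/\alpha$), and the same threshold on $s_0$ falls out. What your approach buys is a more classical stable-manifold structure and perhaps sharper control of the correction in sup norm; what the paper's approach buys is that no linearization or propagator bound is ever written down, which makes the $k>2$ case technically lighter.
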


Note that in the case $s_{0}>k\left\Vert H\right\Vert _{\infty}$,
the set $\mathcal{F}_{H}(s_{0})$ is empty (and therefore so is $\mathcal{F}_{C}(s_{0})$).
\begin{proof}
For $r>0$, recall that $C_{r}$ denotes the localization of the cost
$C$ as introduced in Section \ref{subsec:Approximately-Homogeneous-Costs}.
It is readily seen that for all $r>0$, there is a bijection $\Psi_{r}$
between $\mathcal{F}_{C}\left[s_{0}\right]$ and $\mathcal{F}_{C_{r}}\left[s_{0}\right]$
such that for all $x_{0}\in\mathcal{F}_{C}\left[s_{0}\right]$, $\left\Vert x_{0}(t)-\Psi_{r}(x_{0})(t)\right\Vert =O(0)$
(i.e. the difference is zero for small enough $t<0$). We therefore
only need to show a bijection between $\mathcal{F}_{C_{r}}\left[s_{0}\right]$
and $\mathcal{F}_{H}\left[s_{0}\right]$.

Consider a fast escaping path $x_{0}\in\mathcal{F}_{H}(s_{0})$ of
the homogeneous approximation of the loss. The escape paths of the
origin w.r.t. to gradient flow on the cost $C_{r}$ are fixed points
of the following map:
\[
\Phi_{C_{r}}:x_{0}\mapsto\left(t\mapsto\int_{-\infty}^{t}-\nabla C_{r}(x_{0}(u))du\right).
\]
Our strategy is simply to iterate this map starting from the path
$x_{0}$ to find such a fixed point (note that any fixed point of
$\Phi_{C_{r}}$ is differentiable by the fundamental theorem of calculus).
We will show that this iteration converges to a gradient flow path
$x'_{0}$ of the cost $C_{r}$ which is, as $t\to-\infty$, $O(e^{(m-1)s_{0}t})$-close
to $x_{0}$ when $k=2$ and $O((-t)^{\frac{k-m-1}{k-2}})$-close to
$x_{0}$ when $k>2$.

For $c>0$, let $B_{c}$ be the \textit{set of corrections}, that
is the set of all paths $b:\mathbb{R}_{-}\to\mathbb{R}^{P}$ (which
are Lebesgue measurable functions) such that when $k=2$, $\left\Vert b(t)\right\Vert \leq ce^{(m-1)s_{0}t}$
for all $t\leq0$ and when $k>2$, $\left\Vert b(t)\right\Vert \leq c(-t)^{\frac{k-m-1}{k-2}}$
for all $t\leq0$.

The convergence of the iteration process follows from the fact that
$\Phi$ is a contraction w.r.t. to some norm on the set of paths $x_{0}+B_{c}$
(the set of possible corrections around $x_{0}$). Indeed, Lemma \ref{lem:shallow_contraction_phi}
(case $k=2$, stated and proven in Section \ref{subsec:contraction-Shallow-networks})
and Lemma \ref{lem:deep_contraction_phi} (case $k>2$, stated and
proven in Section \ref{subsec:contraction-deep-networks}) show that
for all $x_{0}\in\mathcal{F}_{H}[s_{0}]$, there exist $r>0$ small
enough and $c>0$ large enough, such that $\Phi_{C_{r}}$ is a contraction
on $x_{0}+B_{c}$ for some well-suited norm (defined below), hence
guaranteeing the existence and uniqueness of a fixpoint $x'_{0}$
of $\Phi_{C_{r}}$ (which is obtained by interating $\Phi_{C_{r}}$
infinitely many times). We thus define the map $\Psi:x_{0}\mapsto\Psi(x_{0})=x_{0}'$.

We need to show that $\Psi$ has an inverse that maps a fast escaping
path $y_{0}\in\mathcal{F}_{C}(s_{0})$ back to a path $\Psi^{-1}(y_{0})\in\mathcal{F}_{H}(s_{0})$.
We iterate the map
\[
\Phi_{H}:y_{0}\mapsto\left(t\mapsto\int_{-\infty}^{t}-\nabla H(y_{0}(u))du\right)
\]
whose fixed points are the escape paths w.r.t. to gradient flow on
the cost $H$. By a similar argument we can show that this map is
a contraction on $x_{0}+B_{c}$. Choosing $y_{0}=x'_{0}$, this again
implies the existence of a unique path $\Psi^{-1}(x_{0}')$ which
is $O(e^{(m-1)s_{0}t})$-close to $x_{0}'$ when $k=2$ and $O((-t)^{-\frac{m-k+1}{k-2}})$-close
to $x_{0}'$ when $k>2$. Because $x_{0}'\in x_{0}+B_{c}$ The uniqueness
implies that since $x_{0}'=\Psi(x_{0})$, the path $\Psi^{-1}(x_{0}')$
must be $x_{0}$. This shows that $\Psi$ is a bijection and it is
the only bijection between fast escaping paths with the property of
mapping a path to a closeby path as in the statement of the theorem.
\end{proof}

\subsubsection{Shallow networks \label{subsec:contraction-Shallow-networks}}

For the case $k=2$, we consider the following norm for $\beta<(m-1)s_{0}$
\[
\left\Vert b\right\Vert _{\beta}^{2}=\int_{-\infty}^{0}e^{-2\beta t}\left\Vert b(t)\right\Vert ^{2}dt
\]
defined on the corrections $b\in B_{c}$, where the set of corrections
$B_{c}$ is the set of all paths $b:\mathbb{R}_{-}\to\mathbb{R}^{P}$
such that $\left\Vert b(t)\right\Vert \leq ce^{(m-1)s_{0}t}$ for
all $t\leq0$. The condition $\beta<(m-1)s_{0}$ ensures that
\[
\left\Vert b\right\Vert _{\beta}^{2}=\int_{-\infty}^{0}e^{-2\beta t}\left\Vert b(t)\right\Vert ^{2}dt\leq c^{2}\int_{-\infty}^{0}e^{2((m-1)s_{0}-\beta)t}dt<\infty.
\]
 As a result the set $x_{0}+B_{c}$ equipped with the distance induced
by the norm $\left\Vert \cdot\right\Vert _{\beta}$ is a complete
metric space.

We define the scalar product for two corrections $x,y\in B$
\[
\left\langle x,y\right\rangle _{\beta}=\int_{-\infty}^{\infty}e^{-2\beta t}\left\langle x(t),y(t)\right\rangle dt.
\]
We first state a few useful properties of $\left\langle \cdot,\cdot\right\rangle _{\beta}$.
In the following, $\dot{x}$ is the path obtained by considering the
derivative of $x$.
\begin{lem}
\label{lem:properties_w_norm} For any two corrections $x,y\in B$,
we have
\begin{enumerate}
\item $\left\langle x,\dot{y}\right\rangle _{\beta}=2\beta\left\langle x,y\right\rangle _{\beta}-\left\langle \dot{x},y\right\rangle _{\beta}$.
\item $\left\langle x,\dot{x}\right\rangle _{\beta}=\beta\left\Vert x\right\Vert _{\beta}^{2}$.
\item $\left\Vert x\right\Vert _{\beta}\leq\frac{1}{\beta}\left\Vert \dot{x}\right\Vert _{\beta}$.
\end{enumerate}
\end{lem}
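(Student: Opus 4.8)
The plan is to read property (1) as an integration‑by‑parts identity for the weighted inner product $\langle\cdot,\cdot\rangle_\beta$, and then to deduce (2) and (3) as purely formal consequences — (2) by specializing (1) to $y=x$ and using symmetry of the Euclidean inner product, and (3) by applying Cauchy–Schwarz to (2). Throughout I would assume, as holds in the application to the iteration map $\Phi_{C_r}$, that the corrections $x,y$ are absolutely continuous on compact subintervals of $\mathbb{R}_-$, with $\dot x,\dot y$ of finite $\|\cdot\|_\beta$‑norm; if $\dot x$ fails to have finite $\|\cdot\|_\beta$‑norm, then (3) is vacuous.

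For (1), I would start from the product rule
\[
\tfrac{d}{dt}\bigl(e^{-2\beta t}\langle x(t),y(t)\rangle\bigr)=-2\beta e^{-2\beta t}\langle x(t),y(t)\rangle+e^{-2\beta t}\langle\dot x(t),y(t)\rangle+e^{-2\beta t}\langle x(t),\dot y(t)\rangle,
\]
and integrate over $(-\infty,0]$, where the corrections are supported (extended by zero on $[0,\infty)$, which also reconciles the two ranges of integration in the definitions of $\|\cdot\|_\beta$ and $\langle\cdot,\cdot\rangle_\beta$). The left side telescopes to the difference of the boundary values of $e^{-2\beta t}\langle x(t),y(t)\rangle$ at $0$ and at $-\infty$; the contribution at the finite endpoint vanishes since corrections vanish there, and — this is the one place where the restriction to the correction set $B_c$ and the hypothesis $\beta<(m-1)s_0$ enter — the endpoint at $-\infty$ vanishes because $\|x(t)\|\,\|y(t)\|\le c^2 e^{2(m-1)s_0 t}$ for $t\le 0$, so $e^{-2\beta t}\lvert\langle x(t),y(t)\rangle\rvert\le c^2 e^{2((m-1)s_0-\beta)t}\to 0$ as $t\to-\infty$. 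Hence $0=-2\beta\langle x,y\rangle_\beta+\langle\dot x,y\rangle_\beta+\langle x,\dot y\rangle_\beta$, which rearranges to (1). Taking $y=x$ and using $\langle\dot x,x\rangle_\beta=\langle x,\dot x\rangle_\beta$ gives $2\langle x,\dot x\rangle_\beta=2\beta\|x\|_\beta^2$, i.e. (2). For (3), Cauchy–Schwarz for $\langle\cdot,\cdot\rangle_\beta$ combined with (2) yields $\beta\|x\|_\beta^2=\langle x,\dot x\rangle_\beta\le\|x\|_\beta\,\|\dot x\|_\beta$; dividing by $\beta\|x\|_\beta>0$ (the case $\|x\|_\beta=0$ being trivial, and $\beta>0$ since $0<\beta<(m-1)s_0$) gives $\|x\|_\beta\le\frac1\beta\|\dot x\|_\beta$.

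The only genuine obstacle is the rigorous justification of the integration by parts: one must check that $x,y$ carry enough regularity for the telescoping identity (absolute continuity on compact subintervals) and, crucially, that the boundary term at $-\infty$ really decays — which is exactly what the exponential bound built into $B_c$, together with the constraint $\beta<(m-1)s_0$, is designed to supply. Once that is in place, (2) and (3) are immediate, so the entire lemma reduces to this single, routine verification.
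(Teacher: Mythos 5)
Your argument is essentially the paper's: integration by parts for (1), specialization $y=x$ for (2), and Cauchy--Schwarz for (3); you are in fact more careful than the paper in checking that the boundary contribution at $-\infty$ decays, via $\|x(t)\|\|y(t)\|\le c^2e^{2(m-1)s_0t}$ together with $\beta<(m-1)s_0$. One caveat: your claim that the contribution at the finite endpoint vanishes ``since corrections vanish there'' is not supported by the definition of $B_c$, which only gives $\|b(0)\|\le c$, so (1) and (2) in general carry an extra boundary term $\langle x(0),y(0)\rangle$; the paper's proof silently drops this term as well, and since for $y=x$ it equals $\tfrac12\|x(0)\|^2\ge 0$, the inequality (3) --- the only part of the lemma used downstream --- holds regardless.
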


\begin{proof}
The first point is obtained by integration by part:
\begin{align*}
\left\langle x,\dot{y}\right\rangle _{\beta}=\int_{-\infty}^{\infty}e^{-2\beta t}\left(x(t)\right)^{T}\dot{y}(t)dt & =\int_{-\infty}^{\infty}2\beta e^{-2\beta t}\left(x(t)\right)^{T}y(t)dt-\int_{-\infty}^{\infty}e^{-2\beta t}\dot{x}(t)y(t)dt\\
 & =2\beta\left\langle x,y\right\rangle _{\beta}-\left\langle \dot{x},y\right\rangle _{\beta}.
\end{align*}
The second point is a consequence of the first one, by taking $x=y$.
Finally, the last point follows from the second one since $\left\Vert x\right\Vert _{\beta}^{2}=\frac{1}{\beta}\left\langle x,\dot{x}\right\rangle _{\beta}\leq\frac{1}{\beta}\left\Vert x\right\Vert _{\beta}\left\Vert \dot{x}\right\Vert _{\beta}$,
by Cauchy-Schwarz Inequality.
\end{proof}
We may now describe how for large enough $\beta$, one can guarantee
that the map $\Phi_{C_{r}}$ is a contraction on the set $x_{0}+B_{c}$:
\begin{lem}
\label{lem:shallow_contraction_phi} Let $C_{r}=H+e_{r}$ be a localized
$(2,m)$-approximately homogeneous loss as in\textcolor{magenta}{{}
}\textcolor{black}{\ref{eq: localized cost},} where $H$ is a polynomial.
Choose a $s_{0}>\frac{2\left\Vert H\right\Vert _{\infty}}{m-1}$.
There is a $r$ small enough such that for any $x_{0}\in\mathcal{F}_{H}[s_{0}]$
there is a constant $c$ such that the map $\Phi_{C_{r}}$ is contraction
on the set $x_{0}+B_{c}=\left\{ x_{0}+b:\left\Vert b(t)\right\Vert \leq ce^{s_{0}(m-1)t},\forall t<0\right\} $
w.r.t. the norm on paths $\left\Vert \cdot\right\Vert _{\beta}$ for
some $\beta$.
\end{lem}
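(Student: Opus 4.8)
\textit{Plan.} By construction the fixed points of $\Phi_{C_r}$ are exactly the escape paths of $C_r$ at the origin, and $x_0\in\mathcal F_H[s_0]$ is a fixed point of $\Phi_H$. Writing a generic path as $x_0+b$, I will show that $\Phi_{C_r}$ maps $x_0+B_c$ into itself and contracts the distance coming from $\|\cdot\|_\beta$, so that the Banach fixed point theorem on the complete metric space $(x_0+B_c,\|\cdot\|_\beta)$ produces the escape path of $C_r$ near $x_0$. Everything hinges on one choice: pick $\beta$ in the open interval $(2\|H\|_\infty,(m-1)s_0)$, which is nonempty \emph{exactly} because $s_0>\frac{2\|H\|_\infty}{m-1}$. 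The lower bound is what drives the contraction constant below $1$; the upper bound is what makes $\|\cdot\|_\beta$ finite on $B_c$ (hence the metric space complete), as already observed before the statement.

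\textit{Contraction.} For $x_0+b_1,\,x_0+b_2\in x_0+B_c$, the difference $g:=\Phi_{C_r}(x_0+b_1)-\Phi_{C_r}(x_0+b_2)$ has $\dot g(t)=\nabla C_r((x_0+b_2)(t))-\nabla C_r((x_0+b_1)(t))=-M(t)\,(b_1(t)-b_2(t))$ with $M(t)=\int_0^1\nabla^2 C_r\big((x_0+b_2)(t)+\tau(b_1-b_2)(t)\big)\,d\tau$ symmetric. Since $H$ is a $2$-homogeneous polynomial, $\nabla^2 H$ is constant with $\|\nabla^2 H\|_{op}=2\|H\|_\infty$, while Lemma~\ref{lem:bound_cut_off} (with $k=2$) gives $\|\nabla^2 e_r\|_\infty=O(r^{m-2})$; hence $\|M(t)\|_{op}\le 2\|H\|_\infty+O(r^{m-2})$ uniformly in $t$. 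Therefore $\|\dot g\|_\beta\le(2\|H\|_\infty+O(r^{m-2}))\,\|b_1-b_2\|_\beta$, and since $g$ is itself a correction of the same decay class, Lemma~\ref{lem:properties_w_norm}(3) upgrades this to $\|g\|_\beta\le\frac{2\|H\|_\infty+O(r^{m-2})}{\beta}\,\|b_1-b_2\|_\beta$. As $\beta>2\|H\|_\infty$, taking $r$ small makes this a genuine contraction; the required smallness depends only on $\beta,\|H\|_\infty,m$.

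\textit{Self-mapping.} Using $\Phi_H(x_0)=x_0$ and that $\nabla H$ is linear, $\Phi_{C_r}(x_0+b)(t)-x_0(t)=-\int_{-\infty}^t\big(\nabla^2 H\,b(u)+\nabla e_r((x_0+b)(u))\big)\,du$. The linear term has norm at most $\frac{2\|H\|_\infty}{(m-1)s_0}\,c\,e^{(m-1)s_0 t}$, with prefactor $<1$ by hypothesis. For the nonlinear term I use $\|\nabla e_r(\theta)\|\le c'\|\theta\|^{m-1}$ from \ref{eq: bound gradient e} \emph{together with} the localization: $\nabla e_r$ vanishes outside $\{\|\theta\|\le 2r\}$, so on the set of times contributing to the integral one has $\|x_0(u)\|,\|b(u)\|=O(r)$; I peel off $\|(x_0+b)(u)\|^{m-2}=O(r^{m-2})$ and bound the single remaining factor by $\|x_0(u)\|\le C_{x_0}e^{s_0 u}$ (legitimate on this time range because $x_0\in\mathcal F_H[s_0]$) and $\|b(u)\|\le c\,e^{(m-1)s_0 u}$. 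Integrating, the nonlinear term is $\le\big(O(r^{m-2})\,c+O_{x_0}(1)\big)e^{(m-1)s_0 t}$. Hence $\Phi_{C_r}(x_0+b)-x_0\in B_c$ as soon as $r$ is small enough that $\frac{2\|H\|_\infty}{(m-1)s_0}+O(r^{m-2})<1$, and then $c$ is chosen large enough (depending on $x_0$) to swallow the $O_{x_0}(1)$ term.

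\textit{Main difficulty.} The contraction step is robust: it is just the Hessian bound fed through the integration-by-parts identities of Lemma~\ref{lem:properties_w_norm}. The delicate point is the self-mapping, because a priori the perturbation $\nabla e_r((x_0+b)(u))$ only decays like $e^{s_0 u}$ along the path (the rate of $x_0$ itself), which is too slow to land in $B_c=\{\|b(t)\|\le c\,e^{(m-1)s_0 t}\}$. What rescues the argument is that $e$ vanishes to order $m$ and $e_r$ is localized, so on the support of $\nabla e_r$ every factor of the path is $O(r)$ and the missing powers of the fast-decaying path are recovered, bringing the decay up to the required rate $e^{(m-1)s_0 t}$. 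The rest is bookkeeping: checking that $g$ and $\Phi_{C_r}(x_0+b)-x_0$ lie in the decay class to which Lemma~\ref{lem:properties_w_norm}(3) applies, and that the window for $\beta$, pinned down precisely by the hypothesis, lets the contraction radius $r$ and the ball radius $c$ be chosen compatibly.
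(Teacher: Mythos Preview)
Your contraction step is correct and essentially identical to the paper's: bound $\|\nabla^2 C_r\|_{op}$ by $2\|H\|_\infty+O(r^{m-2})$ via Lemma~\ref{lem:bound_cut_off}, feed this through Lemma~\ref{lem:properties_w_norm}(3) to turn the bound on $\dot g$ into one on $g$, and get Lipschitz constant $\tfrac{2\|H\|_\infty+O(r^{m-2})}{\beta}<1$ once $\beta\in(2\|H\|_\infty,(m-1)s_0)$ and $r$ is small.

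The self-mapping step, however, has a genuine gap. You bound $\|\nabla e_r((x_0+b)(u))\|\le c'\|(x_0+b)(u)\|^{m-1}$, then peel off $m-2$ factors using the localization $\|(x_0+b)(u)\|\le 2r$ and control the single remaining factor by $\|x_0(u)\|+\|b(u)\|\le C_{x_0}e^{s_0 u}+c\,e^{(m-1)s_0 u}$. But then the $x_0$-contribution, after integrating from $-\infty$ to $t$, is of order $r^{m-2}C_{x_0}\,e^{s_0 t}$, which decays only like $e^{s_0 t}$, \emph{not} $e^{(m-1)s_0 t}$. Since $e^{s_0 t}/e^{(m-1)s_0 t}=e^{-(m-2)s_0 t}\to\infty$ as $t\to-\infty$, no choice of $c$ can swallow this term into $B_c$. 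The localization trick discards precisely the $m-2$ extra powers of decay that you need; your ``Main difficulty'' paragraph correctly identifies the issue but the proposed rescue does not actually produce the rate $e^{(m-1)s_0 t}$.

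The paper's fix is to split differently: write $\nabla e_r(x_0+b)=\nabla e_r(x_0)+\big(\nabla e_r(x_0+b)-\nabla e_r(x_0)\big)$. For the first piece one keeps \emph{all} $m-1$ powers of $\|x_0(u)\|$: since the first $m-1$ derivatives of $e_r$ vanish at $0$, Lemma~\ref{lem:dist-gradients-from-kth-derivative}(1) gives $\|\nabla e_r(x_0(u))\|\le\tfrac{\|\partial^m e_r\|_\infty}{(m-1)!}\|x_0(u)\|^{m-1}\le \kappa_1 C_{x_0}^{m-1}e^{(m-1)s_0 u}$, which integrates to the genuine $O_{x_0}(1)\,e^{(m-1)s_0 t}$ term. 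The second piece is handled by the Hessian bound $\|\mathcal H e_r\|_\infty=O(r^{m-2})$ times $\|b(u)\|$, producing your $O(r^{m-2})\,c\,e^{(m-1)s_0 t}$ term. With this decomposition the rest of your outline (choose $r$ small so the $c$-coefficient is below $1$, then $c$ large to absorb the $x_0$-dependent constant) goes through.
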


\begin{proof}
We first show that for $r>0$ small enough and $c>0$ large enough,
the image of $x_{0}+B_{c}$ under $\Phi_{C_{r}}$ is contained in
itself and then show that $\Phi_{C_{r}}$is a contraction w.r.t. the
norm $\left\Vert \cdot\right\Vert _{\beta}$ for an adequate $\beta$.

(1) \textbf{Self-map: }Let $x\in x_{0}+B_{c}$, i.e. $x=x_{0}+b$
for some $b\in B_{c}$, then using the linearity of $\nabla H$ and
the fact $x_{0}$ is a gradient flow path of $H$, we obtain
\begin{align*}
\Phi_{C_{r}}(x)(t) & =-\int_{-\infty}^{t}\nabla H(x(u))+\nabla e_{r}(x(u))du\\
 & =-\int_{-\infty}^{t}\nabla H(x_{0}(u))+\nabla H(b(u))+\nabla e_{r}(x(u))du\\
 & =x_{0}(t)-\int_{-\infty}^{t}\nabla H(b(u))+\nabla e_{r}(x(u))du.
\end{align*}
Writing $b'(t)=\int_{-\infty}^{t}\left[\nabla H(b(u))+\nabla e_{r}(x(u))\right]du$
we need to show that $b'\in B_{c}$. We can bound $\left\Vert b'(t)\right\Vert $
by
\begin{align*}
\left\Vert b'(t)\right\Vert  & \leq\int_{-\infty}^{t}\left(\left\Vert \nabla H(b(u))\right\Vert +\left\Vert \nabla e_{r}(x_{0}(u))\right\Vert +\left\Vert \nabla e_{r}(x(u))-\nabla e_{r}(x_{0}(u))\right\Vert \right)du.
\end{align*}
Using the fact that for a map $g$ with uniformly bounded Hessian
$\left\Vert \mathcal{H}g\right\Vert _{\infty}<\infty$, we have $\left\Vert \nabla g(x)-\nabla g(y)\right\Vert \leq\left\Vert \mathcal{H}g\right\Vert _{\infty}\left\Vert x-y\right\Vert $,
it follows that $\left\Vert \nabla H(b(u))\right\Vert \leq\sup_{z\in\mathbb{R}^{P}}\left\Vert \mathcal{H}H(z)\right\Vert _{\mathrm{op}}\left\Vert b(u)\right\Vert $
and $\left\Vert \nabla e_{r}(x(u))-\nabla e_{r}(x_{0}(u))\right\Vert \leq\sup_{z\in\mathbb{R}^{P}}\left\Vert \mathcal{H}e_{r}(z)\right\Vert _{\mathrm{op}}\left\Vert b(u)\right\Vert $.
The last term $\left\Vert \nabla e_{r}(x_{0}(u))\right\Vert $ can
be bounded by $\frac{\sup_{z}\left\Vert \partial_{z}^{m}e_{r}(z)\right\Vert _{\mathrm{op}}\left\Vert x_{0}(u)\right\Vert ^{m-1}}{(m-1)!}$
since the first $m-1$ derivatives of $e_{r}$ vanish at $0$ (see
point 1 of Lemma \ref{lem:dist-gradients-from-kth-derivative}).

We therefore get
\begin{align*}
\left\Vert b'(t)\right\Vert  & \leq\int_{-\infty}^{t}\left(\left(\sup_{z\in\mathbb{R}^{P}}\left\Vert \mathcal{H}H(z)\right\Vert _{\mathrm{op}}+\sup_{z\in\mathbb{R}^{P}}\left\Vert \mathcal{H}e_{r}(z)\right\Vert _{\mathrm{op}}\right)\left\Vert b(u)\right\Vert +\sup_{z\in\mathbb{R}^{P}}\left\Vert \partial_{z}^{m}e_{r}(z)\right\Vert _{\mathrm{op}}\left\Vert x_{0}(u)\right\Vert ^{m-1}\right)du.
\end{align*}
Since $\sup_{z\in\mathbb{R}^{P}}\left\Vert \mathcal{H}H(z)\right\Vert _{\mathrm{op}}=2\left\Vert H\right\Vert _{\mathrm{op}}$
(where $\left\Vert \mathcal{H}H(x)\right\Vert _{\mathrm{op}}$ is
the operator norm of the Hessian $\mathcal{H}H$, while $\left\Vert H\right\Vert _{\mathrm{op}}=\max_{x\in\mathbb{S}^{P-1}}\left|H(x)\right|$)
and by Lemma \ref{lem:bound_cut_off} $\sup_{z\in\mathbb{R}^{P}}\left\Vert \mathcal{H}e_{r}(z)\right\Vert _{\infty}\leq\kappa_{0}r^{m-2}$
and $\sup_{z\in\mathbb{R}^{P}}\left\Vert \partial_{x}^{m}e_{r}(z)\right\Vert _{\infty}\leq\kappa_{1}$
\begin{align*}
 & \leq\frac{2\left\Vert H\right\Vert _{\mathrm{op}}+\kappa_{0}r^{m-2}}{s_{0}(m-1)}ce^{s_{0}(m-1)t}+\kappa_{1}c_{0}e^{s_{0}(m-1)t}\\
 & \leq\left(\frac{2\left\Vert H\right\Vert _{\mathrm{op}}+\kappa_{0}r^{m-2}}{s_{0}(m-1)}c+\kappa_{1}c_{0}\right)e^{s_{0}(m-1)t}
\end{align*}
Since by assumption $s_{0}>\frac{2\left\Vert H\right\Vert _{\mathrm{op}}}{m-1}$,
we can choose $r$ small enough such that $\frac{2\left\Vert H\right\Vert _{\mathrm{op}}+\kappa_{0}r^{m-2}}{s_{0}(m-1)}<1$.
We can then choose $c$ large enough so that $\frac{2\left\Vert H\right\Vert _{\mathrm{op}}+\kappa_{0}r^{m-2}}{s_{0}(m-1)}c+\kappa_{1}c_{0}\leq c$.
With these choices of $r$ and $c$, we obtain that $\left\Vert b'(t)\right\Vert \leq ce^{s_{0}(m-1)t}$
and therefore $b'\in B_{c}$ as needed.

(2) \textbf{Contraction: }We need to bound for any $x,y\in x_{0}+B_{c}$
\[
\left\Vert \Phi_{C_{r}}(x)-\Phi_{C_{r}}(y)\right\Vert _{\beta}^{2}=\left\Vert t\mapsto\int_{-\infty}^{t}\left[\nabla C_{r}(x(u))-\nabla C_{r}(y(u))\right]du\right\Vert _{\beta}^{2},
\]
for any $\beta<(m-1)s_{0}$.

From point (1), we know that $\Phi_{C_{r}}(x),\Phi_{C_{r}}(y)\in x_{0}+B_{c}$
and hence $\left\Vert \Phi_{C_{r}}(x)-\Phi_{C_{r}}(y)\right\Vert _{\beta}^{2}\leq\infty$
(since $\beta<(m-1)s_{0}$).

From point $(3)$ of Lemma \ref{lem:properties_w_norm} we have:
\begin{align*}
\left\Vert t\mapsto\int_{-\infty}^{t}\left[\nabla C_{r}(x(u))-\nabla C_{r}(y(u))\right]du\right\Vert _{\beta} & \leq\frac{1}{\beta}\left\Vert t\mapsto\nabla C_{r}(x(t))-\nabla C_{r}(y(t))\right\Vert _{\beta}\\
 & \leq\frac{\sup_{z}\left\Vert \mathcal{H}C_{r}(z)\right\Vert _{\mathrm{op}}}{\beta}\left\Vert x-y\right\Vert _{\beta}.
\end{align*}

By the localization, we have $\sup_{z}\left\Vert \mathcal{H}C_{r}(z)\right\Vert _{\mathrm{op}}\leq\sup_{z}\left\Vert \mathcal{H}H(z)\right\Vert _{\mathrm{op}}+\sup_{z}\left\Vert \mathcal{H}e_{r}(z)\right\Vert _{\mathrm{op}}\leq2\left\Vert H\right\Vert _{\mathrm{op}}+\kappa_{0}r^{m-2}$.

Therefore to guarantee a contraction, we choose $\beta>2\left\Vert H\right\Vert _{\mathrm{op}}+\kappa_{0}r^{m-2}$,
so that $\frac{\sup_{z}\left\Vert \mathcal{H}C_{r}(z)\right\Vert _{\mathrm{op}}}{\beta}<1$.
Therefore $\beta$ lies in an open interval
\[
\left(\frac{2\left\Vert H\right\Vert _{\mathrm{op}}+\kappa_{0}r^{m-2}}{(m-1)s_{0}}\right)(m-1)s_{0}<\beta<(m-1)s_{0}
\]
which is non-empty since we have chosen $r$ small enough in point
(1) such that $\frac{2\left\Vert H\right\Vert _{\mathrm{op}}+\kappa_{0}r^{m-2}}{s_{0}(m-1)}<1$.
\end{proof}

\subsubsection{Deep case \label{subsec:contraction-deep-networks}}

For the case $k>2$, we consider the following norm
\[
\left\Vert b\right\Vert _{\alpha}^{2}=\int_{-\infty}^{0}(-t)^{2\alpha-1}\left\Vert b(t)\right\Vert ^{2}dt.
\]
If $\alpha<\frac{m-k+1}{k-2}$ then this norm is finite on any corrections
$b\in B_{c}$ (i.e. if $\left\Vert b(t)\right\Vert \leq c(-t)^{-\frac{m-k+1}{k-2}}$),
since
\[
\left\Vert b\right\Vert _{\alpha}^{2}\leq c^{2}\int_{-\infty}^{0}(-t)^{2(\alpha-\frac{m-k+1}{k-2})-1}dt<\infty.
\]
The set $x_{0}+B_{c}$ equipped with the distance $\left\Vert \cdot\right\Vert _{\alpha}$
therefore defines a complete metric space.

Again, for paths $x,y$ such that $\left\Vert x\right\Vert _{w},\left\Vert y\right\Vert _{w}<\infty$,
we define the scalar product
\[
\left\langle x,y\right\rangle _{w}=\int_{-\infty}^{0}(-t)^{2\alpha-1}\left\langle x(t),y(t)\right\rangle dt.
\]
Lemma \ref{lem:properties_w_norm} is now replaced by the following:
\begin{lem}
\label{lem:properties_beta_norm}For any differentiable paths $x,y$
with $\left\Vert x\right\Vert _{w},\left\Vert y\right\Vert _{w}<\infty$,
we have
\begin{enumerate}
\item $\left\langle x,-t\dot{y}\right\rangle _{w}=2\alpha\left\langle x,y\right\rangle _{w}-\left\langle -t\dot{x},y\right\rangle _{w}$.
\item $\frac{1}{\alpha}\left\langle x,-t\dot{x}\right\rangle _{w}=\left\Vert x\right\Vert _{w}^{2}$.
\item $\left\Vert x\right\Vert _{w}\leq\frac{1}{\alpha}\left\Vert -t\dot{x}\right\Vert _{w}$.
\end{enumerate}
\end{lem}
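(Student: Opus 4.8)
The plan is to follow the proof of Lemma~\ref{lem:properties_w_norm} essentially line by line, replacing the exponential weight $e^{-2\beta t}$ by the polynomial weight $(-t)^{2\alpha-1}$ and the derivative $\dot{x}$ by $-t\dot{x}$; the latter is the correct substitution because $-t\partial_{t}$ is the vector field adapted to dilations of time, just as $\partial_{t}$ is the one adapted to translations. As in the shallow case, all three statements fall out of a single integration by parts.

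For point~1 I would expand the definition to get $\langle x,-t\dot{y}\rangle_{w}=\int_{-\infty}^{0}(-t)^{2\alpha}\langle x(t),\dot{y}(t)\rangle\,dt$ and integrate by parts, using
\[
\tfrac{d}{dt}\big[(-t)^{2\alpha}\langle x(t),y(t)\rangle\big]=-2\alpha(-t)^{2\alpha-1}\langle x(t),y(t)\rangle+(-t)^{2\alpha}\langle\dot{x}(t),y(t)\rangle+(-t)^{2\alpha}\langle x(t),\dot{y}(t)\rangle.
\]
Integrating over $(-\infty,0)$, the left-hand side contributes the boundary term $\big[(-t)^{2\alpha}\langle x(t),y(t)\rangle\big]_{-\infty}^{0}$, which vanishes: at $t=0$ because $\alpha>0$ forces $(-t)^{2\alpha}\to0$ while $\langle x,y\rangle$ stays bounded by continuity, and at $t\to-\infty$ because $\alpha<\tfrac{m-k+1}{k-2}$ together with the decay $\|x(t)\|,\|y(t)\|\le c(-t)^{-\frac{m-k+1}{k-2}}$ of the elements of $B_{c}$ (or, more generally, finiteness of the $w$-norms and Cauchy--Schwarz) makes $(-t)^{2\alpha}\|x(t)\|\,\|y(t)\|\to0$. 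Identifying $\int_{-\infty}^{0}(-t)^{2\alpha}\langle\dot{x},y\rangle\,dt=\langle-t\dot{x},y\rangle_{w}$ and likewise for the last term, the integrated identity reads $0=-2\alpha\langle x,y\rangle_{w}+\langle-t\dot{x},y\rangle_{w}+\langle x,-t\dot{y}\rangle_{w}$, which rearranges to point~1.

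Point~2 is then obtained by setting $y=x$ in point~1 and using the symmetry $\langle-t\dot{x},x\rangle_{w}=\langle x,-t\dot{x}\rangle_{w}$, which gives $2\langle x,-t\dot{x}\rangle_{w}=2\alpha\|x\|_{w}^{2}$. Point~3 follows from point~2 and Cauchy--Schwarz, $\|x\|_{w}^{2}=\tfrac1\alpha\langle x,-t\dot{x}\rangle_{w}\le\tfrac1\alpha\|x\|_{w}\,\|-t\dot{x}\|_{w}$, on dividing by $\|x\|_{w}$ (the case $\|x\|_{w}=0$ being trivial, and the inequality vacuous when $\|-t\dot{x}\|_{w}=\infty$). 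The one delicate point, exactly as in Lemma~\ref{lem:properties_w_norm}, is justifying that every integral that appears is finite and that the boundary terms genuinely vanish; this is precisely where the restriction $0<\alpha<\tfrac{m-k+1}{k-2}$ and the polynomial decay of the paths in $x_{0}+B_{c}$ used in Lemma~\ref{lem:deep_contraction_phi} are needed, and once these are granted the rest is routine.
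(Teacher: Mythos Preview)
Your proof is correct and follows essentially the same approach as the paper's: integration by parts for point~1, specialization $y=x$ for point~2, and Cauchy--Schwarz for point~3. If anything, your treatment is slightly more careful than the paper's, which performs the integration by parts without explicitly discussing the vanishing of the boundary term $\big[(-t)^{2\alpha}\langle x(t),y(t)\rangle\big]_{-\infty}^{0}$.
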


\begin{proof}
The first point is obtained by integration by part:
\begin{align*}
\left\langle x,-t\dot{y}\right\rangle _{w} & =\int_{-\infty}^{0}(-t)^{2\alpha}x(t)\dot{y}(t)dt\\
 & =\int_{-\infty}^{0}2\alpha(-t)^{2\alpha-1}x(t)y(t)dt-\int_{-\infty}^{0}(-t)^{2\alpha}\dot{x}(t)y(t)dt\\
 & =2\alpha\left\langle x,y\right\rangle _{w}-\left\langle -t\dot{x},y\right\rangle _{w}.
\end{align*}
Taking $x=y$, we obtain the second point. Finally, the last point
follows from the second one since:
\[
\left\Vert x\right\Vert _{w}^{2}=\frac{1}{\alpha}\left\langle x,-t\dot{x}\right\rangle _{w}\leq\frac{1}{\alpha}\left\Vert x\right\Vert _{w}\left\Vert -t\dot{x}\right\Vert _{w}.
\]

Under certain conditions, we can ensure that there is an $\alpha$
such that $\Phi$ is a contraction on $x_{0}+B_{c}$ w.r.t. the norm
$\left\Vert \cdot\right\Vert _{\alpha}$:
\end{proof}
\begin{lem}
\label{lem:deep_contraction_phi} Let $C_{r}=H+e_{r}$ be a localized
$(k,m)$-approximately homogeneous loss as in\textcolor{magenta}{{}
}\textcolor{black}{\ref{eq: localized cost},} where $H$ is a polynomial,
with $k>2$. Choose a $s_{0}>\frac{k-1}{m-k+1}k\left\Vert H\right\Vert _{\infty}$.
Let $x_{0}\in\mathcal{F}_{H}[s_{0}]$, there exist $r>0$ small enough,
$c>0$ large enough and $T<0$ small enough, such that the map $\Phi$
is a contraction on the set $x_{0}+B_{c,T}=\left\{ x_{0}+b:\left\Vert b(t)\right\Vert \leq ce^{s(m-1)t},\forall t<T\right\} $
w.r.t. to the norm $\left\Vert \cdot\right\Vert _{\alpha}$ for some
well-suited $\alpha$.
\end{lem}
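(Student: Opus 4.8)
The plan is to imitate, step for step, the proof of the shallow-network Lemma~\ref{lem:shallow_contraction_phi}, with two changes: the exponential weight $e^{-2\beta t}$ is replaced by the polynomial weight $(-t)^{2\alpha-1}$ of Lemma~\ref{lem:properties_beta_norm}, and the constant-coefficient linear flow $\nabla H$ of the shallow case is replaced by the genuinely nonlinear $(k-1)$-homogeneous map $\nabla H$ (so that $\mathcal H H$ is $(k-2)$-homogeneous rather than constant, and every estimate carries a time-dependent curvature rate). I will use the deep-case correction set $x_0+B_{c,T}=\{x_0+b:\ \|b(t)\|\le c\,(-t)^{-\frac{m-k+1}{k-2}}\ \text{for all }t<T\}$, i.e. the set of Section~\ref{subsec:contraction-deep-networks} restricted to times $t<T$. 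As in the shallow case the argument splits into a self-map step and a contraction step. The one preliminary estimate feeding both: since $x_0\in\mathcal F_H[s_0]$ there is a $T_0$ with $\|x_0(u)\|\le(s_0(k-2)(T_0-u))^{-1/(k-2)}$ for $u$ small enough, and $x_0(u)/\|x_0(u)\|$ tends to an escape direction of speed $\ge s_0$ as $u\to-\infty$; hence for $u<T$ with $|T|$ large,
\[
\|\mathcal H H(x_0(u))\|_{\mathrm{op}}=\|x_0(u)\|^{k-2}\,\big\|\mathcal H H\big(x_0(u)/\|x_0(u)\|\big)\big\|_{\mathrm{op}}\le\frac{k(k-1)\|H\|_\infty+o(1)}{s_0(k-2)(T_0-u)},
\]
using the bound $\|\mathcal H H(\rho)\|_{\mathrm{op}}\le k(k-1)\|H\|_\infty$ for $\rho$ near the escape directions of speed $\ge s_0$ (for DLNs this is read off the explicit form of $\mathcal H H$ at the escape directions $\tfrac1{\sqrt L}(\pm u_i w_{L-1}^{T},\dots,w_1 v_i^{T})$), together with $(T_0-u)^{-1}=(1+o(1))(-u)^{-1}$. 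In parallel, Lemma~\ref{lem:bound_cut_off} gives $\|\mathcal H e_r\|_\infty=O(r^{m-2})$ and $\|\partial^m e_r\|_\infty=O(1)$, while vanishing of the first $m-1$ derivatives of $e$ gives $\|\nabla e_r(x_0(u))\|=O(\|x_0(u)\|^{m-1})=O((-u)^{-\frac{m-1}{k-2}})$; note $\tfrac{m-1}{k-2}=1+\tfrac{m-k+1}{k-2}$, so the $e_r$-source term decays at exactly the rate defining $B_{c,T}$.

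\textbf{Self-map step.} For $x=x_0+b$ with $b\in B_{c,T}$, use that $x_0$ is a gradient flow path of $H$ to write $b'(t):=\Phi_{C_r}(x)(t)-x_0(t)=-\int_{-\infty}^t\big[(\nabla H(x_0+b)-\nabla H(x_0))+\nabla e_r(x_0)+(\nabla e_r(x_0+b)-\nabla e_r(x_0))\big]\,du$. I would bound each increment by (sup along the segment of the relevant Hessian operator norm)$\times\|b(u)\|$, enlarging $|T|$ so that $\|b(u)\|\le\tfrac12\|x_0(u)\|$ on $(-\infty,T)$ (segment stays in a small ball); then every integrand is $O((-u)^{-1-\frac{m-k+1}{k-2}})$ plus strictly faster-decaying terms, and since $\int_{-\infty}^t(-u)^{-1-\frac{m-k+1}{k-2}}\,du=\tfrac{k-2}{m-k+1}(-t)^{-\frac{m-k+1}{k-2}}$, one gets
\[
\|b'(t)\|\le\Big[\tfrac{k(k-1)\|H\|_\infty}{s_0(m-k+1)}\,c+C_1\Big](-t)^{-\frac{m-k+1}{k-2}}+(\text{lower order}),
\]
with $C_1$ independent of $c$ coming from the $\nabla e_r(x_0)$ source term. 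The hypothesis $s_0>\tfrac{k-1}{m-k+1}k\|H\|_\infty$ is exactly what makes the prefactor $<1$, so I would fix $r$ small (killing the $O(r^{m-2})$ terms), then $|T|$ large (killing the $o(1)$ and lower-order terms), then $c$ large so that $\tfrac{k(k-1)\|H\|_\infty}{s_0(m-k+1)}c+C_1\le c$, giving $b'\in B_{c,T}$.

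\textbf{Contraction step.} For $x,y\in x_0+B_{c,T}$, point~3 of Lemma~\ref{lem:properties_beta_norm} applied to $t\mapsto\int_{-\infty}^t f$ (derivative $f$) gives $\|\Phi_{C_r}(x)-\Phi_{C_r}(y)\|_\alpha\le\tfrac1\alpha\big\|\,t\mapsto(-t)(\nabla C_r(x(t))-\nabla C_r(y(t)))\big\|_\alpha$. Along the segment joining $x(t)$ and $y(t)$, the preliminary bound gives $(-t)\|\mathcal H C_r\|_{\mathrm{op}}\le(1+o(1))\tfrac{k(k-1)\|H\|_\infty}{s_0(k-2)}$, hence $\|\Phi_{C_r}(x)-\Phi_{C_r}(y)\|_\alpha\le\tfrac1\alpha(1+o(1))\tfrac{k(k-1)\|H\|_\infty}{s_0(k-2)}\|x-y\|_\alpha$. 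Since $\tfrac{k(k-1)\|H\|_\infty}{s_0(k-2)}<\tfrac{m-k+1}{k-2}$ (same inequality as above), I would pick $\alpha$ in the non-empty interval $\big(\tfrac{k(k-1)\|H\|_\infty}{s_0(k-2)},\tfrac{m-k+1}{k-2}\big)$: the upper endpoint keeps $\|\cdot\|_\alpha$ finite on $B_{c,T}$ (as at the start of Section~\ref{subsec:contraction-deep-networks}), the lower endpoint makes the prefactor $<1$ after shrinking $r$ and enlarging $|T|$ to absorb the $o(1)$. This gives the contraction, and with it the existence and uniqueness of the fixpoint $x_0'=\Psi(x_0)$ in the proof of Theorem~\ref{thm:bijection-optimal-escape-paths-appendix}.

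\textbf{Main obstacle.} The delicate part is the self-map estimate. Unlike the shallow case where $\mathcal H H$ is a fixed matrix, here the effective curvature rate $\|\mathcal H H(x_0(u))\|_{\mathrm{op}}$ decays along the escape path with exactly the borderline power $(-u)^{-1}$, so the set $B_{c,T}$ is preserved only because $\int_{-\infty}^t(-u)^{-1}(-u)^{-\frac{m-k+1}{k-2}}\,du$ converges and reproduces precisely the decay rate of $B_{c,T}$; getting the sharp constant $k(k-1)\|H\|_\infty$ needs the operator-norm bound on $\mathcal H H$ at a neighbourhood of the escape directions, and the restriction to times $t<T$ is forced because the escape-rate bound on $\|x_0\|$ — hence every Hessian estimate above — holds only asymptotically as $t\to-\infty$.
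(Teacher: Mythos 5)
Your proposal follows essentially the same route as the paper's proof: the same self-map/contraction decomposition for $\Phi_{C_r}$ on $x_0+B_{c,T}$ (with the correct polynomial decay $\|b(t)\|\le c(-t)^{-\frac{m-k+1}{k-2}}$ replacing the statement's copy-paste typo), the same weighted norm $\|\cdot\|_\alpha$ with point 3 of Lemma \ref{lem:properties_beta_norm}, the same borderline integral $\int_{-\infty}^{t}(-u)^{-1-\frac{m-k+1}{k-2}}du=\tfrac{k-2}{m-k+1}(-t)^{-\frac{m-k+1}{k-2}}$, and the same identification of the window $\bigl(\tfrac{k(k-1)\|H\|_\infty}{s_0(k-2)},\tfrac{m-k+1}{k-2}\bigr)$ for $\alpha$ enabled by $s_0>\tfrac{k-1}{m-k+1}k\|H\|_\infty$. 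The only cosmetic difference is that you bound the $\nabla H$ and $\nabla e_r$ increments separately via Hessian bounds along the segment, whereas the paper bounds the full $\nabla C_r$ increment at once via Lemma \ref{lem:dist-gradients-from-kth-derivative}; the resulting constants are identical.
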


\begin{proof}
(1) \textbf{Self-map: }Let $x_{0}+b\in x_{0}+B_{c,T}$, we first show
that $\Phi(x_{0}+b)\in x_{0}+B_{c,T}$. Let us rewrite
\begin{align*}
\Phi(x_{0}+b) & =-\int_{-\infty}\nabla C_{r}(x_{0}(u)+b(u))du\\
 & =-\int_{-\infty}\nabla H(x_{0}(u))du+\int_{-\infty}\nabla H(x_{0}(u))-\nabla C_{r}(x_{0}(u)+b(u))du\\
 & =x_{0}+b'
\end{align*}
where
\begin{align*}
b'(t) & =\int_{-\infty}^{t}\nabla H(x_{0}(u))-\nabla C_{r}(x_{0}(u)+b(u))du\\
 & =\int_{-\infty}^{t}\nabla H(x_{0}(u))-\nabla C_{r}(x_{0}(u))du\\
 & +\int_{-\infty}^{t}\nabla C_{r}(x_{0}(u))-\nabla C_{r}(x_{0}(u)+b(u))du\\
 & =-\int_{-\infty}^{t}\nabla e_{r}(x_{0}(u))du\\
 & +\int_{-\infty}^{t}\nabla C_{r}(x_{0}(u))-\nabla C_{r}(x_{0}(u)+b(u))du.
\end{align*}
Our goal is to show that $b'(t)\in B_{c}$, i.e. that $\left\Vert b'(t)\right\Vert \leq c(-t)^{\frac{k-m-1}{k-2}}$.
We bound the two terms separately:
\[
\left\Vert b'(t)\right\Vert \leq\left\Vert \int_{-\infty}^{t}\nabla e_{r}(x_{0}(u))du\right\Vert +\left\Vert \int_{-\infty}^{t}\left[\nabla C_{r}(x_{0}(u))-\nabla C_{r}(x_{0}(u)+b(u))\right]du\right\Vert .
\]

The first term $\left\Vert \int_{-\infty}^{t}\nabla e_{r}(x_{0}(u))du\right\Vert $
is bounded by
\begin{align*}
 & \int_{-\infty}^{t}\left\Vert \nabla e_{r}(x_{0}(u))\right\Vert du\\
 & \leq m\left\Vert \partial_{x}^{m}e_{r}\right\Vert \int_{-\infty}^{t}\left\Vert x_{0}(u)\right\Vert ^{m-1}du\\
 & \leq m\kappa s_{0}^{-\frac{m-1}{k-2}}(k-2)^{-\frac{m-1}{k-2}}\int_{-\infty}^{t}(-u)^{-\frac{m-1}{k-2}}du\\
 & =m\kappa s_{0}^{-\frac{m-1}{k-2}}(k-2)^{-\frac{m-1}{k-2}}\frac{k-2}{m-k+1}(-t)^{\frac{k-m-1}{k-2}}\\
 & =m\kappa s_{0}^{-\frac{m-1}{k-2}}\frac{(k-2)^{\frac{k-m-1}{k-2}}}{(m-k+1)}(-t)^{\frac{k-m-1}{k-2}}.
\end{align*}
The second term $\left\Vert \int_{-\infty}^{t}\nabla C_{r}(x_{0}(u))-\nabla C_{r}(x_{0}(u)+b(u))du\right\Vert $
is bounded by
\begin{align*}
 & \int_{-\infty}^{t}\left\Vert \nabla C_{r}(x_{0}(u))-\nabla C_{r}(x_{0}(u)+b(u))\right\Vert du\\
 & \leq\frac{\sup_{z}\left\Vert \partial_{z}^{k}C_{r}(z)\right\Vert _{\mathrm{op}}}{(k-2)!}\int_{-\infty}^{t}\left\Vert b(u)\right\Vert \max\left\{ \left\Vert x_{0}(u)\right\Vert ,\left\Vert x_{0}(u)+b(u)\right\Vert \right\} ^{k-2}du
\end{align*}
by Lemma \ref{lem:dist-gradients-from-kth-derivative}. Let us first
bound $\max\left\{ \left\Vert x_{0}(u)\right\Vert ,\left\Vert x_{0}(u)+b(u)\right\Vert \right\} ^{k-2}$
by
\begin{align*}
\left(\left\Vert x_{0}(u)\right\Vert +\left\Vert b(u)\right\Vert \right)^{k-2} & \leq\left(\left(s_{0}(k-2)(-u)\right)^{-\frac{1}{k-2}}+c(-u)^{-\frac{m-k+1}{k-2}}\right)^{k-2}\\
 & =\left(s_{0}(k-2)(-u)\right)^{-1}+\sum_{i=1}^{k-2}\left(\begin{array}{c}
k-2\\
i
\end{array}\right)\left(s_{0}(k-2)(-u)\right)^{-1+\frac{i}{k-2}}c(-u)^{-\frac{m-k+1}{k-2}i}\\
 & =\left(s_{0}(k-2)(-u)\right)^{-1}+\left(s_{0}(k-2)(-u)\right)^{-1}\sum_{i=1}^{k-2}\left(\begin{array}{c}
k-2\\
i
\end{array}\right)\left(s_{0}(k-2)\right)^{\frac{i}{k-2}}c^{i}(-u)^{-\frac{m-k}{k-2}i}\\
 & \leq\left(s_{0}(k-2)(-u)\right)^{-1}\left[1+\sum_{i=1}^{k-2}\left(\begin{array}{c}
k-2\\
i
\end{array}\right)\left(s_{0}(k-2)\right)^{\frac{i}{k-2}}c^{i}(-T)^{-\frac{m-k}{k-2}i}\right],
\end{align*}
for any $\epsilon$, we can choose $T<0$ small enough so that $\max\left\{ \left\Vert x_{0}(u)\right\Vert ,\left\Vert x_{0}(u)+b(u)\right\Vert \right\} ^{k-2}$
is bounded by $\left(s_{0}(k-2)(-u)\right)^{-1}[1+\epsilon]$.

Using also the bounds $\frac{\sup_{z}\left\Vert \partial_{z}^{k}C_{r}(z)\right\Vert _{\mathrm{op}}}{(k-2)!}\leq k(k-1)\left\Vert H\right\Vert _{\infty}+\frac{\kappa}{(k-2)!}r^{m-k}$
and $\left\Vert b(u)\right\Vert \leq c(-u)^{-\frac{m-k+1}{k-2}}$,
the second term $\left\Vert \int_{-\infty}^{t}\nabla C_{r}(x_{0}(u))-\nabla C_{r}(x_{0}(u)+b(u))du\right\Vert $
can be bounded by
\begin{align*}
 & \frac{k(k-1)\left\Vert H\right\Vert _{\mathrm{op}}+\frac{\kappa}{(k-2)!}r^{m-k}}{s_{0}(k-2)}(1+\epsilon)\int_{-\infty}^{t}c(-u)^{-\frac{m-k+1}{k-2}-1}du\\
 & =\frac{k(k-1)\left\Vert H\right\Vert _{\mathrm{op}}+\frac{\kappa}{(k-2)!}r^{m-k}}{s_{0}(m-k+1)}(1+\epsilon)c(-t)^{-\frac{m-k+1}{k-2}}.
\end{align*}

We choose $r>0$ small enough, $c>0$ large enough, and $T<0$ small
enough so that $\frac{k(k-1)\left\Vert H\right\Vert _{\mathrm{op}}+\frac{\kappa}{(k-2)!}r^{m-k}}{s_{0}(m-k+1)}(1+\epsilon)<1$
and $\left[m\kappa s_{0}^{-\frac{m-1}{k-2}}\frac{(k-2)^{-\frac{m-k+1}{k-2}}}{m-k+1}+\frac{k(k-1)\left\Vert H\right\Vert _{\mathrm{op}}+\frac{\kappa}{(k-2)!}r^{m-k}}{s_{0}(m-k+1)}(1+\epsilon)c\right]\leq c$
so that

\begin{align*}
\left\Vert b'(t)\right\Vert  & \leq\left[m\kappa s_{0}^{-\frac{m-1}{k-2}}\frac{(k-2)^{-\frac{m-k+1}{k-2}}}{m-k+1}+\frac{k(k-1)\left\Vert H\right\Vert _{\mathrm{op}}+\frac{\kappa}{(k-2)!}r^{m-k}}{s_{0}(m-k+1)}(1+\epsilon)c\right](-t)^{-\frac{m-k+1}{k-2}}\\
 & \leq c(-t)^{-\frac{m-k+1}{k-2}}
\end{align*}
and therefore $b'\in B_{c}$.

(2) \textbf{Contraction: }We have, for any $x,y\in x_{0}+B_{c}$

\begin{align*}
\left\Vert \Phi(x)-\Phi(y)\right\Vert _{w} & =\left\Vert \int_{-\infty}^{t}\nabla C_{r}(x(s))-\nabla C_{r}(y(s))ds\right\Vert _{\alpha}\\
 & \leq\frac{1}{\alpha}\left\Vert -t\left(\nabla C_{r}(x)-\nabla C_{r}(y)\right)\right\Vert _{\alpha}\\
 & \leq\frac{\left\Vert \partial_{k}C_{r}(x_{0})\right\Vert _{\infty}}{\alpha(k-2)!}\left\Vert -t\left\Vert x-y\right\Vert \left(\max\{\left\Vert x\right\Vert ,\left\Vert y\right\Vert \}\right)^{k-2}\right\Vert _{\alpha}
\end{align*}
by Lemma \ref{lem:dist-gradients-from-kth-derivative}. Using the
same argument as in point (1) to bound $\left(\max\{\left\Vert x\right\Vert ,\left\Vert y\right\Vert \}\right)^{k-2}$,
we obtain

\begin{align*}
 & \frac{k(k-1)\left\Vert H\right\Vert _{\infty}+\frac{\kappa}{(k-2)!}r^{m-k}}{\alpha}\left\Vert -t\left\Vert x-y\right\Vert (s_{0}(k-2)(-t))^{-1}\right\Vert _{\alpha}\\
 & \leq\frac{k(k-1)\left\Vert H\right\Vert _{\infty}+\frac{\kappa}{(k-2)!}r^{m-k}}{\alpha s_{0}(k-2)}(1+\epsilon)\left\Vert x-y\right\Vert _{\alpha}
\end{align*}
To obtain a contraction, we need to choose $\alpha>\frac{k(k-1)\left\Vert H\right\Vert _{\infty}+\frac{\kappa}{(k-2)!}r^{m-k}}{s_{0}(k-2)}(1+\epsilon)$.
To summarize $\alpha$ must lie within the two bounds:
\[
\frac{k(k-1)\left\Vert H\right\Vert _{\infty}+\frac{\kappa}{(k-2)!}r^{m-k}}{(m-k+1)s}(1-\epsilon)\frac{m-k+1}{k-2}<\alpha<\frac{m-k+1}{k-2}
\]
which is possible since we have chosen $r,c$ and $T$ such that $\frac{k(k-1)\left\Vert H\right\Vert _{\infty}+\frac{\kappa}{(k-2)!}r^{m-k}}{(m-k+1)s}(1-\epsilon)<1$.
\end{proof}

\subsection{Proof of Theorem \ref{thm:first-path}}\label{subsec:proof-first-path}

We have now all the tools to prove the Theorem 4 of the main:
\begin{thm}[Theorem \ref{thm:first-path} of the main text]
Assume that the largest singular value $s_{1}$
of the gradient of $C$ at the origin $\nabla C(0)\in\mathbb{R}^{n_{L}\times n_{0}}$
has multiplicity 1. There is a deterministic gradient flow path $\underline{\theta}^{1}$ in the space
of width-$1$ DLNs such that, with probability $1$ if $L\leq 3$, and probability at least $\nicefrac{1}{2}$ if $L>3$, there exists an escape time $t_{\alpha}^{1}$ and a rotation $R$ such that
\[
\lim_{\alpha\to0}\theta_{\alpha}(t_{\alpha}^{1}+t)=RI^{(1\to w)}\underline{\theta}^{1}(t).
\]
\end{thm}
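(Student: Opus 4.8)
The plan is to combine Proposition~\ref{prop:lower-bound-rate-escape-path} (existence of the limiting escape path $\theta^1$ escaping at almost optimal speed) with the bijection Theorem~\ref{thm:bijection-optimal-escape-paths-appendix} and the explicit description of optimal escape directions for the homogeneous cost $H(\theta)=\mathrm{Tr}[\nabla C(0)^T A_\theta]$. First I would recall from Proposition~\ref{prop:lower-bound-rate-escape-path} that, with probability $1$ (resp.\ $\tfrac12$ when $L>3$, under Assumption~\ref{ass:deep_avoid_saddle}), there is an escape time $t_\alpha^1\to\infty$ and a path $\theta^1(t)=\lim_{\alpha\to0}\theta_\alpha(t_\alpha^1+t)$ which is a genuine escape path of $\mathcal L$ (i.e.\ $\theta^1(t)\to 0$ as $t\to-\infty$), and whose norm obeys, for every small $\epsilon>0$, the two-sided bound $\|\theta^1(t)\|\le e^{(s^*+\epsilon)(T+t)}$ in the shallow case and $\|\theta^1(t)\|\le[(L-2)(s^*+\epsilon)(T-t)]^{-1/(L-2)}$ in the deep case. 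Since $s^*+\epsilon$ can be taken arbitrarily close to $s^*$, and $\mathcal F_{\mathcal L}(s)$ is monotone in $s$, this places $\theta^1$ in $\mathcal F_{\mathcal L}(s_0)$ for some $s_0$ with $\tfrac{L-1}{L+1}s^*<s_0<s^*$ (resp.\ $\tfrac13 s^*<s_0<s^*$ for $L=2$); note $\tfrac{m-k+1}{k-2}$-type thresholds of Theorem~\ref{thm:bijection-optimal-escape-paths-appendix} translate, with $k=L$, $m=2L$, into exactly $\tfrac{L-1}{L+1}s^*$, so $\theta^1$ lies in the domain of the bijection $\Psi$.

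Next I would apply Theorem~\ref{thm:bijection-optimal-escape-paths-appendix}: there is a unique $\theta_H := \Psi^{-1}(\theta^1)\in\mathcal F_H(s_0)$ with $\|\theta^1(t)-\theta_H(t)\|$ decaying at the rate stated in that theorem as $t\to-\infty$. I claim $\theta_H$ escapes at the \emph{optimal} speed $s^*$. Indeed, if $\theta_H$ escaped at a strictly slower speed $s<s^*$ then, by Lemma~\ref{lem:invariance-GD-homogeneous-cost} and the explicit form of gradient-flow paths along escape directions, $H(\theta_H(t))/\|\theta_H(t)\|^L$ would converge to $-s/L > -s^*/L$, so $\theta_H(t)$ would eventually leave the escape cone $\mathcal C_\epsilon$ for $\epsilon$ small — contradicting the fact that $\theta^1(t)\in\mathcal C_\epsilon$ for all sufficiently negative $t$ (established in the proof of Proposition~\ref{prop:lower-bound-rate-escape-path}) together with the closeness $\|\theta^1(t)-\theta_H(t)\|\to 0$. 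Hence $\theta_H$ is an optimal escape path of $H$. By the description of optimal escape directions for DLNs given in Section~\ref{subsec:...} (the paragraph before Lemma on $v^T\mathcal H H(\rho)v$), using that the top singular value $s_1$ of $\nabla C(0)$ has multiplicity one, every optimal escape path of $H$ has the form $\theta_H(t)=d(t)\,R\,I^{(1\to w)}(\rho)$ with $\rho=(v_1^T,1,\dots,1,u_1)/\sqrt L$, $R$ a rotation, and $d(t)=e^{s^*(t+T)}$ (shallow) or $(s^*(L-2)(T-t))^{-1/(L-2)}$ (deep).

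Finally I would upgrade this from the Taylor approximation back to $\mathcal L$ within the width-$1$ subnetwork. Set $\underline\theta_H(t):=d(t)\rho$, a width-$1$ optimal escape path of the $L$-homogeneous polynomial $H$ restricted to width-$1$ DLNs, which is also $(L,2L)$-approximately homogeneous there; applying Theorem~\ref{thm:bijection-optimal-escape-paths-appendix} \emph{inside the width-$1$ parameter space} yields a unique width-$1$ escape path $\underline\theta^1:=\Psi_{\mathrm{width}\,1}(\underline\theta_H)\in\mathcal F_{\mathcal L|_{\mathrm{width}\,1}}(s_0)$, and this $\underline\theta^1$ is deterministic (it depends only on $\rho$, i.e.\ only on $\nabla C(0)$). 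By the inclusion symmetry — $A_{RI^{(1\to w)}\theta}=A_\theta$ and $\nabla\mathcal L(RI^{(1\to w)}\theta)=RI^{(1\to w)}\nabla\mathcal L(\theta)$, so $\mathrm{Im}[RI^{(1\to w)}]$ is gradient-flow invariant — the path $RI^{(1\to w)}(\underline\theta^1(t))$ is a gradient-flow path of $\mathcal L$, it is an escape path of the origin, it lies in $\mathcal F_{\mathcal L}(s_0)$, and by construction it is $\Psi$-close to $RI^{(1\to w)}(\underline\theta_H)=\theta_H$. Thus both $\theta^1$ and $RI^{(1\to w)}(\underline\theta^1)$ are elements of $\mathcal F_{\mathcal L}(s_0)$ mapping under $\Psi^{-1}$ to the same path $\theta_H$; the uniqueness clause of Theorem~\ref{thm:bijection-optimal-escape-paths-appendix} forces $\theta^1=RI^{(1\to w)}(\underline\theta^1)$, which is the claim after reindexing $t$ by the (finite, $\alpha$-independent shift absorbed into the) escape time.

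\textbf{Main obstacle.} The delicate point is matching the ``almost optimal'' escape rate of $\theta^1$ coming from Proposition~\ref{prop:lower-bound-rate-escape-path} with the \emph{exact} optimal rate required to pin down $\theta_H$, and checking that the error thresholds in Theorem~\ref{thm:bijection-optimal-escape-paths-appendix} (the conditions $s_0>\tfrac{L-1}{m-L+1}L\|H\|_\infty$ and the $O((-t)^{-(m-L+1)/(L-2)})$ closeness) are compatible with $\theta^1$ lying in $\mathcal C_\epsilon$ for $\epsilon$ small enough that $s_0$ can be squeezed strictly between $\tfrac{L-1}{L+1}s^*$ and $s^*$ while still being an escape speed bound — together with the fact that, for $L>2$, $\theta^1$ is only defined up to a finite explosion time $T$, so all the limiting arguments must be carried out on $(-\infty,T)$. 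The multiplicity-one hypothesis on $s_1$ is exactly what rules out a continuum of non-rotation-equivalent optimal escape directions, so that the two $\Psi$-preimages genuinely coincide rather than merely being rotations of one another at each fixed time.
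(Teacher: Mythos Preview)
Your proposal is correct and follows essentially the same route as the paper's own proof: invoke Proposition~\ref{prop:lower-bound-rate-escape-path} to obtain the limiting path $\theta^1$ with almost-optimal escape speed, apply the bijection Theorem~\ref{thm:bijection-optimal-escape-paths-appendix} (with $k=L$, $m=2L$) to match it to an optimal escape path $\theta_H$ of $H$, use multiplicity one of $s_1$ to write $\theta_H=d(t)\,RI^{(1\to w)}(\rho)$, build the width-$1$ path $\underline\theta^1$ via the bijection inside the width-$1$ space, and conclude by uniqueness that $\theta^1=RI^{(1\to w)}(\underline\theta^1)$. The only difference is that you spell out in more detail why $\theta_H$ must be optimal (via the escape-cone contradiction), whereas the paper simply asserts that for $\epsilon$ small the only escape paths of $H$ in $\mathcal F_H(s_0)$ are the optimal ones; both arguments are valid and the structure is the same.
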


\begin{proof}
From Proposition \ref{prop:lower-bound-rate-escape-path} we know
that with prob. 1 there is a time horizon $t_{\alpha}^{1}$ and an
escape path such that $\lim_{\alpha\to0}\theta_{\alpha}(t_{\alpha}^{1}+t)=\theta^{1}(t)$
which for any $\epsilon>0$ escapes the origin at a rate of at least
$e^{(s^{*}+\epsilon)t}$ for shallow networks and $\left[(k-2)(s^{*}+\epsilon)t\right]^{\frac{1}{2-k}}$
for deep networks, where $s^{*}=-L^{-\frac{L=2}{2}}s_{1}$.

Since the loss $C^{NN}$ is $(L,2L)$-approximately homogeneous, we
can apply Theorem \ref{thm:bijection-optimal-escape-paths-appendix} to obtain that $\theta^{1}$ must be in bijection
with an escape path of the homogeneous loss $H$ of the same speed.
For small enough $\epsilon$ the only escape path of $H$ of at least
this speed are of the form $\rho^{*}e^{s^{*}(t+T)}$ for shallow networks
and $\rho^{*}\left((k-1)s^{*}(-t-T)\right)^{-\frac{1}{k-1}}$ for
some constant $T$ and an optimal escape direction $\rho^{*}$. We
therefore call $\theta^{1}$ an optimal escape path since it belongs
to the unique set of paths which escape at an optimal speed and are
in bijection to the optimal escape directions.

Assuming that the largest singular value of $s_{1}$ of $\nabla C(0)$
has multiplicity 1, with singular vectors $u_{1},v_{1}$, the optimal
escape directions are of the form
\[
\rho^{*}=RI^{(1\to w)}(\underline{\rho}^{*})=\frac{1}{\sqrt{L}}RI^{(1\to w)}\left(-v_{1}^{T},1,\dots,1,u_{1}\right)
\]
for any rotation $R$. In the width $1$ network, there is an optimal
escape path $\underline{\theta}^{1}$ corresponding to the escape
direction $\underline{\rho}^{*}$, then by the unicity of the bijection
of Theorem \ref{thm:bijection-optimal-escape-paths-appendix}, the escape
path $RI^{(1\to w)}(\underline{\theta}^{1})$ is the unique optimal
escape path escaping along $RI^{(1\to w)}(\underline{\rho}^{*})$,
as a result, we know that $\theta^{1}=RI^{(1\to w)}(\underline{\theta}^{1})$
for some rotation $R$.
\end{proof}

\section{Technical Results}\label{sec:technical-results}

In this section, we state and prove a few technical lemmas used throughout
the appendix.

Let us first prove a generalization of Grönwall's inequality for polynomial
bounds:
\begin{lem}
\label{lem:Gronwall_inequality_polynomial_bounds}Let $x:\mathbb{R^{+}\to\mathbb{R}}$
which satisfy
\[
\partial_{t}x(t)\leq cx(t)^{\alpha},
\]
for some $c>0$ and $\alpha>1$. Then, for all $t<\frac{x(0)^{1-\alpha}}{c(\alpha-1)}$,
\[
x(t)\leq\left[x(0)^{1-\alpha}-c(\alpha-1)t\right]^{-\frac{1}{\alpha-1}}.
\]
\end{lem}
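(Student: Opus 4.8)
The plan is to reduce the nonlinear differential inequality to an affine one by the substitution $w(t)=x(t)^{1-\alpha}$, which is precisely the change of variables that linearizes the separable comparison ODE $\dot y=c\,y^{\alpha}$. I would first observe that we may assume $x(t)>0$ on the interval under consideration: this is the only regime in which $x^{\alpha}$ is unambiguously defined, and it is exactly the situation in which the lemma is used, where $x=\lVert\theta\rVert^{2}$ stays inside an escape cone that excludes the origin. (If $x$ vanished, the asserted bound would hold trivially wherever it is meaningful, and the explicit solution below shows $x$ cannot blow up before the stated time.)

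Next I would differentiate $w=x^{1-\alpha}$, getting $\dot w=(1-\alpha)\,x^{-\alpha}\dot x$. Since $\alpha>1$ and $x>0$, the factor $(1-\alpha)x^{-\alpha}$ is negative, so multiplying the hypothesis $\dot x\le c\,x^{\alpha}$ by it reverses the inequality:
\[
\dot w(t)=(1-\alpha)\,x(t)^{-\alpha}\dot x(t)\ \ge\ (1-\alpha)\,x(t)^{-\alpha}\cdot c\,x(t)^{\alpha}=-c(\alpha-1).
\]
Integrating from $0$ to $t$ gives $w(t)\ge w(0)-c(\alpha-1)t$, i.e.
\[
x(t)^{1-\alpha}\ \ge\ x(0)^{1-\alpha}-c(\alpha-1)t .
\]

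Finally, for $t<\frac{x(0)^{1-\alpha}}{c(\alpha-1)}$ the right-hand side above is strictly positive, so both sides are positive reals and I may apply the map $s\mapsto s^{1/(1-\alpha)}$, which is strictly decreasing on $(0,\infty)$ because $\tfrac{1}{1-\alpha}<0$; this reverses the inequality once more and yields
\[
x(t)=\bigl(x(t)^{1-\alpha}\bigr)^{\frac{1}{1-\alpha}}\ \le\ \bigl(x(0)^{1-\alpha}-c(\alpha-1)t\bigr)^{\frac{1}{1-\alpha}}=\bigl[x(0)^{1-\alpha}-c(\alpha-1)t\bigr]^{-\frac{1}{\alpha-1}},
\]
which is the claim. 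There is no real obstacle: this is just the comparison of $x$ with the explicit maximal solution of $\dot y=c\,y^{\alpha}$, $y(0)=x(0)$, which blows up exactly at $t^{\ast}=x(0)^{1-\alpha}/(c(\alpha-1))$. The only points needing a little care are the two sign reversals of the inequality (first from multiplying by the negative quantity $(1-\alpha)x^{-\alpha}$, then from composing with the decreasing power map) and the check that $x>0$ throughout, so that both the substitution and the final inversion are legitimate.
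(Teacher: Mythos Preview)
Your proof is correct. It differs from the paper's: the paper constructs the explicit solution $y(t)=\bigl[x(0)^{1-\alpha}-c(\alpha-1)t\bigr]^{-1/(\alpha-1)}$ of the comparison ODE $\dot y=cy^{\alpha}$ and then invokes a comparison principle (on the diagonal $x=y$ one has $\dot x-\dot y\le 0$, so the region $\{x\le y\}$ is forward-invariant). You instead linearize via the substitution $w=x^{1-\alpha}$, reduce to the affine inequality $\dot w\ge -c(\alpha-1)$, integrate, and invert. Your route is the standard separable-ODE trick and is arguably cleaner here, since it avoids the slightly informal ``flow points inward'' step; the paper's comparison argument, on the other hand, generalizes more readily to situations where the comparison ODE is not explicitly solvable. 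You are also more explicit than the paper about the positivity of $x$ and the two sign reversals, which are exactly the points one must track.
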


\begin{proof}
Note that the function $y(t)=\left[x(0)^{1-\alpha}+c(1-\alpha)t\right]^{-\frac{1}{\alpha-1}}$
satisfies $y(0)=x(0)$ and for all $t<\frac{x(0)^{1-\alpha}}{c(\alpha-1)}$:
\[
\partial_{t}y(t)=cy(t)^{\alpha}.
\]
We conclude by showing that if $x(t)\leq y(t)$ then $x(s)\leq y(s)$
for all $t\leq s\leq\frac{x(0)^{1-\alpha}}{c(\alpha-1)}$: this follows
from the fact that on the diagonal, i.e. when $x(t)=y(t),$ we have
\[
\partial_{t}x(t)-\partial_{t}y(t)\leq cx(t)^{\alpha}-cy(t)^{a}=0
\]
which implies that the flow points towards the inside of $\left\{ (x,y):x\leq y\right\} $.
\end{proof}
Let us now state a lemma to bound the gradient of a cost $C$ in terms
of its high order derivatives:
\begin{lem}
\label{lem:dist-gradients-from-kth-derivative} Let $C$ be a cost
and $k$ the largest integer such that $\partial_{x}^{n}C(0)=0$ for
all $n<k$ and $\left\Vert \partial_{x}^{k}C\right\Vert _{\infty}<\infty$,
then
\end{lem}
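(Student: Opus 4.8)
The plan is to obtain both estimates from one-dimensional Taylor expansions along line segments through the origin; the hypothesis that $\partial_x^n C(0)=0$ for all $n<k$ is exactly what makes the Taylor remainders carry the powers $\|x\|^{k-1}$ and $\|x\|^{k-2}$ respectively.

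First I would bound $\nabla C$ itself. Fix $x\in\mathbb R^{P}$ and set $\phi(t)=\nabla C(tx)$ for $t\in[0,1]$, so that $\phi^{(j)}(t)=\partial_x^{j+1}C(tx)[x,\dots,x]$ with $j$ copies of $x$. By hypothesis $\phi^{(j)}(0)=\partial_x^{j+1}C(0)[x,\dots,x]=0$ for $j=0,\dots,k-2$, so Taylor's formula with integral remainder gives
\[
\nabla C(x)=\phi(1)=\frac{1}{(k-2)!}\int_0^1(1-t)^{k-2}\phi^{(k-1)}(t)\,dt .
\]
Bounding $\|\phi^{(k-1)}(t)\|\le\|\partial_x^kC\|_{\infty}\|x\|^{k-1}$ uniformly in $t$ and using $\int_0^1(1-t)^{k-2}\,dt=\tfrac1{k-1}$ then yields $\|\nabla C(x)\|\le\frac{\|\partial_x^kC\|_{\infty}}{(k-1)!}\|x\|^{k-1}$.

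For the difference bound, I would write $\nabla C(x)-\nabla C(y)=\int_0^1\partial_x^2C(y+t(x-y))(x-y)\,dt$ and control the Hessian along the segment. Applying the same argument to $s\mapsto\partial_x^2C(sz)$, whose derivatives of order $0,\dots,k-3$ vanish at $s=0$ (again by the hypothesis), gives $\|\partial_x^2C(z)\|_{\mathrm{op}}\le\frac{\|\partial_x^kC\|_{\infty}}{(k-2)!}\|z\|^{k-2}$; since $\|y+t(x-y)\|\le(1-t)\|y\|+t\|x\|\le\max\{\|x\|,\|y\|\}$, pulling $\|x-y\|$ out of the integral gives $\|\nabla C(x)-\nabla C(y)\|\le\frac{\|\partial_x^kC\|_{\infty}}{(k-2)!}\|x-y\|\max\{\|x\|,\|y\|\}^{k-2}$.

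I do not expect a genuine obstacle: the computation is routine Taylor estimation. The only things to watch are the interpretation of $\|\partial_x^kC\|_{\infty}$ as the supremum of the operator norm of the symmetric $k$-linear form $\partial_x^kC$ (which is what makes the bounds on $\phi^{(k-1)}$ and on $\partial_x^kC(sz)[z^{\otimes(k-2)}]$ hold), and the small values $k=2,3$, for which the Taylor remainders above degenerate to the fundamental theorem of calculus but the stated inequalities still hold verbatim.
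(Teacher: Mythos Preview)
Your proposal is correct and follows essentially the same approach as the paper: both expand $\nabla C$ (resp.\ $\mathcal{H}C$) from the origin via iterated integrals / Taylor's integral remainder to produce the $\frac{1}{(k-1)!}$ and $\frac{1}{(k-2)!}$ factors, then integrate along the segment $[y,x]$ for part (2). The only cosmetic difference is that you invoke Taylor's remainder formula directly while the paper writes out the nested simplex integrals explicitly.
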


\begin{enumerate}
\item For all $x$, $\left\Vert \nabla C(x)\right\Vert \leq\frac{\left\Vert \partial_{x}^{k}C\right\Vert _{\infty}\left\Vert x\right\Vert ^{k-1}}{(k-1)!}$.
\item For all $x,y$, $\left\Vert \nabla C(x)-\nabla C(y)\right\Vert \leq\frac{1}{(k-2)!}\left\Vert \partial_{x}^{k}C\right\Vert _{\infty}\left\Vert x-y\right\Vert \left(\max\{\left\Vert x\right\Vert ,\left\Vert y\right\Vert \}\right)^{k-2}.$
\end{enumerate}
\begin{proof}
(1)
\begin{align*}
\left\Vert \nabla C(x)\right\Vert  & =\left\Vert \int_{0}^{1}\mathcal{H}C(\lambda x)[x]d\lambda\right\Vert \\
 & =\left\Vert \int_{0}^{1}\int_{0}^{\lambda_{1}}\cdots\int_{0}^{\lambda_{k-2}}\partial_{x}^{k}C(\lambda_{1}\cdots\lambda_{k-1}z_{t})[x,\dots,x]dt_{1}\cdots dt_{1}\right\Vert \\
 & \leq\int_{0}^{1}\int_{0}^{\lambda_{1}}\cdots\int_{0}^{\lambda_{m-2}}\left\Vert \partial_{x}^{k}C\right\Vert _{\infty}\left\Vert x\right\Vert ^{k-1}dt_{1}\cdots dt_{1}\\
 & \leq\frac{\left\Vert \partial_{x}^{k}C\right\Vert _{\infty}\left\Vert x\right\Vert ^{k-1}}{(k-1)!}
\end{align*}

(2) First note that $\nabla C(x)-\nabla C(y)$ is equal to
\[
\int_{0}^{1}\mathcal{H}C(z_{t})[x-y]dt
\]
where $z_{t}=tx+(1-t)y$. This can further be rewritten as
\[
\int_{0}^{1}\int_{0}^{1}\partial_{x}^{3}C(t_{1}z_{t,t_{1}})[x-y,z_{t}]dt_{1}\,dt.
\]
Iterating this procedure, we obtain that $\nabla C(x)-\nabla C(y)$
equals
\[
\int_{0}^{1}\int_{0}^{1}\int_{0}^{t_{1}}\cdots\int_{0}^{t_{k-3}}\partial_{x}^{k}C(t_{1}\cdots t_{k-2}z_{t,t_{1}})[x-y,z_{t},\dots,z_{t}]dt_{k-2}\cdots dt_{2}\,dt_{1}\,dt.
\]

Since the volume of the set $\{(t_{1},\dots,t_{k-2}):0\geq t_{1}\geq\dots\geq t_{k-2}\geq0\}$
is $\frac{1}{(k-2)!}$ we have
\begin{align*}
\left\Vert \nabla C(x)-\nabla C(y)\right\Vert  & \leq\int_{0}^{1}\int_{0}^{1}\int_{0}^{t_{1}}\cdots\int_{0}^{t_{k-3}}\left\Vert \partial_{x}^{k}C\right\Vert \left\Vert x-y\right\Vert \left\Vert z_{t}\right\Vert ^{k-2}dt_{k-2}\cdots dt_{2}\,dt_{1}\,dt\\
 & \leq\frac{1}{(k-2)!}\left\Vert \partial_{x}^{k}C\right\Vert _{\infty}\left\Vert x-y\right\Vert \left(\max\{\left\Vert x\right\Vert ,\left\Vert y\right\Vert \}\right)^{k-2}.
\end{align*}
\end{proof}

\end{document}